\let\footnote=\endnote
\newcommand{\exclude}[1]{}
\algnewcommand{\Or}{\textbf{or}}
\algnewcommand{\And}{\textbf{and}}
\declaretheorem[name=Theorem]{theorem}
\declaretheorem[name=Proposition]{proposition}
\declaretheorem[name=Lemma]{lemma}
\declaretheorem[name=Claim]{claim}
\declaretheorem[name=Corollary]{corollary}
\declaretheorem[name=Observation]{observation}
\declaretheorem[name=Definition]{definition}
\declaretheorem[name=Example]{example}
\def\Re{\mathbb{R}}
\def\hat{\widehat}
\def\N{{\mathcal N}}
\def\H{{\mathcal H}}
\def\Re{{\mathbb R}}
\DeclareMathOperator{\supp}{supp}
\DeclareMathOperator{\diag}{diag}
\DeclareMathOperator{\Diag}{Diag}
\DeclareMathOperator{\conv}{conv}
\DeclareMathOperator{\tr}{tr}
\DeclareMathOperator{\col}{col}
\renewcommand*{\qed}{\hfill\ensuremath{\square}}
\newcommand*{\qedA}{\hfill\ensuremath{\diamond}}
\renewcommand{\proof}{\noindent{\em Proof. }}
\begin{document}
	
	\RUNAUTHOR{Yongchun Li and Weijun Xie}
	
	\RUNTITLE{Approximation Algorithms for the Maximum Entropy Sampling Problem}
	
	\TITLE{Best Principal Submatrix Selection for the Maximum Entropy Sampling Problem:\\ Scalable Algorithms and Performance Guarantees}
	
	\ARTICLEAUTHORS{%
		\AUTHOR{Yongchun Li}
		\AFF{H. Milton Stewart School of Industrial and Systems Engineering, Georgia Institute of Technology, Atlanta, GA, USA, \EMAIL{ycli@gatech.edu}} %
		\AUTHOR{Weijun Xie}
		\AFF{H. Milton Stewart School of Industrial and Systems Engineering, Georgia Institute of Technology,  Atlanta, GA, USA, \EMAIL{wxie@gatech.edu}}
	} %
	
	\ABSTRACT{This paper studies a classic maximum entropy sampling problem (MESP), which aims to select the most informative principal submatrix of a prespecified size from a covariance matrix. %
By investigating its Lagrangian dual and primal characterization, we derive a novel convex integer program for MESP and show that its continuous relaxation yields a near-optimal solution.
 The results motivate us to develop a  sampling algorithm and derive its approximation bound for MESP, which improves the best-known bound in literature. 
We then provide an efficient deterministic implementation of the sampling algorithm with the same approximation bound. 
Besides, we investigate the widely-used local search algorithm and prove its first-known approximation bound for MESP. The proof techniques further inspire us  an efficient implementation of the local search algorithm. Our numerical experiments demonstrate that these approximation algorithms can efficiently solve medium-sized and large-scale instances to near-optimality. 
Finally, we extend the analyses to the A-Optimal MESP (A-MESP), where the objective is to minimize the trace of the inverse of the selected principal submatrix. 
}%

\KEYWORDS{Maximum Entropy Sampling Problem; Convex Integer Program; Sampling Algorithm; Local Search Algorithm; A-Optimality.}

	\maketitle
	\newpage
\section{Introduction}
The maximum entropy sampling problem (MESP) is a classic problem in statistics and information theory \citep{gilmore1996maximum,jaynes1957information,shewry1987maximum}, which aims to select a small number of random observations from a possibly large set of candidates to maximize the information obtained. The MESP has been widely applied to healthcare \citep{alarifi2019optimal}, power system \citep{li2012information}, manufacturing \citep{wang2019optimal}, data science \citep{charikar2000combinatorial,song2010new,zilly2017glaucoma}, among others.
Specifically, suppose that the $n$ random variables follow a Gaussian distribution and their covariance matrix $\bm{C}\in \Re^{n \times n }$ has a rank $d\leq n$. Then, the goal of MESP is to seek a size-$s$ ($s\le d$) principal submatrix of $\bm{C}$ with the largest logarithm of the determinant, i.e., MESP can be formulated as
\begin{align}
\text{(MESP)} \quad z^* :=  \max_{S} \left\{ \log \det (\bm{C}_{S,S}):  S \subseteq [n] , |S|=s \right\}, \label{mesp}
\end{align}
where $\bm{C}_{S,S}$ denotes an $s\times s$ principal submatrix of $\bm{C}$ with rows and columns from set $S$ and $[n]:=\{1, \cdots, n\}$. 
Note that (i) MESP \eqref{mesp} can be generalized to the case that the observations follow multivariate elliptical distributions (see, e.g., \citealt{arellano2013shannon});
(ii) if $s>d$, then the optimal value of MESP \eqref{mesp} is $-\infty$, which is not interesting. Thus, this paper focuses on the non-trivial setting $s\le d$; (iii) when the true covariance matrix $\bm C$ was not known, one would use the sample covariance matrix. We show that the theoretical absolute difference between the optimal value of the true MESP and that of the sample MESP is at most proportional to one over the square root of sample size, which decays polynomially as the sample size increases (see detailed derivations in Appendix \ref{app:estimated}); and (iv) if we only know the mean and the covariance of the random observations, then the formulation \eqref{mesp} is equivalent to the distributionally optimistic counterpart of the MESP. That is, 
the joint Gaussian distribution achieves the largest entropy among all the probability distributions with the same mean and covariance \citep{cover2012elements}. 
Thus, MESP \eqref{mesp} is indeed a very general model and covers many interesting cases.

\subsection{Relevant  Literature}
We review the relevant literature on three aspects: applications, relaxation bounds of MESP, and exact and approximation algorithms.\par

\noindent{\textbf{Applications:}} MESP dates back to \cite{shewry1987maximum} and has been applied to many different areas. One typical application of MESP is the sensor placement \citep{christodoulou2015smarting,bueso1998state}. Due to a limited budget, it is desirable to place a small number of sensors to effectively monitor spatial and temporal phenomena, including temperature, humidity, air pollution, etc. Recently, it has been applied to water quality monitoring \citep{o2010experiences}. 
MESP has also played an important role in machine learning and data science, such as feature selection \citep{charikar2000combinatorial,song2010new}, compressive sensing \citep{hoch2014nonuniform,schmieder1993application}, and image sampling \citep{rigau2003entropy,zilly2017glaucoma}.

\noindent{\textbf{Relaxation Bounds of MESP:}} It has been proven in \cite{ko1995exact} that solving MESP in general is NP-hard. 
Hence, many efforts have been made to explore its strong relaxation bounds (see, e.g., \citealt{anstreicher1996continuous,anstreicher1999using, anstreicher2004masked, anstreicher2020efficient,anstreicher2018maximum,ko1995exact,hoffman2001new,lee1998constrained,lee2003linear,anstreicher2018maximum}). For example, \cite{ko1995exact} used the eigenvalue interlacing property of symmetric matrices to derive an upper bound for MESP. Recent progress by \cite{anstreicher2020efficient} proposed a new upper bound, referred to as linx bound, and numerically showed that it dominated other bounds studied in the literature. In this paper, we derive a Lagrangian dual bound for MESP and also numerically demonstrate that 
this new upper bound can be stronger than the linx bound for some numerical cases (see our numerical study in Section \ref{sec:comp}).

\noindent{\textbf{Exact and Approximation Algorithms:}}
Besides providing stronger upper bounds, researchers have also attempted to propose exact or approximation algorithms to solve MESP \eqref{mesp}. \cite{ko1995exact} was one of the first works to develop a branch and bound (B\&B) algorithm for solving MESP to optimality. Similar works can be found in \cite{anstreicher1999using,anstreicher2020efficient,anstreicher2018maximum,burer2007solving} by integrating stronger upper bounds with the B\&B algorithm. In this paper, we provide an equivalent convex integer program for MESP, which is suitable for a branch and cut (B\&C) algorithm.

However, exact algorithms might not be able to solve very large-scale instances. It has been shown in \cite{anstreicher2020efficient} that solving MESP \eqref{mesp} on the instance of $n=90$ to optimality can take as long as several days. As alternative ones, approximation algorithms have also attracted much attention. Many approximation algorithms such as greedy and exchange (i.e., local search) heuristics have been used to generate high-quality solutions for MESP in literature \citep{ko1995exact,sharma2015greedy}. However, theoretical performance guarantees of these approximation algorithms are rarely known. Although the objective function of MESP \eqref{mesp} is submodular \citep{kelmans1983multiplicative}, it is neither monotonic nor always nonnegative. Thus, existing results on maximizing the nondecreasing and nonnegative submodular function over a cardinality constraint might not apply and thus require additional assumptions \citep{charikar2000combinatorial,sharma2015greedy}. Recently, \cite{nikolov2015randomized} studied a sampling algorithm for the maximum $s$-subdeterminant problem, which can be reduced to MESP \eqref{mesp}, and developed its approximation guarantee. The inapproximability of MESP \eqref{mesp} can be found in \cite{civril2013exponential,summa2014largest}, which shows that unless P=NP, it is impossible to approximate MESP within an additive error $s\log(c)$ for some constant $c>1$. This paper proposes a different sampling algorithm from the one in \cite{nikolov2015randomized} and improves its approximation bound. We also analyze the well-known local search algorithm and derive its first-known approximation guarantee. Both proposed algorithms yield better approximation bound than the greedy algorithm studied in \cite{ccivril2009selecting}. \Cref{table:mespbounds} summarizes the existing approximation bounds in literature and our proposed ones for MESP \eqref{mesp}. Note that approximation Bound is defined as the difference between the optimal value and the output value returned by an algorithm.

\begin{table}[h]
	\centering
	\caption{Summary of Approximation Algorithms for MESP} 
		\label{table:mespbounds}
		\setlength{\tabcolsep}{3pt}\renewcommand{\arraystretch}{1.6}
		\begin{tabular}{| c | c |c |}
			\hline
			&\multicolumn{1}{c|}{Approximation Algorithm} & \multicolumn{1}{c|}{Approximation Bound\tnote{1}}     \\  
			\hline
		\multirow{2}{4.5em}{\textbf{Literature}}&	Greedy \citep{ccivril2009selecting} & $2\log(s!)$  \\
		\cline{2-3}  
			&Samping \citep{nikolov2015randomized} & $s\log (s)-\log(s!) $  \\
			\hline
			\multirow{2}{5.5em}{\textbf{This paper}}	&Sampling  \Cref{alg_rand_round}& $s\log(s) + \log (\binom{n}{s}) - s\log (n) $  \\
			\cline{2-3}
			&Local Search \Cref{algo} & $s \min\left\{ \log(s), \log (n-s+2-n/s) \right\}$ \\
			\hline
		\end{tabular}%
\end{table}

\subsection{Summary of Contributions}
The objective of this paper is to develop a new convex integer program for MESP \eqref{mesp}, analyze  approximation algorithms, and develop their efficient implementations. Below is a summary of our main contributions:
\begin{enumerate}[(i)]
	\item Through the Lagrangian dual of MESP \eqref{mesp} and its primal characterization, we derive a convex integer program for MESP \eqref{mesp} and show that its continuous relaxation solution is near-optimal. In addition, we apply the efficient Frank-Wolfe algorithm to solving the continuous relaxation and derive its rate of convergence.%
	\item The continuous relaxation of the proposed convex integer program inspires us a  sampling algorithm and develop its approximation bound for MESP \eqref{mesp}, which improves the best-known bound in literature. We then provide an efficient deterministic implementation of the proposed sampling algorithm with the same approximation bound.
	\item %
	Using the weak duality between the proposed convex integer program and its  Lagrangian dual, 
	 we investigate the widely-used local search algorithm and prove its first-known approximation bound for MESP \eqref{mesp}. The proof techniques further motivate us to develop an efficient implementation of the local search algorithm.
	\item Our numerical experiments demonstrate that these approximation algorithms can efficiently solve medium-sized and large-scale instances to near-optimality. 
	\item Finally, we extend the analyses to the A-Optimal MESP (A-MESP), where its objective is to minimize the trace of the inverse of the selected principal submatrix. We propose a new convex integer program for A-MESP, study volume sampling and local search algorithms, and prove their approximation ratios.
\end{enumerate}

\noindent \textit{Organization:} The remainder of the paper is organized as follows. %
Section \ref{sec:cip} derives an equivalent convex integer program for MESP. Section \ref{sec:samp} develops the sampling algorithm and its deterministic implementation and also explores their approximation guarantees for MESP. Section \ref{sec:loc} investigates the local search algorithm and proves its approximation guarantee for MESP. Section \ref{sec:comp} conducts a numerical study to demonstrate the efficiency and the solution quality of our proposed approximation algorithms. Section \ref{sec:amesp} extends the analyses to A-MESP. Section \ref{sec:con} concludes the paper.

\noindent \textit{Notation:}	The following notation is used throughout this paper. We use bold lower-case letters (e.g., $\bm{x}$) and bold upper-case letters (e.g., $\bm{X}$) to denote vectors and matrices, respectively, and use corresponding non-bold letters (e.g., $x_i$) to denote their components. {We use $\bm{0}$ to denote the zero vector.} We let $\Re^n_+$ denote the set of all the $n$ dimensional nonnegative vectors and let $\Re^n_{++}$ denote the set of all the $n$ dimensional positive vectors. Given an integer $n$, we let $[n]:=\{1,2,\cdots, n\}$ and let $[s,n]:=\{s,s+1,\cdots, n\}$. We let $\bm{I}_n$ denote the $n \times n$ identity matrix and let $\bm{e}_i$ denote its $i$-th column. Given a set $S$ and an integer $k$, we let $|S|$ denote its cardinality and let $\binom{S}{k}$ denote the collection of all the size-$k$ subsets out of $S$. Given an $m \times n$ matrix $\bm{A}$ and two sets $S\in [m]$, $T\in [n]$, we let $\bm{A}_{S,T}$ denote a submatrix of $\bm{A}$ with rows and columns indexed by sets $S, T$, respectively, let $\bm{A}_S$ denote a submatrix of $\bm{A}$ with columns from the set $S$, and let $\col(\bm{A})$ denote its column space. Given a vector $\bm{x} \in \Re^n$, we let $\Diag(\bm{x})$ denote the diagonal matrix with diagonal elements $x_1,\cdots, x_n$, and let $\supp(\bm{x})$ denote the support of $\bm{x}$.
Given a  symmetric matrix $\bm{A}$, let $\diag(\bm{A})$ denote the vector of diagonal entries of $\bm{A}$, let $\bm{A}^{\dag}$ denote its pseudo inverse, let $\det (\bm{A})$ denote its determinant, let $\tr (\bm{A})$ denote its trace, and let $\lambda_{\min}(\bm{A}),\lambda_{\max}(\bm{A})$ denote the smallest and largest eigenvalues of $\bm{A}$, respectively. Given a convex set $\mathbb{D}$, we use $\textrm{relint}(\mathbb{D})$ to denote its relative interior. Additional notation will be introduced later as needed.

	\section{Convex Integer Programming Formulation} \label{sec:cip}
	\noindent In this section, we derive the Lagrangian dual (LD) of MESP \eqref{mesp} and its primal characterization (PC), where the latter inspires us a new convex integer programming formulation of MESP \eqref{mesp} by enforcing its variables to be binary. 
	
	\subsection{Mixed Integer Nonlinear Program of MESP}
	To begin with, we first observe that MESP \eqref{mesp} has an equivalent {mixed integer} nonlinear programming formulation using the Cholesky factorization. To do so, for matrix $\bm{C} \succeq 0$, let $\bm{C}=\bm{V}^{\top}\bm{V}$ denote its Cholesky factorization, where $\bm{V} \in \Re^{d\times n}$ and let $\bm{v}_i \in \Re^d$ denote the $i$-th column vector of matrix $\bm{V}$ for each $i \in [n]$. Also, let us define the following two functions, which correspond to the objective function of an alternative reformulation of MESP \eqref{mesp}, and the objective function of its Lagrangian dual.
		\begin{definition} \label{def:det}
			For a $d\times d$ matrix $\bm{X} \succeq 0$ of its eigenvalues $\lambda_1 \ge \cdots \ge \lambda_d \ge 0$, we define
			\begin{enumerate}[(i)]
				\item $\det\limits^s (\bm{X}) := \prod_{i\in [s]}\lambda_{i}$,  				
				\item $\det\limits_s (\bm{X}) := \prod_{i\in [d-s+1,d]}\lambda_{i}$.
			\end{enumerate}
		\end{definition}
Note that for any matrix $\bm{X}$, $\det\limits^s (\bm{X})$ denotes the product of the $s$ largest eigenvalues and $\det\limits_s (\bm{X})$ denotes the product of the $s$ smallest eigenvalues. In fact, the following observation shows that the objective function of MESP \eqref{mesp} can be represented by the function $\det\limits^s (\cdot)$
	\begin{observation}\label{claim:cholesky}
		$\det\left(\bm{C}_{S,S}\right) = \det\limits^s \left (\sum_{i \in S} \bm{v}_i \bm{v}_i^{\top}\right)$.
	\end{observation}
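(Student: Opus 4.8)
The plan is to reduce both sides to expressions built from the $d \times s$ column submatrix $\bm{V}_S$ of $\bm{V}$ and then invoke the standard relationship between the spectra of $\bm{V}_S^{\top}\bm{V}_S$ and $\bm{V}_S \bm{V}_S^{\top}$. First I would observe that since $\bm{C} = \bm{V}^{\top}\bm{V}$, the $(j,k)$-th entry of $\bm{C}$ equals $\bm{v}_j^{\top}\bm{v}_k$; restricting rows and columns to $S$ therefore gives $\bm{C}_{S,S} = \bm{V}_S^{\top}\bm{V}_S$, an $s \times s$ Gram matrix. On the other side, $\sum_{i \in S} \bm{v}_i \bm{v}_i^{\top} = \bm{V}_S \bm{V}_S^{\top}$, a $d \times d$ positive semidefinite matrix. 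Thus the claim reduces to showing $\det(\bm{V}_S^{\top}\bm{V}_S) = \det\nolimits^s(\bm{V}_S \bm{V}_S^{\top})$.

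The key step is the well-known fact, obtained directly from the singular value decomposition of $\bm{V}_S \in \Re^{d \times s}$ with $s \le d$, that $\bm{V}_S^{\top}\bm{V}_S$ and $\bm{V}_S \bm{V}_S^{\top}$ share the same nonzero eigenvalues counted with multiplicity, namely the squared singular values $\sigma_1^2 \ge \cdots \ge \sigma_s^2 \ge 0$ of $\bm{V}_S$. The $s \times s$ matrix $\bm{V}_S^{\top}\bm{V}_S$ has exactly these $s$ eigenvalues, so $\det(\bm{V}_S^{\top}\bm{V}_S) = \prod_{i \in [s]} \sigma_i^2$. The $d \times d$ matrix $\bm{V}_S \bm{V}_S^{\top}$ has rank at most $s$, so its spectrum consists of $\sigma_1^2, \ldots, \sigma_s^2$ together with $d-s$ additional zeros; hence its $s$ largest eigenvalues are precisely $\sigma_1^2 \ge \cdots \ge \sigma_s^2$, and by part (i) of \Cref{def:det} we get $\det\nolimits^s(\bm{V}_S \bm{V}_S^{\top}) = \prod_{i \in [s]} \sigma_i^2$. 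Equating the two expressions completes the argument.

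I do not anticipate a serious obstacle, since this is essentially a repackaging of the SVD identity. The only point warranting care is the rank-deficient case: if $\bm{V}_S$ has rank strictly below $s$, then some $\sigma_i = 0$, forcing both $\det(\bm{V}_S^{\top}\bm{V}_S)$ and $\det\nolimits^s(\bm{V}_S \bm{V}_S^{\top})$ to vanish, so the identity still holds. What makes everything align is the hypothesis $s \le d$, which guarantees that the $d \times d$ product contributes at least $d-s \ge 0$ extra zero eigenvalues beyond the shared part, so that the ``$s$ largest'' eigenvalues of $\bm{V}_S \bm{V}_S^{\top}$ coincide exactly with the full eigenvalue list of the $s \times s$ matrix $\bm{V}_S^{\top}\bm{V}_S$.
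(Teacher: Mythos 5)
Your proposal is correct and follows essentially the same route as the paper: both reduce the claim to $\bm{C}_{S,S}=\bm{V}_S^{\top}\bm{V}_S$ and then use the fact that the eigenvalues of the $s\times s$ Gram matrix $\bm{V}_S^{\top}\bm{V}_S$ are exactly the $s$ largest eigenvalues of $\bm{V}_S\bm{V}_S^{\top}$. Your explicit SVD justification and the remark about the rank-deficient case are just slightly more detailed versions of what the paper asserts directly.
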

	\begin{proof}
	Note that $\bm{C}_{S,S}=\bm{V}_S^{\top} \bm{V}_S$.
			Suppose matrix $\bm{V}_S^{\top} \bm{V}_S$ has eigenvalues $\lambda_1\ge \cdots \ge \lambda_s \ge 0$, which correspond to the $s$ largest eigenvalues of $\bm{V}_S \bm{V}_S^{\top}$. Therefore, we must have
		\begin{align*}
		& \det\left(\bm{C}_{S,S}\right) =	\det \left (\bm{V}_S^{\top} \bm{V}_S \right) = \prod_{i \in [s]}\lambda_i=\det\limits^s \left (\bm{V}_S \bm{V}_S^{\top} \right) =\det\limits^s \bigg (\sum_{i \in S} \bm{v}_i \bm{v}_i^{\top}\bigg) . 
		\end{align*}
		\qed
	\end{proof}

	Let us introduce the binary variables $\bm{x} \in \{0,1\}^n$ where for each $i \in [n]$, $x_i=1$ if the $i$-th column vector $\bm{v}_i$ is chosen, and 0 otherwise. Then according to \Cref{claim:cholesky},  MESP \eqref{mesp} can be reformulated as
	\begin{align}\label{eq_obj}
	\textrm{(MESP)} \quad z^{*} := \max_{\bm{x}} \Bigg \{ \log \det^s \bigg(\sum_{i\in[n]} x_i \bm{v}_i\bm{v}_i^\top \bigg ):  \sum_{i\in [n]} x_i =s,
	\bm{x} \in \{0,1\}^n \Bigg \} .
	\end{align}	
	
	Note that in this paper, we assume $s\leq d\leq n$. However, it is worth mentioning that when $d \le s \le n$, MESP becomes the well-known D-Optimal design problem, a classic problem in statistics \citep{de1995d,pukelsheim2006optimal}.
	
	The following proposition summarizes the properties of the objective function in MESP \eqref{eq_obj}.
	\begin{restatable}{proposition}{proobj} \label{pro:obj}
		The objective function of MESP \eqref{eq_obj} is (i) discrete-submodular, (ii) non-monotonic, (iii) neither concave nor convex, and (iv) not always nonnegative.
	\end{restatable}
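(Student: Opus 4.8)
The plan is to treat the four properties separately: I would prove discrete-submodularity in full generality and then refute monotonicity, convexity/concavity, and nonnegativity by exhibiting small explicit instances.

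For (i), I would work with the equivalent set-function form $f(S) = \log\det(\bm{C}_{S,S})$ furnished by \Cref{claim:cholesky}, extended to all $S \subseteq [n]$, and verify the diminishing-returns inequality $f(S\cup\{j\}) - f(S) \ge f(T\cup\{j\}) - f(T)$ for every $S \subseteq T \subseteq [n]\setminus\{j\}$. The key tool is the block-determinant (Schur-complement) identity $\det(\bm{C}_{S\cup\{j\}, S\cup\{j\}}) = \det(\bm{C}_{S,S})\cdot(C_{jj} - \bm{C}_{j,S}\bm{C}_{S,S}^{-1}\bm{C}_{S,j})$, which turns the marginal gain into the logarithm of the conditional variance of the $j$-th Gaussian coordinate given the coordinates indexed by $S$. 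The monotonicity of conditional variance under enlarging the conditioning set, $\mathrm{Var}(\cdot\mid X_T) \le \mathrm{Var}(\cdot\mid X_S)$, then yields the inequality directly. The only delicate point is a singular block $\bm{C}_{S,S}$: there I would replace the inverse by the pseudo-inverse and pass to the limit via the perturbation $\bm{C}+\epsilon\bm{I}$ as $\epsilon\to 0$, or restrict to the configurations that keep the relevant blocks nonsingular, since only those affect the determinant values.

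For (ii), (iv) small hand-checkable matrices suffice. For non-monotonicity I would take a diagonal $\bm{C}$ with a tiny entry, so that the marginal gain $\log(\text{conditional variance})$ is strictly negative and hence $f(S\cup\{j\}) < f(S)$. For ``not always nonnegative'' the instance $\bm{C} = \epsilon\bm{I}$ with $\epsilon < 1$ gives $f = s\log\epsilon < 0$. For (iii) I would treat the continuous objective $g(\bm{x}) = \log\det^s(\sum_i x_i\bm{v}_i\bm{v}_i^\top)$ and choose two low-dimensional instances: a pair $\bm{x}^1,\bm{x}^2$ with $g(\tfrac12(\bm{x}^1+\bm{x}^2)) < \tfrac12(g(\bm{x}^1)+g(\bm{x}^2))$ to break concavity, and another pair reversing the inequality to break convexity. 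The point is that while the full $\log\det$ of a positive-definite linear pencil is concave, the truncation in $\det^s$ to the $s$ largest eigenvalues destroys this structure.

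The main obstacle I anticipate is (iii): because $\det^s$ selects only the top-$s$ eigenvalues, $g$ is piecewise-smooth with kinks where the spectral ordering changes, so the test points must be chosen to avoid degenerate spectra, and both strict inequalities then verified by direct (numerical) evaluation; obtaining a single clean instance that simultaneously witnesses both failures is the fiddly part. By contrast, submodularity (i) is conceptually routine once the Schur-complement identity is in hand, with genuine care required only for the singular case.
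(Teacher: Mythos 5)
Your proposal is correct, and for parts (ii)--(iv) it follows essentially the same route as the paper: small explicit instances checked by hand. The paper uses a single two-dimensional diagonal family, $\bm{v}_1=(\sqrt{a},0)^\top$, $\bm{v}_2=(0,\sqrt{b})^\top$, with $(a,b)=(2,1/4)$ killing monotonicity and nonnegativity simultaneously, and with $s=1$, $(a,b)=(1,1)$ versus $(a,b)=(16,1)$ killing concavity and convexity respectively. Because these instances are diagonal, the eigenvalues are just the diagonal entries and the spectral-ordering kinks you worry about in (iii) never arise; also note you do not need one instance witnessing both failures---two parameter choices of the same family suffice, which is exactly what the paper does. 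The only genuine divergence is part (i): the paper does not prove discrete-submodularity at all but cites \cite{kelmans1983multiplicative}, whereas you sketch the standard Schur-complement/conditional-variance proof of submodularity of $S\mapsto\log\det(\bm{C}_{S,S})$. That argument is sound (and self-contained, which is a plus), with the caveat you already flag: when $\bm{C}_{S,S}$ is singular the marginal gain is $-\infty$ and the perturbation $\bm{C}+\epsilon\bm{I}$ handles the limit cleanly, so nothing breaks.
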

	\begin{proof}
		See Appendix \ref{proof_pro_obj}. \qed
	\end{proof}
	
	The non-monotonicity and possible-negativity of the objective function in \eqref{eq_obj}  imply that the existing approximation results for maximizing monotonic or nonnegative submodular problems \citep{charikar2000combinatorial,sharma2015greedy} are not directly applicable to MESP. The non-concavity motivates us to explore a new equivalent convex integer program of MESP.
	
	\subsection{Lagrangian Dual (LD) of MESP}
	\label{LD problem}
	
	In this subsection, we develop the Lagrangian dual (LD) of MESP \eqref{eq_obj}.
	First, let us introduce an auxiliary matrix $\bm{X}\in \Re^{d\times d}$ and reformulate MESP \eqref{eq_obj} as 
	\begin{align} \label{eq_obje}
	\textrm{(MESP)} \quad z^{*} := \max_{\bm{x}, \bm{X} {\succeq} 0} \Bigg \{ \log \det^s (\bm{X} ):  \sum_{i \in [n]} x_i \bm{v}_i \bm{v}_i^{\top} \succeq \bm{X}, \sum_{i\in [n]} x_i =s,
	\bm{x} \in \{0,1\}^n \Bigg \} .
	\end{align}
	By dualizing the constraint $\sum_{i \in [n]} x_i \bm{v}_i \bm{v}_i^{\top} \succeq \bm{X}$, we can obtain the LD of MESP \eqref{eq_obje}. Before deriving the LD formulation, we would like to establish the convex conjugate of the objective function in MESP \eqref{eq_obje}.  	
	\begin{restatable}{lemma}{lemldmax} \label{lem:ldmax} 
		For a $d \times d$ matrix $\bm{\Lambda} \succ 0$, we have
		\begin{align} \label{ldmax}
		\max_{\bm{X}\succeq 0} \left \{ \log \det^s (\bm{X}) -\tr (\bm{X} \bm{\Lambda}) \right \} = -\log \det_s (\bm{\Lambda}) -s,
		\end{align}
		where function $\underset{s}{\det}(\cdot)$ is defined in \Cref{def:det}.
	\end{restatable}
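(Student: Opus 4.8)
The plan is to prove the identity
\[
\max_{\bm{X}\succeq 0} \left\{ \log \det^s (\bm{X}) - \tr(\bm{X}\bm{\Lambda}) \right\} = -\log\det_s(\bm{\Lambda}) - s
\]
by reducing the matrix optimization to a scalar eigenvalue problem. Since $\bm{\Lambda}\succ 0$, I would first perform a change of variables to symmetrize the trace term: substitute $\bm{Y} = \bm{\Lambda}^{1/2}\bm{X}\bm{\Lambda}^{1/2}$, so that $\bm{X} = \bm{\Lambda}^{-1/2}\bm{Y}\bm{\Lambda}^{-1/2}$ and $\tr(\bm{X}\bm{\Lambda}) = \tr(\bm{Y})$. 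The delicate point is the first term: $\det^s$ depends only on the top $s$ eigenvalues of its argument, and $\log\det^s(\bm{\Lambda}^{-1/2}\bm{Y}\bm{\Lambda}^{-1/2})$ is \emph{not} simply $\log\det^s(\bm{Y}) - \log\det_s(\bm{\Lambda})$ in general, because eigenvalues do not distribute across products. So I would instead argue more directly through a spectral/variational characterization of $\det^s$.

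The main idea is to use the fact (which I would state or invoke) that for a PSD matrix $\bm{X}$ with eigenvalues $\lambda_1\ge\cdots\ge\lambda_d$, one has $\det^s(\bm{X}) = \max_{\bm{U}} \det(\bm{U}^\top \bm{X}\bm{U})$ over all $d\times s$ matrices $\bm{U}$ with $\bm{U}^\top\bm{U}=\bm{I}_s$ — i.e. the product of the top $s$ eigenvalues equals the maximal $s\times s$ principal minor under orthonormal column selection. Using this, the problem becomes
\[
\max_{\bm{X}\succeq 0}\max_{\bm{U}^\top\bm{U}=\bm{I}_s}\left\{ \log\det(\bm{U}^\top\bm{X}\bm{U}) - \tr(\bm{X}\bm{\Lambda})\right\}.
\]
For fixed $\bm{U}$, the inner maximization over $\bm{X}$ is a smooth concave problem (log-det is concave, trace is linear), so I would set the gradient to zero. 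The first-order condition should force $\bm{X}$ to be a rank-$s$ matrix whose range is $\col(\bm{U})$, reducing the inner problem to the classical identity $\max_{\bm{Z}\succ 0}\{\log\det(\bm{Z}) - \tr(\bm{Z}\bm{M})\} = -\log\det(\bm{M}) - s$ on an $s$-dimensional subspace, where $\bm{M} = \bm{U}^\top\bm{\Lambda}\bm{U}$ is the restriction of $\bm{\Lambda}$.

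Carrying this through, the value at the optimal $\bm{X}$ (for fixed $\bm{U}$) becomes $-\log\det(\bm{U}^\top\bm{\Lambda}\bm{U}) - s$, so the outer maximization over $\bm{U}$ reduces to $\min_{\bm{U}^\top\bm{U}=\bm{I}_s}\log\det(\bm{U}^\top\bm{\Lambda}\bm{U})$. By the Poincaré separation / Ky Fan minimum principle, this minimum is attained by choosing $\col(\bm{U})$ to be the eigenspace of the $s$ \emph{smallest} eigenvalues of $\bm{\Lambda}$, giving exactly $\log\det_s(\bm{\Lambda})$. Combining yields the claimed right-hand side $-\log\det_s(\bm{\Lambda}) - s$. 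The main obstacle I anticipate is handling the non-smoothness and non-concavity of $\log\det^s(\cdot)$ itself: $\det^s$ is not concave and is non-differentiable at points where $\lambda_s = \lambda_{s+1}$, so I cannot simply differentiate the original objective. The variational reformulation via $\bm{U}$ is precisely what sidesteps this difficulty, converting an awkward spectral function into a jointly tractable maximization; I would need to justify carefully that the supremum is attained (coercivity from $\bm{\Lambda}\succ 0$ controlling the trace penalty) and that exchanging the order of the two maxima is valid, which is immediate since both are genuine maxima over a joint feasible set.
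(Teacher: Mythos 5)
Your reduction to $\max_{\bm{U}}\max_{\bm{X}}$ via the variational identity $\det^s(\bm{X})=\max_{\bm{U}^\top\bm{U}=\bm{I}_s}\det(\bm{U}^\top\bm{X}\bm{U})$ is sound, and so is the final Poincar\'e/Ky Fan step. The gap is in the middle: for a fixed $\bm{U}$, the maximizer of $\log\det(\bm{U}^\top\bm{X}\bm{U})-\tr(\bm{X}\bm{\Lambda})$ is \emph{not} forced to have range in $\col(\bm{U})$, and the inner optimal value is not $-\log\det(\bm{U}^\top\bm{\Lambda}\bm{U})-s$. Take $d=2$, $s=1$, $\bm{U}=\bm{e}_1$, $\bm{\Lambda}=\bigl(\begin{smallmatrix}1&c\\ c&1\end{smallmatrix}\bigr)$ with $0<|c|<1$: your formula predicts the value $-\log 1-1=-1$, but $\bm{X}=\tfrac{1}{1-c^2}\bigl(\begin{smallmatrix}1&-c\\ -c&c^2\end{smallmatrix}\bigr)$ is positive semidefinite, rank one with range spanned by $(1,-c)^\top\notin\col(\bm{U})$, and attains $-\log(1-c^2)-1>-1$. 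In block form with respect to $[\bm{U},\bm{U}_\perp]$, the PSD constraint only bounds the off-diagonal block of $\bm{X}$ through a Schur complement, and tilting $\bm{X}$ off the subspace buys a negative cross term against the off-diagonal block of $\bm{\Lambda}$ that outweighs the cost. The correct inner value is $\log\det(\bm{U}^\top\bm{\Lambda}^{-1}\bm{U})-s$, i.e., $-\log\det$ of the Schur complement of $\bm{\Lambda}$ onto $\col(\bm{U})$, which exceeds your $-\log\det(\bm{U}^\top\bm{\Lambda}\bm{U})-s$ whenever $\col(\bm{U})$ is not $\bm{\Lambda}$-invariant.

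The consequence is that your argument, as written, only establishes $\max_{\bm{X}\succeq 0}\{\log\det^s(\bm{X})-\tr(\bm{X}\bm{\Lambda})\}\ge-\log\det_s(\bm{\Lambda})-s$; the reverse inequality --- the one needed for the Lagrangian dual to be a valid upper bound --- does not follow, because you have replaced the true inner value $g(\bm{U})$ by a smaller quantity $h(\bm{U})$, and the identity $\max_{\bm{U}}g=\max_{\bm{U}}h$ is essentially the content of the lemma itself. The route is salvageable: with the corrected inner value the outer problem becomes $\max_{\bm{U}}\log\det(\bm{U}^\top\bm{\Lambda}^{-1}\bm{U})-s=\log\det^s(\bm{\Lambda}^{-1})-s=-\log\det_s(\bm{\Lambda})-s$. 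For comparison, the paper's proof avoids this issue entirely: it diagonalizes $\bm{X}$, writes $\tr(\bm{X}\bm{\Lambda})=\sum_i\theta_i\lambda_i$ with $\bm{\theta}=\diag(\bm{Q}^\top\bm{\Lambda}\bm{Q})$, and pins down the optimal $\bm{\theta}$ via the rearrangement inequality together with the Schur--Horn majorization between diagonal entries and eigenvalues (Lemma~\ref{lem:min}), before optimizing over the eigenvalues of $\bm{X}$ in closed form.
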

	\begin{proof}
		See Appendix \ref{proof_lem_ldmax}. \qed
	\end{proof}
	
 Using the result in \Cref{lem:ldmax}, we are able to {show the Lagrangian dual formulation} of MESP.
	\begin{restatable}{theorem}{lagrange} \label{lagrange}
		The optimization problem below is the Lagrangian dual of MESP \eqref{eq_obje}
		\begin{align}
		\textrm{(LD)} \quad z^{LD}:= \min_{\bm{\Lambda}\succeq 0, \nu, \bm{\mu} \in \Re_{+}^n} \bigg \{ -\log \det_s (\bm{\Lambda}) +s\nu+\sum_{i \in [n]} \mu_i-s :  \nu+\mu_i \ge \bm{v}_i^{\top} \bm{\Lambda} \bm{v}_i , \forall i \in [n] \bigg \},\label{eq_dual}
		\end{align}	
		and its optimal value provides an upper bound of MESP, i.e., $z^{LD} \ge z^*$.
		
	\end{restatable}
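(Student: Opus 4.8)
The plan is to dualize only the single coupling constraint $\sum_{i\in[n]} x_i \bm{v}_i\bm{v}_i^\top \succeq \bm{X}$ in formulation \eqref{eq_obje} while keeping the combinatorially tractable constraints ($\bm{X}\succeq 0$, $\sum_i x_i = s$, $\bm{x}\in\{0,1\}^n$) intact. Introducing a symmetric multiplier $\bm{\Lambda}\succeq 0$ for the Loewner inequality $\sum_i x_i\bm{v}_i\bm{v}_i^\top - \bm{X}\succeq 0$, the Lagrangian is
\begin{align*}
L(\bm{x},\bm{X},\bm{\Lambda}) = \log\det^s(\bm{X}) + \tr\Big(\bm{\Lambda}\Big(\sum_{i\in[n]} x_i\bm{v}_i\bm{v}_i^\top - \bm{X}\Big)\Big) = \log\det^s(\bm{X}) - \tr(\bm{X}\bm{\Lambda}) + \sum_{i\in[n]} x_i\, \bm{v}_i^\top\bm{\Lambda}\bm{v}_i,
\end{align*}
and the dual function is $g(\bm{\Lambda}) := \max_{\bm{x},\bm{X}} L$ taken over the retained set $\{\bm{x}\in\{0,1\}^n,\ \sum_i x_i=s,\ \bm{X}\succeq 0\}$. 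For weak duality I would argue that at any $(\bm{x},\bm{X})$ feasible for \eqref{eq_obje} the term $\tr(\bm{\Lambda}(\sum_i x_i\bm{v}_i\bm{v}_i^\top - \bm{X}))$ is nonnegative (product of two PSD matrices has nonnegative trace), so $L\ge \log\det^s(\bm{X})$ there; since the retained set contains the feasible set of \eqref{eq_obje}, we get $g(\bm{\Lambda})\ge z^*$ for every $\bm{\Lambda}\succeq 0$, and hence $z^{LD}=\min_{\bm{\Lambda}\succeq 0} g(\bm{\Lambda})\ge z^*$.

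It then remains to evaluate $g(\bm{\Lambda})$ in closed form. Because $L$ is additively separable and the constraints on $\bm{X}$ and on $\bm{x}$ do not couple, the maximization splits. For the matrix part, \Cref{lem:ldmax} gives directly $\max_{\bm{X}\succeq 0}\{\log\det^s(\bm{X}) - \tr(\bm{X}\bm{\Lambda})\} = -\log\det_s(\bm{\Lambda}) - s$ whenever $\bm{\Lambda}\succ 0$. For the combinatorial part, writing $c_i := \bm{v}_i^\top\bm{\Lambda}\bm{v}_i$, I would use that the cardinality polytope is integral, so
\begin{align*}
\max\Big\{ \sum_{i\in[n]} c_i x_i : \sum_{i\in[n]} x_i = s,\ \bm{x}\in\{0,1\}^n \Big\} = \max\Big\{ \sum_{i\in[n]} c_i x_i : \sum_{i\in[n]} x_i = s,\ \bm{x}\in[0,1]^n \Big\},
\end{align*}
and then apply LP strong duality to the continuous maximum. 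This introduces the free multiplier $\nu$ (for $\sum_i x_i = s$) and the nonnegative multipliers $\bm{\mu}\in\Re_+^n$ (for $x_i\le 1$), yielding the value $\min_{\nu,\bm{\mu}\ge 0}\{ s\nu + \sum_i\mu_i : \nu+\mu_i\ge c_i,\ \forall i\in[n]\}$.

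Combining the two evaluations gives $g(\bm{\Lambda}) = -\log\det_s(\bm{\Lambda}) - s + \min_{\nu,\bm{\mu}\ge 0}\{s\nu + \sum_i\mu_i : \nu+\mu_i\ge \bm{v}_i^\top\bm{\Lambda}\bm{v}_i\}$, and absorbing the outer $\min_{\bm{\Lambda}\succeq 0}$ into the inner minimization produces exactly \eqref{eq_dual}, completing both the dual characterization and the bound $z^{LD}\ge z^*$. The main obstacle I anticipate is the boundary behavior when $\bm{\Lambda}$ is singular: \Cref{lem:ldmax} assumes $\bm{\Lambda}\succ 0$, and for singular $\bm{\Lambda}$ one must verify that letting the eigenvalues of $\bm{X}$ supported on $\ker(\bm{\Lambda})$ grow without bound drives $\log\det^s(\bm{X}) - \tr(\bm{X}\bm{\Lambda})$ to $+\infty$, consistently with $-\log\det_s(\bm{\Lambda})=+\infty$ there, so that such $\bm{\Lambda}$ do not affect the minimization. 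Handling this degenerate case and justifying the integrality/LP-duality step are the only nonroutine points; the rest is the standard separable Lagrangian computation.
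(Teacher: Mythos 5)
Your proposal is correct and follows essentially the same route as the paper: dualize the Loewner coupling constraint with multiplier $\bm{\Lambda}$, split the inner maximization into the $\bm{X}$-part (evaluated via \Cref{lem:ldmax}) and the $\bm{x}$-part (evaluated via integrality of the cardinality polytope plus LP duality), and conclude by weak duality. Your explicit treatment of the nonnegative-trace argument and of the boundary case of singular $\bm{\Lambda}$ is a bit more careful than the paper's proof, which works with $\bm{\Lambda}\succ 0$ throughout, but the substance is the same.
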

	\begin{proof}We let $\bm{\Lambda} \succ 0$ denote the Lagrange multiplier associated with the constraint $\sum_{i \in [n]} x_i \bm{v}_i \bm{v}_i^{\top} \succeq \bm{X}$ in MESP \eqref{eq_obje}.
		Thus, the resulting dual problem is
		\begin{align}
		z^{LD}:=\min_{\bm{\Lambda}\succ 0}\Bigg\{ \max_{\bm{x}, \bm{X}\succeq 0} \bigg \{ \log \det^s \left (\bm{X} \right ) -\tr(\bm{X}\bm{\Lambda}) + \sum_{i \in [n]} x_i \bm{v}_i^{\top} \bm{\Lambda}\bm{v}_i:  \sum_{i\in [n]} x_i = s, \bm{x} \in \{0,1\}^n \bigg \}\Bigg\}. \label{eq_duale}
		\end{align}
		Note that the inner maximization problem above can be separated into two parts: (i) maximization over {$\bm{X}$} and (ii) maximization over $\bm{x}$.
		\begin{enumerate}[(i)]
			\item For the maximization over $\bm{X} $, applying the identity in \Cref{lem:ldmax}, we have
			\begin{align*}
			\max_{ \bm{X}\succeq 0} \left \{ \log \det^s \left (\bm{X} \right ) -\tr(\bm{X}\bm{\Lambda}) \right \}= -\log \det_s (\bm{\Lambda}) -s .
			\end{align*}
			
			\item For the maximization over $\bm{x}$, it is known that optimizing a linear function over a cardinality constraint is equivalent to its continuous relaxation, which leads to that
			\begin{align*}
			\max_{\bm{x}} \bigg \{ \sum_{i \in [n]} x_i \bm{v}_i^{\top} \bm{\Lambda}\bm{v}_i:  \sum_{i\in [n]} x_i = s, \bm{x} \in \{0,1\}^n \bigg \}=\min_{\nu, \bm{\mu}\in \Re^{n}_+} \bigg \{ s\nu+ \sum_{i \in [n]} \mu_i: \nu+\mu_i \ge \bm{v}_i^{\top} \bm{\Lambda} \bm{v}_i, \forall i \in [n] \bigg \},
			\end{align*}
			where the right-hand side is the dual of the continuous relaxation of the left-hand side.
		\end{enumerate}
		Plugging the above results (i.e., Parts (i) and (ii)) into the dual problem \eqref{eq_duale} and combining the minimization problems over $(\bm{\Lambda}, \nu, \bm{\mu})$ together, we arrive at \eqref{eq_dual}.
		
		Further, the inequality $z^*\leq z^{LD}$ holds due to the weak duality. \qed
		
	\end{proof}
	
	\subsection{Primal Characterization (PC) of LD and Convex Integer Program of MESP}
	In this subsection, we  show the primal characterization (PC) of LD \eqref{eq_dual}, which inspires us an equivalent convex integer program of MESP \eqref{eq_obj}.
	
	According to the standard result (see, e.g., \citealt{bertsekas1982constrained,lemarechal2001geometric}) on a primal characterization of the Lagrangian dual, we have
	\begin{align*} %
	\textrm{(PC)} \quad z^{LD} := \max_{w,\bm{x}, \bm{X} \succ 0} \Bigg \{w: &\sum_{i \in [n]} x_i \bm{v}_i \bm{v}_i^{\top} \succeq \bm{X}, \notag\\
	&(w,\bm{x}, \bm{X})\in \conv\bigg\{(w,\bm{x}, \bm{X}): w\leq \log \det^s (\bm{X} ),  \sum_{i\in [n]} x_i =s,
	\bm{x} \in \{0,1\}^n\bigg\} \Bigg \} .
	\end{align*}
In general, the convex hull is difficult to obtain, and thus alternatively, we derive the primal characterization through the dual formulation of LD \eqref{eq_dual}.
	
	The primal characterization relies on the following results. {First, for any given $\bm{\lambda}\in \Re^d$, let us define a unique integer $k$ based on its sorted elements as below.}
	
	\begin{lemma}[lemma 14, \citealt{nikolov2015randomized}] \label{lem_k}
		Given a vector $\bm{\lambda} \in \Re^d$ with its elements sorted by $\lambda_1 \ge \cdots \ge \lambda_d$ and an integer $s \in [d]$, there exists a unique integer $0\le k < s$ such that $\lambda_{k} > \frac{1}{s-k} \sum_{i\in[k+1,d]} \lambda_{i} \ge \lambda_{k+1}$, where by convention $\lambda_{0}=\infty$.
		\label{lem:kappa}
	\end{lemma}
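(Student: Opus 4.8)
The plan is to work directly with the quantity $f_k := \frac{1}{s-k}\sum_{i\in[k+1,d]}\lambda_i$ for $0\le k\le s-1$, and to show that the pair of requirements $\lambda_k > f_k$ (call it Condition I) and $f_k \ge \lambda_{k+1}$ (call it Condition II) is met at exactly one index. The single workhorse is the elementary telescoping identity $(s-k+1)f_{k-1} = \lambda_k + (s-k)f_k$, which rearranges to
\begin{align*}
(s-k+1)\big(f_{k-1}-\lambda_k\big) = (s-k)\big(f_k-\lambda_k\big).
\end{align*}
Since $s-k\ge 1$ for every $1\le k\le s-1$, the two sides share the same sign, so $f_{k-1}-\lambda_k$ and $f_k-\lambda_k$ are simultaneously positive, zero, or negative. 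Everything will follow from this sign-sharing relation together with the ordering $\lambda_1\ge\cdots\ge\lambda_d$ and the convention $\lambda_0=\infty$.

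First I would establish existence. At the boundary index $k=s-1$ we have $f_{s-1}=\sum_{i\in[s,d]}\lambda_i$, so Condition II reads $\sum_{i\in[s+1,d]}\lambda_i\ge 0$; since the $\lambda_i$ are nonnegative (being eigenvalues of a positive semidefinite matrix in the application), this trailing sum is nonnegative and Condition II holds at $k=s-1$. Let $k^*$ be the smallest index in $\{0,1,\ldots,s-1\}$ for which Condition II holds; it exists and satisfies $0\le k^*\le s-1<s$. It then remains to verify Condition I at $k^*$. If $k^*=0$ this is immediate from $\lambda_0=\infty$. If $k^*\ge 1$, minimality gives $f_{k^*-1}<\lambda_{k^*}$, i.e.\ $f_{k^*-1}-\lambda_{k^*}<0$; the sign-sharing identity then forces $f_{k^*}-\lambda_{k^*}<0$, which is exactly $\lambda_{k^*}>f_{k^*}$. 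Thus $k^*$ meets both conditions.

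For uniqueness I would record two closure properties, both immediate from the sign-sharing identity and the ordering of the $\lambda_i$. Condition I is closed downward: if $\lambda_k>f_k$ then $f_{k-1}-\lambda_k<0$, whence $f_{k-1}<\lambda_k\le\lambda_{k-1}$, so Condition I holds at $k-1$. Condition II is closed upward: if $f_k\ge\lambda_{k+1}$ then, applying the identity at index $k+1$, we get $f_{k+1}\ge\lambda_{k+1}\ge\lambda_{k+2}$, so Condition II holds at $k+1$. Consequently, for any $k<k^*$ Condition II fails by minimality of $k^*$, while for any $k>k^*$ the upward closure yields Condition II at $k-1$, i.e.\ $f_{k-1}\ge\lambda_k$, and the identity gives $f_k\ge\lambda_k$, so Condition I fails at $k$. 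Hence both conditions hold only at $k=k^*$, which proves uniqueness.

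The main obstacle is conceptual rather than computational: the two inequalities push in opposite directions (one caps $f_k$ from above by $\lambda_k$, the other from below by $\lambda_{k+1}$), so the real content is to see that a single saturation index is pinned down exactly. The sign-sharing identity is precisely what converts each inequality into a statement one index over, thereby coupling the downward-closed segment (Condition I) with the upward-closed segment (Condition II) so that they overlap in a unique point. The one place to be careful is the existence step: it uses nonnegativity of the trailing $\lambda_i$ rather than arbitrary reals, and indeed for a general $\bm{\lambda}\in\Re^d$ with a negative tail sum no admissible $k$ need exist, so this hypothesis is genuinely used.
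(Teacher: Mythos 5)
Your proof is correct. The paper does not prove this statement at all --- it is imported verbatim as Lemma~14 of \citet{nikolov2015randomized} --- so there is no in-paper argument to compare against; your telescoping identity $(s-k+1)(f_{k-1}-\lambda_k)=(s-k)(f_k-\lambda_k)$, the downward closure of $\lambda_k>f_k$, and the upward closure of $f_k\ge\lambda_{k+1}$ together give a clean, self-contained existence-and-uniqueness argument that checks out at every step (including the base case $k=s-1$ and the positivity of the factors $s-k$). Your closing caveat is also well taken: the lemma as transcribed here says only $\bm{\lambda}\in\Re^d$, but existence genuinely needs $\sum_{i\in[s+1,d]}\lambda_i\ge 0$ (e.g.\ $d=3$, $s=2$, $\bm{\lambda}=(1,0,-5)$ admits no valid $k$); this is harmless for the paper, where $\bm{\lambda}$ is always the spectrum of a positive semidefinite matrix, but it is an implicit hypothesis worth making explicit.
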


	Throughout this paper, we use $k$ to denote the unique integer in \Cref{lem_k}. Next, we define the objective function of the primal characterization below, which can be also found in \cite{nikolov2015randomized}.
	
\begin{definition}\label{def:objPC}
	For a $d\times d$ matrix $\bm{X} \succeq 0$ with its eigenvalues $\lambda_1 \ge \cdots \ge \lambda_d \ge 0$, let us denote
	$$\Gamma_s(\bm{X}) := \log \bigg (\prod_{i\in [k]} \lambda_{i}\bigg) + (s-k) \log \bigg(\frac{1}{s-k} \sum_{i\in [k+1,d]} \lambda_{i} \bigg), $$ 
where the unique integer $k$ is defined in Lemma~\ref{lem:kappa}.
\end{definition}

We are now ready to derive the convex conjugate of the objective function in LD \eqref{eq_dual}.
		\begin{restatable}{lemma}{lemldmaxtwo} \label{lem:ldmax2} 
			Given a $d \times d$ matrix $\bm{X} \succeq 0$ with rank $r\in [s,d]$, suppose that the eigenvalues of $\bm{X}$ are $\lambda_1 \ge \cdots \ge \lambda_r > \lambda_{r+1}=\cdots = \lambda_d =0$ and $\bm{X} =\bm{Q}\Diag(\bm{\lambda})\bm{Q}^{\top}$ with an orthonormal matrix $\bm{Q}$. Then
			\begin{enumerate}[(i)]
				\item \begin{align} 
				\min_{\bm{\Lambda} \succ 0} \left \{-\log \det_s (\bm{\Lambda}) + \tr (\bm{X} \bm{\Lambda})\right \} = \min_{\begin{subarray}{c}\bm{\beta}\in \Re_{+}^d,\\
					0<\beta_1 \le \cdots \le \beta_d
					\end{subarray}}\Bigg\{ -\sum_{i\in [s]}\log (\beta_{i}) + \sum_{i\in [d]} \lambda_{i} \beta_{i}  \Bigg \}, \label{ldmin}
				\end{align}
				\item \begin{align}
				\min_{\begin{subarray}{c}\bm{\beta}\in \Re_{+}^d,\\
					0<\beta_1 \le \cdots \le \beta_d
					\end{subarray}}\Bigg\{ -\sum_{i\in [s]}\log (\beta_{i}) + \sum_{i\in [d]} \lambda_{i} \beta_{i}  \Bigg\} = \Gamma_s (\bm{X})+s . \label{ldmin1}
				\end{align}
			\end{enumerate}
		\end{restatable}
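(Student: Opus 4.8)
The plan is to treat the two parts separately: Part (i) is a matrix-to-scalar reduction driven by a trace inequality, and Part (ii) is a finite-dimensional convex program whose optimizer is pinned down by the integer $k$ of \Cref{lem_k}.

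For Part (i), I would first note that $-\log\det_s(\bm{\Lambda})$ depends on $\bm{\Lambda}$ only through its eigenvalues, which I write in increasing order as $0 < \beta_1 \le \cdots \le \beta_d$; by \Cref{def:det} the $s$ smallest are $\beta_1,\ldots,\beta_s$, so $-\log\det_s(\bm{\Lambda}) = -\sum_{i\in[s]}\log\beta_i$, and the constraint $\bm{\Lambda}\succ 0$ is exactly $\beta_1 > 0$. The remaining term $\tr(\bm{X}\bm{\Lambda})$ is the only place where the eigenvectors of $\bm{\Lambda}$ enter, so I would split the minimization into an outer minimization over the spectrum $\bm{\beta}$ and an inner minimization over the orientation of $\bm{\Lambda}$. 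The inner problem is handled by the von Neumann (Ruhe) trace inequality: for symmetric matrices the trace of a product is minimized when the two sets of eigenvectors are oppositely aligned, giving $\min\tr(\bm{X}\bm{\Lambda}) = \sum_{i\in[d]}\lambda_i\beta_i$, in which the largest $\lambda_i$ is paired with the smallest $\beta_i$; this value is attained because we are free to choose $\bm{\Lambda}$ simultaneously diagonalizable with $\bm{X}$ in reversed order. Substituting yields the right-hand side of \eqref{ldmin}.

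For Part (ii), the objective $-\sum_{i\in[s]}\log\beta_i + \sum_{i\in[d]}\lambda_i\beta_i$ is convex and the feasible region $\{0<\beta_1\le\cdots\le\beta_d\}$ is polyhedral, so it suffices to exhibit a single KKT point. I would take as candidate the two-block vector $\beta^*_i = 1/\lambda_i$ for $i\in[k]$ and $\beta^*_i = t := (s-k)/\sum_{j\in[k+1,d]}\lambda_j$ for $i\in[k+1,d]$, where $k$ is the integer from \Cref{lem_k}. Feasibility of the ordering follows from the monotonicity of $\bm{\lambda}$ on the first block and, at the junction, from the first inequality $\lambda_k > \tfrac{1}{s-k}\sum_{j\in[k+1,d]}\lambda_j = 1/t$, which gives $\beta^*_k = 1/\lambda_k < t$. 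For optimality I would introduce multipliers $\gamma_i\ge 0$ for the constraints $\beta_i\le\beta_{i+1}$, set $\gamma_i = 0$ on the separated block, and use stationarity on the tied block (telescoping with $\gamma_k=\gamma_d=0$) to both recover the formula for $t$ and solve for the $\gamma_i$.

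The crux will be verifying dual feasibility $\gamma_i\ge 0$ on the tied block, and this is exactly where the second defining inequality of $k$ is used. Summation of the stationarity conditions gives, for $m\in[k+1,s]$, $\gamma_m = \tfrac{m-k}{t} - \sum_{j=k+1}^m\lambda_j$, which is nonnegative because each $\lambda_j$ with $j\ge k+1$ satisfies $\lambda_j \le \lambda_{k+1}\le\tfrac{1}{s-k}\sum_{l\in[k+1,d]}\lambda_l = 1/t$, so $\sum_{j=k+1}^m\lambda_j \le (m-k)/t$; for $m>s$ the expression reduces to the tail sum $\sum_{j=m+1}^d\lambda_j\ge 0$ and is automatically nonnegative, while complementary slackness holds since the tied constraints are active. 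With the KKT point in hand, convexity certifies global optimality, and evaluating the objective at $\beta^*$—using $\sum_{i\in[k]}\lambda_i\beta^*_i = k$ and $\sum_{i\in[k+1,d]}\lambda_i\beta^*_i = s-k$ for the linear part together with \Cref{def:det} and \Cref{def:objPC} for the logarithmic part—collapses to $\Gamma_s(\bm{X}) + s$, establishing \eqref{ldmin1}. The points demanding care are the edge cases ($k=0$, eigenvalue ties within the first block, and indices $i>s$ where no logarithm appears), which I would check do not disturb the KKT certificate.
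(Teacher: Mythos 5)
Your proposal is correct and follows essentially the same two-step strategy as the paper: reduce Part (i) to the spectrum by minimizing $\tr(\bm{X}\bm{\Lambda})$ over the orientation of $\bm{\Lambda}$, then certify the two-block candidate for Part (ii) via KKT, with the two defining inequalities of $k$ from \Cref{lem_k} supplying primal ordering feasibility and dual nonnegativity exactly as in the paper. The only (cosmetic) differences are that you invoke the von Neumann/Ruhe trace inequality directly where the paper re-derives that fact through the rearrangement inequality, Schur--Horn majorization, and the auxiliary \Cref{lem:min}, and that you keep the chain constraints $\beta_i\le\beta_{i+1}$ with telescoping multipliers $\gamma_i$ where the paper instead introduces a threshold variable $\tau$ and relaxes the chain to the two-sided comparisons $\beta_i\le\tau$ ($i\le s$) and $\beta_i\ge\tau$ ($i>s$); your consistency check that the forward and backward recursions for $\gamma_s$ agree is precisely the identity defining $t$, so the certificate closes.
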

	\begin{proof}
		See Appendix \ref{proof_lem_ldmax2}.\qed
	\end{proof}
	
With the convex conjugate of the objective function in LD \eqref{eq_dual}, using the Lagrangian dual method, we are able to derive its dual problem and also show the primal characterization below.
	\begin{restatable}{theorem}{primal} \label{primal_cont}
		LD \eqref{eq_dual} has the following primal characterization, i.e., 
		\begin{align}\label{eq_pcont}
		\textrm{(PC)} \quad z^{LD} := \max_{\bm{x}} \Bigg\{ \Gamma_s\bigg(\sum_{i\in [n]}x_i \bm{v}_i\bm{v}_i^{\top}\bigg ):  \sum_{i\in [n]} x_i =s,
		\bm{x} \in [0,1]^n \Bigg \},
		\end{align}	
		where function $\Gamma_s(\cdot)$ can be found in \Cref{def:objPC}.
	\end{restatable}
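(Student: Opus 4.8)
The plan is to obtain PC as the Lagrangian dual of the convex program LD \eqref{eq_dual} itself, and then invoke strong duality to identify the resulting dual value with $z^{LD}$. Concretely, I would dualize only the $n$ coupling constraints $\nu+\mu_i\ge \bm v_i^\top\bm\Lambda\bm v_i$ with multipliers $x_i\ge 0$, while keeping $\bm\Lambda\succeq 0$, $\nu$ free and $\bm\mu\in\Re_+^n$ inside the inner minimization. Writing $\bm X(\bm x):=\sum_{i\in[n]}x_i\bm v_i\bm v_i^\top$ and using $\sum_{i\in[n]} x_i\bm v_i^\top\bm\Lambda\bm v_i=\tr(\bm X(\bm x)\bm\Lambda)$, the Lagrangian regroups as
$$L=\Big[-\log{\det\nolimits_s}(\bm\Lambda)+\tr(\bm X(\bm x)\bm\Lambda)\Big]+\nu\Big(s-\sum_{i\in[n]}x_i\Big)+\sum_{i\in[n]}\mu_i(1-x_i)-s,$$
so that the three groups of variables decouple.

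The second step is to carry out the inner minimization block by block. Minimizing over the free multiplier $\nu$ forces the cardinality condition $\sum_{i\in[n]} x_i=s$ (otherwise the infimum is $-\infty$); minimizing over $\bm\mu\in\Re_+^n$ forces $x_i\le 1$ for all $i$ and is attained at $\bm\mu=\bm 0$; and minimizing the bracketed term over $\bm\Lambda$ is exactly the conjugate computed in \Cref{lem:ldmax2}, which equals $\Gamma_s(\bm X(\bm x))+s$. The additive constants $+s$ and $-s$ cancel, so the Lagrangian dual of LD reduces to $\max\{\Gamma_s(\bm X(\bm x)) : \sum_{i\in[n]} x_i=s,\ \bm x\in[0,1]^n\}$, which is precisely PC \eqref{eq_pcont}. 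For fractional $\bm x$ with $\operatorname{rank}\bm X(\bm x)<s$, both the inner minimum and $\Gamma_s(\bm X(\bm x))$ equal $-\infty$, so such points are harmlessly excluded by the maximization and the identity of \Cref{lem:ldmax2} extends to them by the usual conventions.

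The final and most delicate step is to upgrade the weak-duality inequality (the value of PC is $\le z^{LD}$, which is automatic) to an equality. For this I would verify that LD is a genuine convex program satisfying Slater's condition. Convexity hinges on $-\log{\det\nolimits_s}(\bm\Lambda)$ being convex on the positive definite cone; this follows from the variational identity $\log{\det\nolimits_s}(\bm\Lambda)=\min\{\log\det(\bm U^\top\bm\Lambda\bm U):\bm U\in\Re^{d\times s},\ \bm U^\top\bm U=\bm I_s\}$, which expresses $\log{\det\nolimits_s}$ as a pointwise minimum of the concave maps $\bm\Lambda\mapsto\log\det(\bm U^\top\bm\Lambda\bm U)$ and hence as a concave function of $\bm\Lambda$. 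The coupling constraints are affine in $(\bm\Lambda,\nu,\bm\mu)$, and Slater's condition holds by taking, for instance, $\bm\Lambda=\bm I_d\succ 0$, $\mu_i=1$ and $\nu$ large enough that every constraint is strict. Strong duality for this convex program then yields that the value of PC equals $z^{LD}$.

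The main obstacle I anticipate is precisely this strong-duality argument: the objective of LD is not manifestly convex, so the variational representation of $\log{\det\nolimits_s}$ (or an equivalent Ky Fan--type / Poincar\'e separation argument) is essential, and one must simultaneously confirm that a strictly feasible point exists and that the boundary cases (namely $\bm\Lambda$ singular, where $-\log{\det\nolimits_s}(\bm\Lambda)=+\infty$, and $\bm X(\bm x)$ rank-deficient) do not corrupt the conjugate identity imported from \Cref{lem:ldmax2}.
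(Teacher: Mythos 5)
Your proposal is correct and follows essentially the same route as the paper: dualize the $n$ linear constraints of LD with multipliers $\bm{x}\ge 0$, invoke Lemma \ref{lem:ldmax2} to evaluate the $\bm{\Lambda}$-block of the inner minimization as $\Gamma_s(\sum_i x_i\bm{v}_i\bm{v}_i^{\top})+s$, recover $\sum_i x_i=s$ and $x_i\le 1$ from the $(\nu,\bm{\mu})$-block, and close the gap via Slater-based strong duality. The only difference is that you explicitly verify convexity of $-\log\det_s(\cdot)$ through its variational (Poincar\'e-type) representation and exhibit a strictly feasible point, whereas the paper simply cites the relaxed Slater condition and a strong-duality theorem; this added detail is sound and does not change the argument.
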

	
	\begin{proof}
	In LD \eqref{eq_dual}, let us introduce Lagrangian multiplies $\bm{x}$ associated with the constraints. Since $z^{LD}\geq z^*$ and the constraint system of LD \eqref{eq_dual} satisfies the relaxed Slater condition, according to theorem 3.2.2 in \cite{ben2012optimization}, the strong duality holds, i.e.,
		\begin{align*}
		z^{LD} := \max_{\bm{x}\in \Re_+^n} \Bigg \{ \min_{\bm{\Lambda}\succ 0, \nu , \bm{\mu} \in \Re_{+}^n } \bigg \{-\log \det_s (\bm{\Lambda}) +s\nu+\sum_{i \in [n]}\mu_i-s + \sum_{i \in [n]} x_i (\bm{v}_i^{\top} \bm{\Lambda} \bm{v}_i- \nu-\mu_i ) \bigg \}\Bigg\}.
		\end{align*}
		The inner minimization above can be separated into two parts: (i) minimization over $\bm{\Lambda} $ and (ii) minimization over $(\nu,\bm{\mu})$, which are discussed below.
		\begin{enumerate}[(i)]
			\item Let $\bm{X}= \sum_{i \in [n]} x_i \bm{v}_i \bm{v}_i^{\top}$. For the minimization over $\bm{\Lambda} $, applying the identities \eqref{ldmin} and \eqref{ldmin1} in Lemma \ref{lem:ldmax2} and using the fact that $\sum_{i \in [n]} x_i \bm{v}_i^{\top} \bm{\Lambda} \bm{v}_i=\tr (\bm{X} \bm{\Lambda})$, we have
			\begin{align*}
			\min_{\bm{\Lambda} \succ 0} \left \{-\log \det_s (\bm{\Lambda}) + \tr (\bm{X} \bm{\Lambda})\right \}-s = \Gamma_s (\bm{X}).
			\end{align*}

			\item For the minimization over $(\nu,\bm{\mu})$, we have
			\begin{align*}
			\min_{\nu , \bm{\mu} \in \Re_+^n } \bigg \{s\nu+\sum_{i \in [n]}\mu_i + \sum_{i \in [n]} x_i (- \nu-\mu_i )\bigg\}=\begin{cases}
			0,&\textrm{ if }\sum_{i \in [n]}x_i=s, x_i \leq 1, \forall i\in [n];\\
			-\infty, &\textrm{ otherwise}.
			\end{cases}.
			\end{align*}
		\end{enumerate}
		Putting the above two pieces together, we arrive at \eqref{eq_pcont}. \qed
	\end{proof}

	We remark that PC \eqref{eq_pcont} has the same objective function as another convex relaxation proposed by \cite{nikolov2015randomized}, but we distinguish our formulation from \cite{nikolov2015randomized}'s in the following three aspects: (i) We derive the primal characterization from a Lagrangian dual perspective, which is also applicable to the A-Optimality (see Section \ref{sec:amesp}) and enables us to derive supdifferentials of the objective function; (ii) Our PC \eqref{eq_pcont} can be stronger than the one in \cite{nikolov2015randomized} due to the extra constraints $x_i\leq 1$ for each $i\in [n]$; and (iii) LD \eqref{eq_dual} and PC \eqref{eq_pcont} together are critical to the analysis of the local search algorithm in Section \ref{sec:loc}.

	The PC \eqref{eq_pcont} is a concave maximization problem and is efficiently solvable. In the next subsection, we introduce the Frank-Wolfe algorithm to solve it. However, according to \Cref{def:objPC}, the objective function $\Gamma_s(\cdot)$ might not be differentiable. Fortunately, the following result shows how to derive its supdifferentials.
	\begin{restatable}{proposition}{subdiff} \label{supdiff}
		Given a $d \times d$ matrix $\bm{X} \succeq 0$ with rank $r\in [s,d]$, suppose that its eigenvalues are $\lambda_1 \ge \cdots \ge \lambda_r > \lambda_{r+1}=\cdots = \lambda_d =0$ and ${\bm{X}} = \bm{Q} \Diag(\bm{\lambda}) \bm{Q}^{\top}$ with an orthonormal matrix $\bm{Q}$. Then the supdifferential of the function $\Gamma_s(\cdot)$ at $\bm{X}$ that is denoted by $\partial \Gamma_s(\bm{X})$ is 
		\begin{align*}
		&\partial \Gamma_s(\bm{X}) =\Bigg\{\bm{Q} \Diag(\bm{\beta}) \bm{Q}^{\top}: \bm{X} = \bm{Q} \Diag(\bm{\lambda}) \bm{Q}^{\top}, \bm{Q} \textrm{\rm\ is orthonormal}, \lambda_1 \ge \cdots \ge \lambda_d,\\
		&\bm{\beta} \in \conv \bigg \{\bm{\beta}: \beta_i = \frac{1}{\lambda_i}, \forall i \in [k], \beta_i = \frac{s-k}{\sum_{i\in [k+1,d]} \lambda_{i}}, \forall i \in [k+1, r],\beta_i \ge \beta_r, \forall i \in [r+1,d] \bigg \} \Bigg\},
		\end{align*}
		where the unique integer $k$ follows from \Cref{lem:kappa}.
		Note that the function $\Gamma_s(\cdot)$ is differentiable whenever $\bm{X}$ is a positive-definite matrix and the unique supgradient becomes the gradient.
	\end{restatable}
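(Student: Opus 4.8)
The plan is to exploit the variational representation of $\Gamma_s$ furnished by \Cref{lem:ldmax2} and then read off the supdifferential via Fenchel conjugacy. Combining the identities \eqref{ldmin} and \eqref{ldmin1}, for every $\bm{X}\succeq 0$ of rank $r\in[s,d]$ we have
\[
\Gamma_s(\bm{X}) = \min_{\bm{\Lambda}\succ 0}\left\{\tr(\bm{X}\bm{\Lambda}) - \log\det\nolimits_s(\bm{\Lambda})\right\} - s .
\]
Hence $\Gamma_s$ is a pointwise infimum of the affine-in-$\bm{X}$ maps $\bm{X}\mapsto \tr(\bm{X}\bm{\Lambda}) - \log\det\nolimits_s(\bm{\Lambda}) - s$ (each with ``slope'' $\bm{\Lambda}$), so it is closed and concave. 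Reading this representation as a concave biconjugate identifies the concave conjugate $\Gamma_s^*(\bm{\Lambda}) = \log\det\nolimits_s(\bm{\Lambda})+s$ on $\{\bm{\Lambda}\succ 0\}$, and the Fenchel--Young equality then characterizes the supgradients: $\bm{\Lambda}\in\partial\Gamma_s(\bm{X})$ if and only if $\bm{\Lambda}$ attains the inner minimum, i.e.
\[
\partial\Gamma_s(\bm{X}) = \arg\min_{\bm{\Lambda}\succ 0}\left\{\tr(\bm{X}\bm{\Lambda})-\log\det\nolimits_s(\bm{\Lambda})\right\}.
\]
So the task reduces to describing this optimal set explicitly.

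The next step is to solve the inner minimization. First I would invoke von Neumann's trace inequality: since $-\log\det\nolimits_s(\bm{\Lambda})$ depends only on the spectrum of $\bm{\Lambda}$, and $\tr(\bm{X}\bm{\Lambda})$ is, for a fixed spectrum of $\bm{\Lambda}$, minimized exactly when $\bm{\Lambda}$ is simultaneously diagonalizable with $\bm{X}$ under the opposite eigenvalue ordering, every minimizer must have the form $\bm{\Lambda}=\bm{Q}\Diag(\bm{\beta})\bm{Q}^{\top}$ with $\bm{X}=\bm{Q}\Diag(\bm{\lambda})\bm{Q}^{\top}$, $\lambda_1\ge\cdots\ge\lambda_d$, and $0<\beta_1\le\cdots\le\beta_d$. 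This reduces the matrix problem to the one-dimensional program in $\bm{\beta}$ already appearing in \Cref{lem:ldmax2}, namely $\min\{-\sum_{i\in[s]}\log\beta_i+\sum_{i\in[d]}\lambda_i\beta_i : 0<\beta_1\le\cdots\le\beta_d\}$. Solving it by KKT/pool-adjacent-violators, and using the defining inequality $\lambda_k> t\ge\lambda_{k+1}$ of the integer $k$ from \Cref{lem:kappa} with $t:=\tfrac{1}{s-k}\sum_{i\in[k+1,d]}\lambda_i$, pins down the optimal $\bm{\beta}$: the first $k$ coordinates sit at their unconstrained optima $\beta_i=1/\lambda_i$; the ordering constraints bind across the block $[k+1,r]$ and pool those coordinates to the common value $\beta_i=1/t=(s-k)/\sum_{i\in[k+1,d]}\lambda_i$; and the coordinates $i>r$ (for which $\lambda_i=0$ and no logarithm appears) are free subject only to $\beta_i\ge\beta_r$. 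This is precisely the set appearing inside the convex hull in the statement.

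Finally I would assemble the matrix-level description. Mapping these $\bm{\beta}$-minimizers back through all admissible eigenbases $\bm{Q}$ of $\bm{X}$ produces exactly $\{\bm{Q}\Diag(\bm{\beta})\bm{Q}^{\top}:\ldots\}$. The convex-hull wrapper is harmless here because the strict gap $\lambda_k>\lambda_{k+1}$ forbids any eigenspace of $\bm{X}$ from straddling the $k/(k+1)$ boundary, so on every eigenspace inside $\col(\bm{X})$ the corresponding block of $\bm{\beta}$ is constant; consequently the $\col(\bm{X})$-part of $\bm{\Lambda}$ is independent of the choice of $\bm{Q}$, while the $\col(\bm{X})^{\perp}$-part ranges over $\{\bm{M}\succeq t^{-1}\bm{I}\}$, a convex set. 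When $\bm{X}\succ 0$ we have $r=d$, there are no free null-space directions, and the minimizer is therefore unique, so $\partial\Gamma_s(\bm{X})$ is a singleton and coincides with the gradient, yielding the stated differentiability.

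The main obstacle is the boundary behavior underlying Step~1: because $\bm{X}$ may be singular, the inner minimization runs over the \emph{open} cone $\{\bm{\Lambda}\succ 0\}$ and its optimal set is unbounded in the null-space directions, so one must verify carefully that the Fenchel correspondence still delivers exactly this (possibly unbounded) argmin as the supdifferential---equivalently that $\Gamma_s$ is closed proper concave and the minimum is attained within $\bm{\Lambda}\succ 0$ (which holds since every optimal coordinate is bounded below by $\min\{1/\lambda_1,1/t\}>0$). The remaining delicate point is in Step~2, where the interaction between the monotonicity constraints $\beta_1\le\cdots\le\beta_d$ and possible repeated eigenvalues of $\bm{X}$ must be tracked precisely enough to certify that the pooled value is exactly $1/t$ and that the genuinely free directions are exactly those indexed by $\{i>r\}$.
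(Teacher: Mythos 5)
Your argument is correct in outline and lands on the same characterization, but it travels a genuinely different road from the paper's. The paper treats $\Gamma_s$ as a spectral function: it defines the symmetric eigenvalue function $\gamma_s(\bm{\lambda})$ via identity \eqref{ldmin1}, invokes Lewis's transfer theorem (Corollary 2.5 of \citealt{lewis1995convex}) to convert $\partial\gamma_s(\bm{\lambda})$ into $\partial\Gamma_s(\bm{X})$, and then identifies $\partial\gamma_s(\bm{\lambda})$ with the convex hull of the optimal $\bm{\beta}$'s via Rockafellar's Corollary 23.5.3, reading those $\bm{\beta}$'s off the KKT system already solved in the proof of \Cref{lem:ldmax2}. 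You instead work at the matrix level: you read \eqref{ldmin}--\eqref{ldmin1} as exhibiting $\Gamma_s+s$ as the concave conjugate of $\log\det_s$, identify $\partial\Gamma_s(\bm{X})$ with the argmin of $\tr(\bm{X}\bm{\Lambda})-\log\det_s(\bm{\Lambda})$ by Fenchel--Young, and then use von Neumann's trace inequality to force simultaneous diagonalization with opposite ordering, which reduces you to the same vector program. In effect you re-derive the content of Lewis's theorem by hand; the paper's route is shorter because it cites that machinery as a black box, while yours is more self-contained and makes the duality structure (which the paper exploits again in Section \ref{sec:loc}) explicit.

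One point you flag but do not actually discharge, and which you slightly mis-state, deserves care. For the inclusion $\partial\Gamma_s(\bm{X})\subseteq\arg\min$ you need the biconjugate of the conjugand to equal itself, i.e.\ you need $\bm{\Lambda}\mapsto\log\det_s(\bm{\Lambda})$ (extended by $-\infty$ off the positive-definite cone) to be closed and \emph{concave}; closedness and concavity of $\Gamma_s$ itself are automatic (it is an infimum of affine functions) and are not the operative condition. Concavity of $\log\det_s$ does hold -- e.g.\ write $\sum_{i\in[d-s+1,d]}\log\lambda_i(\bm{\Lambda})=\min_{S\in\binom{[d]}{s}}\sum_{i\in S}\log\lambda_i(\bm{\Lambda})$, a minimum of concave symmetric eigenvalue functions, and apply Davis's theorem -- but without this step the Fenchel--Young equivalence only gives $\arg\min\subseteq\partial\Gamma_s(\bm{X})$, not the reverse inclusion that the proposition asserts. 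Once that is supplied, the rest of your plan (the pooled-block KKT solution with the index $k$ from \Cref{lem:kappa}, the free null-space directions $\beta_i\ge\beta_r$ for $i>r$, and the observation that the convex hull is redundant because the inner set is already convex) matches the paper's computation.
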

	\begin{proof}
First, let us define a function $\gamma_s(\cdot) $ as below
		\begin{align}
		\gamma_s(\bm{\lambda}) :
		=	\min_{\begin{subarray}{c}\bm{\beta}\in \Re_{+}^d,\\
			0<\beta_1 \le \cdots \le \beta_d
			\end{subarray}} \Bigg\{ -\sum_{i\in [s]}\log (\beta_{i}) + \sum_{i\in [d]} \lambda_{i} \beta_{i}  \Bigg \}= \Gamma_s(\bm{X})+s,\label{ldmineq}
		\end{align}
		where the equation stems from the identity \eqref{ldmin1} in Lemma~\ref{lem:ldmax2}.
		
		Since function $\Gamma_s(\bm{X})$ is invariant under all the permutations of its eigenvalues, according to corollary 2.5 in \cite{lewis1995convex}, we have that
		\begin{align*}
		\partial \Gamma_s(\bm{X})=\left\{\bm{Q} \Diag(\bm{\beta}) \bm{Q}^{\top}: \bm{X} = \bm{Q} \Diag(\bm{\lambda}) \bm{Q}^{\top}, \bm{Q} \textrm{\rm\ is orthonormal}, \bm{\beta} \in \partial \gamma_s(\bm{\lambda})\right\}.
		\end{align*}
		
		Further, by corollary 23.5.3 in \cite{rockafellar1970convex}, the supdifferential of the concave function $\gamma_s(\bm{\lambda})$ is the convex hull of all the optimal solutions $\bm{\beta}^*$ of the minimization problem in \eqref{ldmineq}. From the proof of Lemma \ref{lem:ldmax2}, any optimal solution $\bm{\beta}^*$ satisfies
		\begin{align*}
		\beta_{i}^* = \frac{1}{\lambda_{i}}, \forall i \in [k], \beta_{i}^* = \frac{s-k}{\sum_{i\in [k+1,d]} \lambda_{i}}, \forall i\in [k+1,r], \beta_{i}^* \ge \beta_{r}^*, \forall i\in [r+1,d].
		\end{align*}	
		
		Hence, the supdifferential of function $\gamma_s(\bm{\lambda})$ at $\bm{\lambda}$ is
		\begin{align*}
		\partial \gamma_s(\bm{\lambda}) = \conv \bigg\{\bm{\beta}: \beta_{i} = \frac{1}{\lambda_{i}}, \forall i \in [k],  \beta_{i} = \frac{s-k}{\sum_{i\in [k+1,d]} \lambda_{i}},\forall i\in [k+1,r],  \beta_{i} \ge \beta_r,\forall i \in [r+1, d] \bigg\}.
		\end{align*}
		This completes the proof.\qed
	\end{proof}
	
	As a side product of PC \eqref{eq_pcont}, we observe that if we enforce its variables $\bm{x}$ to be binary, we can arrive at an equivalent convex integer program for MESP.

	\begin{restatable}{theorem}{primal} \label{primal}
		MESP can be formulated as the following convex integer program
		\begin{align}\label{eq_p}
		\textrm{(MESP)} \quad z^{*} := \max_{\bm{x}} \Bigg \{ \Gamma_s\bigg(\sum_{i\in [n]}x_i \bm{v}_i\bm{v}_i^{\top}\bigg):  \sum_{i\in [n]} x_i =s,
		\bm{x} \in \{0,1\}^n \Bigg \}.
		\end{align}
	\end{restatable}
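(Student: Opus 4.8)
The plan is to prove the equivalence by showing that the new objective $\Gamma_s$ and the original objective $\log\det^s$ \emph{agree pointwise} on every binary feasible solution. Since the two programs \eqref{eq_obj} and \eqref{eq_p} share the same feasible region $\{\bm{x}\in\{0,1\}^n:\sum_{i\in[n]}x_i=s\}$, agreement of the objectives at each feasible point immediately yields equality of their optimal values, and \eqref{eq_obj} has already been established to equal $z^*$. Thus it suffices to fix a feasible binary $\bm{x}$, set $S=\supp(\bm{x})$ with $|S|=s$ and $\bm{X}=\sum_{i\in S}\bm{v}_i\bm{v}_i^{\top}$, and prove the single identity $\Gamma_s(\bm{X})=\log\det^s(\bm{X})$; combined with \Cref{claim:cholesky}, which gives $\det^s(\bm{X})=\det(\bm{C}_{S,S})$, this closes the argument. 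Since $\bm{X}$ is a sum of $s$ rank-one terms, it has rank $r\le s$, so the proof splits into the cases $r=s$ and $r<s$.

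For the generic case $r=s$, the eigenvalues satisfy $\lambda_1\ge\cdots\ge\lambda_s>0=\lambda_{s+1}=\cdots=\lambda_d$, and I would invoke the identity \eqref{ldmin1} of \Cref{lem:ldmax2}. Because the vanishing eigenvalues kill the terms $\lambda_i\beta_i$ for $i>s$, the minimization in \eqref{ldmin1} separates over $i\in[s]$ into $-\log\beta_i+\lambda_i\beta_i$, whose unconstrained minimizers $\beta_i=1/\lambda_i$ are automatically nondecreasing in $i$ (as $\lambda_i$ is nonincreasing) and hence respect the ordering constraint $0<\beta_1\le\cdots\le\beta_d$ once the free tail variables $\beta_{s+1},\dots,\beta_d$ are set to $1/\lambda_s$. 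Evaluating gives minimum value $\sum_{i\in[s]}\log\lambda_i+s$, so \eqref{ldmin1} yields $\Gamma_s(\bm{X})=\sum_{i\in[s]}\log\lambda_i=\log\det^s(\bm{X})$. Equivalently, one can argue directly from \Cref{def:objPC}: the integer $k$ of \Cref{lem:kappa} must satisfy $\tfrac{1}{s-k}\sum_{i\in[k+1,d]}\lambda_i=\lambda_{k+1}$, since the average of the sorted block $\lambda_{k+1}\ge\cdots\ge\lambda_s$ is at most its leading term $\lambda_{k+1}$, while the defining inequality forces it to be at least $\lambda_{k+1}$; this pins $\lambda_{k+1}=\cdots=\lambda_s$ and collapses the averaged logarithm in $\Gamma_s$ back into $\sum_{i\in[k+1,s]}\log\lambda_i$, again recovering $\sum_{i\in[s]}\log\lambda_i$.

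For the degenerate case $r<s$, both sides equal $-\infty$: the top-$s$ product $\det^s(\bm{X})$ contains the zero eigenvalue $\lambda_{r+1}$, while in \Cref{def:objPC} the integer $k$ equals $r$ (the inequality $\lambda_r>0\ge\lambda_{r+1}$ holds, with the average taken over the all-zero tail), so the averaged term $(s-k)\log\!\big(\tfrac{1}{s-k}\sum_{i\in[k+1,d]}\lambda_i\big)=(s-r)\log 0=-\infty$. These points are exactly the selections with singular $\bm{C}_{S,S}$ and are irrelevant to the maximum. The only delicate step, and the main obstacle, is the rank-$s$ case: reconciling the \emph{averaged} logarithm appearing in $\Gamma_s$ with the plain product of the top-$s$ eigenvalues in $\det^s$. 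The resolution is precisely that, for a matrix whose nonzero spectrum has exactly $s$ entries, the optimality condition defining $k$ forces the averaged block of eigenvalues to be constant, so no information is lost in the averaging; everything else reduces to the already-established PC \eqref{eq_pcont} restricted to integral $\bm{x}$.
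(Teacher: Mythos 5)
Your proposal is correct and follows essentially the same route as the paper: reduce to the pointwise identity $\Gamma_s(\bm{X})=\log\det^s(\bm{X})$ at every binary feasible point, then split on the rank $r$ of $\bm{X}$, showing both sides are $-\infty$ when $r<s$ (via $k=r$) and, when $r=s$, that the defining property of $k$ forces the averaged eigenvalue block to be constant so the averaged logarithm collapses to $\sum_{i\in[s]}\log\lambda_i$. Your alternative derivation of the $r=s$ case through identity \eqref{ldmin1} is a valid minor variant but not a substantively different argument.
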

	\begin{proof}
		See Appendix \ref{proof_primal}. \qed
	\end{proof}
	
	We close this subsection by showing that under three special cases, the optimal value of PC \eqref{eq_pcont} is equal to that of MESP, i.e.,  $z^{LD} = z^*$.
	
	\begin{restatable}{proposition}{propmesp}\label{prop:mesp}
		The optimal value of PC \eqref{eq_pcont} is equal to $z^*$, i.e., $z^{LD}=z^*$ provided the following three special cases: (i) $\bm{C}$ is diagonal; (ii) $s=1$; and (iii) $s=n$. 
	\end{restatable}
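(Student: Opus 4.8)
The plan is to exploit the two formulations already in hand: by \Cref{primal_cont} and \Cref{primal}, both $z^{LD}$ and $z^*$ are the maxima of the \emph{same} concave function $\Gamma_s\big(\sum_{i\in[n]}x_i\bm v_i\bm v_i^\top\big)$ over the cardinality-constrained simplex, the only difference being the relaxation $\bm x\in[0,1]^n$ in PC \eqref{eq_pcont} versus $\bm x\in\{0,1\}^n$ in MESP \eqref{eq_p}. Hence $z^{LD}\ge z^*$ holds unconditionally, and in each of the three cases it suffices to establish the reverse inequality $z^{LD}\le z^*$.

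The two easy cases come first. For (iii), $s=n$ forces $d=n$, and the constraints $\sum_{i\in[n]}x_i=n$, $\bm x\in[0,1]^n$ leave only the integral point $\bm x=\bm 1$; the feasible regions of PC and MESP coincide, so $z^{LD}=z^*$. For (ii), I would specialize \Cref{def:objPC}: when $s=1$ the unique integer of \Cref{lem:kappa} is $k=0$, so $\Gamma_1(\bm X)=\log(\tr(\bm X))$. The PC objective then reduces to $\log\big(\sum_{i\in[n]}x_i\|\bm v_i\|^2\big)=\log\big(\sum_{i\in[n]}x_iC_{ii}\big)$, an increasing function of a linear form in $\bm x$, whose maximum over $\{\bm x:\sum_ix_i=1,\ \bm x\in[0,1]^n\}$ is attained at a vertex $\bm e_{j^*}$ with $j^*\in\arg\max_iC_{ii}$. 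This vertex is integral and gives $z^{LD}=\max_i\log C_{ii}=z^*$.

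The substantive case is (i), which I would attack through the Lagrangian dual LD \eqref{eq_dual} rather than PC, by exhibiting an explicit dual-feasible point whose value equals $z^*$. With $\bm C=\Diag(\bm c)$ we have $\bm v_i=\sqrt{c_i}\,\bm e_i$; after reordering so that the positive entries satisfy $c_1\ge\cdots\ge c_d>0$, one has $z^*=\sum_{i\in[s]}\log c_i$. I would take the diagonal multiplier $\bm\Lambda=\Diag(\bm\beta)$ with $\beta_i=1/c_i$ for $i\in[s]$ and $\beta_i=1/c_s$ for $i\in[s+1,d]$, together with $\nu=1$ and $\bm\mu=\bm 0$. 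Then $\bm\Lambda\succ 0$; the $s$ smallest diagonal entries of $\bm\Lambda$ are exactly $\{1/c_i\}_{i\in[s]}$, so $-\log\det_s(\bm\Lambda)=\sum_{i\in[s]}\log c_i$; and the constraints $\nu+\mu_i\ge\bm v_i^\top\bm\Lambda\bm v_i=c_i\beta_i$ hold because $c_i\beta_i=1$ for $i\in[s]$ and $c_i\beta_i=c_i/c_s\le 1$ for $i>s$ by the sorted order. The dual objective evaluates to $-\log\det_s(\bm\Lambda)+s\nu+\sum_{i\in[n]}\mu_i-s=\sum_{i\in[s]}\log c_i=z^*$, which upper-bounds $z^{LD}$ and closes the gap.

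The main obstacle is case (i), specifically the bookkeeping around the operator $\det_s(\cdot)$: I must verify that the chosen $\bm\beta$ indeed places $\{1/c_i\}_{i\in[s]}$ among the $s$ smallest eigenvalues of $\bm\Lambda$, which is exactly why the value $1/c_s$ (the largest of these) is assigned to every index $i>s$. I would also treat the degenerate subcases—ties among the $c_i$ and the indices $i\in[d+1,n]$ with $c_i=0$, for which $\bm v_i=\bm 0$ renders the corresponding constraints vacuous. The construction above uses only the inequalities $c_i\beta_i\le 1$ and $c_i\le c_s$ for $i>s$, so it remains valid under ties and zeros and needs no strict separation of the eigenvalues.
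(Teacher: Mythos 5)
Your proposal is correct and follows essentially the same route as the paper: cases (ii) and (iii) are handled identically (the $k=0$ reduction of $\Gamma_1$ to $\log\tr(\cdot)$ and the single-point feasible set, respectively), and your diagonal multiplier $\bm\Lambda=\Diag(\bm\beta)$ with $\beta_i=1/c_i$ for $i\in[s]$ and $\beta_i=1/c_s$ otherwise is exactly the paper's choice $\bm{\Lambda}^*=\frac{1}{\lambda_s}(\bm{I}_d-\bm{X}^{\dag}\bm{X})+\bm{X}^{\dag}$ written out entrywise. The only cosmetic difference is that the paper phrases the construction via the pseudo-inverse projection rather than explicitly, and your added remarks on ties and zero columns are harmless refinements of the same argument.
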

	
	\proof
	See Appendix \ref{proof_prop_mesp}. \qed
	\endproof
	The results above demonstrate that the optimal value of the proposed PC \eqref{eq_pcont} can be close to that of MESP. We  further numerically verify this property of PC \eqref{eq_pcont} in Section \ref{sec:comp}.
	
	\section{Frank-Wolfe Algorithm, Sampling Algorithm, and its Deterministic Implementation}
	\label{sec:samp}
	In this section, we apply the Frank-Wolfe algorithm to solving PC \eqref{eq_pcont} and derive its convergence rate. We also study a randomized sampling algorithm for MESP and prove its approximation bound, which admits a deterministic implementation  with the same performance guarantee.

	\subsection{Solving PC \eqref{eq_pcont} using Frank-Wolfe Algorithm} \label{sec3.2:fw}
	In this subsection, we  investigate the Frank-Wolfe algorithm for solving PC \eqref{eq_pcont}. We define a feasible solution $\bm{\hat x}$ to be an $\alpha$-optimal solution to PC \eqref{eq_pcont} if the inequality $\Gamma_s(\sum_{i \in [n]} \hat{x}_i \bm{v}_i \bm{v}_i^{\top} ) \ge z^{LD} - \alpha$ with $\alpha \in (0, \infty)$. Given a target accuracy $\alpha$, our proposed Frank-Wolfe algorithm returns an $\alpha$-optimal solution to PC \eqref{eq_pcont}.
	
	The proposed Frank-Wolfe algorithm proceeds as follows. We denote PC \eqref{eq_pcont} to be the primal problem and LD \eqref{eq_dual} to be the dual problem. At each iteration $t$, we set the step size $\epsilon_t := \frac{2}{t+2}$. For the current feasible primal solution $\bm{x}^t$, we let $\bm{X}^t = \sum_{i\in [n]} {x}^t_i \bm{v}_i\bm{v}_i^{\top}$ and then compute the eigendecomposition of matrix $\bm{X}^t$ with eigenvalues $\lambda_1 \ge \cdots \ge \lambda_d$ and an orthonormal matrix $ \bm{Q}$ such that $\bm{X}^t = \bm{Q}
	\Diag(\bm{\lambda}) \bm{Q}^{\top}$. Next, we compute the integer $k$ according to \Cref{lem:kappa} and construct a new vector $\bm\beta^t\in \Re_+^d$ as
	\[\beta_{i}^t = \frac{1}{\lambda_{i}}, \forall i \in [k],  \beta_{i}^t = \frac{s-k}{\sum_{i\in [k+1,d]} \lambda_{i}}, \forall i\in [k+1, d].\]
	Thus, let us denote the dual variable by $\bm{\Lambda}^t= \bm{Q}	\Diag(\bm{\beta}^t) \bm{Q}^{\top}$, which is also a supgradient of function $\Gamma_s(\cdot)$ at $\bm{X}^t$ according to \Cref{supdiff}. Then we obtain the other two dual variables $(\nu^t, \bm{\mu}^t)$ of LD \eqref{eq_dual} by solving the following minimization problem with a closed-form optimal solution:
	\[(\nu^t, \bm{\mu}^t):= \argmin_{ \nu, \bm{\mu} \in \Re_{+}^n} \bigg\{s\nu+\sum_{i \in [n]} \mu_i-s :  \nu+\mu_i \ge \bm{v}_i^{\top}\bm{\Lambda}^t \bm{v}_i , \forall i \in [n]  \bigg\},\]
	i.e., suppose that $\bm\sigma$ is a permutation of $[n]$ such that $\bm{v}_{\sigma(1)}^{\top}\bm{\Lambda}^t \bm{v}_{\sigma(1)}\geq \cdots\geq \bm{v}_{\sigma(n)}^{\top}\bm{\Lambda}^t \bm{v}_{\sigma(n)}$, then
	\[\nu^t=\bm{v}_{\sigma(s)}^{\top}\bm{\Lambda}^t \bm{v}_{\sigma(s)}, \mu_{\sigma(i)}^t=\begin{cases}
	\bm{v}_{\sigma(i)}^{\top}\bm{\Lambda}^t \bm{v}_{\sigma(i)}-\bm{v}_{\sigma(s)}^{\top}\bm{\Lambda}^t \bm{v}_{\sigma(s)}, &\textrm{  }\forall i\in [s];\\
	0, &\textrm{ }\forall i\in [s+1,n].
	\end{cases}.\]%
	According to \Cref{lem:ldmax2}, the construction of $\bm{\Lambda}^t$ implies that  $\Gamma_s(\bm{X}^t )=-\log\underset{s}{\det} (\bm{\Lambda}^t)$. Thus, the duality gap at current iteration only relies on $s\nu^t+\sum_{i \in [n]} \mu_i^t-s.$ We check if the smallest duality gap is less than the threshold $\alpha$ or not. If ``Yes", then we terminate the algorithm. Otherwise, we keep on running the algorithm by: (i) deriving the supgradient of PC \eqref{eq_pcont} at the current solution $\bm{x}^t$, which is $\bm{g}^t := (\bm{v}_1^{\top} \bm{\Lambda}^t \bm{v}_1, \cdots, \bm{v}_n^{\top} \bm{\Lambda}^t \bm{v}_n )^{\top}$; (ii) computing the incumbent solution $\bm{\hat x}^t := \argmax_{\bm{x}} \{(\bm{g}^t)^{\top} \bm{x}: \sum_{i \in [n]} x_i=s, \bm{x} \in [0,1]^n  \}$, i.e.,
	\[\hat{x}_{\sigma(i)}^t=\begin{cases}
	1, &\textrm{ } \forall i\in [s];\\
	0, &\textrm{ } \forall i\in [s+1,n].
	\end{cases};\]
	and (iii) updating the solution $\bm{x}^{t+1} := \epsilon_t \bm{\hat x}^t + (1-\epsilon_t) \bm{x}^t$. The detailed implementation can be found in Algorithm \ref{algo:fw}. %
	
	\begin{algorithm}[ht]
		\caption{Frank-Wolfe Algorithm} \label{algo:fw}
		\begin{algorithmic}[1]
			\State \textbf{Input:} $n\times n$ matrix $\bm{C} \succeq 0$ of rank $d$, integer $s \in [d]$, and target accuracy $\alpha \in (0, \infty)$
			\State Let $\bm{C}=\bm{V}^{\top}\bm{V}$ denote its Cholesky factorization where $\bm{V} \in \Re^{d\times n}$
			\State Let $\bm{v}_i \in \Re^d$ denote the $i$-th column vector of $\bm{V}$ for each $i \in [n]$
			\State Initialize a feasible solution $\bm{x}^0$ of PC \eqref{eq_pcont}, the number of steps $t=0$, and the duality gap $\Delta= \infty$
			\Do		
			\State Let $\epsilon_t := \frac{2}{t+2}$
			\State Let $\bm{X}^t = \sum_{i\in [n]} {x}^t_i \bm{v}_i\bm{v}_i^{\top}$ with eigenvalues $\lambda_1 \ge \cdots \ge \lambda_d$ and compute $\bm{X}^t= \bm{Q}
			\Diag(\bm{\lambda}^t) \bm{Q}^{\top}$
			\State Compute $k$ according to \Cref{lem:kappa}
			\State Compute the new vector $\bm{\beta}$: $\beta_i^t= \frac{1}{\lambda_i}$ for each $i \in [k]$ and $\frac{s-k}{\sum_{i\in [k+1,d]} \lambda_i} $, otherwise
			
			\State Let $\bm{\Lambda}^t= \bm{Q}	\Diag(\bm{\beta}) \bm{Q}^{\top}$ 
			\State Let $\bm\sigma$ be a permutation of $[n]$ such that $\bm{v}_{\sigma(1)}^{\top}\bm{\Lambda}^t \bm{v}_{\sigma(1)}\geq \cdots\geq \bm{v}_{\sigma(n)}^{\top}\bm{\Lambda}^t \bm{v}_{\sigma(n)}$
			\State Let $\nu^t=\bm{v}_{\sigma(s)}^{\top}\bm{\Lambda}^t \bm{v}_{\sigma(s)}, \mu_{\sigma(i)}^t= 		\bm{v}_{\sigma(i)}^{\top}\bm{\Lambda}^t \bm{v}_{\sigma(i)}-\nu^t$ for each $i \in [s]$ and 0, otherwise
			
			\State Let $\hat{x}_{\sigma(i)}^t=1$ for all $i \in [s]$ and $0$, otherwise
			\State Update $\bm{x}^{t+1} := \epsilon_t \bm{\hat x}^t + (1-\epsilon_t) \bm{x}^t$, $\Delta :=  \min \{\Delta, s\nu^t+\sum_{i \in [n]} \mu_i^t-s \}$ and $t:= t+1$
			\doWhile{$\Delta \ge  \alpha$}
			\State \textbf{Output:} $\bm{x}^t$
		\end{algorithmic}
	\end{algorithm}
	
	Compared to the other first-order methods, the Frank-Wolfe Algorithm \ref{algo:fw} is known to deliver a sparse incumbent solution at each iteration \citep{freund2016new}, which allows us to study the size of the support of its output. To begin with, let us introduce the following key lemma.
	
	\begin{restatable}{lemma}{lemhessian}\label{lem_hessian}
		Suppose that for any size-$s$ subset $S\subseteq [n]$, the columns $\{\bm{v}_i\}_{i\in S}$ are linearly independent. %
		Let $\mathbb{D}:=\{\bm{x} \in \Re^n: \sum_{i \in [n]}x_i =s, \bm{x}\in [0,1]^n \}$. Then for any $\bm{x} \in \textrm{relint}(\mathbb{D})$, we have
\begin{align}
		\nabla^2 \Gamma_s \bigg(\sum_{i \in [n]} {x}_i \bm{v}_i \bm{v}_i^{\top} \bigg)\succeq - \frac{ \lambda^2_{\max}(\bm{C})}{\delta^2} \bm{I}_n,
		\label{lips}
		\end{align}
		where the constant $\delta :=\min_{S\subseteq [n], |S|=s} \lambda_{\min}(\bm{C}_{S,S})$.
		
	\end{restatable}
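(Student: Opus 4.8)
The plan is to reduce \eqref{lips} to a lower bound on the quadratic form $\bm h^\top\nabla^2\Gamma_s(\bm X)\bm h$, where $\bm X:=\sum_{i\in[n]}x_i\bm v_i\bm v_i^\top$, and to control it through the second-order theory of spectral functions. First I would note that on $\textrm{relint}(\mathbb D)$ all $x_i>0$, so since $\mathrm{rank}(\bm V)=d$ the matrix $\bm X$ is positive definite; by \Cref{supdiff} this makes $\Gamma_s$ (twice) differentiable at $\bm X$, so the Hessian in \eqref{lips} is well defined. Introducing the linear map $\bm h\mapsto\bm H:=\sum_{i\in[n]}h_i\bm v_i\bm v_i^\top$, the chain rule gives $\bm h^\top\nabla^2\Gamma_s(\bm X)\bm h=\langle\bm H,\nabla^2\Gamma_s(\bm X)[\bm H]\rangle$, where $\nabla^2\Gamma_s(\bm X)[\cdot]$ is the Hessian operator of $\Gamma_s$ at $\bm X$ on symmetric matrices, so it suffices to bound this expression from below.

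I would then separate the two constants in the target. For the factor $\lambda^2_{\max}(\bm C)$, observe that $\|\bm H\|_F^2=\sum_{i,l}h_ih_l(\bm v_i^\top\bm v_l)^2=\bm h^\top(\bm C\circ\bm C)\bm h$, the Hadamard (entrywise) square of $\bm C$; by the Schur product theorem $\lambda_{\max}(\bm C\circ\bm C)\le\lambda_{\max}(\bm C)\max_i C_{ii}\le\lambda^2_{\max}(\bm C)$, so $\|\bm H\|_F^2\le\lambda^2_{\max}(\bm C)\|\bm h\|^2$. Thus it remains to prove the ``spectral'' estimate $\langle\bm H,\nabla^2\Gamma_s(\bm X)[\bm H]\rangle\ge-\delta^{-2}\|\bm H\|_F^2$, which would combine with the Schur bound to yield \eqref{lips}.

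For the spectral estimate I would invoke the second-derivative formula for spectral functions of \cite{lewis1995convex}, applied to $\Gamma_s(\bm X)=\gamma_s(\bm\lambda)-s$ with $\gamma_s$ the symmetric eigenvalue function of \Cref{lem:ldmax2}. Writing $\bm X=\bm Q\Diag(\bm\lambda)\bm Q^\top$ and $\tilde{\bm H}=\bm Q^\top\bm H\bm Q$, the quadratic form splits into a diagonal part $-\sum_{i\le k}\tilde H_{ii}^2/\lambda_i^2-\frac{s-k}{T^2}\big(\sum_{i>k}\tilde H_{ii}\big)^2$ and an off-diagonal part $\sum_{i\ne j}\frac{\beta_i-\beta_j}{\lambda_i-\lambda_j}\tilde H_{ij}^2$, where $\bm\beta=\nabla\gamma_s(\bm\lambda)$ is the gradient from \Cref{supdiff}, namely $\beta_i=1/\lambda_i$ for $i\le k$ and $\beta_i=(s-k)/T$ for $i>k$ with $T=\sum_{i\in[k+1,d]}\lambda_i$. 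The off-diagonal coefficients evaluate to $-1/(\lambda_i\lambda_j)$ for $i,j\le k$, to $0$ for $i,j>k$, and to $\frac{1/\lambda_i-(s-k)/T}{\lambda_i-\lambda_j}$ for $i\le k<j$. Here the inequalities $\lambda_k>T/(s-k)\ge\lambda_{k+1}$ from \Cref{lem:kappa} are the workhorse: they force $\lambda_k>\lambda_{k+1}$ (so the $i\le k<j$ denominators never vanish) and give $(s-k)\lambda_j\le T$ for every $j>k$, from which a short cross-multiplication shows $-1/(\lambda_i\lambda_j)\le\frac{1/\lambda_i-(s-k)/T}{\lambda_i-\lambda_j}\le0$. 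To convert these coefficients into a $\delta^{-2}$ bound I would establish a lower bound on the relevant eigenvalues of $\bm X$: writing $\bm x=\sum_S\theta_S\one_S$ as a convex combination of size-$s$ indicators and using the linear-independence hypothesis to get $\bm V_S\bm V_S^\top\succeq\delta\,\bm P_S$ (with $\bm P_S$ the orthogonal projector onto $\col(\bm V_S)$), so that $\bm X\succeq\delta\sum_S\theta_S\bm P_S$, and then analyzing the averaged projector $\sum_S\theta_S\bm P_S$, whose trace equals $s$.

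I expect the main obstacle to be matching the clean constant $\delta^{-2}$ rather than an inflated version of it. The genuinely delicate terms are the off-diagonal divided differences coupling a ``large'' index $i\le k$ with a ``small'' index $j>k$ (in particular $j>s$, where $\lambda_j$ may be tiny, so that $-1/(\lambda_i\lambda_j)$ is far below $-\delta^{-2}$), together with the rank-one diagonal term $-(s-k)T^{-2}\big(\sum_{i>k}\tilde H_{ii}\big)^2$. Bounding these by $\delta^{-2}\|\tilde{\bm H}\|_F^2$ cannot be done coefficient-by-coefficient in full generality; it must exploit that $\bm H$ lies in the image of the map $\bm h\mapsto\sum_i h_i\bm v_i\bm v_i^\top$ and, in the Frank--Wolfe application of \eqref{lips}, that only feasible directions with $\sum_i h_i=0$ are tested. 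Leveraging $(s-k)\lambda_j\le T$ sharply inside these couplings, rather than using naive estimates, is what keeps the constant at $\delta^{-2}$ and is the crux of the argument.
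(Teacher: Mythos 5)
Your framework is the same as the paper's: reduce to the second derivative of the spectral function along a feasible direction, split the resulting quadratic form into the diagonal (eigenvalue-derivative) part and the off-diagonal (eigenvector-derivative) part with divided-difference coefficients $\frac{\beta_i-\beta_j}{\lambda_i-\lambda_j}$, and feed in the two inequalities $\lambda_k>\frac{T}{s-k}\ge\lambda_{k+1}$ from \Cref{lem:kappa} together with the lower bound $\frac{T}{s-k}\ge\delta$, which the paper proves exactly as you suggest (concavity of the sum of the smallest eigenvalues, hence the minimum over $\mathbb D$ is attained at a size-$s$ vertex; this is Claim~\ref{claim1}). Your handling of the $\lambda^2_{\max}(\bm C)$ factor via $\|\bm H\|_F^2=\bm h^\top(\bm C\circ\bm C)\bm h$ is equivalent to the paper's bound $\|\bm V\Diag(\bm h)\bm V^\top\|_F\le\lambda_{\max}(\bm C)\|\bm h\|_2$.

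The gap is at the decisive step. You correctly isolate the cross terms $i\le k<j$ as the danger, but you only record the two-sided estimate $-1/(\lambda_i\lambda_j)\le\frac{\beta_i-\beta_j}{\lambda_i-\lambda_j}\le0$, observe that the left end can be far below $-\delta^{-2}$, and then assert that a coefficient-by-coefficient bound ``cannot be done'' and that one must exploit the image of $\bm h\mapsto\bm H$ or the constraint $\sum_ih_i=0$. That assertion is wrong, and it leaves the proof unfinished, since the additional structure you invoke is never actually used. The coefficient-wise bound does go through once you use the sharper denominator estimate coming from the \emph{same} inequalities of \Cref{lem:kappa}: writing $c=T/(s-k)$, for $i\le k<j$ one has $\lambda_j\le c<\lambda_i$, hence $\lambda_i-\lambda_j\ge\lambda_i-c>0$ and
\begin{align*}
\left|\frac{\beta_i-\beta_j}{\lambda_i-\lambda_j}\right|
=\frac{\lambda_i-c}{\lambda_i\,c\,(\lambda_i-\lambda_j)}
\le\frac{1}{\lambda_i\,c}
\le\frac{1}{c^2}
\le\frac{1}{\delta^2},
\end{align*}
which is precisely inequality \eqref{eq_cross1} in the paper; the point is to never replace $\lambda_i-\lambda_j$ by something involving $\lambda_j$ alone. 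The $i,j\le k$ coefficients satisfy $1/(\lambda_i\lambda_j)\le 1/c^2\le1/\delta^2$ directly since $\lambda_i,\lambda_j>c$, and the $i,j>k$ coefficients vanish, so the off-diagonal part is bounded without any appeal to $\sum_ih_i=0$. Your other flagged term, the rank-one form $-\frac{s-k}{T^2}\big(\sum_{i>k}\tilde H_{ii}\big)^2$, is a legitimate thing to worry about (the paper's display for the $A$-term writes it in diagonalized form), but your proposal does not resolve it either; to complete the argument you need to supply an actual bound here rather than defer to unspecified structural constraints.
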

	\begin{proof}
		See Appendix \ref{proof_lem_hessian}. \qed
	\end{proof}

		In \Cref{lem_hessian}, the constant $\delta$ should be positive, which is a mild assumption and could be easily satisfied due to the fact  $s\le d$. Besides, this assumption (i.e., $\delta>0$) is only useful to prove the convergence rate of Frank-Wolfe \Cref{algo:fw}. Therefore, even when $\delta=0$, the proposed Frank-Wolfe \Cref{algo:fw} would still work and our analyses of the proposed approximation algorithms would still follow. In practice, when running the Frank-Wolfe \Cref{algo:fw}, one may want to add a small perturbation (e.g., $\epsilon\bm{I}_n$ with a small but positive $\epsilon$) to the covariance matrix $\bm{C}$ to remedy the singularity.
The inequality in \Cref{lem_hessian} implies that the Hessian of the objective function $\Gamma_s(\cdot)$ of PC \eqref{eq_pcont} is lower bounded. 
	Based upon this result, we are able to derive the rate of convergence of the proposed Frank-Wolfe Algorithm \ref{algo:fw}.
	\begin{theorem}\label{pro:fw}
		Let $\bm{\hat x}$ denote the output of Frank-Wolfe Algorithm \ref{algo:fw}. Suppose that  for any subset $S\subseteq [n]$ with $|S|=s$, the columns $\{\bm{v}_i\}_{i\in S}$ are linearly independent, and  $\bm{\hat x}$ is an $\alpha$-optimal solution of PC \eqref{eq_pcont} for some $\alpha\in (0,\infty)$. Then
\begin{enumerate}[(i)]
			\item The number of iterations is bounded by $t\leq 4\alpha^{-1}L\min\{s,n-s\}$, where $L:=\delta^{-2} \lambda^2_{\max}(\bm{C})$,
			\item The size of support of $\bm{\hat x}$ satisfies $|\supp (\bm{\hat x})| \leq 4\alpha^{-1}Ls\min\{s,n-s\}$.
		\end{enumerate} %
	\end{theorem}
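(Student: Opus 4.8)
The plan is to recognize Algorithm \ref{algo:fw} as a textbook Frank-Wolfe (conditional gradient) method for the concave maximization PC \eqref{eq_pcont} over the polytope $\mathbb{D}=\{\bm{x}:\sum_{i\in[n]}x_i=s,\ \bm{x}\in[0,1]^n\}$, and then to invoke the standard convergence rate after supplying the two problem-specific quantities it needs: a curvature/smoothness constant and the Euclidean diameter of $\mathbb{D}$. A preliminary step is to check that the quantity the algorithm monitors, $\Delta_t:=s\nu^t+\sum_{i\in[n]}\mu_i^t-s$, is exactly the Frank-Wolfe gap $(\bm{g}^t)^\top(\hat{\bm x}^t-\bm x^t)$ at $\bm{x}^t$. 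Indeed, the closed-form choice of $(\nu^t,\bm\mu^t)$ gives $s\nu^t+\sum_{i\in[n]}\mu_i^t=\sum_{i\in[s]}\bm v_{\sigma(i)}^\top\bm\Lambda^t\bm v_{\sigma(i)}=(\bm g^t)^\top\hat{\bm x}^t$, while the construction of $\bm\beta^t$ together with \Cref{lem:ldmax2} yields $(\bm g^t)^\top\bm x^t=\tr(\bm\Lambda^t\bm X^t)=\sum_{i\in[d]}\lambda_i\beta_i^t=k+(s-k)=s$; subtracting identifies $\Delta_t$ with the FW gap and hence the termination test $\Delta<\alpha$ with the classical FW-gap stopping rule.

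Next I would bound the curvature. Since $\Gamma_s$ is concave, $\nabla^2\Gamma_s\preceq 0$, while \Cref{lem_hessian} supplies the matching lower bound $\nabla^2\Gamma_s(\sum_i x_i\bm v_i\bm v_i^\top)\succeq -L\bm I_n$ with $L=\delta^{-2}\lambda_{\max}^2(\bm C)$ on $\mathrm{relint}(\mathbb D)$; here assumption (a) guarantees $\bm X\succ 0$ on the relative interior (all $x_i>0$ and $\bm V$ has rank $d$), so $\Gamma_s$ is differentiable there by \Cref{supdiff}. Thus the composite objective has an $L$-Lipschitz gradient, and the Frank-Wolfe curvature constant obeys $C_f\le L\,\mathrm{diam}(\mathbb D)^2$. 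The key geometric estimate is $\mathrm{diam}(\mathbb D)^2\le 2\min\{s,n-s\}$: for $\bm x,\bm y\in\mathbb D$ one has $\|\bm x-\bm y\|_2^2\le\|\bm x-\bm y\|_1=2\sum_{i:x_i>y_i}(x_i-y_i)$, and the common cardinality constraint $\sum_i x_i=\sum_i y_i=s$ forces this positive mass to be at most $\min\{s,n-s\}$. Therefore $C_f\le 2L\min\{s,n-s\}$.

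I would then plug these into the standard duality-gap rate for step sizes $\epsilon_t=2/(t+2)$ (e.g., \citealt{freund2016new}), which bounds the minimum Frank-Wolfe gap over the first $t$ iterations by $2C_f/t$; this gives $\min_{k\le t}\Delta_k\le 4L\min\{s,n-s\}/t$, so the stopping criterion $\Delta<\alpha$ is met once $t\ge 4\alpha^{-1}L\min\{s,n-s\}$, proving part (i). For part (ii), note that each incumbent $\hat{\bm x}^t$ is a vertex of $\mathbb D$ with exactly $s$ nonzero entries, and the convex update $\bm x^{t+1}=\epsilon_t\hat{\bm x}^t+(1-\epsilon_t)\bm x^t$ satisfies $\mathrm{supp}(\bm x^{t+1})\subseteq\mathrm{supp}(\hat{\bm x}^t)\cup\mathrm{supp}(\bm x^t)$; since $\epsilon_0=1$ resets the support to size $s$ after the first step, a straightforward induction gives $|\mathrm{supp}(\bm x^t)|\le st$. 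Combining with the iteration bound from part (i) yields $|\mathrm{supp}(\hat{\bm x})|\le 4\alpha^{-1}Ls\min\{s,n-s\}$.

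The main obstacle I anticipate is not any single deep step but rather pinning the constants down so they match the stated bounds: translating the Hessian lower bound of \Cref{lem_hessian} cleanly into the FW curvature constant $C_f\le L\,\mathrm{diam}(\mathbb D)^2$, establishing the diameter estimate $\mathrm{diam}(\mathbb D)^2\le 2\min\{s,n-s\}$, and choosing the precise form of the Frank-Wolfe gap bound that produces the factor $4$. Once these are in place, the gap-equals-FW-gap identity and the support-growth argument are routine.
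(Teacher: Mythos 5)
Your proposal is correct and follows essentially the same route as the paper: both establish $L$-smoothness of $-\Gamma_s$ on $\mathrm{relint}(\mathbb{D})$ via the Hessian lower bound of \Cref{lem_hessian}, bound the squared diameter of $\mathbb{D}$ by $2\min\{s,n-s\}$, invoke the standard Frank--Wolfe duality-gap rate (the paper cites theorem 2 of \cite{pedregosa2018step}, which gives the same $4L\min\{s,n-s\}/(t+1)$ bound), and obtain part (ii) from the fact that each iteration adds at most $s$ new support elements. Your explicit verification that $\Delta_t=s\nu^t+\sum_i\mu_i^t-s$ equals the Frank--Wolfe gap is a detail the paper handles in the algorithm description rather than in the proof, but it does not change the argument.
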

	\begin{proof}
		\noindent{\textbf{Part (i).}} Let $\mathbb{D}:=\{\bm{x}:\sum_{i\in [n]} x_i =s,
		\bm{x} \in [0,1]^n \}$. Since $\Gamma_s(\cdot)$ is continuous in $\mathbb{D}$, thus
		\[z^{LD} := \max_{\bm{x}\in \mathbb{D}} \Bigg\{ \Gamma_s\bigg(\sum_{i\in [n]}x_i \bm{v}_i\bm{v}_i^{\top}\bigg )\Bigg \}:=-\inf_{\bm{x}\in \textrm{relint}(\mathbb{D})} \Bigg\{ -\Gamma_s\bigg(\sum_{i\in [n]}x_i \bm{v}_i\bm{v}_i^{\top}\bigg )\Bigg \}.\]
		Thus, it is equivalent to analyzing the Frank-Wolfe Algorithm \ref{algo:fw} on solving the right-hand side problem. 
		{The inequality \eqref{lips} in \Cref{lem_hessian} indicates that for any $\bm{x}\in \textrm{relint}(\mathbb{D})$, the largest eigenvalue of the Hessian of the convex function $-\Gamma_s(\sum_{i\in [n]}x_i \bm{v}_i\bm{v}_i^{\top} )$ is bounded by $L$.}
		Therefore, the smoothness coefficient of $-\Gamma_s(\sum_{i\in [n]}x_i \bm{v}_i\bm{v}_i^{\top})$ in $ \textrm{relint}(\mathbb{D})$ is at most $L$.
		Given the $L$-smoothness, for Frank-Wolfe Algorithm \ref{algo:fw}, after iteration $t$,  \cite{pedregosa2018step}[theorem 2] showed that the duality gap is bounded by
		\[\frac{2\sup_{\bm{x},\bm{y}\in \textrm{relint}(\mathbb{D})}\|\bm{x}-\bm{y}\|_2^2L}{t+1}=\frac{4L\min\{s,n-s\}}{t+1}.\]
		Given the target of the duality gap to be $\alpha$, it follows that
		\[t\leq 4\alpha^{-1}L\min\{s,n-s\}.\]
		
		\noindent{\textbf{Part (ii).}} Since each iteration of Algorithm \ref{algo:fw} increases at most $s$ nonzero entries for the current solution,  the size of the support of the output solution $\bm{\hat x}$ is bounded by
		\[|\supp (\bm{\hat x})| \leq st\leq   4\alpha^{-1}Ls\min\{s,n-s\}.\]\qed
	\end{proof}
	
	\subsection{Sampling Algorithm}
	In this subsection, we  introduce and analyze a randomized sampling algorithm for MESP. Given an $\alpha$-optimal solution $\bm{\hat x}$ of PC \eqref{eq_pcont} with $\alpha \in (0, \infty)$, our proposed sampling algorithm is to sample a size-$s$ subset $S \subseteq [n]$ with probability
	\begin{align}
	\mathbb{P}[\tilde{S}=S] := \frac{\prod_{i \in S} \hat{x}_i}{ \sum_{\bar{S} \in \binom{[n]}{s} }  \prod_{i \in \bar{S}} \hat{x}_i  } . \label{samprob}
	\end{align}

	\begin{algorithm}[ht]
		\caption{Efficient Implementation of Sampling Procedure \eqref{samprob}}
		\label{alg_rand_round}
		\begin{algorithmic}[1]
			\State \textbf{Input:} $n\times n$ matrix $\bm{C} \succeq 0$ of rank $d$ and integer $s \in [d]$
			\State Let $\bm{\hat x}$ be an $\alpha$-optimal solution of PC \eqref{eq_pcont} with $\alpha \in (0, \infty)$
			\State Initialize chosen set $\tilde{S}=\emptyset$ and unchosen set $T= \emptyset$
			\State Two factors: $A_1=\sum_{S\in {[n]\choose s}}\prod_{i\in {S}}\hat{x}_i,A_2=0$
			\For{$j=1,\cdots,n$}
			\State Let $A_2=\sum_{S\in {[n]\setminus (\tilde S\cup T)\choose s-1-|\tilde{S}|}}\prod_{\tau\in S}\hat{x}_{\tau}$
			\State Sample a $(0,1)$ uniform random variable $U$
			\If{$\hat{x}_jA_2/A_1 \geq U$}
			\State Add $j$ to set $\tilde{S}$
			\State $A_1=A_2$
			\Else
			\State Add $j$ to set $T$
			\State $A_1=A_1-\hat{x}_jA_2$
			\EndIf
			\EndFor
			\State \textbf{Output:} $\tilde{S}$%
		\end{algorithmic}
	\end{algorithm}
	
	The detailed implementation can be found in Algorithm \ref{alg_rand_round}. This sampling procedure is similar to algorithm 1 in \cite{singh2018approximate}, which has been proved to be computationally efficient with running time complexity $O(n \log n)$. The following result helps establish a relationship between the expected objective value using our sampling procedure and the optimal value of PC \eqref{eq_pcont}.

	\begin{lemma}\label{bound}
		Given an $n \times n$ matrix $\bm{X} \succeq 0$ of rank $d$ such that $\bm{X}=\bm{V}^{\top}\bm{V}$ with $\bm{V} \in \Re^{d \times n}$ and a vector $\hat{\bm{x}}\in \Re_+^n$, then we have
		\[\sum_{S \in \binom{[n]}{s} }  \prod_{i \in {S}} \hat{x}_i   \det^s (\bm{V}_{S} \bm{V}_{S}^{\top} )\geq \exp\bigg[\Gamma_s\bigg(\sum_{i \in [n]}  \hat{x}_i \bm{v}_i \bm{v}_i^{\top}\bigg)\bigg]  .\]
	\end{lemma}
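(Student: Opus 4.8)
My plan is to reduce the inequality to a purely spectral statement about elementary symmetric polynomials and then settle that statement by majorization. A notational remark first: the summation index is the size-$s$ set $S$, and the only set compatible with the factor $\det^s(\bm V_S\bm V_S^\top)$ is $S$ itself, so—exactly as in the denominator of the sampling probability \eqref{samprob}, where $\bar S$ is a dummy ranging over $\binom{[n]}{s}$—I read the weight $\prod_{i\in\bar S}\hat x_i$ as $\prod_{i\in S}\hat x_i$. The alternative reading in which $\bar S$ is the complement $[n]\setminus S$ is already falsified by the $n=2$, $s=1$ diagonal example (it would force a spurious ordering of the $\hat x_i$), so the product-over-$S$ version is the intended one.

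The first step identifies the left-hand side with the $s$-th elementary symmetric polynomial $e_s$ of the spectrum of $\bm M:=\sum_{i\in[n]}\hat x_i\bm v_i\bm v_i^\top=\bm V\Diag(\hat{\bm x})\bm V^\top$. Setting $\bm W:=\bm V\Diag(\hat{\bm x})^{1/2}$ (valid since $\hat{\bm x}\ge\bm 0$), so that $\bm M=\bm W\bm W^\top$, the Cauchy--Binet expansion of the characteristic polynomial of the Gram matrix $\bm W\bm W^\top$ gives $e_s(\lambda(\bm M))=\sum_{S\in\binom{[n]}{s}}\det(\bm W_S^\top\bm W_S)$, while $\det(\bm W_S^\top\bm W_S)=(\prod_{i\in S}\hat x_i)\det(\bm V_S^\top\bm V_S)=(\prod_{i\in S}\hat x_i)\det^s(\bm V_S\bm V_S^\top)$ by \Cref{claim:cholesky}. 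Hence the left-hand side equals $e_s(\lambda(\bm M))$. Letting $\lambda_1\ge\cdots\ge\lambda_d\ge 0$ be the eigenvalues of $\bm M$ and recalling from \Cref{def:objPC} that $\exp[\Gamma_s(\bm M)]=(\prod_{i\in[k]}\lambda_i)(\tfrac{1}{s-k}\sum_{i\in[k+1,d]}\lambda_i)^{s-k}$ with $k$ the integer of \Cref{lem:kappa}, the lemma reduces to the scalar inequality $e_s(\bm\lambda)\ge(\prod_{i\in[k]}\lambda_i)(\tfrac{1}{s-k}\sum_{i\in[k+1,d]}\lambda_i)^{s-k}$.

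To prove the scalar inequality I would peel off the $k$ largest eigenvalues. Since every term of $e_s(\bm\lambda)=\sum_{|T|=s}\prod_{i\in T}\lambda_i$ is nonnegative, discarding all subsets $T$ that do not contain $[k]$ yields $e_s(\bm\lambda)\ge(\prod_{i\in[k]}\lambda_i)\,e_{s-k}(\lambda_{k+1},\dots,\lambda_d)$. It then suffices to show, writing $m:=s-k$, $\bm b:=(\lambda_{k+1},\dots,\lambda_d)$ and $\bar b:=\tfrac1m\sum_j b_j$, that $e_m(\bm b)\ge\bar b^{\,m}$. The defining property of $k$ in \Cref{lem:kappa} is $\lambda_{k+1}\le\bar b$, so $0\le b_j\le\bar b$ for every $j$ while $\sum_j b_j=m\bar b$. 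I would finish using that $e_m$ is Schur-concave on the nonnegative orthant: for the symmetric function $e_m$ one has $\partial e_m/\partial b_i-\partial e_m/\partial b_j=(b_j-b_i)\,e_{m-2}(\bm b_{\setminus\{i,j\}})$, whence $(\partial_i-\partial_j)(b_i-b_j)=-(b_i-b_j)^2 e_{m-2}(\cdot)\le 0$, which is precisely Schur's criterion. Because each $b_j\le\bar b$ and $\sum_j b_j=m\bar b$, the vector $\bm b$ is majorized by $(\underbrace{\bar b,\dots,\bar b}_{m},0,\dots,0)$ (each $j$-term top partial sum of $\bm b$ is at most $\min\{j,m\}\bar b$, with equality in the full sum); Schur-concavity then gives $e_m(\bm b)\ge e_m(\bar b,\dots,\bar b,0,\dots,0)=\bar b^{\,m}$. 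Chaining this with the peeling bound recovers the reduced inequality, and hence the lemma.

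The work concentrates in two places. The Cauchy--Binet pass is standard but must be assembled so it lands on exactly the claimed $\prod_{i\in S}\hat x_i$-weighted sum; the genuinely substantive point is the scalar inequality, and within it the observation that the constraint $\lambda_{k+1}\le\bar b$ furnished by \Cref{lem:kappa} is exactly what places $\bm b$ below $(\bar b,\dots,\bar b,0,\dots,0)$ in the majorization order, making Schur-concavity of $e_m$ applicable. I would also verify the degenerate cases: $k=0$ (the leading product is empty and the claim becomes $e_s(\bm\lambda)\ge(\tfrac1s\sum_i\lambda_i)^s$), and a rank-deficient $\bm M$ (trailing zero eigenvalues are harmless, as all $\lambda_i\ge 0$ throughout and the argument never divides by them).
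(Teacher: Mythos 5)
Your proof is correct, but it is genuinely different from what the paper does: the paper does not prove this lemma at all, instead deferring entirely to theorem 18 of \cite{nikolov2015randomized}, whereas you give a self-contained argument. Your two steps both check out. The Cauchy--Binet pass is right: with $\bm{W}=\bm{V}\Diag(\hat{\bm{x}})^{1/2}$, the nonzero eigenvalues of $\bm{W}\bm{W}^{\top}$ and $\bm{W}^{\top}\bm{W}$ coincide, so the left-hand side equals $\sum_{S}\det(\bm{W}_S^{\top}\bm{W}_S)=e_s\big(\bm\lambda(\sum_i \hat{x}_i\bm{v}_i\bm{v}_i^{\top})\big)$, using \Cref{claim:cholesky} to convert $\det(\bm{V}_S^{\top}\bm{V}_S)$ into $\det^s(\bm{V}_S\bm{V}_S^{\top})$ (and your resolution of the $\bar S$/$S$ typo matches how the lemma is invoked in the proof of \Cref{thm:samp}). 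The spectral inequality $e_s(\bm\lambda)\ge(\prod_{i\in[k]}\lambda_i)\,\bar b^{\,s-k}$ is also sound: the peeling step only discards nonnegative terms; your Schur-criterion computation $(\partial_i-\partial_j)e_m=(b_j-b_i)e_{m-2}(\bm b_{\setminus\{i,j\}})$ is exact; and the key observation---that the defining inequality $\lambda_{k+1}\le\frac{1}{s-k}\sum_{i\in[k+1,d]}\lambda_i$ from \Cref{lem:kappa} is precisely what makes $\bm b$ majorized by $(\bar b,\dots,\bar b,0,\dots,0)$---is the heart of the matter, since without that constraint the inequality $e_m(\bm b)\ge\bar b^{\,m}$ is false (take $\bm b=(m,0,\dots,0)$). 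Your handling of the degenerate cases ($k=0$, rank of the weighted sum below $s$, $m=1$) is also fine. What each route buys: the paper's citation is economical but leaves the reader to unpack Nikolov's argument (which likewise passes through elementary symmetric polynomials and a Schur-concavity-type step, so you have essentially reconstructed the omitted proof); your version makes the result self-contained and, usefully, isolates exactly where the integer $k$ of \Cref{lem:kappa} does its work.
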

	\begin{proof}
		The proof follows from theorem 18 in \cite{nikolov2015randomized} and is thus omitted here. \qed
	\end{proof}

Now we are ready to show the approximation bound of the proposed sampling Algorithm \ref{alg_rand_round}.
	
	\begin{theorem} 		\label{thm:samp}
		Given an $\alpha$-optimal solution $\bm{\hat x}$ of PC \eqref{eq_pcont} with $\alpha \in (0, \infty)$, the random set generated by the sampling Algorithm \ref{alg_rand_round} returns a $(s\log(s) + \log (\binom{n}{s}) - s\log (n)+\alpha)$-approximation bound for MESP {\eqref{eq_obj}}, i.e., suppose the output of Algorithm \ref{alg_rand_round} is the random set $\tilde{S}$, then  
		\begin{align*}
		\log \mathbb{E} \bigg[ \det^s \bigg(\sum_{i \in \tilde{S}}\bm{v}_i \bm{v}_i^{\top} \bigg) \bigg] \ge z^*-s\log(s) - \log \bigg(\binom{n}{s}\bigg) + s\log (n) - \alpha.
		\end{align*} 
	\end{theorem}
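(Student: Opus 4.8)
The plan is to take expectations of the sampled objective under the distribution \eqref{samprob} and recognize that the expectation is exactly a normalized weighted sum of the type appearing in \Cref{bound}. First I would write out the expected value explicitly:
\[
\mathbb{E}\bigg[\det\nolimits^s\bigg(\sum_{i\in\tilde S}\bm{v}_i\bm{v}_i^{\top}\bigg)\bigg]
=\sum_{S\in\binom{[n]}{s}}\mathbb{P}[\tilde S=S]\,\det\nolimits^s\bigg(\sum_{i\in S}\bm{v}_i\bm{v}_i^{\top}\bigg)
=\frac{\sum_{S\in\binom{[n]}{s}}\big(\prod_{i\in S}\hat x_i\big)\det\nolimits^s(\bm{V}_S\bm{V}_S^{\top})}{\sum_{\bar S\in\binom{[n]}{s}}\prod_{i\in\bar S}\hat x_i}.
\]
By \Cref{claim:cholesky}, $\det^s(\bm{V}_S\bm{V}_S^{\top})=\det(\bm{C}_{S,S})$, so the numerator is precisely the left-hand side of the inequality in \Cref{bound} applied to $\bm{X}=\bm{C}$ and the point $\hat{\bm{x}}$. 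Applying that lemma bounds the numerator below by $\exp[\Gamma_s(\sum_{i\in[n]}\hat x_i\bm{v}_i\bm{v}_i^{\top})]$, and the $\alpha$-optimality of $\hat{\bm{x}}$ gives $\Gamma_s(\sum_{i\in[n]}\hat x_i\bm{v}_i\bm{v}_i^{\top})\ge z^{LD}-\alpha\ge z^*-\alpha$ by \Cref{primal_cont} and weak duality. Thus the numerator is at least $\exp(z^*-\alpha)$.

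The remaining work is to upper bound the denominator $\sum_{\bar S\in\binom{[n]}{s}}\prod_{i\in\bar S}\hat x_i$. Here I would exploit the constraints $\sum_{i\in[n]}\hat x_i=s$ and $\hat{\bm{x}}\in[0,1]^n$ that $\hat{\bm{x}}$ inherits as a feasible point of PC \eqref{eq_pcont}. The quantity $\sum_{\bar S\in\binom{[n]}{s}}\prod_{i\in\bar S}\hat x_i$ is the $s$-th elementary symmetric polynomial $e_s(\hat{\bm{x}})$. By Maclaurin's inequality (or the AM–GM-type bound on elementary symmetric functions), $e_s(\hat{\bm{x}})\le\binom{n}{s}\big(\frac{1}{n}\sum_{i\in[n]}\hat x_i\big)^s=\binom{n}{s}(s/n)^s$. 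Taking logarithms then yields $\log e_s(\hat{\bm{x}})\le\log\binom{n}{s}+s\log s-s\log n$.

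Putting the two bounds together, I would take logarithms of the full expression:
\[
\log\mathbb{E}\bigg[\det\nolimits^s\bigg(\sum_{i\in\tilde S}\bm{v}_i\bm{v}_i^{\top}\bigg)\bigg]
\ge (z^*-\alpha)-\Big(\log\tbinom{n}{s}+s\log s-s\log n\Big),
\]
which rearranges to exactly the claimed bound $z^*-s\log(s)-\log\binom{n}{s}+s\log(n)-\alpha$.

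I expect the denominator estimate to be the main obstacle, since it is the only nontrivial inequality: the numerator step is a direct invocation of \Cref{bound} and the expectation identity is bookkeeping. The key is to identify the denominator as an elementary symmetric polynomial and to recall the sharp Maclaurin bound that converts the cardinality constraint $\sum_i\hat x_i=s$ into the factor $(s/n)^s$; one should double-check that Maclaurin's inequality applies here (it requires $\hat{\bm{x}}\ge\bm{0}$, which holds since $\hat{\bm{x}}\in[0,1]^n$) and that equality-case reasoning is not needed.
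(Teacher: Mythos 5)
Your proposal is correct and follows essentially the same route as the paper: write the expectation as the ratio, lower-bound the numerator via \Cref{bound}, use $\alpha$-optimality together with $z^{LD}\ge z^*$, and upper-bound the denominator $\sum_{\bar S\in\binom{[n]}{s}}\prod_{i\in\bar S}\hat x_i$ by $\binom{n}{s}(s/n)^s$ via Maclaurin's inequality. Your explicit identification of the denominator as the elementary symmetric polynomial $e_s(\hat{\bm{x}})$ and the check that $\hat{\bm{x}}\ge\bm{0}$ is just a more careful spelling-out of the same step the paper performs.
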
 

	\begin{proof}
		Given the random set $\tilde{S}$ and its sampling probability \eqref{samprob}, the expected exponential of the objective value of MESP \eqref{eq_obj} is equal to
		\begin{align*}
		\mathbb{E} \bigg[ \det^s \bigg(\sum_{i \in \tilde{S}}\bm{v}_i \bm{v}_i^{\top} \bigg) \bigg] &=\sum_{S \in \binom{[n]}{s} } \mathbb{P}[\tilde{S}=S] \det^s (\bm{V}_{S} \bm{V}_{S}^{\top} ) 
		= \sum_{S \in \binom{[n]}{s} } \frac{\prod_{i \in S} \hat{x}_i}{ \sum_{\bar{S} \in \binom{[n]}{s} }  \prod_{i \in \bar{S}} \hat{x}_i  } \det^s (\bm{V}_{S} \bm{V}_{S}^{\top} ) \\
		&\geq  \frac{\exp\bigg[\Gamma_s\bigg(\sum_{i \in [n]}  \hat{x}_i \bm{v}_i \bm{v}_i^{\top}\bigg)\bigg] }{ \sum_{\bar{S} \in \binom{[n]}{s} }  \prod_{i \in \bar{S}} \hat{x}_i  } \ge \left(\left(\frac{s}{n}\right)^{s} {\binom{n}{s}}\right)^{-1} \exp\bigg[\Gamma_s\bigg(\sum_{i \in [n]}  \hat{x}_i \bm{v}_i \bm{v}_i^{\top}\bigg)\bigg] \\
		&\ge \left(\left(\frac{s}{n}\right)^{s} {\binom{n}{s}}\right)^{-1} \exp \left (  z^{*} -\alpha\right ),
		\end{align*}
		where the first inequality is due to \Cref{bound}, the second one is from Maclaurin’s inequality \citep{lin1994some}, and the last one is due to the $\alpha$-optimality of the solution $\hat{\bm{x}}$ and the weak duality $z^{LD}\geq z^*$. The conclusion follows by taking logarithm on both sides of the above inequalities. \qed

	\end{proof}
	
	We make the following remarks about the result in Theorem~\ref{thm:samp}.
	\begin{enumerate}[(i)] \setlength{\itemsep}{0pt}
		\item This approximation bound of sampling \Cref{alg_rand_round} improves the one studied in \cite{nikolov2015randomized} using a different sampling scheme, where the existing approximation bound is $\log\left(s^s/s!\right)+ \alpha$ (see Figure \ref{cfig} for illustrations). To show this fact, it suffices to prove that
		\begin{align*}
		\left ( \left(\frac{s}{n}\right)^{s} {\binom{n}{s}} \right)^{-1} \ge \frac{s!}{s^s},
		\end{align*}
		i.e.,
		\begin{align*}
		\left ( \left(\frac{s}{n}\right)^{s} {\binom{n}{s}} \right)^{-1} \frac{s^s}{s!} = \frac{n^s}{n\cdots (n-s+1)} \ge 1,
		\end{align*}
		where the inequality relies on the fact that $n\geq n-j+1$ for each $j\in [s]$.
		
		\begin{figure}[h]
			\subfigure[{$n$=100}] { \label{cfig:a}
				\includegraphics[width=7.1cm,height=5cm]{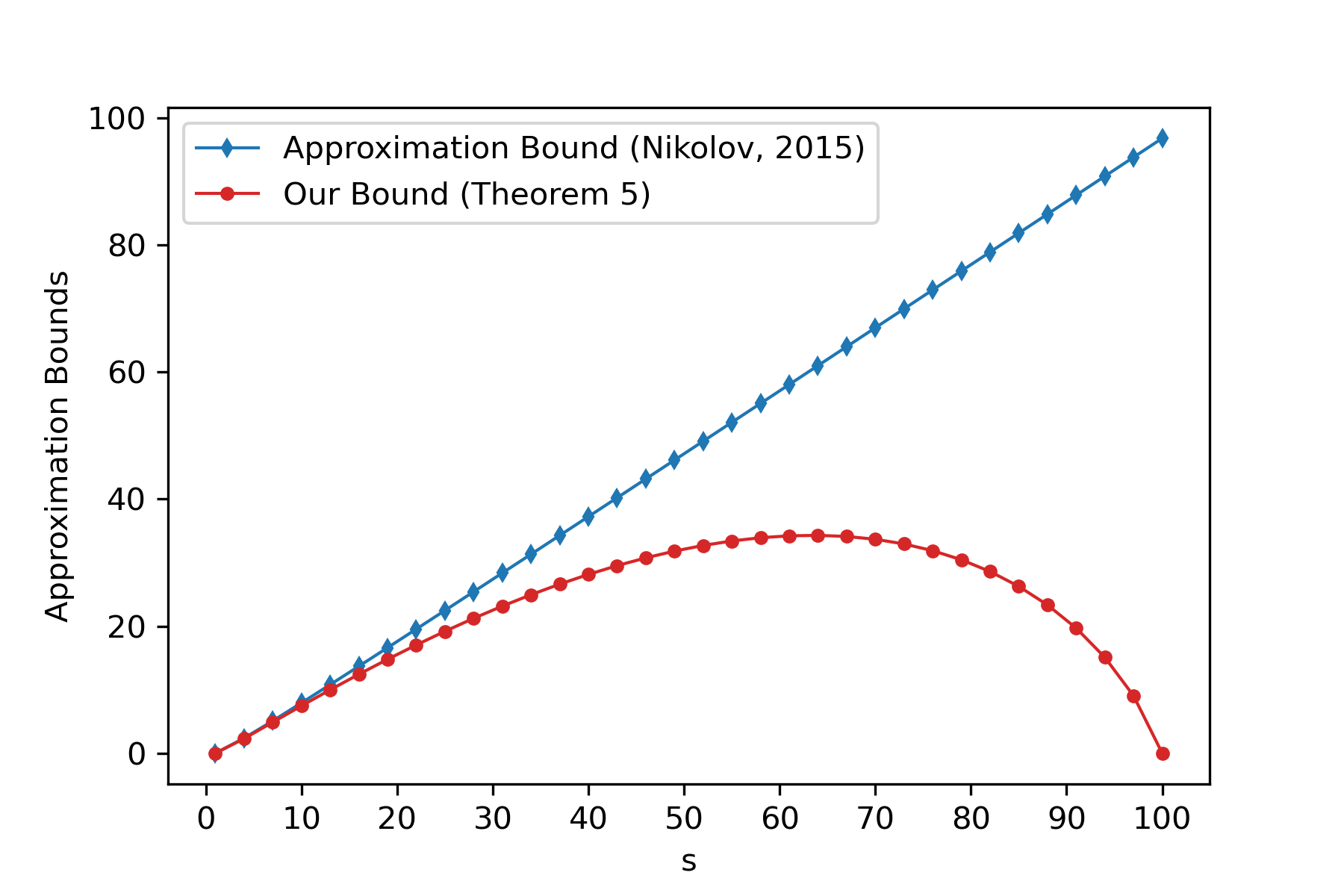}
			}
			\hspace{2em}
			\subfigure[{$n$=1000}] { \label{cfig:b}
				\includegraphics[width=7.1cm,height=5cm]{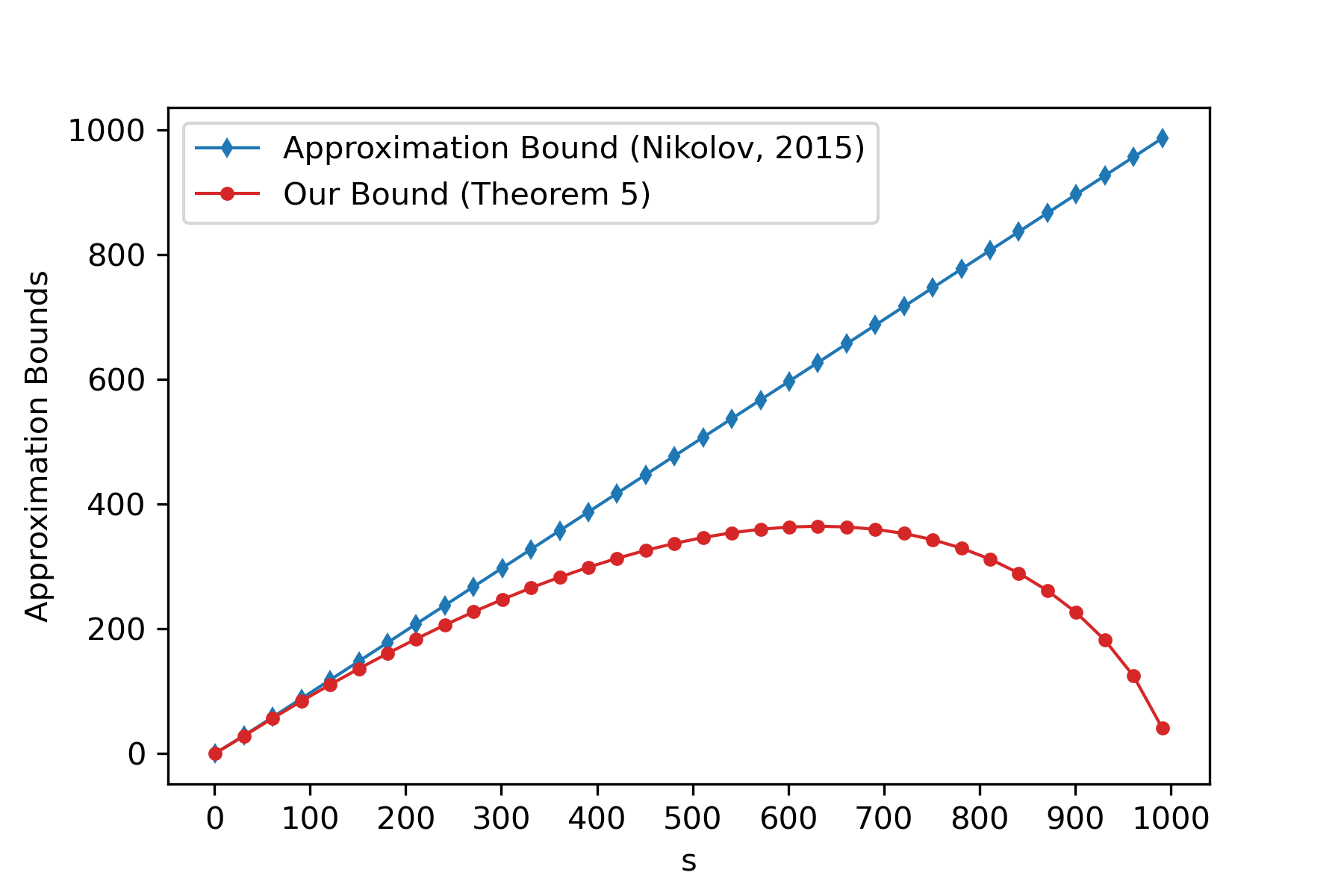}
			}
			\caption{Approximation bounds comparison of our sampling \Cref{alg_rand_round} and \cite{nikolov2015randomized} with $\alpha=0$.}
			\label{cfig}
		\end{figure}
		\item The approximation bound attains zero when $s=1$ and $s=n$.
		\item The proof in Theorem \ref{thm:samp} indicates that the approximation bound depends on the sparsity of the $\alpha$-optimal solution $\hat{\bm{x}}$  to PC \eqref{eq_pcont}. Indeed, if we  consider the sampling probability as
		\[\mathbb{P}[\tilde{S}=S] = \frac{\prod_{i \in S} \hat{x}_i}{ \sum_{\bar{S} \in \binom{\supp(\hat{\bm{x}})}{s} }  \prod_{i \in \bar{S}} \hat{x}_i  }, \]
		for any size-$s$ subset $S\subseteq \supp(\hat{\bm{x}})$. Then the approximation bound can be further improved as $(s\log(s) + \log (\binom{\hat n}{s}) - s\log (\hat n)+\alpha)$, where $\hat{n}=|\supp(\hat{\bm{x}})|$. This bound can be much smaller than the one in Theorem~\ref{thm:samp} if $\hat{n}\ll n$.
	\end{enumerate} 
	Another observation is that the optimal value of the continuous relaxation of MESP \eqref{eq_p} (i.e., PC \eqref{eq_pcont}) is not too faraway from the optimal value $z^*$.
	\begin{corollary}\label{cor_pc_bound} The optimal value of PC \eqref{eq_pcont} is bounded by $z^*+s\log(s) + \log (\binom{n}{s}) - s\log (n)$, i.e.
		\[z^*\leq z^{LD} \leq z^*+s\log(s) + \log \bigg(\binom{n}{s}\bigg) - s\log (n).  \]
	\end{corollary}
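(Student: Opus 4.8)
The lower bound $z^*\le z^{LD}$ is immediate from weak duality (\Cref{lagrange}), so all the work is in the upper bound, and the plan is to re-read the inequality chain already established in the proof of \Cref{thm:samp} in the opposite direction. The single new ingredient I would add is an elementary \emph{upper} bound on the expected objective of the sampling procedure: since every realized set $S$ is feasible for MESP, \Cref{claim:cholesky} gives $\det^s(\sum_{i\in S}\bm v_i\bm v_i^{\top})=\det(\bm C_{S,S})\le \exp(z^*)$, and because the sampling expectation is a convex combination of these quantities it follows that $\mathbb E[\det^s(\sum_{i\in\tilde S}\bm v_i\bm v_i^{\top})]\le \exp(z^*)$.

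On the other side, I would invoke the first two inequalities from the proof of \Cref{thm:samp} --- namely \Cref{bound} followed by Maclaurin's inequality --- which hold for \emph{any} feasible $\hat{\bm x}$ of PC~\eqref{eq_pcont} and yield $\mathbb E[\det^s(\sum_{i\in\tilde S}\bm v_i\bm v_i^{\top})]\ge \big((s/n)^{s}\binom{n}{s}\big)^{-1}\exp[\Gamma_s(\sum_{i\in[n]}\hat x_i\bm v_i\bm v_i^{\top})]$. Choosing $\hat{\bm x}$ to be an optimal solution of PC~\eqref{eq_pcont} (which exists since $\Gamma_s(\cdot)$ is concave and the feasible region is compact), the exponent on the right-hand side becomes exactly $z^{LD}$.

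Combining the two bounds gives $\exp(z^*)\ge \big((s/n)^{s}\binom{n}{s}\big)^{-1}\exp(z^{LD})$; taking logarithms and rearranging then produces $z^{LD}\le z^*+s\log(s)+\log\binom{n}{s}-s\log(n)$, as claimed. The only point needing a little care is that the optimal PC solution has at least $s$ positive coordinates so that the sampling distribution~\eqref{samprob} is well defined, but this is automatic because $\sum_{i\in[n]}\hat x_i=s$ with $\hat x_i\le 1$ forces at least $s$ positive entries; alternatively one can run the argument on an $\alpha$-optimal solution and let $\alpha\downarrow 0$. I expect no real obstacle here --- the corollary is essentially the ``reverse reading'' of \Cref{thm:samp}, and the only genuinely new step is the observation that the sampling expectation cannot exceed $\exp(z^*)$.
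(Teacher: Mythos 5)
Your proposal is correct and follows essentially the same route as the paper: the paper's own proof is precisely the observation that $z^*\geq \log \mathbb{E}[\overset{s}{\det}(\sum_{i\in\tilde S}\bm v_i\bm v_i^{\top})]$ (since every realized set is feasible) combined with the inequality chain from Theorem~\ref{thm:samp} and letting $\alpha\downarrow 0$. Your extra remark about the sampling distribution being well defined is a harmless refinement that the paper leaves implicit.
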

	
	\begin{proof}The proof follows from that in Theorem~\ref{thm:samp} by observing that $z^*\geq \log \mathbb{E} [ \overset{s}{\det} (\sum_{i \in \tilde{S}}\bm{v}_i \bm{v}_i^{\top} ) ]$ and $\alpha$ can be arbitrarily positive.
		\qed
	\end{proof}
	
	The following instance illustrates the tightness of our analysis for the sampling Algorithm \ref{alg_rand_round}.
	\begin{proposition}\label{prop:samp}
		Given the sampling probability in \eqref{samprob}, there exists an instance such that
		\[ 	\log \mathbb{E} \bigg[ \det^s \bigg(\sum_{i \in \tilde{S}}\bm{v}_i \bm{v}_i^{\top} \bigg) \bigg] = z^*-s\log(s) - \log \bigg(\binom{n}{s}\bigg) +s\log (n).\]
	\end{proposition}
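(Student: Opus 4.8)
The plan is to reverse-engineer the instance from the three inequalities in the proof of \Cref{thm:samp} and make each of them tight simultaneously. The middle step (Maclaurin's inequality) is an equality exactly when the components of $\bm{\hat x}$ are all equal, which forces the solution used in \eqref{samprob} to be the uniform point $\hat x_i = s/n$; for this to be a legitimate ($\alpha=0$) PC-optimal solution we will also need the uniform point to solve PC \eqref{eq_pcont}. The remaining two inequalities, \Cref{bound} and weak duality $z^{LD}\ge z^*$, must then be saturated as well, so I would search for a highly symmetric instance in which $z^{LD}=z^*$ and the volume-sampling bound of \Cref{bound} is exact. Concretely, I propose taking $d=s$, fixing an integer $m\ge 1$, setting $n=ms$, and letting the $n$ columns of $\bm{V}$ consist of $m$ identical copies of each standard basis vector $\bm{e}_1,\dots,\bm{e}_s\in\Re^s$; equivalently $\bm{C}=\bm{V}^{\top}\bm{V}$ is the block-diagonal $0/1$ matrix whose $s$ diagonal blocks are $m\times m$ all-ones matrices.

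With this choice every quantity is a short determinant count. A size-$s$ set $S$ has $\det(\bm{C}_{S,S})=1$ when $S$ is a transversal (one copy from each of the $s$ directions) and $\det(\bm{C}_{S,S})=0$ otherwise, so $z^*=0$; counting transversals (or, equivalently, using the Cauchy--Binet identity $\sum_{S}\det(\bm{C}_{S,S})=\det(\bm{V}\bm{V}^{\top})=\det(m\bm{I}_s)$) gives $\sum_{S\in\binom{[n]}{s}}\det(\bm{C}_{S,S})=m^s=(n/s)^s$. Taking $\bm{\hat x}$ to be the uniform point makes $\prod_{i\in S}\hat x_i=(1/m)^s$ constant over $S$, so the law \eqref{samprob} is uniform on $\binom{[n]}{s}$, and by \Cref{claim:cholesky}
\[ \mathbb{E}\Big[\det^s\Big(\sum_{i\in\tilde{S}}\bm{v}_i\bm{v}_i^{\top}\Big)\Big]=\frac{1}{\binom{n}{s}}\sum_{S\in\binom{[n]}{s}}\det(\bm{C}_{S,S})=\frac{(n/s)^s}{\binom{n}{s}}. \]
Taking logarithms and substituting $z^*=0$ reproduces exactly the claimed identity $z^*-s\log(s)-\log\binom{n}{s}+s\log(n)$.

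The one genuinely nonroutine step is verifying that the uniform point is PC-optimal, so that it is a bona fide $\alpha=0$ solution of \eqref{eq_pcont} consistent with the sampling algorithm. I would argue this by symmetry: the multiset $\{\bm{v}_i\}$ is invariant under (i) permuting the $m$ copies within each basis direction and (ii) orthogonal transformations of $\Re^s$ permuting $\bm{e}_1,\dots,\bm{e}_s$, and since $\Gamma_s$ depends only on the spectrum of $\sum_i x_i\bm{v}_i\bm{v}_i^{\top}$ it is invariant under the induced coordinate permutations of $\bm{x}$. These permutations act transitively on $[n]$, so averaging any optimizer over the group orbit yields a feasible, coordinate-symmetric point, which must be the uniform point; concavity of $\Gamma_s$ then guarantees that this average is still optimal. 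As a consistency check, the uniform point gives $\sum_i \tfrac{s}{n}\bm{v}_i\bm{v}_i^{\top}=\tfrac{s}{n}(m\bm{I}_s)=\bm{I}_s$, whence $\Gamma_s(\bm{I}_s)=0=z^*$, confirming $z^{LD}=z^*$ and $\alpha=0$. The main obstacle is thus entirely in \emph{discovering} a construction that saturates Maclaurin's inequality (forcing uniform $\bm{\hat x}$) and the volume-sampling bound of \Cref{bound} at once; once the $m$-fold repeated-basis instance is in hand, every remaining step is a one-line count or a symmetry averaging argument.
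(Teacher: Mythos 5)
Your proof is correct and uses exactly the paper's instance (its Example constructs $d=s$, $n=\ell s$ with $\ell$ copies of each standard basis vector of $\Re^s$), followed by the same computation: with the uniform point $\hat{x}_i=s/n$ the sampling law \eqref{samprob} is uniform on $\binom{[n]}{s}$, the transversal count gives $\sum_S \det(\bm{C}_{S,S})=(n/s)^s$, and $z^*=0$, which yields the claimed identity. The only difference is that you explicitly justify PC-optimality of the uniform point via the symmetry-averaging and concavity argument (and the check $\Gamma_s(\bm{I}_s)=0$), whereas the paper simply asserts this as clear; your added verification is sound and strengthens the write-up.
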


	\begin{proof}
		Let us consider the following example. 
		\begin{example}\label{examp3} Suppose that $d=s,n=\ell s$ with some positive integer $\ell$, and $\bm{v}_{s\times (t-1)+i} = \bm{e}_i$ for all $(i,t) \in [s] \times [\ell]$.  %
		\end{example}
		Clearly, in Example~\ref{examp3}, we have $z^*=z^{LD}=0$, and 
		one optimal solution to PC \eqref{eq_pcont} is ${\hat{x}}_i = \frac{s}{n}=\frac{1}{\ell}$ for all $i \in [n]$. If we use $\bm{\hat x}$ as the input of the sampling Algorithm \ref{alg_rand_round}, then the expected exponential of the output objective value is
		\begin{align*}
		\mathbb{E} \bigg[ \det^s \bigg(\sum_{i \in \tilde{S}}\bm{v}_i \bm{v}_i^{\top} \bigg) \bigg] & 
		= \sum_{S \in \binom{[n]}{s} } \frac{\prod_{i \in S} \hat{x}_i}{ \sum_{\bar{S} \in \binom{[n]}{s} }  \prod_{i \in \bar{S}} \hat{x}_i  } \det^s \bigg(\sum_{i \in {S}}\bm{v}_i \bm{v}_i^{\top} \bigg) =  \left(\left(\frac{s}{n}\right)^{s} {\binom{n}{s}}\right)^{-1} \exp(z^*) .
		\end{align*} 
		\qed
	\end{proof}

	\subsection{Deterministic Implementation}
	To overcome the issue of randomness from the sampling algorithms, it is common to derive their corresponding polynomial-time deterministic implementation \citep{nikolov2015randomized,singh2020approximation,nikolov2019proportional}. In this subsection, we also develop the deterministic implementation of the proposed sampling Algorithm \ref{alg_rand_round} with the same approximation bound, which is presented in Algorithm~\ref{algo:deter}. 
	The key idea of derandomization is to apply the method of {conditional expectation}  \citep{alon2016probabilistic},  which requires  an auxiliary function regarding the conditional expected value of the function $\overset{s}{\det}(\cdot)$. 

First, for notational convenience, let us introduce the elementary symmetric polynomials.
\begin{definition} \label{def:elem}
For any vector $\bm{x} \in \Re^n$ and a positive integer $\ell\in [n]$, we define the elementary symmetric polynomial of degree $\ell$ as
$$E_{\ell}(\bm{x}) := \sum_{S \in \binom{[n]}{\ell} }  \prod_{i \in S} x_i.$$ 
\end{definition}	
	 In the deterministic Algorithm~\ref{algo:deter}, given an $\alpha$-optimal solution to PC \eqref{eq_pcont} and a subset $T\subseteq [n]$
	such that $|T|=t\leq s$, according to the sampling probability \eqref{samprob}, {the conditional expected exponential of the objective value of MESP is equal to}
		\begin{align}
	\H (T)&=\mathbb{E} \bigg[\det^s \bigg(\sum_{i \in \tilde{S}} \bm{v}_i \bm{v}_i^{\top} \bigg) | T \subseteq \tilde S\bigg]
	= \sum_{ \substack{S \in \binom{[n]}{s}}} P(S|T \subseteq S ) \det^s \bigg(\sum_{i \in {S}} \bm{v}_i \bm{v}_i^{\top} \bigg) \notag \\
	&= \sum_{ \substack{S \in \binom{[n]}{s}}}\frac{ {\mathbb{I}_{\{T \subseteq S\}}}  \prod_{i \in {S} \backslash T} \hat{x}_i}{ \sum\limits_{ \substack{\bar{S} \in \binom{[n]}{s}}} \mathbb{I}_{\{T \subseteq \bar{S} \}} \prod_{i \in \bar{S} \backslash T} \hat{x}_i}  \det(\bm{C}_{S,S})= \frac{E_{s-|T|}(\bm\lambda(T))}{ \sum\limits_{ \substack{\bar{S} \in \binom{[n]}{s}}}  {\mathbb{I}_{\{T \subseteq \bar{S} \}}}  \prod_{i \in \bar{S} \backslash T} \hat{x}_i} \det(\bm{C}_{T, T}),\label{eq:def_H}
	\end{align}
	where $\mathbb{I}_{(\cdot)}$ denotes the indicator function, $\bm\lambda(T)$ denotes the vector of eigenvalues of  $\left(\bm{C}^{1/2} \bm{V}^{\top} ({\bm I_d} -(\bm{V}_T\bm{V}_T^{\top})^\dag\bm{V}_T\bm{V}_T^{\top}) \bm{V} \bm{C}^{1/2} \right)_{[n]\backslash T, [n]\backslash T}$, and the last equality is according to theorem 19 in \cite{nikolov2015randomized}. Note that the denominator in \eqref{eq:def_H} can be computed efficiently according to observation 1 in \cite{singh2020approximation} with running time complexity $O(n\log n)$. The numerator can be also computed efficiently according to the remark after theorem 19 in \cite{nikolov2015randomized}, which requires to compute the characteristic function of a matrix (e.g., Faddeev-LeVerrier algorithm in \citealt{hou1998classroom}) with time complexity $O(n^4)$.

	Algorithm \ref{algo:deter} proceeds as follows. We start with an empty subset $S$, then for each $j\notin S$, we compute the the conditional expected exponential of the objective value of MESP, provided that the $j$-th column $\bm{v}_j$ will be chosen, i.e., $\H(S\cup \{j\})$. We add $j^*$ to $S$, where $j^*\in \arg\max_{j\in [n]\setminus S}\H(S\cup \{j\})$ and then go to next iteration. This procedure  terminates until $|S|=s$. Besides, Algorithm~\ref{algo:deter} requires $O(ns)$ evaluations of function $\H(\cdot)$; hence,  the corresponding time complexity is $O(n^5s)$. Therefore,  we recommend Algorithm \ref{alg_rand_round} due to its simplicity and shorter running time.
	
	The performance guarantee for Algorithm~\ref{algo:deter} is identical to Theorem~\ref{thm:samp}, as summarized below.
	
	\begin{algorithm}[htbp]
		\caption{Deterministic Implementation} \label{algo:deter}
		\begin{algorithmic}[1]
			\State \textbf{Input:} $n\times n$ matrix $\bm{C} \succeq 0$ of rank $d$ and integer $s \in [d]$
			\State Let $\bm{C}=\bm{V}^{\top}\bm{V}$ denote its Cholesky factorization where $\bm{V} \in \Re^{d\times n}$
			\State Let $\bm{v}_i \in \Re^d$ denote the $i$-th column vector of matrix $\bm{V}$ for each $i \in [n]$
			\State Let $\bm{\hat x}$ be an $\alpha$-optimal solution $\bm{\hat x}$ of PC \eqref{eq_pcont} with $\alpha \in (0, \infty)$
			\State Let set ${\hat{S}} := \emptyset$ denote the chosen set	
			\For{$i = 1, \cdots, s$}
			\State Let $j^*\in \arg\max_{j\in [n]\setminus {\hat{S}}}\H({\hat{S}}\cup \{j\})$
			\State Add $j^*$ to the set $\hat{S}$
			\EndFor
			\State \textbf{Output:} $\hat S$%
		\end{algorithmic}
	\end{algorithm}

	\begin{theorem}	\label{thm_deter}
		The deterministic \Cref{algo:deter} yields the same approximation bound for MESP as the sampling \Cref{alg_rand_round} , i.e, suppose that the output of Algorithm \ref{algo:deter}  is $\hat{S}$, then
		\begin{align*}
		\log \det^s \bigg(\sum_{i \in \hat{S}} \bm{v}_i \bm{v}_i^{\top} \bigg)\ge z^*-s\log(s) - \log \bigg(\binom{n}{s}\bigg) +s\log (n) -  \alpha.
		\end{align*}
	\end{theorem}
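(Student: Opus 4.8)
The plan is to prove this by the classical method of conditional expectations, using the random set $\tilde S$ produced by the sampling Algorithm~\ref{alg_rand_round} as the underlying probability space and the conditional estimator $\H(\cdot)$ from \eqref{eq:def_H} as the potential function that the greedy Algorithm~\ref{algo:deter} drives upward. The two endpoints are what make the argument work: since conditioning on $\emptyset\subseteq\tilde S$ is vacuous, we have $\H(\emptyset)=\E[\det^s(\sum_{i\in\tilde S}\bm{v}_i\bm{v}_i^{\top})]$, which by the chain of inequalities in the proof of Theorem~\ref{thm:samp} satisfies $\H(\emptyset)\ge ((s/n)^s\binom{n}{s})^{-1}\exp(z^*-\alpha)$; and at termination, when $|\hat S|=s$, the only size-$s$ set containing $\hat S$ is $\hat S$ itself, so $\H(\hat S)=\det(\bm{C}_{\hat S,\hat S})=\det^s(\sum_{i\in\hat S}\bm{v}_i\bm{v}_i^{\top})$ by \Cref{claim:cholesky}. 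The whole proof then reduces to showing that $\H$ never decreases along the greedy trajectory, i.e. $\H(\hat S)\ge\H(\emptyset)$.

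The key step I would establish is that, for any $T\subsetneq[n]$ with $|T|=t<s$, the value $\H(T)$ is a convex combination of $\{\H(T\cup\{j\})\}_{j\in[n]\setminus T}$. Writing $D_T:=\sum_{\bar S\in\binom{[n]}{s},\,T\subseteq\bar S}\prod_{i\in\bar S\setminus T}\hat{x}_i$ so that $\H(T)D_T=\sum_{S\supseteq T,\,|S|=s}\prod_{i\in S\setminus T}\hat{x}_i\det(\bm{C}_{S,S})$, a double-counting argument gives the two identities
\begin{align*}
\sum_{j\in[n]\setminus T}\hat{x}_j\,D_{T\cup\{j\}}\,\H(T\cup\{j\})=(s-t)\,D_T\,\H(T),\qquad
\sum_{j\in[n]\setminus T}\hat{x}_j\,D_{T\cup\{j\}}=(s-t)\,D_T,
\end{align*}
because each size-$s$ set $S\supseteq T$ contributes its term $\prod_{i\in S\setminus T}\hat{x}_i\det(\bm{C}_{S,S})$ exactly once for each of the $s-t$ elements $j\in S\setminus T$. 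Setting $w_j:=\hat{x}_j D_{T\cup\{j\}}/[(s-t)D_T]\ge 0$, these identities yield $\H(T)=\sum_{j\notin T}w_j\H(T\cup\{j\})$ with $\sum_j w_j=1$, so $\max_{j\notin T}\H(T\cup\{j\})\ge\H(T)$. Since Algorithm~\ref{algo:deter} selects precisely $j^*\in\arg\max_{j}\H(S\cup\{j\})$ at each step, this proves $\H$ is nondecreasing over the $s$ iterations.

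Chaining these facts gives $\det^s(\sum_{i\in\hat S}\bm{v}_i\bm{v}_i^{\top})=\H(\hat S)\ge\H(\emptyset)\ge ((s/n)^s\binom{n}{s})^{-1}\exp(z^*-\alpha)$, and taking logarithms produces exactly $z^*-s\log(s)-\log(\binom{n}{s})+s\log(n)-\alpha$, matching Theorem~\ref{thm:samp}. I expect the main obstacle to be the averaging identity: one must verify the double-counting cleanly and, importantly, guard against degenerate denominators $D_T=0$ when the greedy process enters a set $T$ that no positive-weight sample extends. I would handle this by restricting the procedure to $\supp(\hat{\bm x})$ (as in remark~(iii) after Theorem~\ref{thm:samp}), which guarantees $D_T>0$ throughout and keeps every $w_j$ well defined; the remaining computations—evaluating $\H(\cdot)$ via \eqref{eq:def_H} with the elementary symmetric polynomials of \Cref{def:elem}—are routine given the cited results of \cite{nikolov2015randomized} and \cite{singh2018approximation}.
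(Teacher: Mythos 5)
Your proposal is correct and follows exactly the route the paper intends: the paper states that the derandomization rests on the method of conditional expectations with the estimator $\H(\cdot)$ from \eqref{eq:def_H}, but it never writes out the proof of Theorem~\ref{thm_deter}, so your argument simply supplies the omitted details. The averaging identity $\H(T)=\sum_{j\notin T}w_j\,\H(T\cup\{j\})$ via double counting, the two endpoint evaluations $\H(\emptyset)=\mathbb{E}[\overset{s}{\det}(\cdot)]$ and $\H(\hat S)=\det(\bm{C}_{\hat S,\hat S})$, and your handling of the degenerate denominators by restricting to $\supp(\hat{\bm x})$ are all sound, so the monotonicity of $\H$ along the greedy trajectory combined with Theorem~\ref{thm:samp} yields the claimed bound.
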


	\section{Local Search Algorithm and its Approximation Guarantees} 
	\label{sec:loc}
	In this section, we investigate the widely-used local search algorithm (see, e.g., \citealt{hazimeh2020fast, madan2019combinatorial}) on solving MESP and prove its performance guarantee. The local search algorithm runs as follows: (i) first, we initialize a size-$s$ subset $\hat{S}\subseteq [n]$; (ii) next, we swap one element from the set $\hat{S}$ with one from the unchosen set $[n]\setminus\hat{S}$, and we update the chosen set if such a movement strictly increases the objective value; and (iii) the algorithm terminates until no improvement can be found. %
	The detailed implementation can be found in Algorithm \ref{algo}.

	\begin{algorithm}[htbp]
		\caption{Local Search Algorithm} \label{algo}
		\begin{algorithmic}[1]
			\State \textbf{Input:} $n\times n$ matrix $\bm{C} \succeq 0$ {of rank $d$ and integer $s \in [d]$}
			\State Let $\bm{C}=\bm{V}^{\top}\bm{V}$ denote its Cholesky factorization where $\bm{V} \in \Re^{d\times n}$
			\State Let $\bm{v}_i \in \Re^d$ denote the $i$-th column vector of matrix $\bm{V}$ for each $i \in [n]$
			\State Initial subset $\hat{S} \subseteq [n]$ of size $s$ such that $\{\bm{v}_i\}_{i\in \hat{S}}$ are linearly independent
			\Do
			\For{each pair {$(i,j) \in \hat{S} \times ([n]\setminus \hat{S})$}}
			\If{$\log \overset{s}{\det} \left(\sum_{\ell \in \hat{S} \cup \{j\} \setminus \{i\}} \bm{v}_\ell \bm{v}_\ell^{\top}\right) > \log \overset{s}{\det} \left(\sum_{\ell \in \hat{S}} \bm{v}_\ell \bm{v}_\ell^{\top}\right)$}
			\State Update $\hat{S} := \hat{S} \cup \{j\} \setminus \{i\}$
			\EndIf
			\EndFor
			\doWhile{there is still an improvement}
			\State \textbf{Output:} $\hat S$%
		\end{algorithmic}
	\end{algorithm}
	Let us first derive the following technical results on the rank-one update of singular matrices, which are essential to the analysis of the local search Algorithm \ref{algo}.
	\begin{restatable}{lemma}{lemrankupd} \label{lem:rankupd}
		Consider a size-$\tau$ subset $\hat{S}\subseteq [n]$ with $\tau\in [d]$ such that $\{\bm{v}_i\}_{i\in \hat{S}}$ are linearly independent. {Let $\bm{X} = \sum_{i \in \hat{S}} \bm{v}_{i}\bm{v}_{i}^{\top}$, and for each $i \in \hat{S}$, let $\bm{X}_{-i}=\bm{X}-\bm{v}_{i}\bm{v}_{i}^{\top}$.} Then for each $(i,j)\in \hat{S}\times ([n]\setminus \hat{S})$,  we have the followings
		\begin{enumerate}[(i)]
			\item $	\overset{\tau}{\det} (\bm{X})=
			\overset{\tau-1}{\det} (\bm{X}_{-i}) \bm{v}_i^{\top}(\bm{I}_d-
			\bm{X}_{-i}^{\dag}\bm{X}_{-i})\bm{v}_i,$
			
			\item 
			$ 
			\begin{cases}
			\overset{\tau}{\det} (\bm{X}_{-i}+\bm{v}_j\bm{v}_j^{\top})=
			\overset{\tau-1}{\det} (\bm{X}_{-i}) \bm{v}_j^{\top}(\bm{I}_d-
			\bm{X}_{-i}^{\dag}\bm{X}_{-i})\bm{v}_j , & \textrm{ if } \bm{v}_j \notin \col(\bm{X}_{-i}),\\
			\overset{\tau-1}{\det} (\bm{X}_{-i}+\bm{v}_j\bm{v}_j^{\top})=
			\overset{\tau-1}{\det} (\bm{X}_{-i}) (1+\bm{v}_j^{\top}\bm{X}_{-i}^{\dag}\bm{v}_j),& \textrm{otherwise},
			\end{cases} $
			
			\item $\bm{X}^{\dag} =\bm{X}_{-i}^{\dag} - 
			\frac{\bm{X}_{-i}^{\dag} \bm{v}_i \bm{v}_i^{\top}(\bm{I}_d-\bm{X}_{-i}^{\dag}\bm{X}_{-i}) }{\|(\bm{I}_d-\bm{X}_{-i}^{\dag}\bm{X}_{-i}) \bm{v}_i \|_{2}^{2}} -
			\frac{(\bm{I}_d-\bm{X}_{-i}^{\dag}\bm{X}_{-i}) \bm{v}_i \bm{v}_i^{\top} \bm{X}_{-i}^{\dag} }{\|(\bm{I}_d-\bm{X}_{-i}^{\dag}\bm{X}_{-i}) \bm{v}_i \|_{2}^{2}}  + \frac{(1+\bm{v}_i^{\top} \bm{X}_{-i}^{\dag} \bm{v}_i)(\bm{I}_d-\bm{X}_{-i}^{\dag}\bm{X}_{-i}) \bm{v}_i \bm{v}_i^{\top} (\bm{I}_d-\bm{X}_{-i}^{\dag}\bm{X}_{-i}) }{\|(\bm{I}_d-\bm{X}_{-i}^{\dag}\bm{X}_{-i}) \bm{v}_i \|_{2}^{4}},$
			\item $\bm{X}_{-i}^{\dag} =\bm{X}^{\dag} - \frac{\bm{X}^{\dag}\bm{v}_i \bm{v}_i^{\top} \bm{X}^{\dag}\bm{X}^{\dag}}{\|\bm{X}^{\dag} \bm{v}_i\|_2^2}- \frac{\bm{X}^{\dag}\bm{X}^{\dag}\bm{v}_i \bm{v}_i^{\top} \bm{X}^{\dag}}{\|\bm{X}^{\dag} \bm{v}_i\|_2^2} +
			\frac{\bm{v}_i^{\top} (\bm{X}^{\dag})^3 \bm{v}_i\bm{X}^{\dag}\bm{v}_i \bm{v}_i^{\top} \bm{X}^{\dag}}{\|\bm{X}^{\dag} \bm{v}_i\|_2^4},$
			\item  $\bm{v}_i^{\top} \bm{X}^{\dag} \bm{v}_i=1,$
			\item $\bm{v}_i^{\top} (\bm{I}_d - \bm{X}^{\dag} \bm{X}) = \bm{0},$
			\item 
			$\bm{v}_i^{\top}(\bm{I}_d-\bm{X}_{-i}^{\dag}\bm{X}_{-i})\bm{v}_i =\frac{1}{\|\bm{ X}^{\dag}\bm{v}_i\|^2_2},$
			\item 
			$ \bm{v}_j^{\top}(\bm{I}_d-\bm{X}_{-i}^{\dag}\bm{X}_{-i})\bm{v}_j=
			\begin{cases}
			\bm{v}_j^{\top} (\bm{I}_d-\bm{ X}^{\dag} \bm{X}) \bm{v}_j+\frac{(\bm{v}_j^{\top} \bm{ X}^{\dag} \bm{v}_i)^2}{\|\bm{ X}^{\dag}\bm{v}_i\|^2_2}, & \textrm{ if } \bm{v}_j \notin \col(\bm{X}_{-i});\\
			0,& \textrm{otherwise}.
			\end{cases} $.
		\end{enumerate}
	\end{restatable}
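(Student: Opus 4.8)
The plan is to base the entire argument on the linear-independence hypothesis and on two standard facts about symmetric pseudoinverses. Writing $\bm{X} = \bm{V}_{\hat S}\bm{V}_{\hat S}^{\top}$ where $\bm{V}_{\hat S}$ is the $d\times\tau$ matrix whose columns are $\{\bm{v}_i\}_{i\in\hat S}$, linear independence gives that $\bm{X}$ has rank exactly $\tau$ with $\col(\bm{X})=\mathrm{span}\{\bm{v}_\ell\}_{\ell\in\hat S}$, that $\bm{X}_{-i}$ has rank $\tau-1$, and that $\bm{v}_i\in\col(\bm{X})\setminus\col(\bm{X}_{-i})$. The two facts I will use repeatedly are that $\bm{X}^{\dag}\bm{X}$ is the orthogonal projector onto $\col(\bm{X})$ and that $\bm{I}_d-\bm{X}^{\dag}\bm{X}$ is the orthogonal projector onto $\ker(\bm{X})=\col(\bm{X})^{\perp}$ (and similarly for $\bm{X}_{-i}$), since both matrices are symmetric.

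For Parts (i) and (ii) I would invoke the pseudo-determinant version of the matrix determinant lemma: for symmetric $\bm{A}\succeq 0$ and any vector $\bm{u}$, the product of the nonzero eigenvalues of $\bm{A}+\bm{u}\bm{u}^{\top}$ equals that of $\bm{A}$ times $(1+\bm{u}^{\top}\bm{A}^{\dag}\bm{u})$ when $\bm{u}\in\col(\bm{A})$, and equals that of $\bm{A}$ times $\bm{u}^{\top}(\bm{I}_d-\bm{A}^{\dag}\bm{A})\bm{u}$ when $\bm{u}\notin\col(\bm{A})$. I would prove this identity once, either by diagonalizing $\bm{A}$ and expanding in the eigenbasis or via a $2\times 2$ Schur-complement computation, and then instantiate it with $\bm{A}=\bm{X}_{-i}$ and $\bm{u}=\bm{v}_i$ (which gives (i), since $\bm{v}_i\notin\col(\bm{X}_{-i})$) and with $\bm{u}=\bm{v}_j$ (which gives both branches of (ii) according to whether $\bm{v}_j\in\col(\bm{X}_{-i})$).

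Parts (iii) and (iv) are the genuinely laborious steps and I expect them to be the main obstacle. These are the rank-one update and downdate formulas for the Moore-Penrose inverse: (iii) is the rank-increasing update $\bm{X}=\bm{X}_{-i}+\bm{v}_i\bm{v}_i^{\top}$ with $\bm{v}_i\notin\col(\bm{X}_{-i})$, and (iv) is the corresponding rank-decreasing downdate. The cleanest route is to cite Meyer's rank-one modification theorem and specialize to the symmetric case $\bm{c}=\bm{d}=\bm{v}_i$; alternatively one verifies directly that each stated right-hand side satisfies the four Penrose conditions, which is routine but bookkeeping-heavy. In either case the symmetry and the identity $\bm{v}_i^{\top}\bm{X}^{\dag}\bm{v}_i=1$ from Part (v) collapse several of the generic terms.

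The remaining parts are short and I would place them last so they reuse the earlier ones. Part (v) is the Gram-matrix computation $(\bm{V}_{\hat S}\bm{V}_{\hat S}^{\top})^{\dag}=\bm{V}_{\hat S}(\bm{V}_{\hat S}^{\top}\bm{V}_{\hat S})^{-2}\bm{V}_{\hat S}^{\top}$, so writing $\bm{v}_i=\bm{V}_{\hat S}\bm{e}_i$ telescopes the invertible Gram factors down to $\bm{e}_i^{\top}\bm{e}_i=1$. Part (vi) is immediate, since $\bm{v}_i\in\col(\bm{X})$ forces $(\bm{I}_d-\bm{X}^{\dag}\bm{X})\bm{v}_i=\bm{0}$. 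For Part (vii) the key observation is that $\bm{w}:=\bm{X}^{\dag}\bm{v}_i$ satisfies $\bm{X}_{-i}\bm{w}=\bm{X}\bm{X}^{\dag}\bm{v}_i-\bm{v}_i(\bm{v}_i^{\top}\bm{X}^{\dag}\bm{v}_i)=\bm{v}_i-\bm{v}_i=\bm{0}$ by Part (v), so $\bm{w}$ lies in the one-dimensional space $\ker(\bm{X}_{-i})\cap\col(\bm{X})$; writing $\bm{w}$ as a scalar multiple of $P\bm{v}_i$ with $P:=\bm{I}_d-\bm{X}_{-i}^{\dag}\bm{X}_{-i}$ and using $\bm{v}_i^{\top}\bm{w}=1$ pins the scalar and yields $\bm{v}_i^{\top}P\bm{v}_i=\|P\bm{v}_i\|_2^2=1/\|\bm{X}^{\dag}\bm{v}_i\|_2^2$. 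Finally, for Part (viii) I would use that $\col(\bm{X}_{-i})^{\perp}=\col(\bm{X})^{\perp}\oplus\mathrm{span}\{\bm{u}\}$ with $\bm{u}=\bm{X}^{\dag}\bm{v}_i/\|\bm{X}^{\dag}\bm{v}_i\|_2$ the unit vector identified in Part (vii), so the projectors decompose as $P=(\bm{I}_d-\bm{X}^{\dag}\bm{X})+\bm{u}\bm{u}^{\top}$; evaluating the quadratic form at $\bm{v}_j$ gives the first branch directly, while $\bm{v}_j\in\col(\bm{X}_{-i})$ forces $P\bm{v}_j=\bm{0}$ and hence the zero branch.
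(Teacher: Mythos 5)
Your proposal is correct, and while it reaches the same destination it travels a noticeably different road on several parts. For Parts (i)--(ii) you prove a pseudo-determinant rank-one identity directly (by eigenbasis or Schur complement), whereas the paper obtains the same identity by writing $\overset{\tau}{\det}(\bm{X})=\lim_{\epsilon\to 0}\epsilon^{-(d-\tau)}\det(\bm{X}+\epsilon\bm{I}_d)$ and applying the ordinary matrix determinant lemma to the perturbed matrix; these are equivalent in content. For Parts (iii)--(iv) both you and the paper defer to Meyer's rank-one modification theorems, so there is no difference there. The real divergence is in Parts (v)--(viii): the paper derives each of these by substituting into the update/downdate formulas of Parts (iii)--(iv) and grinding through the resulting algebra, while you exploit structure instead --- the explicit Gram factorization $(\bm{V}_{\hat S}\bm{V}_{\hat S}^{\top})^{\dag}=\bm{V}_{\hat S}(\bm{V}_{\hat S}^{\top}\bm{V}_{\hat S})^{-2}\bm{V}_{\hat S}^{\top}$ for (v), the projector interpretation of $\bm{I}_d-\bm{X}^{\dag}\bm{X}$ for (vi), and the observation that $\bm{X}^{\dag}\bm{v}_i$ spans the one-dimensional space $\ker(\bm{X}_{-i})\cap\col(\bm{X})$, which yields (vii) by normalization and (viii) by the orthogonal projector decomposition $\bm{I}_d-\bm{X}_{-i}^{\dag}\bm{X}_{-i}=(\bm{I}_d-\bm{X}^{\dag}\bm{X})+\bm{u}\bm{u}^{\top}$. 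Your route is shorter, more geometric, and makes the identities in (vii)--(viii) transparent; the paper's route is purely computational but self-contained once (iii)--(iv) are in hand. One small bookkeeping point if you write this up: in (i)--(ii) you should note explicitly that $\overset{\tau}{\det}$ and $\overset{\tau-1}{\det}$ coincide with the product of the nonzero eigenvalues precisely because the linear-independence hypothesis pins the ranks of $\bm{X}$, $\bm{X}_{-i}$, and $\bm{X}_{-i}+\bm{v}_j\bm{v}_j^{\top}$ at the claimed values.
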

	
	\begin{proof} 
		See Appendix \ref{proof_lem_rankupd}.
		\qed
	\end{proof}
	
	Lemma \ref{lem:rankupd} helps establish the local optimality condition (i.e., stopping criterion) of the local search Algorithm \ref{algo}. That is, we first rewrite the local optimality condition as
\[\log \overset{s}{\det} \bigg(\sum_{\ell \in \hat{S} \cup \{j\} \setminus \{i\}} \bm{v}_{\ell}\bm{v}_{\ell}^{\top}\bigg) - \log \overset{s-1}{\det} \bigg(\sum_{\ell \in \hat{S}  \setminus \{i\}} \bm{v}_{\ell}\bm{v}_{\ell}^{\top}\bigg) \leq \log \overset{s}{\det} \bigg(\sum_{\ell \in \hat{S}} \bm{v}_\ell \bm{v}_\ell^{\top}\bigg)- \log \overset{s-1}{\det} \bigg(\sum_{\ell \in \hat{S} \setminus \{i\}} \bm{v}_{\ell}\bm{v}_{\ell}^{\top}\bigg), \]
for all $i\in \hat{S}$ and $j\in [n]\setminus \hat{S}$, and then use  the results in Lemma \ref{lem:rankupd}  to simplify the both differences. 
	\begin{restatable}{lemma}{lemrankupdtwo} \label{lem:rankupd2}
		Let $\hat{S}$ denote the output of the local search Algorithm \ref{algo} and let $\bm{X} = \sum_{i \in \hat{S}} \bm{v}_i\bm{v}_i^{\top}$. Then for each pair $(i,j)\in \hat{S} \times ([n]\setminus \hat{S})$, the following inequality holds
		$$1 \ge \left (\bm{v}_i^{\top}\bm{X}^{\dag} \bm{X}^{\dag}\bm{v}_i\right) \bm{v}_j^{\top}  (\bm{I}_d - \bm{X}^{\dag} \bm{X}) \bm{v}_j + \bm{v}_j^{\top} \bm{X}^{\dag} \bm{v}_i\bm{v}_i^{\top} \bm{X}^{\dag} \bm{v}_j. $$

	\end{restatable}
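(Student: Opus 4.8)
The plan is to convert the stopping criterion of the local search \Cref{algo} into the claimed inequality by substituting the closed-form rank-one update expressions from \Cref{lem:rankupd}. Write $\bm{X}=\sum_{\ell\in\hat S}\bm{v}_\ell\bm{v}_\ell^\top$ and, for $i\in\hat S$, $\bm{X}_{-i}=\bm{X}-\bm{v}_i\bm{v}_i^\top=\sum_{\ell\in\hat S\setminus\{i\}}\bm{v}_\ell\bm{v}_\ell^\top$; since $\{\bm{v}_\ell\}_{\ell\in\hat S}$ are linearly independent, $\bm{X}$ has rank $s$ and $\bm{X}_{-i}$ has rank $s-1$, so $\overset{s-1}{\det}(\bm{X}_{-i})>0$. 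Because \Cref{algo} terminated at $\hat S$, no swap strictly improves the objective; hence for every $(i,j)\in\hat S\times([n]\setminus\hat S)$ we have $\overset{s}{\det}(\bm{X}_{-i}+\bm{v}_j\bm{v}_j^\top)\le\overset{s}{\det}(\bm{X})$, which is the inequality I would start from.

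First I would treat the generic case $\bm{v}_j\notin\col(\bm{X}_{-i})$. Parts (i) and (ii) of \Cref{lem:rankupd} give $\overset{s}{\det}(\bm{X})=\overset{s-1}{\det}(\bm{X}_{-i})\,\bm{v}_i^\top(\bm{I}_d-\bm{X}_{-i}^\dag\bm{X}_{-i})\bm{v}_i$ and $\overset{s}{\det}(\bm{X}_{-i}+\bm{v}_j\bm{v}_j^\top)=\overset{s-1}{\det}(\bm{X}_{-i})\,\bm{v}_j^\top(\bm{I}_d-\bm{X}_{-i}^\dag\bm{X}_{-i})\bm{v}_j$. Dividing the stopping inequality by $\overset{s-1}{\det}(\bm{X}_{-i})>0$ reduces it to $\bm{v}_j^\top(\bm{I}_d-\bm{X}_{-i}^\dag\bm{X}_{-i})\bm{v}_j\le\bm{v}_i^\top(\bm{I}_d-\bm{X}_{-i}^\dag\bm{X}_{-i})\bm{v}_i$. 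Now I would invoke parts (vii) and (viii) to rewrite each side purely in terms of $\bm{X}$: the right side equals $1/\|\bm{X}^\dag\bm{v}_i\|_2^2$, and the left side equals $\bm{v}_j^\top(\bm{I}_d-\bm{X}^\dag\bm{X})\bm{v}_j+(\bm{v}_j^\top\bm{X}^\dag\bm{v}_i)^2/\|\bm{X}^\dag\bm{v}_i\|_2^2$. Multiplying through by $\|\bm{X}^\dag\bm{v}_i\|_2^2=\bm{v}_i^\top\bm{X}^\dag\bm{X}^\dag\bm{v}_i$ (using symmetry of $\bm{X}^\dag$) and writing $(\bm{v}_j^\top\bm{X}^\dag\bm{v}_i)^2=\bm{v}_j^\top\bm{X}^\dag\bm{v}_i\,\bm{v}_i^\top\bm{X}^\dag\bm{v}_j$ yields exactly the claimed bound.

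It remains to dispose of the degenerate case $\bm{v}_j\in\col(\bm{X}_{-i})$, where the rank-one update no longer raises the rank and the formulas above do not apply directly; this is the only genuinely delicate point, though it turns out to be vacuous. Since $\bm{v}_j\in\col(\bm{X}_{-i})\subseteq\col(\bm{X})$ and $\bm{X}^\dag\bm{X}$ is the orthogonal projector onto $\col(\bm{X})$, we get $(\bm{I}_d-\bm{X}^\dag\bm{X})\bm{v}_j=\bm{0}$, so the first term vanishes. For the second term, writing $\bm{v}_j=\bm{X}_{-i}\bm{u}$ and combining part (v), namely $\bm{v}_i^\top\bm{X}^\dag\bm{v}_i=1$, with $\bm{X}\bm{X}^\dag\bm{v}_i=\bm{v}_i$, gives $\bm{X}_{-i}\bm{X}^\dag\bm{v}_i=(\bm{X}-\bm{v}_i\bm{v}_i^\top)\bm{X}^\dag\bm{v}_i=\bm{v}_i-\bm{v}_i=\bm{0}$, hence $\bm{v}_j^\top\bm{X}^\dag\bm{v}_i=\bm{u}^\top\bm{X}_{-i}\bm{X}^\dag\bm{v}_i=0$ and the second term vanishes as well, so $1\ge 0$ holds trivially. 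The bulk of the technical difficulty is already absorbed into \Cref{lem:rankupd}, so the remaining obstacle is mainly the careful case split and verifying that the degenerate pairs contribute nothing.
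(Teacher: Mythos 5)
Your proof is correct and follows essentially the same route as the paper: translate the stopping criterion via Parts (i)--(ii) of Lemma \ref{lem:rankupd}, then substitute Parts (vii)--(viii) in the case $\bm{v}_j\notin\col(\bm{X}_{-i})$, and handle the degenerate case $\bm{v}_j\in\col(\bm{X}_{-i})$ separately. The only (cosmetic) difference is in the degenerate case, where you derive $\bm{v}_j^{\top}\bm{X}^{\dag}\bm{v}_i=0$ directly from $\bm{X}_{-i}\bm{X}^{\dag}\bm{v}_i=\bm{0}$ rather than expanding $\bm{X}^{\dag}$ via Part (iii) as the paper does; both are valid.
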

	\begin{proof}
		See Appendix \ref{proof_lem_rankupdtwo}. \qed
	\end{proof}
	
	\subsection{Analysis of Local Search Algorithm \ref{algo}}
	Now we are ready to analyze the local search Algorithm \ref{algo}. The main proof idea is two-fold: (i) using the output of the local search Algorithm \ref{algo} and its local optimality condition in \Cref{lem:rankupd2}, we construct a dual feasible solution to LD \eqref{eq_dual}, and (ii) we show that the objective value of this dual feasible solution can be bounded by $z^*$ with some extra constant.
	\begin{restatable}{theorem}{themlocal}\label{them1}
		Let $\hat{S}$ denote the output of the local search Algorithm \ref{algo}, then the set $\hat{S}$ yields a $s\min\{\log(s), \log (n-s-n/s+2)\} $-approximation bound for MESP \eqref{eq_obj}, i.e., 
		\begin{align*}
		\log \det^s \bigg(\sum_{i \in \hat{S}} \bm{v}_i\bm{v}_i^{\top}\bigg) \ge z^* -s \min\left\{ \log(s), \log \left(n-s-\frac{n}{s}+2\right) \right\}  . 
		\end{align*}
	\end{restatable}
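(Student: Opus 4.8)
The plan is to exploit the weak duality $z^{LD}\ge z^*$ from \Cref{lagrange}: it suffices to exhibit a feasible point of LD \eqref{eq_dual} whose objective value exceeds $\log\det^s(\sum_{i\in\hat S}\bm v_i\bm v_i^\top)$ by at most $s\min\{\log s,\log(n-s-n/s+2)\}$. Writing $\bm X=\sum_{i\in\hat S}\bm v_i\bm v_i^\top$ (which has rank $s$ since the selected columns are linearly independent), I would take as the multiplier matrix
\[
\bm\Lambda \;=\; \bm X^{\dag}+\tr(\bm X^{\dag})\,(\bm I_d-\bm X^{\dag}\bm X),
\]
i.e.\ the pseudoinverse on the range of $\bm X$ plus a large multiple of the projection onto its null space. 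The key observation is that the $d-s$ null-space eigenvalues all equal $\tr(\bm X^{\dag})=\sum_{i\in[s]}1/\lambda_i$, which dominates every reciprocal eigenvalue $1/\lambda_i$ of $\bm X^{\dag}$; hence the $s$ smallest eigenvalues of $\bm\Lambda$ are exactly $\{1/\lambda_i\}_{i\in[s]}$ and $-\log\det_s(\bm\Lambda)=\log\prod_{i\in[s]}\lambda_i=\log\det^s(\bm X)$. This matches the local-search value and is the crux of tying the dual bound to the output.

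Next I would read off the column values $a_i:=\bm v_i^\top\bm\Lambda\bm v_i$. For $i\in\hat S$, parts (v)--(vi) of \Cref{lem:rankupd} give $\bm v_i^\top\bm X^{\dag}\bm v_i=1$ and $\bm v_i^\top(\bm I_d-\bm X^{\dag}\bm X)=\bm 0$, so $a_i=1$. For $j\notin\hat S$, I would sum the local-optimality inequality of \Cref{lem:rankupd2} over all $i\in\hat S$; using $\sum_{i\in\hat S}\bm v_i^\top\bm X^{\dag}\bm X^{\dag}\bm v_i=\tr(\bm X^{\dag})$ and $\sum_{i\in\hat S}(\bm v_j^\top\bm X^{\dag}\bm v_i)^2=\bm v_j^\top\bm X^{\dag}\bm v_j$ (both via $\bm X^{\dag}\bm X\bm X^{\dag}=\bm X^{\dag}$) collapses the right-hand side to exactly $a_j$, yielding $a_j\le s$ for every $j\notin\hat S$.

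With $\bm\Lambda$ fixed, the remaining dual variables are optimized in closed form: for any scalar $c>0$, the point $(c\bm\Lambda,\nu,\bm\mu)$ with $\nu+\mu_i\ge c\,a_i$ attains dual objective $\log\det^s(\bm X)-s\log c+c\,S_s-s$, where $S_s$ is the sum of the $s$ largest $a_i$ (the minimizing $(\nu,\bm\mu)$ picks out the $s$ largest). Minimizing over $c$ at $c^\star=s/S_s$ gives the clean bound $z^*\le\log\det^s(\bm X)+s\log(\bar a)$ with $\bar a=S_s/s$ the average of the $s$ largest $a_i$. Finally I would bound $\bar a$ two ways: since every $a_i\le s$ we get $\bar a\le s$; and since the $s$ chosen values equal $1$ while the $n-s$ unchosen satisfy $a_j-1\le s-1$, the total excess over $1$ is at most $(n-s)(s-1)$, so $S_s\le s+(n-s)(s-1)$ and $\bar a\le 1+\tfrac{(n-s)(s-1)}{s}=n-s-n/s+2$. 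Monotonicity of $\log$ then yields $z^*-\log\det^s(\bm X)\le s\log\bar a\le s\min\{\log s,\log(n-s-n/s+2)\}$.

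The main obstacle is the spectral bookkeeping in the first step---verifying that $\tr(\bm X^{\dag})$ really is the largest eigenvalue of $\bm\Lambda$, so that $\det_s(\bm\Lambda)$ isolates precisely the reciprocals $1/\lambda_i$---together with the realization that one must scale $\bm\Lambda$ by $c$ and optimize that scalar to convert the \emph{linear} dual penalty $c\,S_s$ into the \emph{logarithmic} gap $s\log\bar a$; without this scaling step the naive bound is linear in $s$ and far too weak to recover $s\log s$.
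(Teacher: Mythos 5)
Your proposal is correct and follows essentially the same route as the paper: the same multiplier $\bm\Lambda=\bm X^{\dag}+\tr(\bm X^{\dag})(\bm I_d-\bm X^{\dag}\bm X)$ (up to the scaling factor, which the paper writes as $1/t$ and you as $c$), the same use of Lemma \ref{lem:rankupd} and the summed local-optimality condition of Lemma \ref{lem:rankupd2} to get $a_i=1$ on $\hat S$ and $a_j\le s$ off it, and the same optimization of the scalar to turn the linear dual penalty into the logarithmic gap. The only cosmetic difference is that you first solve the inner LP over $(\nu,\bm\mu)$ exactly (the sum of the $s$ largest $a_i$) and then bound that sum two ways, whereas the paper directly plugs in the two corresponding candidate dual solutions; both yield the identical pair of bounds $s\log s$ and $s\log(n-s-n/s+2)$.
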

	
	\begin{proof}
		See Appendix \ref{proof_them1}. \qed
	\end{proof}
	
	We make the following remarks about Theorem~\ref{them1}.
	\begin{enumerate}[(i)] \setlength{\itemsep}{0pt}
		\item To the best of our knowledge, it is the first-known approximation bound of the local search Algorithm \ref{algo} for MESP.
		\item The approximation bound attains the maximum when $s=\frac{n}{2}$ and is equal to zero when $s=1$ or $s=n$.
		\item The approximation bound is weaker than that of the sampling \Cref{alg_rand_round} in \Cref{thm:samp} if the continuous relaxation can be solved to optimality or very close to optimality. That is, if $\alpha\rightarrow 0$, then we have
		\[s\log(s) + \log \bigg(\binom{n}{s}\bigg) - s\log (n) \leq s\min\left\{ \log (s), \log \left(n-s-\frac{n}{s}+2\right) \right\}.\]
		However, as we can see from the numerical study, the local search Algorithm \ref{algo} in practice is more capable to find high-quality solutions than the sampling \Cref{thm:samp}.
		\item The proof also relies on the sparsity of the optimal solution to PC \eqref{eq_pcont}. In fact, if there exists a sparse optimal solution $\bm{x}^*$ to PC \eqref{eq_pcont} (i.e., $|\supp(\bm{x}^*)|\ll n$), then according to KKT conditions, we can drop the redundant dual constraints $\bm{v}_i^{\top} \bm{ \Lambda} \bm{v}_i \leq  \nu+\mu_i$ for each $i\in [n]\setminus \supp(\bm{x}^*)$ in LD \eqref{eq_dual}. Therefore, following the same proof in Theorem~\ref{them1}, the approximation bound can be further improved as $s\min\{\log (s), \log (\hat{n}-s-\hat{n}/s+2) \}$, where $\hat{n} = |\supp(\bm{x}^*)|$.
	\end{enumerate} 
	
The following instance shows that the proof of Theorem \ref{them1} is tight. That is, the approximation bound cannot be improved if we construct a feasible $\bm{\Lambda}$ to LD \eqref{eq_dual} as
	\begin{align}
	\bm{ \Lambda} = \frac{1}{t}\left[\tr(\bm{X}^{\dag})(\bm{I}_d - \bm{ X}^{\dag} \bm{ X}) + \bm{X}^{\dag}\right],%
	\label{feas_lambda}
	\end{align}
	where for the output $\hat{ S}$ of the local search \Cref{algo}, we let $\bm{X}=\sum_{i \in {\hat S}} \bm{v}_i\bm{v}_i^{\top}$ and let $t>0$ be a positive scaling factor.

		\begin{restatable}{proposition}{propdeg}\label{prop:degenerate}
		If one follows the construction of a feasible solution $\bm{\Lambda}$ in \eqref{feas_lambda} to LD \eqref{eq_dual}, then even with the best choice of $(\nu,\bm\mu)$, there exists an instance such that
		\[-\log \det_s (\bm{\Lambda}) +s\nu+\sum_{i \in [n]} \mu_i-s=z^*+ s\min\left\{ \log (s), \log \left(n-s-n/s+2\right) \right\}.\]
	\end{restatable}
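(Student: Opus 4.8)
The plan is to exhibit a single, fully explicit instance on which the dual solution built from \eqref{feas_lambda} is loose by exactly $s\min\{\log(s),\log(n-s-n/s+2)\}$, and to let the case split between the two branches of the minimum be produced automatically by the optimization over $(\nu,\bm\mu)$. Concretely, I would take $\bm{C}=\bm{I}_n$, so that $d=n$ and $\bm{v}_i=\bm{e}_i$ for every $i\in[n]$. Every size-$s$ principal submatrix equals $\bm{I}_s$, hence $z^*=0$ (and by Hadamard's inequality no subset can do better), and every size-$s$ subset is a local optimum whose columns are linearly independent; by symmetry I may assume the local search Algorithm \ref{algo} outputs $\hat S=[s]$, so that $\bm{X}=\sum_{i\in[s]}\bm{e}_i\bm{e}_i^{\top}$.

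Next I would evaluate the construction \eqref{feas_lambda} on this $\bm{X}$. Since $\bm{X}$ is the orthogonal projection onto the first $s$ coordinates, $\bm{X}^{\dag}=\bm{X}$, $\tr(\bm{X}^{\dag})=s$, and $\bm{I}_n-\bm{X}^{\dag}\bm{X}$ is the projection onto the last $n-s$ coordinates. Therefore $\bm{\Lambda}=t^{-1}\Diag(\bm{d})$ is diagonal, where $\bm{d}$ has its first $s$ entries equal to $1$ and its last $n-s$ entries equal to $s$. Its $s$ smallest eigenvalues are the $s$ copies of $1/t$, so $-\log\det_s(\bm{\Lambda})=s\log t$, while the quadratic forms are $a_i:=\bm{v}_i^{\top}\bm{\Lambda}\bm{v}_i=1/t$ for $i\in[s]$ and $a_i=s/t$ for $i\in[s+1,n]$. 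Using the closed-form minimization over $(\nu,\bm\mu)$ derived before Algorithm \ref{algo:fw} (set $\nu$ to the $s$-th largest $a_i$), the best choice of $(\nu,\bm\mu)$ makes $s\nu+\sum_{i\in[n]}\mu_i$ equal to the sum of the $s$ largest $a_i$, so the objective of LD \eqref{eq_dual} reduces to $s\log t-s$ plus the sum of the $s$ largest $a_i$.

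The final step is to split on $n$ versus $2s$ and minimize over the free scaling factor $t>0$. If $n\ge 2s$ there are at least $s$ indices with $a_i=s/t$, the $s$ largest sum to $s^2/t$, and minimizing $s\log t+s^2/t-s$ at $t=s$ gives the value $s\log(s)$. If $n<2s$ the $s$ largest consist of all $n-s$ entries $s/t$ together with $2s-n$ entries $1/t$, summing to $K/t$ with $K=s(n-s+2)-n$; minimizing $s\log t+K/t-s$ at $t=K/s$ gives $s\log(K/s)=s\log(n-s-n/s+2)$. Since $n\ge 2s$ is exactly the regime in which $\log(s)\le\log(n-s-n/s+2)$ (with equality at $n=2s$), the two cases together yield precisely $z^*+s\min\{\log(s),\log(n-s-n/s+2)\}$, as claimed.

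The main obstacle I anticipate is bookkeeping rather than conceptual: I must ensure the ``top-$s$'' selection of the $a_i$ is handled correctly across the two regimes, and in particular that the crossover of the two branches of the minimum coincides exactly with the threshold $n=2s$ that governs which quadratic forms enter the sum, and that the two univariate minimizations in $t$ produce the two logarithms in closed form. A secondary point to verify carefully is that \eqref{feas_lambda} is genuinely well defined on this instance, i.e.\ that $\hat S$ is a bona fide output of Algorithm \ref{algo} with linearly independent columns so that $\bm{X}^{\dag}$ and $\tr(\bm{X}^{\dag})$ are the quantities entering the construction, together with the confirmation that $z^*=0$ when $\bm{C}=\bm{I}_n$.
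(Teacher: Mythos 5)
Your argument is correct and follows the same strategy as the paper: exhibit an instance on which the key inequality \eqref{loc_constrjone}, namely $s \ge \tr(\bm{X}^{\dag})\,\bm{v}_j^{\top}(\bm{I}_d-\bm{X}^{\dag}\bm{X})\bm{v}_j + \bm{v}_j^{\top}\bm{X}^{\dag}\bm{v}_j$, holds with equality for every unchosen index $j$, so that the chain of estimates in the proof of Theorem \ref{them1} collapses to a chain of equalities. The only substantive difference is the witnessing instance: the paper's Example \ref{examp2} takes $\bm{v}_j=\sum_{i\in[s]}\bm{e}_i$ for $j>s$, so the unchosen vectors lie in $\col(\bm{X})$ and the value $s$ comes entirely from the term $\bm{v}_j^{\top}\bm{X}^{\dag}\bm{v}_j$, whereas your choice $\bm{C}=\bm{I}_n$ makes them orthogonal to $\col(\bm{X})$ so that the value $s$ comes entirely from $\tr(\bm{X}^{\dag})\,\bm{v}_j^{\top}(\bm{I}_d-\bm{X}^{\dag}\bm{X})\bm{v}_j$; both sums equal exactly $s$, so both instances certify tightness. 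Your write-up additionally carries out the optimization over $t$ and $(\nu,\bm{\mu})$ explicitly rather than deferring to the computation in Theorem \ref{them1}, and the case split at $n=2s$ is handled correctly, including the identity $(n-s-n/s+2)-s=(n-2s)(s-1)/s$ that aligns the crossover of the two branches of the minimum with the change in which quadratic forms enter the top-$s$ sum.
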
	
	
\begin{proof}	
	See Appendix \ref{proof_prop_deg}. \qed
\end{proof}
	
	The above proposition shows the tightness of the analysis of Theorem \ref{them1}. Thus, to improve the analysis of the local search Algorithm~\ref{algo}, one might need different ways to construct dual feasible solutions to LD \eqref{eq_dual}. In fact, we show that under a certain assumption,  the approximation bound of the local search Algorithm~\ref{algo} can be improved.

	\begin{restatable}{proposition}{propls}\label{prop_ls}
		Let $\hat{S}$ denote the output of the local search Algorithm \ref{algo}.	Suppose that $\bm{v}_i^{\top} \bm{v}_j = 0$ for each pair $ (i,j) \in \hat{S}\times ([n] \setminus \hat{S})$, then we have
		\begin{align*}
	\log \det^s \bigg(\sum_{i \in \hat{S}} \bm{v}_i\bm{v}_i^{\top}\bigg)  \ge z^*- s\min\left\{ \log\left(\frac{\lambda_{\max}(\bm C)}{\delta}\right),  \log \left( \frac{\lambda_{\max}(\bm C)}{s\delta }(n-s) -\frac{n}{s} + 2\right) \right\} ,
		\end{align*}
		where the constant $\delta$ is defined in \Cref{lem_hessian}.
	\end{restatable}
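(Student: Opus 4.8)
The plan is to follow the same two-step scheme used in the proof of Theorem~\ref{them1}: construct a feasible dual solution $(\bm{\Lambda},\nu,\bm{\mu})$ to LD~\eqref{eq_dual} from the output $\hat S$ of the local search algorithm, and then bound its objective value by $z^*$ plus the claimed additive term. The orthogonality assumption $\bm{v}_i^{\top}\bm{v}_j=0$ for all $(i,j)\in\hat S\times([n]\setminus\hat S)$ should simplify the cross-terms that otherwise obstruct a tighter estimate, so I expect the improvement over Theorem~\ref{them1} to come precisely from killing the term $\bm{v}_j^{\top}\bm{X}^{\dag}\bm{v}_i\,\bm{v}_i^{\top}\bm{X}^{\dag}\bm{v}_j$ in the local optimality condition of \Cref{lem:rankupd2}. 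First I would set $\bm{X}=\sum_{i\in\hat S}\bm{v}_i\bm{v}_i^{\top}$ and reuse the construction $\bm{\Lambda}=\tfrac{1}{t}[\tr(\bm{X}^{\dag})(\bm{I}_d-\bm{X}^{\dag}\bm{X})+\bm{X}^{\dag}]$ from~\eqref{feas_lambda}, keeping $t>0$ as a free scaling parameter to be optimized at the end.

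The core computation is to evaluate $\bm{v}_j^{\top}\bm{\Lambda}\bm{v}_j$ for each $j\in[n]\setminus\hat S$ and $\bm{v}_i^{\top}\bm{\Lambda}\bm{v}_i$ for each $i\in\hat S$, so that I can choose $\nu$ and $\bm{\mu}$ satisfying the dual constraints $\nu+\mu_i\ge\bm{v}_i^{\top}\bm{\Lambda}\bm{v}_i$. Under the orthogonality hypothesis, $\bm{v}_j\perp\col(\bm{X})$ would force $\bm{X}^{\dag}\bm{v}_j=\bm{0}$ and $\bm{v}_j^{\top}(\bm{I}_d-\bm{X}^{\dag}\bm{X})\bm{v}_j=\|\bm{v}_j\|_2^2\le\lambda_{\max}(\bm{C})$, while for $i\in\hat S$ the identity $\bm{v}_i^{\top}\bm{X}^{\dag}\bm{v}_i=1$ from \Cref{lem:rankupd}(v) gives $\bm{v}_i^{\top}\bm{\Lambda}\bm{v}_i=\tfrac{1}{t}$. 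The local optimality condition in \Cref{lem:rankupd2} then collapses, after using orthogonality, to a bound of the form $1\ge(\bm{v}_i^{\top}\bm{X}^{\dag}\bm{X}^{\dag}\bm{v}_i)\,\|\bm{v}_j\|_2^2$, which controls $\bm{v}_j^{\top}\bm{\Lambda}\bm{v}_j=\tfrac{1}{t}\tr(\bm{X}^{\dag})\|\bm{v}_j\|_2^2$ by relating $\tr(\bm{X}^{\dag})$ and the curvature constant $\delta=\min_{|S|=s}\lambda_{\min}(\bm{C}_{S,S})$ through $\lambda_{\max}(\bm{X}^{\dag})\le\delta^{-1}$ and $\|\bm{v}_j\|_2^2\le\lambda_{\max}(\bm{C})$. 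I would then pick $\nu=\max_j\bm{v}_j^{\top}\bm{\Lambda}\bm{v}_j$ (or $\tfrac{1}{t}$, whichever dominates) and set $\mu_i=(\bm{v}_i^{\top}\bm{\Lambda}\bm{v}_i-\nu)_+$, assembling $s\nu+\sum_i\mu_i$ explicitly.

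Finally I would compute $-\log\det_s(\bm{\Lambda})$ using the eigenstructure of $\bm{\Lambda}$: its restriction to $\col(\bm{X})$ has eigenvalues $\tfrac{1}{t}\lambda_i(\bm{X}^{\dag})$ and its restriction to the orthogonal complement has eigenvalue $\tfrac{1}{t}\tr(\bm{X}^{\dag})$, so the product of the $s$ smallest eigenvalues relates directly to $\overset{s}{\det}(\bm{X})=\exp(\log\det^s(\sum_{i\in\hat S}\bm{v}_i\bm{v}_i^{\top}))$. Summing the pieces, the LD objective becomes $\log\det^s(\sum_{i\in\hat S}\bm{v}_i\bm{v}_i^{\top})$ plus a term that depends on $t$, $s$, $n$, and the ratio $\lambda_{\max}(\bm{C})/\delta$; optimizing over $t>0$ and invoking weak duality $z^*\le z^{LD}$ yields the stated additive bound $s\min\{\log(\lambda_{\max}(\bm{C})/\delta),\ \log((\lambda_{\max}(\bm{C})/(s\delta))(n-s)-n/s+2)\}$. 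The main obstacle I anticipate is bookkeeping the two regimes of the $\min$: one regime arises from bounding $\nu$ by $\tfrac{1}{t}$ (the ``chosen-index'' constraint dominating) and the other from bounding it by the largest $\bm{v}_j^{\top}\bm{\Lambda}\bm{v}_j$ over unchosen $j$, and I would need to verify that the scaling factor $t$ can be chosen to realize each regime separately and that the $\lambda_{\max}(\bm{C})/\delta$ factors enter exactly as claimed rather than with a stray constant.
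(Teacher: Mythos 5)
Your overall scheme is the right one and matches the paper's: build a dual feasible $(\bm{\Lambda},\nu,\bm{\mu})$ for LD \eqref{eq_dual} from $\hat S$, use the local optimality condition of \Cref{lem:rankupd2} (whose cross term $\bm{v}_j^{\top}\bm{X}^{\dag}\bm{v}_i\bm{v}_i^{\top}\bm{X}^{\dag}\bm{v}_j$ indeed vanishes under the orthogonality hypothesis), and optimize the scaling $t$ over the two candidate $(\nu,\bm{\mu})$ regimes. However, there is a genuine gap at the single most important choice: you propose to reuse the construction $\bm{\Lambda}=\tfrac{1}{t}[\tr(\bm{X}^{\dag})(\bm{I}_d-\bm{X}^{\dag}\bm{X})+\bm{X}^{\dag}]$ from \eqref{feas_lambda}, and that construction cannot deliver the claimed bound. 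With it, for $j\notin\hat S$ one has $\bm{v}_j^{\top}\bm{\Lambda}\bm{v}_j=\tfrac{1}{t}\tr(\bm{X}^{\dag})\|\bm{v}_j\|_2^2$, and the summed local optimality condition bounds exactly the product $\tr(\bm{X}^{\dag})\|\bm{v}_j\|_2^2$ by $s$ --- so you recover $\nu\le s/t$ and hence only the $s\log(s)$ bound of \Cref{them1}. Your alternative route, $\tr(\bm{X}^{\dag})\le s/\delta$ together with $\|\bm{v}_j\|_2^2\le\lambda_{\max}(\bm{C})$, gives $s\lambda_{\max}(\bm{C})/(\delta t)$, which is off by a factor of $s$ from what the proposition needs. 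Indeed \Cref{prop:degenerate} is precisely the statement that \eqref{feas_lambda} is stuck at this level, and the remark preceding \Cref{prop_ls} signals that a \emph{different} dual construction is required.

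The fix, which is what the paper does, is to shrink the coefficient on the orthogonal-complement projector from $\tr(\bm{X}^{\dag})$ to $\lambda_s^{-1}=\lambda_{\max}(\bm{X}^{\dag})$, i.e.\ take $\bm{\Lambda}=(\lambda_s t)^{-1}(\bm{I}_d-\bm{X}^{\dag}\bm{X})+t^{-1}\bm{X}^{\dag}$, where $\lambda_s$ is the smallest nonzero eigenvalue of $\bm{X}$. This keeps $\det_s(\bm{\Lambda})=t^{-s}/\overset{s}{\det}(\bm{X})$ unchanged, because $1/(t\lambda_s)$ still dominates every $1/(t\lambda_i)$, $i\in[s]$, so the $s$ smallest eigenvalues of $\bm{\Lambda}$ are unaffected; and the chosen-index values $\bm{v}_i^{\top}\bm{\Lambda}\bm{v}_i=1/t$ are also unaffected by Parts (v)--(vi) of \Cref{lem:rankupd}. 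But now $\bm{v}_j^{\top}\bm{\Lambda}\bm{v}_j=\|\bm{v}_j\|_2^2/(\lambda_s t)$, and the local optimality bound $\|\bm{v}_j\|_2^2\le s/\tr(\bm{X}^{\dag})\le\lambda_1\le\lambda_{\max}(\bm{C})$ combined with $\lambda_s\ge\delta$ yields $\bm{v}_j^{\top}\bm{\Lambda}\bm{v}_j\le\lambda_{\max}(\bm{C})/(\delta t)$ --- the factor of $s$ is gone, and the two $(\nu,\bm{\mu})$ candidates then produce exactly the two branches of the stated $\min$ after optimizing $t$. The rest of your outline (weak duality, bookkeeping the two regimes) goes through once this substitution is made.
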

	
	\begin{proof}	
		See Appendix \ref{proof_prop_ls}. \qed
	\end{proof}

	Compared with the bound $O(s\log s)$ in Theorem \ref{them1}, the approximation bound in \Cref{prop_ls} is $O(s)$, which matches the order of the bound derived for the sampling \Cref{alg_rand_round}.
	
	\subsection{Efficient Implementation of the Local Search Algorithm}
	In this subsection, we discuss how to efficiently implement the local search Algorithm \ref{algo} using the results in \Cref{lem:rankupd} and develop its corresponding time complexity. 
	
	Similar to many improving heuristics, the performance of the local search Algorithm \ref{algo} highly depends on the choice of the initial subset. In practice, we employ the greedy approach to find an initial solution. The greedy approach begins with an empty set $\hat{S} = \emptyset$, then at each iteration, we select one element from the unchosen set $[n]\setminus {\hat{S}}$ that maximizes the marginal increment of the objective value until $|\hat{S}|=s$. That is, at current iteration $\ell\in [s]$, suppose that $\bm{X}=\sum_{i \in \hat{S}}\bm{v}_i \bm{v}_i^{\top}$ and $|\hat{S}|=\ell<s$. Then by Part (ii) in Lemma \ref{lem:rankupd}, the next element that will be chosen is computed by
	\[j^*\in  \arg\max_{j\in [n]\setminus \hat{S}} \left(\log \det^{\ell+1}(\bm{X}+ \bm{v}_j \bm{v}_j^{\top})-\log\det^{\ell}(\bm{X})\right) =\arg\max_{j\in [n]\setminus \hat{S}}  \bm{v}_j^{\top} (\bm{I}_d-\bm{X}\bm{X}^{\dag}) \bm{v}_j.\]
	The detailed implementation of the greedy approach can be found in \Cref{algo:effls} at Steps 4-10. Using the equation above and Part (iii) in Lemma \ref{lem:rankupd}, the greedy approach has a running time complexity of $O(s(n-s)d^2)$. Furthermore, we show that the rank-one update techniques for the singular matrices in Lemma \ref{lem:rankupd} can also improve the implementation of the local search \Cref{algo}.
	
	\begin{algorithm}[htbp]
		\caption{Efficient Implementation of Local Search Algorithm \ref{algo} Initialized by Greedy Solution} \label{algo:effls}
		\begin{algorithmic}[1]
			\State \textbf{Input:} $n\times n$ matrix $\bm{C} \succeq 0$ of rank $d$ and integer $s \in [d]$
			\State Let $\bm{C}=\bm{V}^{\top}\bm{V}$ denote its Cholesky factorization where $\bm{V} \in \Re^{d\times n}$
			\State Let $\bm{v}_i \in \Re^d$ denote the $i$-th column vector of matrix $\bm{V}$ for each $i \in [n]$
			\vspace{.5em}
			
			\noindent\textbf{(a) Greedy Selection}
			
			\State Let set $\hat{S} := \emptyset$ denote the chosen set, $\bm{X} := \emptyset $ and $\bm{X}^{\dag} := \emptyset$
			\For{$\ell = 1, \cdots, s$}
			\State Let $j^*\in \arg\max_{j\in [n]\setminus \hat{S}} \bm{v}_j^{\top} (\bm{I}_d-\bm{X}\bm{X}^{\dag}) \bm{v}_j$
			\State Add $j^*$ to the set $\hat{S}$
			\State Update $\bm{X}^{\dag} :=\bm{X}^{\dag} - 
			\frac{\bm{X}^{\dag} \bm{v}_{j^*}  \bm{v}_{j^*} ^{\top}(\bm{I}_d-\bm{X}^{\dag}\bm{X}) }{\|(\bm{I}_d-\bm{X}^{\dag}\bm{X}) \bm{v}_{j^*}  \|_{2}^{2}} -
			\frac{(\bm{I}_d-\bm{X}^{\dag}\bm{X}) \bm{v}_{j^*}  \bm{v}_{j^*}^{\top} \bm{X}^{\dag} }{\|(\bm{I}_d-\bm{X}^{\dag}\bm{X}) \bm{v}_{j^*}  \|_{2}^{2}}  + \frac{(1+\bm{v}_{j^*}^{\top} \bm{X}^{\dag} \bm{v}_i)(\bm{I}_d-\bm{X}^{\dag}\bm{X}) \bm{v}_{j^*}  \bm{v}_{j^*}^{\top} (\bm{I}_d-\bm{X}^{\dag}\bm{X}) }{\|(\bm{I}_d-\bm{X}^{\dag}\bm{X}) \bm{v}_{j^*}  \|_{2}^{4}}$
			\State Update $\bm{X}:= \bm{X}+\bm{v}_{j^*} \bm{v}_{j^*}^{\top}$
			
			\EndFor
			
			\vspace{.5em}
			
			\noindent\textbf{(b) Swapping Procedure}
			\State Let $\theta$ denote a positive constant
			\Do
			\For{each {$i \in \hat{S}$}}
			\State Compute $\bm{X}_{-i} = \bm{X}-\bm{v}_i \bm{v}_i^{\top}$, $ \bm{X}_{-i}^{\dag} =\bm{X}^{\dag} - \frac{\bm{X}^{\dag}\bm{v}_i \bm{v}_i^{\top} \bm{X}^{\dag}\bm{X}^{\dag}}{\|\bm{X}^{\dag} \bm{v}_i\|_2^2}- \frac{\bm{X}^{\dag}\bm{X}^{\dag}\bm{v}_i \bm{v}_i^{\top} \bm{X}^{\dag}}{\|\bm{X}^{\dag} \bm{v}_i\|_2^2} +
			\frac{\bm{v}_i^{\top} (\bm{X}^{\dag})^3 \bm{v}_i\bm{X}^{\dag}\bm{v}_i \bm{v}_i^{\top} \bm{X}^{\dag}}{\|\bm{X}^{\dag} \bm{v}_i\|_2^4}$
			\State Let $j^*\in \arg\max_{j\in [n]\setminus \hat{S}} \bm{v}_j^{\top} (\bm{I}_d-\bm{X}_{-i}\bm{X}_{-i}^{\dag}) \bm{v}_j$
			\If{$\bm{v}_{j^*}^{\top} (\bm{I}_d-\bm{X}_{-i}\bm{X}_{-i}^{\dag}) \bm{v}_{j^*} > (1+\theta) \bm{v}_i^{\top} (\bm{I}_d-\bm{X}_{-i}\bm{X}_{-i}^{\dag}) \bm{v}_i$}
			\State Update $\hat{S} := \hat{S} \cup \{j\} \setminus \{i\}$, $\bm{X}:=\bm{X}_{-i}+\bm{v}_{j^*}\bm{v}_{j^*}^{\top}$ and
			$\bm{X}^{\dag} :=\bm{X}_{-i}^{\dag} - 
			\frac{\bm{X}_{-i}^{\dag} \bm{v}_{j^*} \bm{v}_{j^*}^{\top}(\bm{I}_d-\bm{X}_{-i}^{\dag}\bm{X}_{-i}) }{\|(\bm{I}_d-\bm{X}_{-i}^{\dag}\bm{X}_{-i}) \bm{v}_{j^*}\|_{2}^{2}} -
			\frac{(\bm{I}_d-\bm{X}_{-i}^{\dag}\bm{X}_{-i}) \bm{v}_{j^*}\bm{v}_{j^*}^{\top} \bm{X}_{-i}^{\dag} }{\|(\bm{I}_d-\bm{X}_{-i}^{\dag}\bm{X}_{-i}) \bm{v}_{j^*} \|_{2}^{2}}  + \frac{(1+\bm{v}_{j^*}^{\top} \bm{X}_{-i}^{\dag} \bm{v}_{j^*})(\bm{I}_d-\bm{X}_{-i}^{\dag}\bm{X}_{-i}) \bm{v}_{j^*} \bm{v}_{j^*}^{\top} (\bm{I}_d-\bm{X}_{-i}^{\dag}\bm{X}_{-i}) }{\|(\bm{I}_d-\bm{X}_{-i}^{\dag}\bm{X}_{-i}) \bm{v}_{j^*}\|_{2}^{4}};$
			\EndIf
			\EndFor
			\doWhile{there is still an update}
			\State \textbf{Output:} $\hat S$%
			
		\end{algorithmic}
	\end{algorithm}

	One key component of the local search Algorithm \ref{algo} is the swapping procedure (i.e., Steps 6-9), which might cause the running time to be exponential in the size of the input. To avoid this, we can restrict the number of swapping iterations by simply introducing a small positive constant $\theta>0$ and replacing the condition at Step 8 of Algorithm \ref{algo} by 
	$$\overset{s}{\det} \bigg(\sum_{\ell \in \hat{S} \cup \{j\} \setminus \{i\}} \bm{v}_\ell \bm{v}_\ell^{\top}\bigg) > (1+\theta)\overset{s}{\det} \bigg(\sum_{\ell \in \hat{S}} \bm{v}_\ell \bm{v}_\ell^{\top}\bigg).$$
Then, following from the similar arguments in \cite{madan2019combinatorial}, the number of swapping iterations is at most $O(Ld^3 \theta^{-1}\log(s))$, where $L$ is the encoding length of the matrix $\bm{V}$. Note that by doing so, the approximation bound in \Cref{them1}  becomes $s \min\{ \log(s(1+\theta)), \log ((n-s)(1+\theta)-n/s+2)\}$.
	
	On the other hand, we can use Parts (ii) and (iv) in Lemma \ref{lem:rankupd} to complete the swapping and use Part (iii) in Lemma \ref{lem:rankupd} to update matrix $\bm{X}^{\dag}$. Hence, it takes $O(s(n-s)d^2)$ for each swapping. Thus, the local search Algorithm \ref{algo:effls} has a polynomial-time complexity of $O(Ld^3\theta^{-1} \log(s)s(n-s)d^2)$. These results are summarized below.
	\begin{corollary}\label{cor:localsearch}
		The running time complexity of the local search Algorithm \ref{algo:effls} is $O(Ld^3\theta^{-1} \log(s)s(n-s)d^2)$, where $L$ denotes the encoding length of the matrix $\bm{V}$. In addition, the local search Algorithm \ref{algo:effls} yields a $s\min\{\log(s(1+\theta)), \log ((n-s)(1+\theta)-n/s+2)\} $-approximation bound for MESP.
	\end{corollary}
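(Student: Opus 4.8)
The plan is to verify the two claims of the corollary separately, drawing on the rank-one update formulas in \Cref{lem:rankupd} for the running time and on the analysis of \Cref{them1} for the approximation bound. Since the preceding discussion has already isolated the per-phase costs, the task is mainly to assemble them and to track how the $\theta$-relaxed acceptance rule perturbs the guarantee of \Cref{them1}.

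For the running time, I would treat the greedy selection and swapping phases in turn. In the greedy phase (Steps 4--10), each of the $s$ outer iterations evaluates the marginal gain $\bm{v}_j^{\top}(\bm{I}_d-\bm{X}\bm{X}^{\dag})\bm{v}_j$ over the $n-s$ remaining candidates at cost $O(d^2)$ apiece, and then performs one rank-one pseudoinverse update via Part (iii) of \Cref{lem:rankupd}, also at cost $O(d^2)$; this yields $O(s(n-s)d^2)$ for the whole greedy phase. For the swapping phase, I would first bound the number of swaps: since each accepted swap multiplies $\det^s(\bm{X})$ by a factor of at least $(1+\theta)$, recast through Part (ii) of \Cref{lem:rankupd}, and the objective $\log\det^s$ ranges over an interval whose width is controlled by the encoding length $L$ of $\bm{V}$, a standard potential argument as in \cite{madan2019combinatorial} gives at most $O(Ld^3\theta^{-1}\log(s))$ swaps. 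Each swap costs $O(s(n-s)d^2)$: by Parts (ii) and (iv) of \Cref{lem:rankupd} we evaluate the swap gains for every pair, and by Part (iii) we update $\bm{X}^{\dag}$. Multiplying the swap count by the per-swap cost, and noting it dominates the greedy phase, gives the claimed bound $O(Ld^3\theta^{-1}\log(s)s(n-s)d^2)$.

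For the approximation bound, I would re-run the proof of \Cref{them1} with the $\theta$-relaxed stopping criterion. The only substantive change is that the local optimality condition of \Cref{lem:rankupd2} is replaced by its $(1+\theta)$-weakened form, namely
\begin{align*}
1+\theta \ge \left(\bm{v}_i^{\top}\bm{X}^{\dag}\bm{X}^{\dag}\bm{v}_i\right)\bm{v}_j^{\top}(\bm{I}_d-\bm{X}^{\dag}\bm{X})\bm{v}_j + \bm{v}_j^{\top}\bm{X}^{\dag}\bm{v}_i\bm{v}_i^{\top}\bm{X}^{\dag}\bm{v}_j,
\end{align*}
which holds for every pair $(i,j)\in \hat{S}\times([n]\setminus\hat{S})$ at termination. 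Carrying this extra factor through the construction of the dual-feasible $\bm{\Lambda}$ in \eqref{feas_lambda} and the subsequent estimates of \Cref{them1}, the two terms inside the minimum pick up the $(1+\theta)$ multiplier on $s$ and on $n-s$ respectively, yielding $s\min\{\log(s(1+\theta)),\log((n-s)(1+\theta)-n/s+2)\}$.

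The main obstacle I anticipate is making the iteration bound for the swapping phase fully rigorous. One must show the multiplicative $(1+\theta)$ increase of $\det^s$ per swap is preserved under the singular rank-one updates, where $\bm{v}_j$ may or may not lie in $\col(\bm{X}_{-i})$, cf.\ the case split in Parts (ii) and (viii) of \Cref{lem:rankupd}, and one must convert the bit-length $L$ of $\bm{V}$ into explicit upper and lower bounds on $\log\det^s$ over the feasible sets visited, so that the number of $(1+\theta)$-improving steps is genuinely $O(Ld^3\theta^{-1}\log(s))$.
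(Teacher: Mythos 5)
Your proposal is correct and follows essentially the same route as the paper, which does not give a separate proof of this corollary but derives it in the discussion immediately preceding it: the swap count $O(Ld^3\theta^{-1}\log(s))$ is deferred to the potential argument of \cite{madan2019combinatorial}, the per-swap cost $O(s(n-s)d^2)$ comes from Parts (ii)--(iv) of \Cref{lem:rankupd}, and the approximation bound is obtained by rerunning \Cref{them1} with the $(1+\theta)$-weakened local optimality condition, exactly as you describe. Your $(1+\theta)$-relaxed version of \Cref{lem:rankupd2} propagates correctly through the two candidate dual solutions (the optimal scalings become $t=s(1+\theta)$ and $t=(n-s)(1+\theta)-n/s+2$), and the gap you flag about making the swap-count bound rigorous is one the paper also leaves to the cited reference.
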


\section{Numerical Illustrations} \label{sec:comp}
In this section, we present numerical experiments on two medium-sized instances in \cite{hoffman2001new} and \cite{anstreicher2020efficient}, which were provided by Prof. Anstreicher, and one large-scale instance in \cite{dey2022using} to demonstrate the solution quality and computational efficiency of our proposed Frank-Wolfe Algorithm \ref{algo:fw}, sampling Algorithm \ref{alg_rand_round}, and local search Algorithm \ref{algo} for solving MESP. All the algorithms are coded in Python 3.6 with calls to Gurobi 7.5 on a PC with 2.3 GHz Intel Core i5 processor and 8G of memory. The codes for these three algorithms are available at \url{https://github.com/yongchunli-13/Approximation-Algorithms-for-MESP}.

\subsection{Numerical Experiments on Two Medium-sized Instances}
In this subsection, we test the proposed algorithms on two commonly-used benchmark instances of MESP in literature and present their computational performance. In particular, the first instance has a covariance matrix of size $90\times90$ built on a temperature monitoring problem introduced in \cite{anstreicher2020efficient}, denoted by $n=90$ instance, and the second one is based on a covariance matrix of size $124\times 124$ introduced by \cite{hoffman2001new}, denoted by $n=124$ instance. Please note that these two covariance matrices are non-singular, i.e., $n=d$. For the $n=90$ instance, we test $8$ cases with $s\in \{10,20,\ldots,80\}$, while for the $n=124$ instance, we test $9$ cases with $s\in \{20, 30,\ldots,100\}$. The computational results are displayed in \Cref{data90} and \Cref{data124}, where we let \textbf{B\&B}, \textbf{Frank-Wolfe}, \textbf{Sampling}, \textbf{Local Search}, and \textbf{Samp+LS} denote the Branch and Bound algorithm used in \cite{anstreicher2020efficient},  the Frank-Wolfe Algorithm \ref{algo:fw}, the sampling Algorithm \ref{alg_rand_round}, the local search Algorithm \ref{algo}, and the combination of sampling Algorithm \ref{alg_rand_round} and local search Algorithm \ref{algo}, respectively. We also use \textbf{S-FW}  to denote the size of the support of the continuous relaxation solution from the Frank-Wolfe Algorithm \ref{algo:fw}, 
use \textbf{time} to denote the total time in seconds of an algorithm spent on a case, and use \textbf{gap} to denote the  absolute   optimality gaps of algorithms, computed as the absolute difference between the output value of an algorithm and the optimal value or the best upper bound of MESP, where only if the optimal value is not available, we use the upper bound to calculate the gap instead.
Note that due to the randomness, we repeat the sampling Algorithm \ref{alg_rand_round} one thousand times for each case and choose the best output, and its running time includes the time spent on the repetitions as well as that on running the Frank-Wolfe Algorithm \ref{algo:fw}.
The column ``Samp + LS" in \Cref{data90} and \Cref{data124}, denotes the integrated sampling \Cref{alg_rand_round} and local search \Cref{algo}. 
	Particularly, 
in  the integrated algorithm, we  consider one hundred random solutions of sampling \Cref{alg_rand_round} as the initial solutions of local search \Cref{algo} and then output the best solution for each testing case.

\Cref{data90} and \Cref{data124} present the numerical results. 
From \Cref{data90} and \Cref{data124}, we can see that it can take more than two days to solve some cases to optimality using the B\&B algorithm, indicating that the optimal value of MESP is in general difficult to obtain. Note that in the $n=124$ instance, the optimal value $z^*$ decreases when $s$ increases from 80 to 100, which demonstrates that the objective of MESP may not be monotonic with $s$. 
For both instances, the local search \Cref{algo} works quite well, where its absolute optimality gap is always within $0.096$, and its running time is less than a second. The sampling Algorithm \ref{alg_rand_round} is often worse than the local search \Cref{algo} in terms of optimality gap and computational time. 
The proposed integrated algorithm is able to find
an optimal solution for each testing case, however, takes a longer time.
It is  seen that the Frank-Wolfe \Cref{algo:fw} is quite effective, and its output can be indeed very sparse, especially when $s$ is small. 

Next, we compare two solution algorithms with the heuristic used in \cite{anstreicher2020efficient} and the results are illustrated in Figures \ref{fig_comp_appa} and \ref{fig_comp_appb}. Clearly, the proposed local search \Cref{algo} performs the best among these methods. Finally, Figure~\ref{fig_comp_upper} compares our Lagrangian dual bound $z^{LD}$ with the best linx bound found in \cite{anstreicher2020efficient}, where the latter has been shown to be superior to the other existing upper bounds of MESP on these two instances. In general, these two bounds are not comparable. We see that our dual bound outperforms the linx bound in some cases, especially when $s$ is small.

\begin{table}[h] 
	\centering
	\caption{Computational results of MESP on the $n=90$ instance} 
	\begin{threeparttable}
		\setlength{\tabcolsep}{3.5pt}\renewcommand{\arraystretch}{1.3}
		\begin{tabular}{c|r r|r r r |r r | r r | r r}
			\hline  
			\multicolumn{1}{c|}{$n$=90}   & \multicolumn{2}{c|}{B\&B\tnote{1}} & \multicolumn{3}{c|}{Frank-Wolfe } & \multicolumn{2}{c|}{Sampling} & \multicolumn{2}{c|}{Local Search} & \multicolumn{2}{c}{Samp + LS}
			\\ \cline{1-12}  
			$s$	& \multicolumn{1}{c}{$z^*$}  & \multicolumn{1}{c|}{time\tnote{2} }    &\multicolumn{1}{c}{gap} & \multicolumn{1}{c}{S-FW}&\multicolumn{1}{c|}{time} &\multicolumn{1}{c}{gap}  & \multicolumn{1}{c|}{time } &\multicolumn{1}{c}{gap} &   \multicolumn{1}{c|}{time} &\multicolumn{1}{c}{gap} &   \multicolumn{1}{c}{time} \\  
			\hline
			10 & 58.532   & 2088 & {0.382}&  23 & \textless1& {0.011} & 18  &{0.000} & {\textless 1\tnote{3 } } & 0.000 & 4   \\
			20 & 111.482  & 95976 &{0.645} &  42 &  \textless1 &{0.275}  & 20   &{0.000} & \textless 1 &{0.000}   & 9 \\
			30 & 161.539  & 167796 & {0.853}  &60   &\textless 1  &{0.655}  & 20   &{0.000} & \textless 1   &{0.000}   & 19 \\
			40 & 209.969  &187344 & {0.961} &80  & \textless1  &{1.212} & 19   &{0.011} &\textless 1  &{0.000} & 44 \\
			50 & 257.160  & 87912 &{0.955}&84  & \textless 1   &{1.424} & 19   &{0.006} & \textless 1  &{0.000}  & 68  \\
			60 & 303.019  & 12420 &{0.893}&87  & \textless 1 &{1.545} & 19  &{0.011} & \textless 1 &{0.000}   &88     \\
			70 & 347.471  & 1044 &{0.721}&89  & \textless 1   &{1.610}  & 19 &{0.018}& \textless 1   &{0.000}  &86   \\
			80 & 389.997  & 36 & {0.385} &89 & \textless 1&{0.995} & 19  &{0.000} & \textless 1   &{0.000}   & 92   \\
			\hline
		\end{tabular}%
		\begin{tablenotes}
			\item[1] The optimal value and running time of B\&B algorithm are from \cite{anstreicher2020efficient}
			\item[2] Time is in seconds
			\item[3] The running time is less than a second
		\end{tablenotes}    
	\end{threeparttable}
	\label{data90}
\end{table}

\begin{table}[h] 
	\centering
	\caption{Computational results of MESP on the $n=124$ instance}
	\begin{threeparttable}
		\setlength{\tabcolsep}{3pt}\renewcommand{\arraystretch}{1.3}
		\begin{tabular}{c|r r|r r r |r r | r r | r r}
	\hline  
	\multicolumn{1}{c|}{$n$=124}   & \multicolumn{2}{c|}{B\&B\tnote{1}} & \multicolumn{3}{c|}{Frank-Wolfe } & \multicolumn{2}{c|}{Sampling} & \multicolumn{2}{c|}{Local Search} & \multicolumn{2}{c}{Samp + LS}
	\\ \cline{1-12}  
	$s$	& \multicolumn{1}{c}{$z^*$}  & \multicolumn{1}{c|}{time\tnote{2} }    &\multicolumn{1}{c}{gap} & \multicolumn{1}{c}{S-FW}&\multicolumn{1}{c|}{time} &\multicolumn{1}{c}{gap}  & \multicolumn{1}{c|}{time } &\multicolumn{1}{c}{gap} &   \multicolumn{1}{c|}{time} &\multicolumn{1}{c}{gap} &   \multicolumn{1}{c}{time} \\  
			\hline
			20 & 77.827   & 756   & {0.510}& 40& 1 &{0.101} & 35  &{0.001} & {\textless 1\tnote{3 }  }  &{0.000} &   11  \\
			30 & 106.700   & 1692 & {1.285} &60 & 2  & {0.857}  & 37  &{0.000} & \textless1    &{0.000} &  15    \\
			40 & 131.055   & 8712& {2.246}& 80 & 3   &{2.067}  & 39  &{0.000} & \textless1   &{0.000} &  26    \\
			50 & 149.498   & 186516 & {3.857} & 98& 5  &{3.667} & 44   &{0.000} & \textless 1    &{0.000} &   37  \\
			60 & 164.012   & 241236 &{4.910}& 106& 6 &{6.057} & 41  &{0.096} & \textless1      &{0.000} &  57    \\
			70 & 172.528   & 136548 &{5.493}& 115& 5 &{6.712} & 41   &{0.000} & \textless 1    &{0.000} &     52   \\
			80 & 175.091   & 45756 &{5.529}& 122& 4 &{7.193}  & 40  &{0.000}& \textless 1     &{0.000} &   69    \\
			90 & 171.262   & 17352 & {5.790} & 124& 3  &{10.837} & 43 &{0.000} & \textless 1    &{0.000} &  77       \\
			100 & 162.865   & 4140 &{4.891} & 124& 3  &{7.273} & 39  &{0.000} & \textless1     &{0.000} &    74     \\
			\hline
		\end{tabular}%
		\begin{tablenotes}
			\item[1] The optimal value and running time of B\&B algorithm are from \cite{anstreicher2020efficient}
			\item[2] Time is in seconds
			\item[3] The running time is less than a second
		\end{tablenotes} 
	\end{threeparttable}
	\label{data124}
\end{table}

\begin{figure}[h]
	\centering
	\subfigure[{$n$=90}] {\label{fig_comp_appa}
		\includegraphics[width=7.2cm,height=5.5cm]{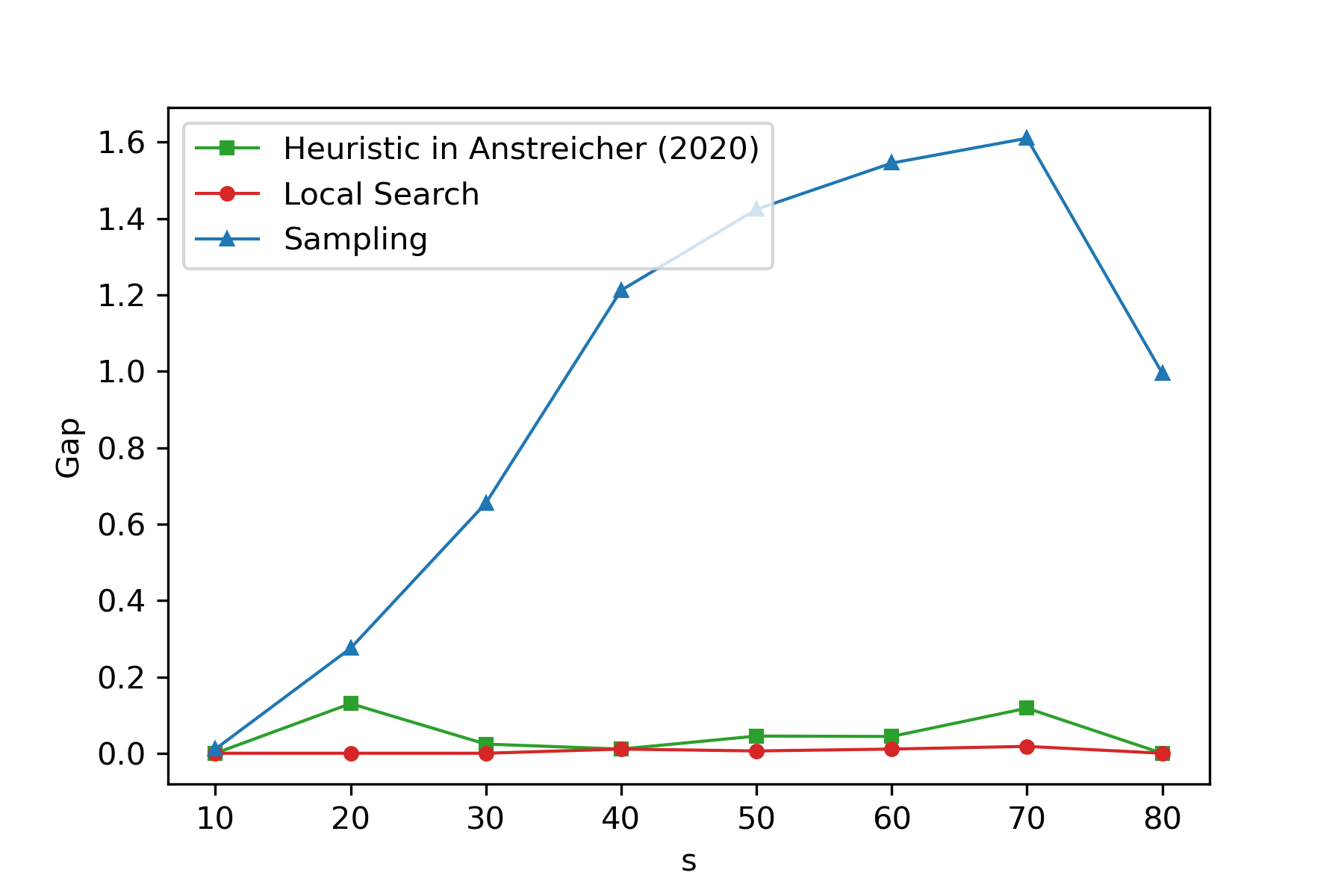}
	}
	\hspace{1.5em}
	\subfigure[{$n$=124}] {\label{fig_comp_appb}
		\centering
		\includegraphics[width=7.2cm,height=5.5cm]{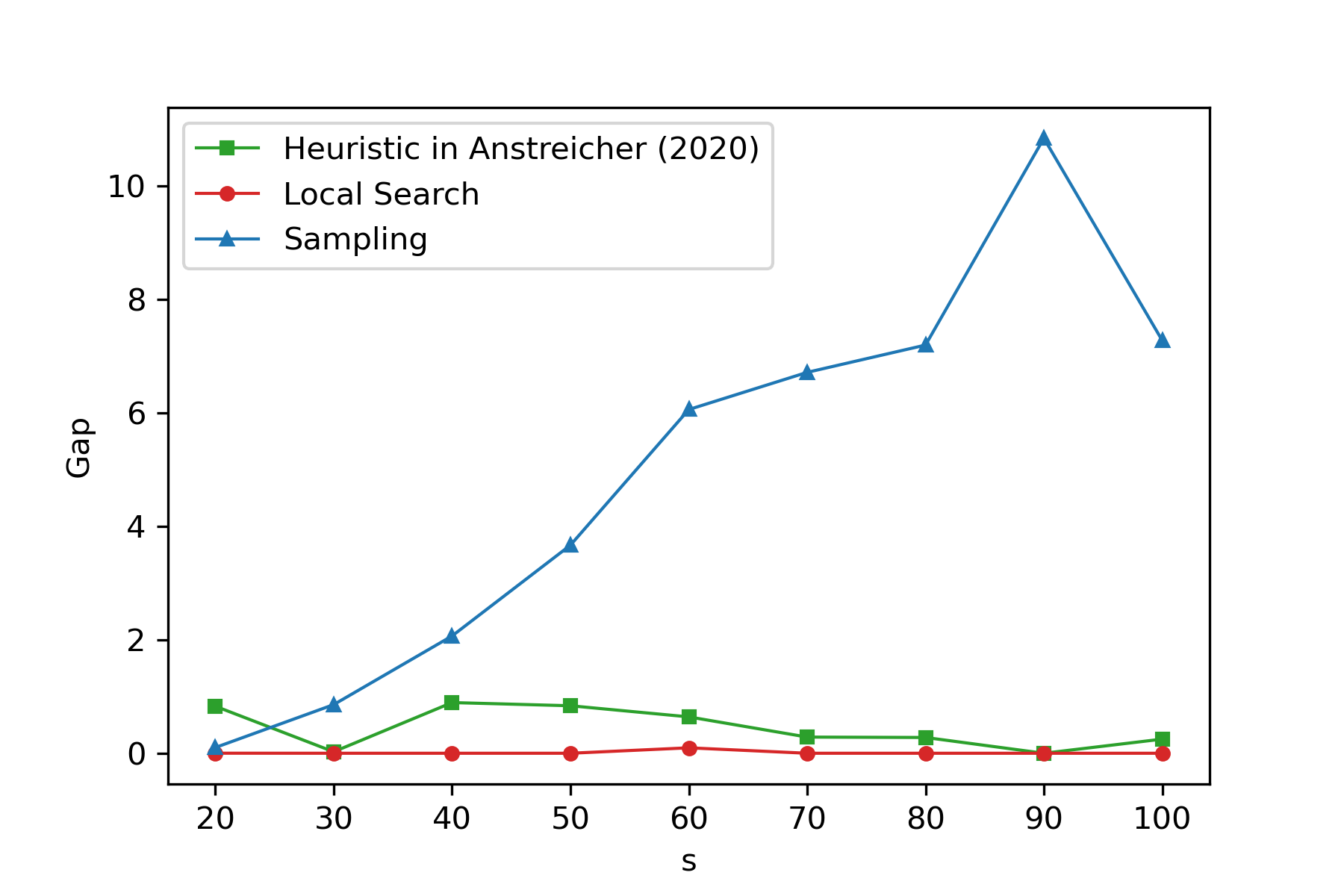}
	}
	
	\caption{Absolute optimality gap comparison of the sampling Algorithm \ref{alg_rand_round}, the local search \Cref{algo}, and the best heuristic in \cite{anstreicher2020efficient}.}
	\label{fig_comp_app}
\end{figure}

\begin{figure}[hbtp]
	\centering
	
	\subfigure[{$n$=90}] {\label{fig_comp_uppera}
		\includegraphics[width=7.2cm,height=5.3cm]{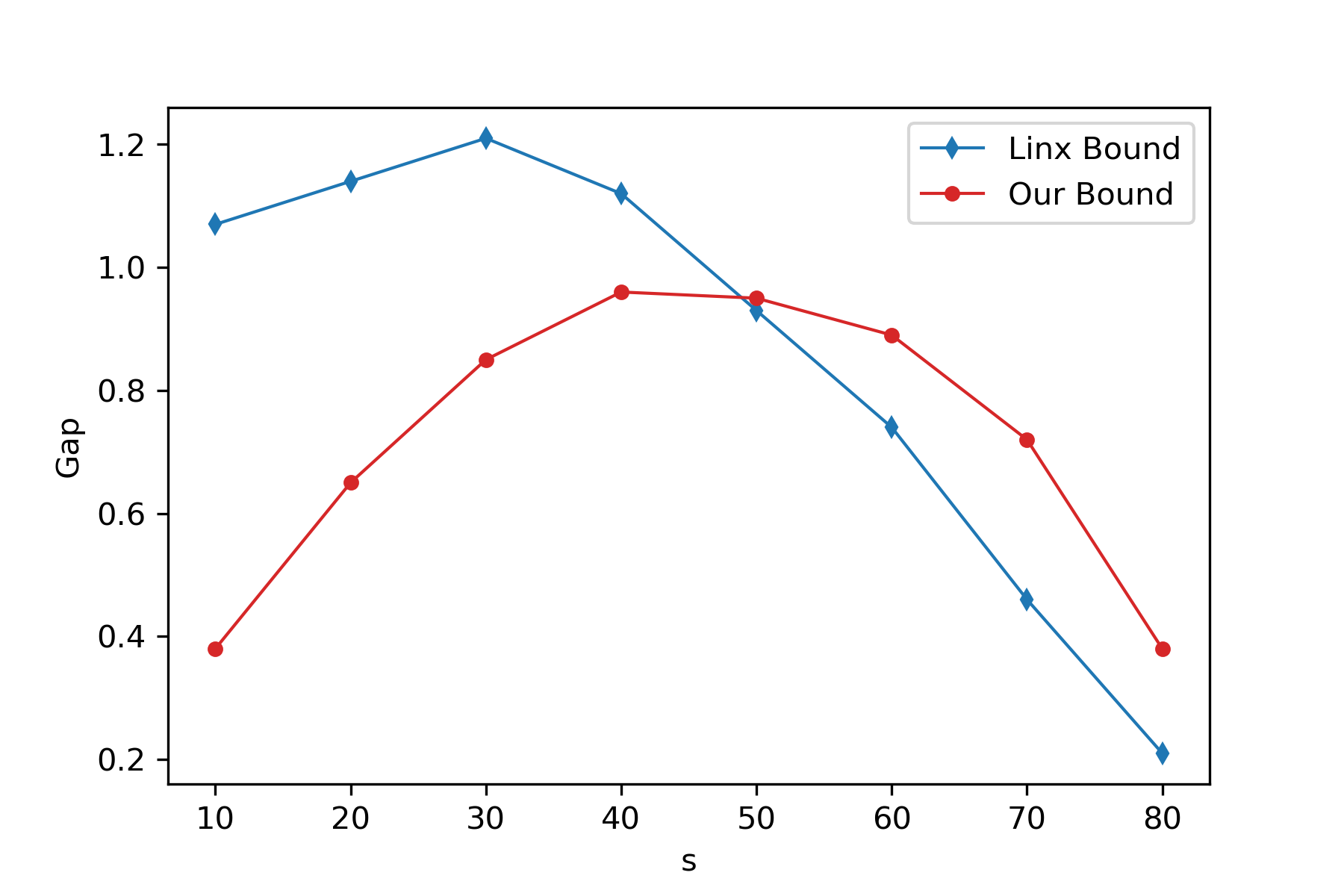}
	}
	\hspace{1.5em}
	\subfigure[{$n$=124}] {\label{fig_comp_upperb}
		\centering
		\includegraphics[width=7.2cm,height=5.3cm]{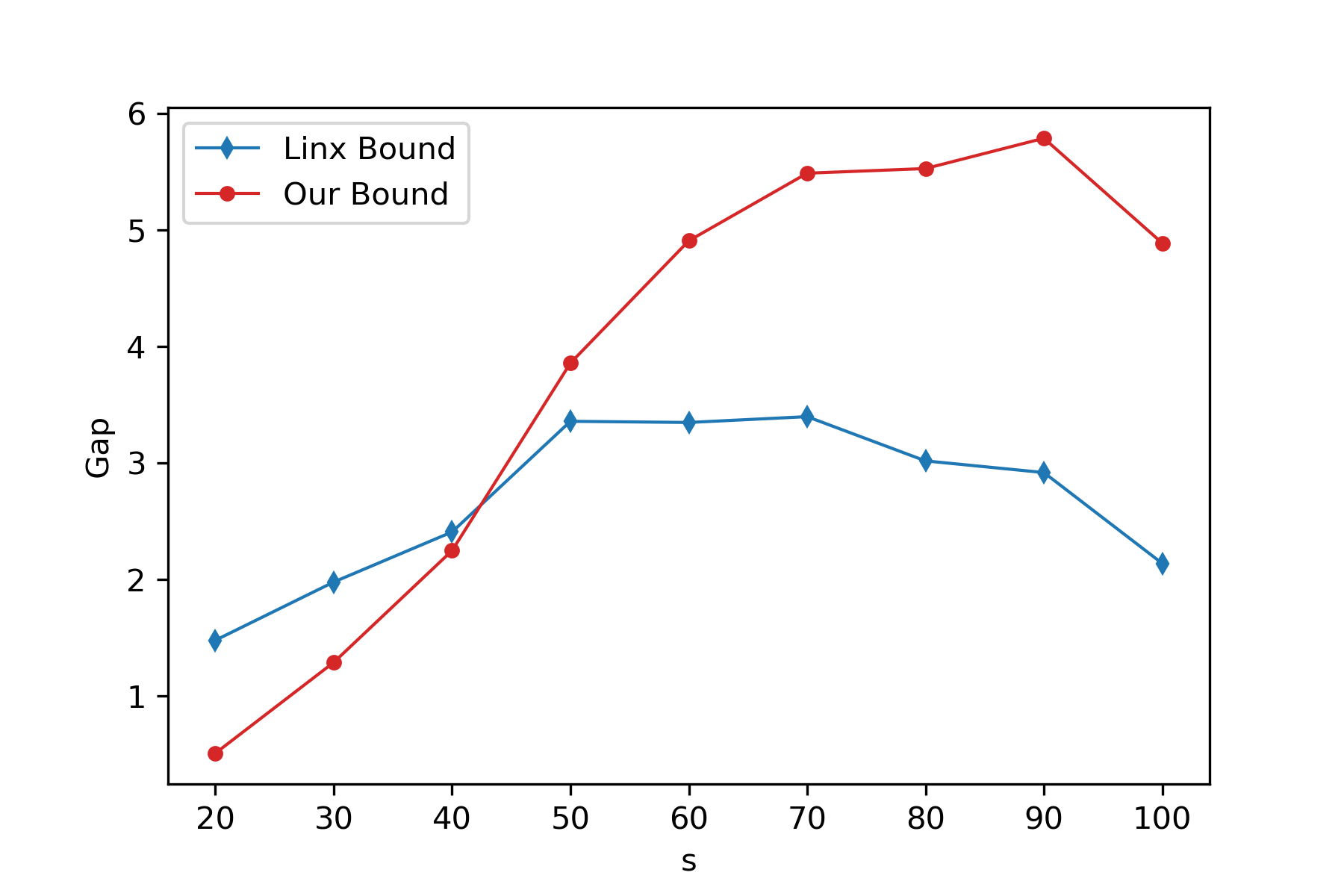}
	}
	\caption{Absolute optimality gap comparison of $z^{LD}$ and the linx bound in \cite{anstreicher2020efficient}.}\label{fig_comp_upper}
\end{figure}

\subsection{Numerical Experiments on a Large-scale Instance}
In this subsection, we test the proposed algorithms on a large-scale instance with a $2000\times2000$ covariance matrix $\bm{C}$ based upon Reddit data from \cite{dey2022using}. Note that for this instance, the matrix $\bm{C}$ is singular, and its rank is equal to 949, i.e., $d=949<n=2000$. The computational results are displayed in \Cref{data124}, where we use \textbf{B\&C} to denote the branch and cut algorithm, use \textbf{UB} to denote the best upper bound output from B\&C algorithm, and use UB to compute the absolute optimality gaps for the sampling Algorithm \ref{alg_rand_round}, the local search \Cref{algo}, and their combination. The lower bound of the B\&C algorithm is always inferior to the one found by the local search \Cref{algo} and is thus not reported.

We make the following remarks of the implementation of B\&C: (i) we use the warm start, i.e., we solve the continuous relaxation of MESP \eqref{eq_p} using the cutting-plane method (i.e., at each iteration, we add a supgradient inequality) and add all the cuts into the root node, (ii) if we encounter a solution $\hat{\bm x}$ with support $\hat{S}$ such that its corresponding columns $\{\bm{v}_i\}_{i\in\hat{S}}$ are not linearly independent, then the supgradient according to \Cref{supdiff} is not well-defined, and thus we add no-good cut to cut it off, which is in the form of $1\le \sum_{i \in \hat{S}} (1-{x}_i) + \sum_{i \in [n]\backslash\hat{S}} x_i,$
and (iii) we set the time limit to be $3,600$ seconds.

\exclude{The convex integer program of MESP \ref{eq_p}  is equivalent to 
	\begin{align}
	\text{(MESP)} \quad  z^* := \max_{w, \bm{x}} \Bigg\{w: w\le  \Gamma_s\bigg(\sum_{i \in [n]}\hat{x}_i \bm{v}_i\bm{v}_i^{\top} \bigg), \sum_{i \in [n]}x_i =s, \bm{x}=\{0,1\}^n \Bigg\}, \label{eq_bc}
	\end{align}
	where it is clear that $z^*=w^*$ at the optimality.
	
	The implementation of B\&C requires a valid inequality, where the validity means that all feasible solutions must satisfy the inequality. According to the concavity of function $\Gamma_s(\cdot)$, an effective gradient cut for MESP \ref{eq_bc} is introduced. Let $\mathbb{D}=\{ \bm{x}: \sum_{i \in [n]}x_i =s, \bm{x}=\{0,1\}^n \}$. For any feasible $\bm{\hat x} \in \mathbb{D}$, we have
	\begin{align}
	w\le \Gamma_s\bigg(\sum_{i \in [n]}{x}_i \bm{v}_i\bm{v}_i^{\top} \bigg) \le  \Gamma_s\bigg(\sum_{i \in [n]}\hat{x}_i \bm{v}_i\bm{v}_i^{\top} \bigg) +\bm{g}^{\top}(\bm{x}-\bm{\hat x}), \forall \bm{x}\in \mathbb{D} , \label{eq_cut}
	\end{align}
	where $\bm{g}^{\top} = (\bm{v}_1^{\top}\bm{\Lambda} \bm{v}_1,\cdots,\bm{v}_n^{\top}\bm{\Lambda} \bm{v}_n )$ and for a given $\bm{\hat x}$, the construction of $\bm{\Lambda}$ can be found in Frank-Wolfe \Cref{algo:fw}.  The B\&C method has been widely used to handle large-scale mixed integer optimization problems. Using MESP \ref{eq_bc} and its gradient cut \ref{eq_cut}, the B\&C method includes two parts. 
	\begin{enumerate}[(i)]
		\item Tighten Continuous Relaxation 
		
		\noindent
		In this part, we set the variables of MESP \ref{eq_bc} to be continuous and employ the naive cutting method to solve the continuous relaxation problem. The inequality constraint in MESP \ref{eq_bc} implies an exponential number of gradient cuts so we consider to incorporate those violated and effective cuts. To begin with, for the sake of efficiency, we utilize the output of the local search algorithm to initiate the root node. We refer to the output as $\bm{\hat x}$ and set the initial $\hat{w}=\infty$. Thus, we have
		$$\hat{w} \ge \Gamma_s\bigg(\sum_{i \in [n]}\hat{x}_i \bm{v}_i\bm{v}_i^{\top} \bigg)= \Gamma_s\bigg(\sum_{i \in [n]}\hat{x}_i \bm{v}_i\bm{v}_i^{\top} \bigg) +\bm{g}^{\top}(\bm{\hat x}-\bm{\hat x}),$$
		thus according to \ref{eq_cut}, we find a violated constraint. Then we add the following cut into the model to place more restriction on $w$.
		\begin{align*}
		w\le \Gamma_s\bigg(\sum_{i \in [n]}\hat{x}_i \bm{v}_i\bm{v}_i^{\top} \bigg) +\bm{g}^{\top}(\bm{x}-\bm{\hat x}).
		\end{align*}
		
		With the constraint, next step is to calculate the current optimal solution using a solver Gurobi. We update $(\hat{w}, \bm{\hat x})$ and check their feasibility by comparing the objective value of $\bm{\hat x}$ and $\hat{w}$. In this way, at each step, for the given $(\hat{w}, \bm{\hat x})$, we add the violated gradient cut associated to them until the model returns a nearly feasible solution, i.e., 
		$$\frac{\hat{w}- \Gamma_s\Big(\sum_{i \in [n]}\hat{x}_i \bm{v}_i\bm{v}_i^{\top} \Big)}{\hat{w}} \le 1e-5.$$
		
		\item Optimize MESP
		
		\noindent
		In this part, we change the continuous variables to be binary and optimize MESP \ref{eq_bc}.
		The goal of the first phase is to guarantee a high-quality root node for B\&C, thus keeping rapid convergence. To implement the B\&C method, we utilize the \textit{lazy cut callback}  in Gurobi, where its advantage is to add constraints only when they are violated, thus greatly reducing the number of cuts. 
		
		One thing to be aware of, however, is that by \Cref{supdiff}, not every binary vector $\bm{x}\in \mathbb{D}$ is guaranteed to have the corresponding supdifferential of function $\Gamma_s(\cdot)$. The rank of matrix $\sum_{i \in [n]}{x}_i \bm{v}_i\bm{v}_i^{\top}$ must be greater than or equal to $s$ is the sufficient condition of computing supdifferential. To handle this,  another cut is introduced as follows when the supdifferential doesn't exist. Given some bad solution $\bm{\hat x}\in \mathbb{D}$, let
		\begin{align}
		1\le \sum_{i \in \hat{S}} (1-{x}_i) + \sum_{i \in [n]\backslash\hat{S}} x_i, \forall \bm{x} \in \mathbb{D},
		\end{align}
		where $\hat{S}$ is the support of $\bm{\hat x}$. This constraint effectively avoids the bad solution because the value of its right-hand side is always greater than 1 to any $\bm{x} \in \mathbb{D}$ except for $\bm{\hat x}$. In this way, there are two kinds of lazy cuts to be used in the model. Here we limit the run time of solver Gurobi to be one hour, which makes the final output to be an upper bound of MESP not exact values instead.
\end{enumerate}}

In \Cref{data2000}, it is expected that the B\&C algorithm has difficulty in solving MESP to optimality; however, it produces a better upper bound than $z^{LD}$. {Note that in the sampling algorithm, we only sample from the support of the output solution from the Frank-Wolfe \Cref{algo:fw} for the sake of computational efficiency.} 
For the proposed integrated ``Samp + LS'' algorithm in \Cref{data2000}, the running time is limited to be 3,600 seconds for each case.  Since we use UB to compute the optimality gaps of the approximation algorithms, their true optimality gaps can be even smaller. We also observe that the solution output from the Frank-Wolfe \Cref{algo:fw} is very sparse. The computational time of the Frank-Wolfe \Cref{algo:fw} is longer because at each iteration, one has to compute the eigendecomposition in order to obtain the supgradient, which can be time-consuming. It is seen that the local search \Cref{algo} outperforms the sampling \Cref{alg_rand_round} and the integrated algorithm  in both time and solution quality. In particular, ``-" in the last row of ``Samp + LS" column means infeasible output, i.e., the selected vectors by the integrated algorithm are linearly dependent with the output objective value being $-\infty$, which is possibly because the original matrix is rank-deficient and the Frank-Wolfe \Cref{algo:fw} selects many linearly dependent vectors.
Thus, we recommend using the vanilla local search \Cref{algo} to solve large-scale problems, with more stable output and lower computational cost. 

\begin{table}[h] 
	\centering
	\caption{Computational results of MESP on the $n=2000$ instance} 
	\begin{threeparttable}
		\setlength{\tabcolsep}{2.5pt}\renewcommand{\arraystretch}{1.3}
		\begin{tabular}{c|r r|r r r|r r | r r | r r }			
			\hline  
			\multicolumn{1}{c|}{ $n$=2000}   & \multicolumn{2}{c|}{B\&C} & \multicolumn{3}{c|}{Frank-Wolfe } & \multicolumn{2}{c|}{Sampling} & \multicolumn{2}{c|}{Local Search} & \multicolumn{2}{c}{Samp + LS}
			\\ \cline{2-12} 
			$s$		& \multicolumn{1}{c}{UB}  &\multicolumn{1}{c|}{{time\tnote1 } }& \multicolumn{1}{c}{$z^{LD}$} & \multicolumn{1}{c}{S-FW}  & \multicolumn{1}{c|}{ time }&  \multicolumn{1}{c}{gap} & \multicolumn{1}{c|}{time}  &\multicolumn{1}{c}{gap}  & \multicolumn{1}{c|}{ time }  &\multicolumn{1}{c}{gap}  & \multicolumn{1}{c}{ time }   \\  
			\hline
			20 &102.939  &3600 &103.007  & 30&119   & 0.331 & 232    & {0.037}  & 21 &{0.037}  &   1506    \\
			40 & 185.327 & 3600& 185.332   & 61 & 257 & 0.915  & 359  &{0.233}  & 23   & 0.233 & 2852      \\
			60 & 256.584  & 3600  &256.589& 93 & 321  & 2.415 &  463  &{0.303}  & 33 &0.303 &3600       \\
			80 & 320.812 & 3600&320.817 &160  & 833    & 4.384 & 950 &{0.612}  &41  & 0.612 & 3600     \\
			100 &  380.298 &3600  &380.307 &214  &1466  & 9.570 &1333  &{1.217}  & 52    & 1.217 & 3600      \\
			120 & 436.336 & 3600&436.350 &268  & 1935 & 18.478 &1973 &{1.850}  & 72  & - & 3600        \\
			\hline
		\end{tabular}%
		\label{data2000}
		\begin{tablenotes}
			\item[1] Time is in seconds
		\end{tablenotes} 
	\end{threeparttable}
\end{table}

\subsection{Stability of MESP}
The MESP \eqref{mesp}, selecting optimal $s$ random observations out of $n$ candidates,  depends on the knowledge of the covariance matrix $\bm C$. When the true covariance matrix is not known, we propose to use the sample covariance matrix whose accuracy is highly influenced by the sample size and noise level. In this subsection, we test the stability of the MESP \eqref{mesp} using the sample covariance matrix instead of the true one for the same benchmark instance as that in \Cref{data124}. Particularly, given the true covariance matrix $\bm C$ (i.e., the one used in \Cref{data124}), we generate $N$ i.i.d. samples following the Gaussian distribution with the corrupted covariance matrix, i.e., $\N(\bm 0, \bm C + \omega \bm \Sigma)$, where $\bm \Sigma\succeq 0$ is the corruption part of the covariance matrix and $\omega \ge 0$ is the corruption scalar. For the notational convenience, let us denote the sample covariance matrix built on $N$ i.i.d. samples as $\hat{\bm C}(N,  \omega)$. Let $S^*, \hat{S}(N,  \omega)$ denote the optimal solutions  of MESP \eqref{mesp} using $\bm C$ and $\hat{\bm C}(N,  \omega)$, respectively. Let us compute the false alarm rate of the optimal solution using the sample covariance as
$s^{-1}|S^* \setminus \hat{S}(N,  \omega)|$
 and its absolute gap of the optimal value as
 $|\log \det \left(\bm C_{S^*, S^*}\right) - \log \det ((\hat{\bm C}(N,  \omega))_{\hat{S}(N,  \omega), \hat{S}(N,  \omega)})|.$

\Cref{fig:fr1} presents the 95\% confidence intervals of false alarm rate and absolute gap for the case with $n=124, s=50$, which are computed by repeating the sampling procedure one hundred times. We see that as expected, the false alarm rate and absolute gap reduce to zero as sample size $N$ grows when there is no corruption (i.e., $\omega=0$), implying that the optimal solution and optimal value of MESP \eqref{mesp} using the sample covariance are closer to the true optimal ones as sample size increases. However, when there is a corruption (i.e., $\omega>0$),  the sample covariance matrix  converges to the corrupted covariance matrix, i.e., $\bm C +\omega \bm \Sigma$. Hence, its corresponding optimal solution and optimal value are close to the corrupted ones instead of true optimality. Therefore, in \Cref{fig:fr1}, it is expected that the false alarm rate or absolute gap does not vanish to zero as the sample size increases. Nevertheless, the obtained solutions based on the corrupted covariance matrix are still quite close to the optimal one of the true MESP \eqref{mesp} as shown in \Cref{fig:fra}.

\begin{figure}[t]
	\centering
	\subfigure[False alarm rate]{
		\includegraphics[width=.46\textwidth]{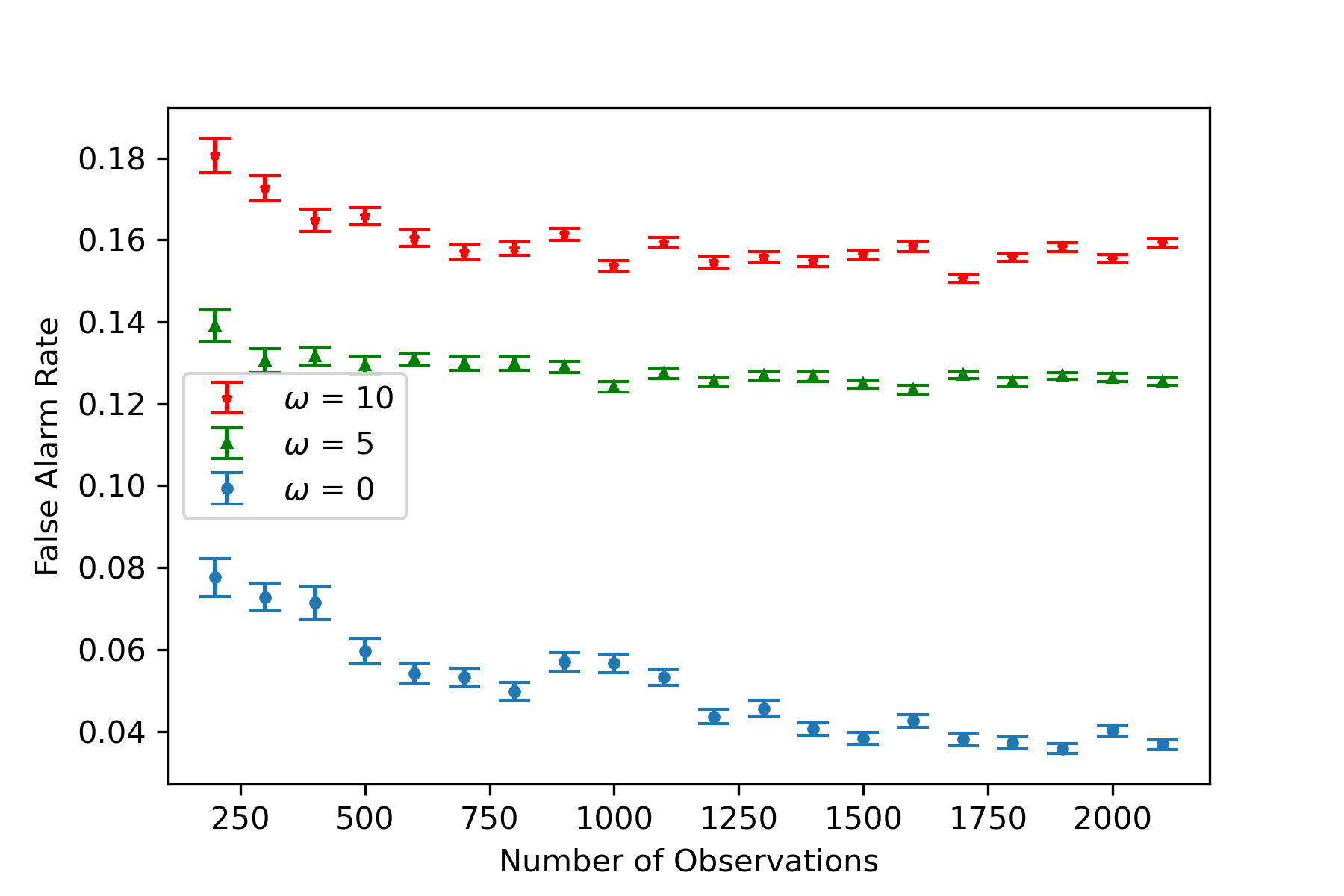} \label{fig:fra}
	}
	\subfigure[Absolute gap of objective values]{
		\includegraphics[width=.45\textwidth]{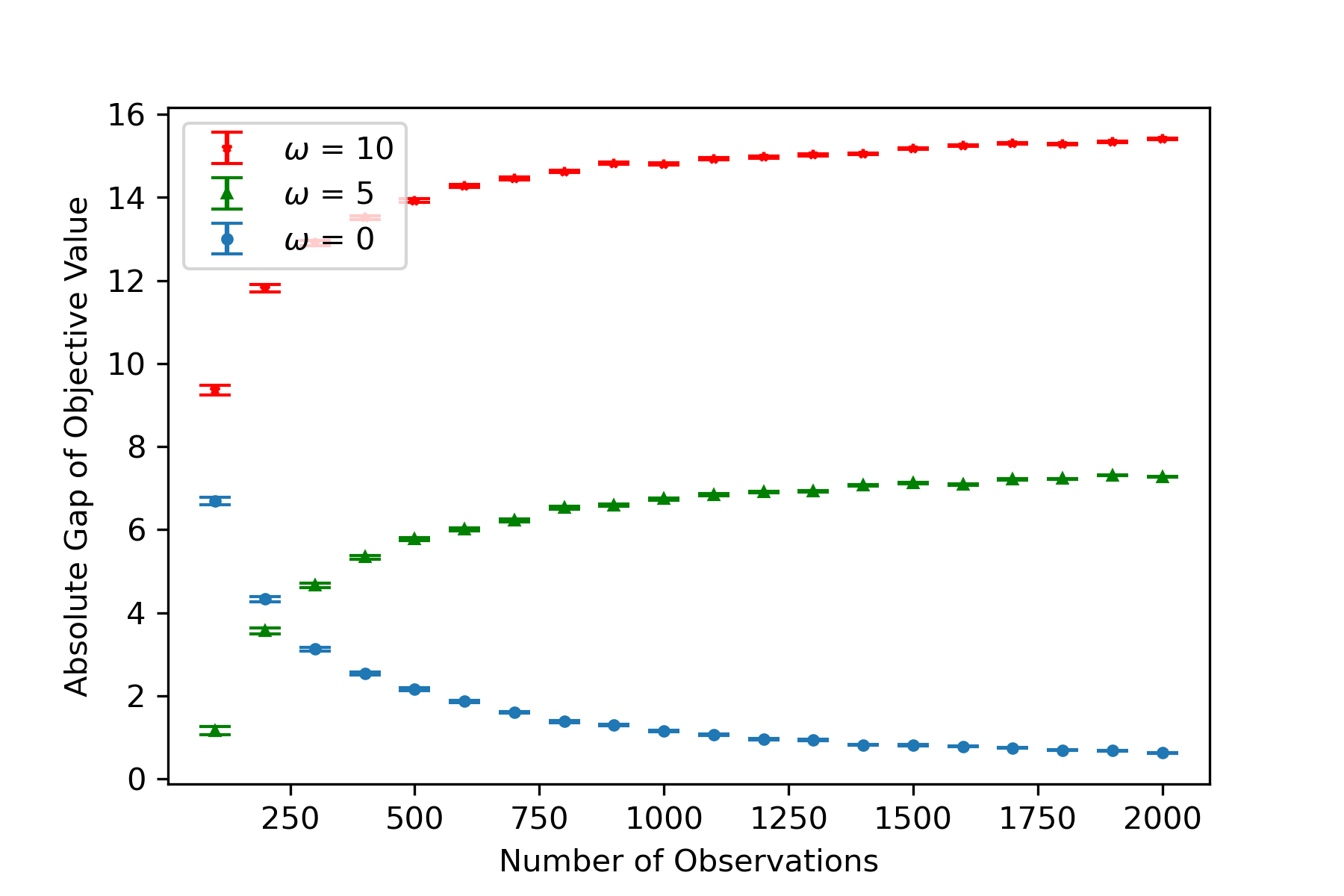}
	}
	\caption{ 95\% confidence intervals of false alarm rate and absolute gap for  $ s=50$ case  on $n=124$ instance}
	\label{fig:fr1}
\end{figure}

\section{Extension to the A-Optimal MESP (A-MESP)} \label{sec:amesp}
In the section, we extend the analyses to the A-Optimal MESP (A-MESP), which instead, minimizes the trace of the inverse of matrix $\bm{C}_{S, S}$. The A-Optimality, as an alternative measurement of information, has been widely used in the fields of experimental design \citep{madan2019combinatorial,nikolov2019proportional}, subdata selection \citep{yao2019optimal}, and sensor placement \citep{moreno2013sensor,xu2017optimal}. Formally, A-MESP is formulated as
\begin{align}
\textrm{(A-MESP)} \quad z^*_A := \min_{S}  \left\{ \tr \left(\bm{C}_{S,S}^{-1}\right):  \ \ S \subseteq [n] , |S|=s \right\}. \label{a-mesp}
\end{align}
By default, if $\bm{C}_{S,S}$ is singular, then $\tr \left(\bm{C}_{S,S}^{-1}\right)=\infty$.
\subsection{Convex Integer Programming Formulation}
This subsection derives an equivalent convex integer program for A-MESP \eqref{a-mesp}. 

First, we introduce the following three functions, corresponding to the objective function of another exact formulation for A-MESP \eqref{a-mesp}, the objective function of the Lagrangian dual, and the objective function of the primal characterization, respectively.
\begin{definition} \label{def:amesp}
	For a $d\times d$ matrix $\bm{X} \succeq 0$ of its eigenvalues $\lambda_1 \ge \cdots \ge \lambda_d \ge 0$, let us denote
	\begin{enumerate}[(i)]
		\item $\overset{s}{\tr} (\bm{X}^{\dag}) := \sum_{i\in [s]}\frac{1}{\lambda_{i}}$,
		\item $ \underset{s}{\tr} (\bm{X}) := \sum_{i\in [d-s+1,d]}{\lambda_{i}}$,
		\item $\Phi_s(\bm{X}) := \sum_{i\in [k]} \frac{1}{\lambda_{i}}+(s-k) \frac{s-k}{\sum_{i\in [k+1,d]} \lambda_{i}} $, where the unique integer $k$ is defined in Lemma~\ref{lem:kappa}.
	\end{enumerate}
	
\end{definition}

Similar to \Cref{claim:cholesky}, it is straightforward to show that 
$\tr\left ( \bm{C}_{S,S}^{-1}\right )= \overset{s}{\tr} [(\sum_{i \in S} \bm{v}_i \bm{v}_i^{\top})^{\dag} ].$
Thus, A-MESP \eqref{a-mesp} can be reformulated as
\begin{align}
\textrm{(A-MESP)} \quad z^*_{A} := \min_{\bm{x}}\Bigg \{ \overset{s}{{\tr}} \bigg[ \bigg(\sum_{i \in [n]}  x_i \bm{v}_i\bm{v}_i^\top \bigg)^{\dag} \bigg]: \sum_{i \in [n]} x_i = s, \bm{x} \in \{0,1\}^{n} \Bigg \}, \label{a_eq_obj}
\end{align}
which reduces to the conventional A-Optimal design problem \citep{madan2019combinatorial,nikolov2019proportional} if $d \leq s\leq n$. The following proposition summarizes the properties of the objective function of A-MESP \eqref{a_eq_obj}.
\begin{restatable}{proposition}{propaobj} \label{prop:aobj}
	The objective function of A-MESP \eqref{a_eq_obj} is (i) monotonic non-decreasing, (ii) neither discrete-supermodular nor discrete-submodular,  and (iii) neither convex nor concave. 
\end{restatable}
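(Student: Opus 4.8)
The plan is to handle the three properties in turn, using the spectral formula $\overset{s}{\tr}(\bm{X}^{\dag})=\sum_{i\in[s]}1/\lambda_i$ from \Cref{def:amesp} together with the identity $\tr(\bm{C}_{S,S}^{-1})=\overset{s}{\tr}[(\sum_{i\in S}\bm{v}_i\bm{v}_i^{\top})^{\dag}]$. For part (i) I would argue through eigenvalue interlacing under the Loewner order. Given $S\subseteq T\subseteq[n]$, write $\bm{X}_S=\sum_{i\in S}\bm{v}_i\bm{v}_i^{\top}$ and $\bm{X}_T=\bm{X}_S+\sum_{i\in T\setminus S}\bm{v}_i\bm{v}_i^{\top}$, so that $\bm{X}_T\succeq\bm{X}_S\succeq 0$; Weyl's monotonicity theorem then gives $\lambda_i(\bm{X}_T)\ge\lambda_i(\bm{X}_S)$ for every $i$, and hence the sums of the reciprocals of the top $s$ eigenvalues are ordered, which yields the monotonicity asserted in part (i). The only delicate point is the boundary case in which $\bm{X}_S$ fails to have rank $s$, where the stated convention $\tr(\bm{C}_{S,S}^{-1})=\infty$ (equivalently, a zero among the top $s$ eigenvalues) must be invoked.

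For parts (ii) and (iii) no clean structural inequality is available, so the plan is to exhibit explicit low-dimensional instances; write $f$ for the A-MESP objective viewed as a set function. For (ii) I would produce two collections $\{\bm{v}_i\}$ (equivalently two small covariance matrices $\bm{C}$) and two pairs of index sets: one pair $(A,B)$ certifying $f(A)+f(B)<f(A\cup B)+f(A\cap B)$ to refute discrete-submodularity, and another pair certifying the reverse strict inequality to refute discrete-supermodularity. Choosing $\bm{C}$ to be diagonal plus a single controllable off-diagonal perturbation keeps every $\tr(\bm{C}_{S,S}^{-1})$ available in closed form, so the two set-function inequalities reduce to comparing explicit rational expressions. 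For (iii) I would restrict the continuous relaxation $\bm{x}\mapsto\overset{s}{\tr}[(\sum_i x_i\bm{v}_i\bm{v}_i^{\top})^{\dag}]$ to a line segment lying inside the region where $\sum_i x_i\bm{v}_i\bm{v}_i^{\top}$ retains rank at least $s$ (so the objective stays finite and continuous), evaluate it at three collinear points, and pick the instance so that the midpoint value exceeds the chord on one segment and lies below the chord on another, simultaneously ruling out convexity and concavity.

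I expect the main obstacle to lie in engineering the examples for (ii) and (iii): each must keep the relevant principal submatrices invertible so that $f$ is finite, yet still break the inequalities in both directions. The truncation to the $s$ largest eigenvalues combined with the pseudoinverse is what makes this subtle, since $\tr(\bm{X}^{-1})$ alone is convex on $\bm{X}\succ 0$, so the failure of convexity in (iii) must be driven either by a change of the active integer $k$ of \Cref{lem:kappa} or by the nonsmooth top-$s$ selection, and the counterexamples are most naturally placed exactly where that selection switches. Carrying out the closed-form bookkeeping at such points, while keeping the algebra light enough to verify by hand, is the part that will require the most care.
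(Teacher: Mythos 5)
Your plans for parts (ii) and (iii) mirror the paper's proof: it refutes supermodularity/submodularity with an explicit $n=3$ instance ($\bm{v}_1=2\bm{e}_1+\bm{e}_2$, $\bm{v}_2=2\bm{e}_1-\bm{e}_2$, two choices of $\bm{v}_3$) and refutes convexity/concavity by evaluating the continuous objective at the midpoint of two indicator vectors for a $2\times 2$ diagonal instance with $s=1$, exactly the "midpoint versus chord" test you describe. Those parts are sound in outline, though of course the examples still have to be produced.

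Part (i), however, contains a genuine error. With $S\subseteq T$ and $\bm{X}_T\succeq\bm{X}_S$, Weyl gives $\lambda_i(\bm{X}_T)\ge\lambda_i(\bm{X}_S)$, hence $\sum_{i\in[s]}1/\lambda_i(\bm{X}_T)\le\sum_{i\in[s]}1/\lambda_i(\bm{X}_S)$: the sum of reciprocals of the top $s$ eigenvalues \emph{decreases}, which is the opposite of the claimed monotonicity (check $\bm{C}=\Diag(a,b)$, $s=1$: the value drops from $1/a$ to $1/\max(a,b)$). The point you are missing is that the relevant set function does not keep the truncation level fixed at $s$; it is $f(S)=\overset{|S|}{\tr}\bigl[(\sum_{i\in S}\bm{v}_i\bm{v}_i^{\top})^{\dag}\bigr]=\tr(\bm{C}_{S,S}^{-1})$, so passing from $S$ to $S\cup\{j\}$ both perturbs the eigenvalues \emph{and} adds one more reciprocal term $1/\lambda_{|S|+1}$, and it is this new term that drives the increase. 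The paper handles this via the rank-one pseudoinverse update of \Cref{lem:rankupd}, which gives the exact increment $\bigl(1+\bm{v}_j^{\top}\bm{X}^{\dag}\bm{v}_j\bigr)/\bigl(\bm{v}_j^{\top}(\bm{I}-\bm{X}\bm{X}^{\dag})\bm{v}_j\bigr)\ge 0$. If you want a purely spectral route, the correct one is via principal submatrices: $\tr(\bm{C}_{T,T}^{-1})\ge\sum_{i\in S}(\bm{C}_{T,T}^{-1})_{ii}\ge\tr(\bm{C}_{S,S}^{-1})$ using $(\bm{C}_{T,T}^{-1})_{S,S}\succeq(\bm{C}_{S,S})^{-1}$ from the Schur complement — not Weyl on the $d\times d$ Gram matrices with a fixed top-$s$ cutoff.
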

\begin{proof}
	See Appendix \ref{proof_prop_aobj}. \qed
\end{proof}

To derive an equivalent convex integer program, we introduce a matrix variable  $\bm{X} \in \Re^{d \times d}$ and reformulate A-MESP \eqref{a_eq_obj} as 
\begin{align}
\textrm{(A-MESP)} \quad z^*_{A} := \min_{\bm{x}, \bm{X} \succeq 0}\Bigg \{ \overset{s}{{\tr}} (\bm{X}^{\dag} ): \sum_{i \in [n]}x_i \bm{v}_i \bm{v}_i^{\top} \succeq \bm{X},\sum_{i \in [n]} x_i = s, \bm{x} \in \{0,1\}^{n} \Bigg \}. \label{a_eq_obje}
\end{align}
The key idea of deriving the convex integer program is summarized as follows: (i) obtain Lagrangian dual of A-MESP \eqref{a_eq_obje} by dualizing the constraint $\sum_{i \in [n]}x_i \bm{v}_i \bm{v}_i^{\top} \succeq \bm{X}$; (ii) characterize the primal formulation of the Lagrangian dual; and (iii) enforce the continuous variables in the primal characterization to be binary.
To begin with, we introduce the following lemma, which is essential to derive the Lagrangian dual of A-MESP. 
\begin{restatable}{lemma}{lemaldmax} \label{lem:aldmax}
	For a $d \times d$ matrix $\bm{\Lambda} \succeq 0$, we have
	\begin{align}
	\min_{\bm{X} \succeq 0} \left\{\overset{s}{\tr}(\bm{X}^{\dag})+\tr(\bm{X}\bm{\Lambda}) \right \} = 2 \underset{s}{\tr}\left(\bm{\Lambda}^{\frac{1}{2}}\right). \label{eq_relaxed_A1}
	\end{align}	
\end{restatable}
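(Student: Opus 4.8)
The plan is to reduce this matrix optimization to a separable scalar problem over the eigenvalues of $\bm{X}$, and then to exhibit matching lower and upper bounds. Let $\bm{\Lambda} = \bm{U}\Diag(\bm{\mu})\bm{U}^{\top}$ be a spectral decomposition with $\mu_1 \ge \cdots \ge \mu_d \ge 0$, and let $\lambda_1 \ge \cdots \ge \lambda_d \ge 0$ denote the eigenvalues of a candidate $\bm{X}\succeq 0$. By \Cref{def:amesp}, $\overset{s}{\tr}(\bm{X}^{\dag}) = \sum_{i\in[s]} 1/\lambda_i$ depends only on the $s$ largest eigenvalues of $\bm{X}$, whereas the coupling term $\tr(\bm{X}\bm{\Lambda})\ge 0$ sees all of them; this already signals that an optimal $\bm{X}$ should have rank at most $s$.

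For the lower bound, I would invoke the trace inequality for two symmetric positive semidefinite matrices, namely $\tr(\bm{X}\bm{\Lambda}) \ge \sum_{i\in[d]} \lambda_i \mu_{d-i+1}$, where the minimum over the coupling is attained by the anti-sorted pairing of eigenvalues. Combining this with the expression for $\overset{s}{\tr}(\bm{X}^{\dag})$ gives
\[
\overset{s}{\tr}(\bm{X}^{\dag}) + \tr(\bm{X}\bm{\Lambda}) \;\ge\; \sum_{i\in[s]}\left(\frac{1}{\lambda_i} + \lambda_i \mu_{d-i+1}\right) + \sum_{i\in[s+1,d]} \lambda_i \mu_{d-i+1}.
\]
The terms with $i\in[s+1,d]$ are nonnegative, so they can be driven to their infimum $0$ by taking $\lambda_{s+1}=\cdots=\lambda_d=0$, which is compatible with the ordering constraint. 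For each $i\in[s]$, AM-GM yields $1/\lambda_i + \lambda_i\mu_{d-i+1} \ge 2\sqrt{\mu_{d-i+1}}$, with equality at $\lambda_i = \mu_{d-i+1}^{-1/2}$. Summing gives the lower bound $2\sum_{i\in[s]}\sqrt{\mu_{d-i+1}} = 2\underset{s}{\tr}(\bm{\Lambda}^{1/2})$, since $\{\sqrt{\mu_{d-i+1}}\}_{i\in[s]}$ are exactly the $s$ smallest eigenvalues of $\bm{\Lambda}^{1/2}$.

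For achievability, I would construct the explicit minimizer $\bm{X} = \sum_{i\in[s]} \mu_{d-i+1}^{-1/2}\, \bm{u}_{d-i+1}\bm{u}_{d-i+1}^{\top}$, i.e. assign the largest eigenvalue of $\bm{X}$ to the eigenvector of the smallest eigenvalue of $\bm{\Lambda}$. A direct check shows $\overset{s}{\tr}(\bm{X}^{\dag}) = \sum_{i\in[s]}\sqrt{\mu_{d-i+1}}$ and $\tr(\bm{X}\bm{\Lambda}) = \sum_{i\in[s]}\sqrt{\mu_{d-i+1}}$, so their sum matches the lower bound; the required ordering $\lambda_1\ge\cdots\ge\lambda_s$ holds automatically because $\mu_{d-i+1}^{-1/2}$ is nonincreasing in $i$. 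When $\bm{\Lambda}$ is singular on the relevant block (some $\mu_{d-i+1}=0$), the corresponding $\lambda_i$ must tend to $+\infty$, so the value is an infimum approached in the limit (still equal to $2\underset{s}{\tr}(\bm{\Lambda}^{1/2})$) and is attained exactly when $\bm{\Lambda}\succ 0$.

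The main obstacle I anticipate is pinning down the correct eigenvector alignment: the reduction to a separable scalar problem is only valid once one argues, via the trace inequality, that minimizing the coupling forces the reverse (anti-sorted) pairing, and that the ordering constraints $\lambda_1\ge\cdots\ge\lambda_d$ imposed on $\bm{X}$ are consistent with the per-coordinate optimizers rather than in conflict with them; the secondary subtlety is the careful bookkeeping of the rank-$s$ truncation and the singular-$\bm{\Lambda}$ boundary case, so that ``$\min$'' is read as the (possibly unattained) infimum.
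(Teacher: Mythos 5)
Your proof is correct and follows essentially the same route as the paper: both reduce the matrix problem to its eigenvalues, observe that the bilinear term is minimized by the anti-sorted pairing $\tr(\bm{X}\bm{\Lambda}) \ge \sum_{i}\lambda_i\,\mu_{d-i+1}$, and then solve the resulting separable scalar problem with the optimizer $\lambda_i = \mu_{d-i+1}^{-1/2}$ on the top-$s$ block and $\lambda_i=0$ elsewhere. The only substantive differences are that you cite the standard trace inequality directly where the paper re-derives it through its Lemma~\ref{lem:min} majorization machinery, and that you explicitly handle the singular case of $\bm{\Lambda}$ (where the minimum is only an infimum approached as $\lambda_i\to\infty$), a boundary case the paper's proof silently passes over when it writes $\lambda_i = 1/\sqrt{\beta_i}$.
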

\begin{proof}
	See Appendix \ref{proof_lem_aldmax}. \qed
\end{proof}
Next, we are going to show the Lagrangian dual of A-MESP \eqref{a_eq_obje}, denoted by A-LD.
\begin{restatable}{theorem}{thmald} \label{them:ald}
	The Lagrangian dual of A-MESP \eqref{a_eq_obj} is 
	\begin{align}
	\textrm{(A-LD)} \quad z^{LD}_A := \max_{\bm{\Lambda} \succeq 0, \nu, \bm{\mu} \in \Re_{+}^n } \bigg\{
	2 \underset{s}{\tr}\left(\bm{\Lambda}^{\frac{1}{2}}\right) - s \nu - \sum_{i \in [n]} {\mu_i}: \nu + \mu_i \ge \bm{v}_i^\top \bm{\Lambda} \bm{v}_i, i \in [n] \bigg\}, \label{a_eq_dual}
	\end{align}
	and its optimal value is a lower bound of A-MESP, i.e., $z^{LD}_A \le z^*_{A}$.
\end{restatable}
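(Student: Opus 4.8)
The plan is to mirror the derivation of the Lagrangian dual of MESP in \Cref{lagrange}, adapting every step to the minimization structure of A-MESP \eqref{a_eq_obje}. First I would dualize the matrix inequality $\sum_{i \in [n]} x_i \bm{v}_i \bm{v}_i^{\top} \succeq \bm{X}$ by attaching a positive semidefinite multiplier $\bm{\Lambda} \succeq 0$. Since A-MESP is a minimization problem and the constraint reads $\sum_{i \in [n]} x_i \bm{v}_i \bm{v}_i^{\top} - \bm{X} \succeq 0$, the Lagrangian is $L(\bm{x},\bm{X},\bm{\Lambda}) = \overset{s}{\tr}(\bm{X}^{\dag}) - \tr(\bm{\Lambda}(\sum_{i \in [n]} x_i \bm{v}_i \bm{v}_i^{\top} - \bm{X}))$, so that the Lagrangian dual is
\begin{align*}
z^{LD}_A = \max_{\bm{\Lambda}\succeq 0}\ \min_{\bm{x},\bm{X}\succeq 0}\bigg\{\overset{s}{\tr}(\bm{X}^{\dag}) + \tr(\bm{X}\bm{\Lambda}) - \sum_{i\in[n]} x_i \bm{v}_i^{\top}\bm{\Lambda}\bm{v}_i : \sum_{i\in[n]} x_i = s,\ \bm{x}\in\{0,1\}^n\bigg\}.
\end{align*}

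Next I would separate the inner minimization into its $\bm{X}$-block and its $\bm{x}$-block, exactly as in Parts (i)--(ii) of \Cref{lagrange}. For the $\bm{X}$-minimization, \Cref{lem:aldmax} applies directly and gives $\min_{\bm{X}\succeq 0}\{\overset{s}{\tr}(\bm{X}^{\dag}) + \tr(\bm{X}\bm{\Lambda})\} = 2\underset{s}{\tr}(\bm{\Lambda}^{1/2})$; notice that, unlike the MESP development which needed $\bm{\Lambda}\succ 0$, here $\bm{\Lambda}\succeq 0$ already suffices, so no strict-positivity perturbation is required. For the $\bm{x}$-minimization, minimizing the linear objective $-\sum_{i} x_i \bm{v}_i^{\top}\bm{\Lambda}\bm{v}_i$ over the cardinality-constrained binary set equals minimizing over its continuous relaxation, and invoking LP strong duality I would rewrite $\min_{\bm{x}}\{-\sum_{i} x_i \bm{v}_i^{\top}\bm{\Lambda}\bm{v}_i : \sum_i x_i = s,\ \bm{x}\in\{0,1\}^n\}$ as $-\min_{\nu,\bm{\mu}\ge 0}\{s\nu + \sum_{i}\mu_i : \nu + \mu_i \ge \bm{v}_i^{\top}\bm{\Lambda}\bm{v}_i\}$.

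Combining the two pieces, the term $-\min_{\nu,\bm{\mu}}(\cdot)$ becomes $\max_{\nu,\bm{\mu}}(-(\cdot))$, and folding this inner maximization together with the outer $\max_{\bm{\Lambda}\succeq 0}$ yields precisely \eqref{a_eq_dual}. The lower-bound claim $z^{LD}_A \le z^*_A$ then follows from weak duality: for any primal-feasible $(\bm{x},\bm{X})$ and any $\bm{\Lambda}\succeq 0$ the dualized quantity $\tr(\bm{\Lambda}(\sum_{i} x_i \bm{v}_i \bm{v}_i^{\top} - \bm{X}))$ is nonnegative, hence $L(\bm{x},\bm{X},\bm{\Lambda}) \le \overset{s}{\tr}(\bm{X}^{\dag})$ on the feasible set, and maximizing the left side over $\bm{\Lambda}$ after minimizing over $(\bm{x},\bm{X})$ cannot exceed $z^*_A$.

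I expect the main obstacle to be keeping the sign conventions straight, since A-MESP is a minimization (so the dual is a maximization and the multiplier enters with a minus sign), which reverses the roles seen in the MESP derivation. A related subtlety is confirming that the LP-duality step for the $\bm{x}$-block is legitimate and that the combined maximization remains well posed in the singular regime, where $\overset{s}{\tr}(\bm{X}^{\dag})$ may be $+\infty$; this is controlled by restricting attention to feasible $\bm{X}$ (for which the constraint $\sum_i x_i \bm{v}_i\bm{v}_i^{\top}\succeq\bm{X}$ bounds the relevant eigenvalues) and by the closed-form evaluation of the $\bm{X}$-block supplied by \Cref{lem:aldmax}.
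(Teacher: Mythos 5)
Your proposal is correct and follows essentially the same route as the paper: dualize the semidefinite constraint with $\bm{\Lambda}\succeq 0$, evaluate the $\bm{X}$-block via \Cref{lem:aldmax}, handle the $\bm{x}$-block by LP duality over the cardinality polytope, and conclude $z^{LD}_A\le z^*_A$ by weak duality. The sign bookkeeping you flag as the main risk is handled correctly throughout.
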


\begin{proof}
	By dualizing the first constraint of A-MESP  \eqref{a_eq_obje}, we can formulate the dual problem as
	\begin{align*}
	z^{LD}_{A} := \max_{\bm{\Lambda} \succeq 0}\Bigg\{\min_{\bm{x}, \bm{X} \succeq 0} \bigg\{ \overset{s}{\tr}\left(\bm{X}^{\dag}\right)+\tr(\bm{X}\bm{\Lambda}) - \sum_{i \in [n]}x_i \bm{v}_i^{\top} \bm{\Lambda}\bm{v}_i: \sum_{i \in [n]}x_i =s, \bm{x} \in \{0,1\}^n \bigg\}  \Bigg\}.
	\end{align*}
	Applying Lemma \ref{lem:aldmax} to the inner minimization problem over $\bm{X}$, the dual problem becomes
	\begin{align*}
	z^{LD}_{A} := &\max_{\bm{\Lambda} \succeq 0}\Bigg\{\min_{\bm{x}} \bigg\{ 2 \underset{s}{\tr}\left(\bm{\Lambda}^{\frac{1}{2}}\right) - \sum_{i \in [n]}x_i \bm{v}_i^{\top} \bm{\Lambda}\bm{v}_i: \sum_{i \in [n]}x_i =s, \bm{x} \in \{0,1\}^n \bigg\}  \Bigg\}.
	\end{align*}
	Similarly, we derive the dual of minimization problem over $\bm{x}$ and combine the dual with the maximization over $\bm{\Lambda}$, which obtains A-LD problem. Apparently, $z_{A}^{LD} \le z_{A}^*$ by weak duality. \qed
\end{proof}

In addition, A-LD \eqref{a_eq_dual} has an equivalent primal characterization.
\begin{restatable}{theorem}{thmapcont} \label{thm:apcont}
	The primal characterization of A-LD \eqref{a_eq_dual}, referred to as (A-PC), is
	\begin{align}
	\textrm{(A-PC)} \quad z^{LD}_A := \min_{\bm{x}} \Bigg \{ \Phi_s \bigg(\sum_{i\in [n]} x_i \bm{v}_i \bm{v}_i^{\top}\bigg):
	\sum_{i \in [n]} x_i = s,
	\bm{x} \in [0,1]^n \Bigg \}. \label{a_eq_pcont}
	\end{align}
\end{restatable}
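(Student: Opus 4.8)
The plan is to reproduce, in the A-optimality setting, the Lagrangian-dual derivation that gave \Cref{primal_cont}, the only change being that A-LD \eqref{a_eq_dual} is a \emph{maximization} problem so its dual will be a minimization. First I would attach multipliers $\bm{x}\in\Re_+^n$ to the $n$ linear constraints $\nu+\mu_i\ge\bm{v}_i^{\top}\bm{\Lambda}\bm{v}_i$. Since $z_A^{LD}\le z_A^*<\infty$ by \Cref{them:ald} and the constraint system of A-LD \eqref{a_eq_dual} is affine in $(\nu,\bm{\mu},\bm{\Lambda})$ and hence satisfies the relaxed Slater condition, theorem 3.2.2 in \cite{ben2012optimization} yields strong duality, so that
\[z_A^{LD} = \min_{\bm{x}\in\Re_+^n}\ \max_{\bm{\Lambda}\succeq 0,\nu,\bm{\mu}\in\Re_+^n}\Big\{2\underset{s}{\tr}(\bm{\Lambda}^{1/2})-s\nu-\textstyle\sum_{i\in[n]}\mu_i+\sum_{i\in[n]}x_i(\nu+\mu_i-\bm{v}_i^{\top}\bm{\Lambda}\bm{v}_i)\Big\}.\]

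Next I would split the inner maximization into its $(\nu,\bm{\mu})$ part and its $\bm{\Lambda}$ part. Collecting the $\nu$- and $\mu_i$-terms gives $\nu(\sum_i x_i-s)+\sum_i\mu_i(x_i-1)$; maximizing over $\nu\in\Re$ and $\bm{\mu}\in\Re_+^n$ returns $+\infty$ unless $\sum_i x_i=s$ and $x_i\le 1$ for every $i$, in which case the value is $0$ (attained at $\bm{\mu}=\bm{0}$). Because the outer problem minimizes over $\bm{x}$, these divergent cases are discarded and precisely enforce $\sum_{i\in[n]}x_i=s$ and $\bm{x}\in[0,1]^n$. Writing $\bm{X}=\sum_{i\in[n]}x_i\bm{v}_i\bm{v}_i^{\top}$ and using $\sum_i x_i\bm{v}_i^{\top}\bm{\Lambda}\bm{v}_i=\tr(\bm{X}\bm{\Lambda})$, the $\bm{\Lambda}$ part reduces to the conjugate $\max_{\bm{\Lambda}\succeq 0}\{2\underset{s}{\tr}(\bm{\Lambda}^{1/2})-\tr(\bm{X}\bm{\Lambda})\}$. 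Substituting the identity $\max_{\bm{\Lambda}\succeq0}\{2\underset{s}{\tr}(\bm{\Lambda}^{1/2})-\tr(\bm{X}\bm{\Lambda})\}=\Phi_s(\bm{X})$ then collapses the whole expression to A-PC \eqref{a_eq_pcont}, as claimed.

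The hard part will be establishing this last conjugate identity, which is the A-optimality analogue of \Cref{lem:ldmax2} (with $\sqrt{\cdot}$ playing the role that $\log(\cdot)$ played there); I expect this to be the main obstacle and would isolate it as a separate lemma. I would prove it in two steps. First, since $\underset{s}{\tr}(\bm{\Lambda}^{1/2})$ depends only on the spectrum of $\bm{\Lambda}$ while $\tr(\bm{X}\bm{\Lambda})$ is minimized, for a fixed spectrum, by the reverse-order (von Neumann) alignment, the optimal $\bm{\Lambda}$ shares the eigenbasis $\bm{X}=\bm{Q}\Diag(\bm{\lambda})\bm{Q}^{\top}$ with its eigenvalues pinned against $\lambda_1\ge\cdots\ge\lambda_d$ in opposite order. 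This reduces the matrix problem to the scalar program
\[\max_{0\le\gamma_1\le\cdots\le\gamma_d}\Big\{2\textstyle\sum_{i\in[s]}\sqrt{\gamma_i}-\sum_{i\in[d]}\lambda_i\gamma_i\Big\}.\]
Second, I would solve this isotonic program through its first-order conditions: the $k$ largest eigenvalues admit the unconstrained optima $\gamma_i=\lambda_i^{-2}$, each contributing $\lambda_i^{-1}$, while the ordering constraint pools the remaining coordinates $[k+1,d]$ at a common value whose optimal choice contributes $(s-k)^2/\sum_{i\in[k+1,d]}\lambda_i$; together these give $\sum_{i\in[k]}\lambda_i^{-1}+(s-k)^2/\sum_{i\in[k+1,d]}\lambda_i=\Phi_s(\bm{X})$.

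The delicate point is identifying the breakpoint $k$, which is exactly where a pool-adjacent-violators argument places it: feasibility $\gamma_k\le\gamma_{k+1}$ reduces to $\lambda_k\ge\frac{1}{s-k}\sum_{i\in[k+1,d]}\lambda_i$, and the requirement that index $k+1$ remain pooled (rather than split off at its own optimum) reduces to $\frac{1}{s-k}\sum_{i\in[k+1,d]}\lambda_i\ge\lambda_{k+1}$; these are precisely the two inequalities defining the unique integer $k$ in \Cref{lem:kappa}. I would also record that when $\bm{X}$ has rank below $s$ both $\Phi_s(\bm{X})$ and the conjugate equal $+\infty$, so the identity and the minimization in A-PC remain consistent at the boundary.
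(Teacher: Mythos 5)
Your proposal is correct and follows essentially the same route as the paper's proof: attach multipliers $\bm{x}\in\Re_+^n$ to the linear constraints of A-LD, invoke strong duality via the relaxed Slater condition, split the inner maximization into the $(\nu,\bm{\mu})$ part (which yields the constraints $\sum_i x_i=s$, $x_i\le 1$) and the $\bm{\Lambda}$ part, and establish the conjugate identity $\max_{\bm{\Lambda}\succeq 0}\{2\underset{s}{\tr}(\bm{\Lambda}^{1/2})-\tr(\bm{X}\bm{\Lambda})\}=\Phi_s(\bm{X})$ by spectral reduction to an ordered scalar program solved at the breakpoint $k$ of \Cref{lem:kappa}. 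The paper carries out this last step by mimicking the proof of \Cref{lem:ldmax2} (obtaining the same optimizer $\beta_i^*=\lambda_i^{-2}$ for $i\in[k]$ and the pooled value $(s-k)^2/(\sum_{i\in[k+1,d]}\lambda_i)^2$ otherwise), exactly as you propose.
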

\begin{proof}
	See Appendix \ref{proof_them_apcont}. \qed
\end{proof}

As a side product of Theorem \ref{thm:apcont}, we can obtain the subdifferentials of the convex but non-smooth objective function $\Phi_s(\cdot)$ for A-PC \eqref{a_eq_pcont}.
\begin{restatable}{proposition}{subdiff} \label{subdiff}
	Given a $d \times d$ matrix $\bm{X} \succeq 0$ with rank $r\geq s$, suppose the vector of eigenvalues of $\bm{X}$ is $\bm{\lambda}$ such that $\lambda_1 \ge \cdots \ge \lambda_r > \lambda_{r+1}=\cdots = \lambda_d =0$ and ${\bm{X}} = \bm{Q} \Diag(\bm{\lambda}) \bm{Q}^{\top}$ with an orthonormal matrix $\bm{Q}$. Then the subdifferential of function $\Phi_s(\cdot)$ at $\bm{X}$ that is denoted by $\partial \Phi_s(\bm{X})$ is 
	\begin{align*}
	&\partial \Phi_s(\bm{X}) =\Bigg\{\bm{Q} \Diag(\bm{\beta}) \bm{Q}^{\top}: \bm{X} = \bm{Q} \Diag(\bm{\lambda}) \bm{Q}^{\top}, \bm{Q} \textrm{\rm\ is orthonormal},\\
	&\bm{\beta} \in \conv \bigg \{\bm{\beta}: \beta_i = \frac{1}{\lambda_i}, \forall i \in [k], \beta_i = \frac{s-k}{\sum_{i\in [k+1,d]} \lambda_{i}},\forall i \in [k+1, r],\beta_i \ge \beta_r,\forall i \in [r+1,d] \bigg \} \Bigg\}.
	\end{align*}
	Note that the subdifferential of $\Phi_s(\cdot)$ above is unique and becomes the gradient when $\bm{X} \succ 0$ is non-singular.
\end{restatable}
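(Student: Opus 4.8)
The plan is to follow the same route as the proof of \Cref{supdiff} for $\Gamma_s(\cdot)$, substituting the A-optimality ingredients. The starting observation is that $\Phi_s$ is a convex \emph{spectral} function: if $\phi_s(\bm\lambda)$ denotes the symmetric function of the eigenvalues with $\Phi_s(\bm X)=\phi_s(\bm\lambda(\bm X))$, then $\phi_s$ is invariant under permutations of $\bm\lambda$. Consequently, by Corollary 2.5 in \cite{lewis1995convex}, the matrix subdifferential lifts from the scalar one,
\begin{align*}
\partial\Phi_s(\bm X)=\Big\{\bm Q\Diag(\bm\beta)\bm Q^\top:\ \bm X=\bm Q\Diag(\bm\lambda)\bm Q^\top,\ \bm Q\text{ orthonormal},\ \bm\beta\in\partial\phi_s(\bm\lambda)\Big\},
\end{align*}
so the task reduces to describing $\partial\phi_s(\bm\lambda)$ in the eigenvalue domain.

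To obtain $\partial\phi_s(\bm\lambda)$ I would exhibit $\phi_s$ through the variational identity underlying \Cref{lem:aldmax} and the Lagrangian-dual derivation of \Cref{thm:apcont}. Writing the optimal multiplier as $\bm\Lambda=\bm B^2$ with $\bm B\succeq 0$ sharing the eigenbasis of $\bm X$, \Cref{lem:aldmax} yields the representation
\begin{align*}
\phi_s(\bm\lambda)=\max_{0\le\beta_1\le\cdots\le\beta_d}\Big\{2\sum_{i\in[s]}\beta_i-\sum_{i\in[d]}\lambda_i\beta_i^2\Big\},
\end{align*}
where the ordering $\beta_1\le\cdots\le\beta_d$ reflects that, against the decreasing $\bm\lambda$, the optimal multiplier is oppositely sorted. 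This exhibits $\phi_s$ as the optimal value of a concave program whose solution set carries the subdifferential information; by Corollary 23.5.3 in \cite{rockafellar1970convex}, $\partial\phi_s(\bm\lambda)$ is recovered as the convex hull of its optimal solutions, exactly mirroring the $\gamma_s$ step in \Cref{supdiff}.

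It then remains to characterize the maximizers. The first-order conditions give the unconstrained stationary value $\beta_i=1/\lambda_i$ precisely on the leading block $i\in[k]$ isolated by \Cref{lem:kappa}; on the block $i\in[k+1,r]$ the monotonicity constraint binds and forces the common averaged value $\beta_i=(s-k)/\sum_{i\in[k+1,d]}\lambda_i$; and on the null block $i\in[r+1,d]$, where $\lambda_i=0$ and $i>s$, the objective is flat in $\beta_i$, so the only restriction left is the ordering $\beta_i\ge\beta_r$. A direct substitution confirms that this candidate indeed attains the value $\sum_{i\in[k]}1/\lambda_i+(s-k)^2/\sum_{i\in[k+1,d]}\lambda_i=\Phi_s(\bm X)$. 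Taking the convex hull of this solution set and mapping back through the Lewis formula produces the claimed description of $\partial\Phi_s(\bm X)$; when $\bm X\succ0$ the maximizer is unique, so the subdifferential collapses to the single gradient.

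The main obstacle is the careful characterization of the maximizer set of the scalar program: verifying via \Cref{lem:kappa} that the monotonicity constraint is slack on exactly the top-$k$ coordinates and tight (with the stated averaged value) on the remaining support coordinates, and correctly handling the two sources of degeneracy — the flat null eigenspace, which produces the $\beta_i\ge\beta_r$ freedom, and repeated eigenvalues, which make $\bm Q$ nonunique. These degeneracies are precisely what render $\partial\Phi_s(\bm X)$ set-valued away from positive-definite $\bm X$, and disentangling them is the crux of the argument, just as in \Cref{supdiff}.
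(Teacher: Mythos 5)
Your overall route --- transferring to the eigenvalue domain via Lewis's Corollary 2.5, exhibiting $\phi_s$ through the variational identity coming from \Cref{lem:aldmax}, and characterizing the maximizers block-by-block using the integer $k$ of \Cref{lem:kappa} --- is exactly the adaptation of the $\Gamma_s$ argument that the paper intends (its own proof is omitted with a pointer to that earlier proposition). Your representation $\phi_s(\bm\lambda)=\max_{0\le\beta_1\le\cdots\le\beta_d}\{2\sum_{i\in[s]}\beta_i-\sum_{i\in[d]}\lambda_i\beta_i^2\}$ and your identification of its maximizers are both correct.

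The gap is in the final step. Corollary 23.5.3 of Rockafellar identifies $\partial\phi_s(\bm\lambda)$ with the convex hull of the optimal solutions only when the coupling between $\bm\lambda$ and the inner variable is bilinear, i.e., when the inner objective has the conjugate form $\langle\bm\lambda,\bm\beta\rangle-g(\bm\beta)$; that is what happens for $\gamma_s$, where the coupling term is $\sum_i\lambda_i\beta_i$, so the optimal $\bm\beta^*$ are literally the slopes. In your representation the coupling term is $-\sum_i\lambda_i\beta_i^2$, so the affine functions of $\bm\lambda$ being maximized have slopes $-\beta_i^{*2}$, and Danskin/Rockafellar gives $\partial\phi_s(\bm\lambda)=\conv\{-\bm\beta^{*\,2}\}$ (entrywise square), not $\conv\{\bm\beta^*\}$. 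Concretely the subgradient entries are $-1/\lambda_i^2$ on $[k]$ and $-(s-k)^2/(\sum_{i\in[k+1,d]}\lambda_i)^2$ on $[k+1,r]$, which is what direct differentiation of $\Phi_s(\bm X)=\sum_{i\in[k]}1/\lambda_i+(s-k)^2/\sum_{i\in[k+1,d]}\lambda_i$ gives in the smooth positive-definite case; a matrix with positive eigenvalues $1/\lambda_i$ cannot be the gradient of this convex function, which is nonincreasing in each eigenvalue. To repair the argument, either reparametrize by the eigenvalues of $\bm\Lambda$ itself, so that the coupling is linear and the optimal multipliers (with entries $1/\lambda_i^2$ and $(s-k)^2/(\sum_{i\in[k+1,d]}\lambda_i)^2$, as computed in the proof of \Cref{thm:apcont}) become the negatives of the subgradients, or apply Danskin's theorem directly to your representation. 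Note that this also means the formula you arrive at --- which coincides with the printed statement --- is not the subdifferential of $\Phi_s$; the statement appears to carry over the $\Gamma_s$ expressions unmodified, and your derivation reproduces that discrepancy rather than resolving it.
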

\begin{proof}
	The proof is similar to that of Proposition \ref{supdiff} and is thus omitted here.\qed
\end{proof}

Another side product is that we obtain an equivalent convex integer program of A-MESP by enforcing the variables $\bm{x}$ in A-PC \eqref{a_eq_pcont} to be binary.
\begin{restatable}{theorem}{thm:apc}
	The A-MESP is equivalent to the following convex integer program
	\begin{align}
	\textrm{(A-MESP)} \quad z^*_{A}:= \min_{\bm{x}} \Bigg \{ \Phi_s \bigg(\sum_{i\in [n]} x_i \bm{v}_i \bm{v}_i^{\top}\bigg):
	\sum_{i \in [n]} x_i = s,
	\bm{x} \in \{0,1\}^n \Bigg \}. \label{a_eq_p}
	\end{align}
\end{restatable}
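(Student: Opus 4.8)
The plan is to notice that the claimed convex integer program \eqref{a_eq_p} and the reformulation \eqref{a_eq_obj} minimize over the \emph{same} feasible set and differ only in their objectives, so that the statement reduces to a pointwise identity on integer points. Concretely, both problems minimize over $\{\bm{x}\in\{0,1\}^n:\sum_{i\in[n]}x_i=s\}$, with \eqref{a_eq_obj} using $\overset{s}{\tr}(\bm{X}^{\dag})$ and \eqref{a_eq_p} using $\Phi_s(\bm{X})$, where $\bm{X}=\sum_{i\in[n]}x_i\bm{v}_i\bm{v}_i^\top$. Hence it suffices to show that for every feasible binary $\bm{x}$, writing $S=\supp(\bm{x})$ with $|S|=s$ and $\bm{X}=\sum_{i\in S}\bm{v}_i\bm{v}_i^\top$, one has $\Phi_s(\bm{X})=\overset{s}{\tr}(\bm{X}^{\dag})$. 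Once this holds at every such point, the two minimizations are literally the same and $z_A^*$ equals the optimal value of \eqref{a_eq_p}.

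First I would dispose of the singular case. Since $|S|=s$, the matrix $\bm{X}$ has rank $r\le s$, and if $r<s$ then its $s$-th largest eigenvalue vanishes, so $\overset{s}{\tr}(\bm{X}^{\dag})=\sum_{i\in[s]}1/\lambda_i=\infty$ by the convention following \eqref{a-mesp}. I would check that $\Phi_s$ agrees: computing the unique integer $k$ of \Cref{lem:kappa} for such $\bm{X}$ yields $k=r$, because $\lambda_r>0=\frac{1}{s-r}\sum_{i\in[r+1,d]}\lambda_i\ge\lambda_{r+1}$, so the averaged term $(s-k)\frac{s-k}{\sum_{i\in[k+1,d]}\lambda_i}$ has a zero denominator and $\Phi_s(\bm{X})=\infty$ as well. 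Thus rank-deficient selections are effectively infeasible for both formulations and may be ignored.

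The crux is the full-rank case $r=s$, where $\lambda_1\ge\cdots\ge\lambda_s>0=\lambda_{s+1}=\cdots=\lambda_d$. Here the key step is to pin down the integer $k$ of \Cref{lem:kappa}. Its defining inequality gives $\lambda_{k+1}\le\frac{1}{s-k}\sum_{i\in[k+1,d]}\lambda_i=\frac{1}{s-k}\sum_{i\in[k+1,s]}\lambda_i$, but the right-hand side is the arithmetic mean of the sorted block $\lambda_{k+1}\ge\cdots\ge\lambda_s$ and is therefore at most $\lambda_{k+1}$; the two bounds force equality, whence $\lambda_{k+1}=\cdots=\lambda_s$. With these eigenvalues equal the arithmetic and harmonic means of the block coincide, so $(s-k)\frac{s-k}{\sum_{i\in[k+1,s]}\lambda_i}=\sum_{i\in[k+1,s]}1/\lambda_i$, and therefore
\begin{align*}
\Phi_s(\bm{X})=\sum_{i\in[k]}\frac{1}{\lambda_i}+\sum_{i\in[k+1,s]}\frac{1}{\lambda_i}=\sum_{i\in[s]}\frac{1}{\lambda_i}=\overset{s}{\tr}(\bm{X}^{\dag}),
\end{align*}
which is exactly the identity we need.

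I expect this forcing argument to be the main obstacle: the point of $\Phi_s$ is that it is a genuine relaxation objective that can strictly underestimate $\overset{s}{\tr}(\cdot^{\dag})$ at fractional or higher-rank matrices, and it is only the rigidity of \Cref{lem:kappa} at rank-$s$ matrices — average equals maximum forcing the block to be constant — that makes the relaxation tight on integer points. As a preliminary bookkeeping step I would also confirm the matrix reformulation \eqref{a_eq_obje}$=$\eqref{a_eq_obj}, namely that for fixed binary $\bm{x}$ the minimizing $\bm{X}$ in \eqref{a_eq_obje} is $\bm{X}=\sum_{i\in[n]}x_i\bm{v}_i\bm{v}_i^\top$ itself, since $\overset{s}{\tr}(\cdot^{\dag})$ is nonincreasing in the positive semidefinite order; this mirrors the determinant argument behind \Cref{primal} for the ordinary MESP.
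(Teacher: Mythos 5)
Your proposal is correct and follows essentially the same route as the paper, which omits this proof as being analogous to that of Theorem \ref{primal}: reduce to the pointwise identity $\Phi_s(\bm{X})=\overset{s}{\tr}(\bm{X}^{\dag})$ at binary feasible points, dispose of the rank-deficient case where both sides are $+\infty$, and in the rank-$s$ case use the defining inequality of $k$ in \Cref{lem:kappa} to force $\lambda_{k+1}=\cdots=\lambda_s$ so that the averaged term collapses to $\sum_{i\in[k+1,s]}1/\lambda_i$. Your arithmetic-mean-equals-maximum argument for the forcing step is a clean, slightly more explicit version of the paper's "uniqueness of $k$" observation.
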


\begin{proof}
	The proof is similar to that of Theorem \ref{primal} and is thus omitted. 
	\qed
\end{proof}

\subsection{Volume Sampling Algorithm} 
In this subsection, we present a polynomial-time volume sampling algorithm for A-MESP, which has been applied to the generalized A-Optimal design \citep{derezinski2017unbiased,nikolov2019proportional}. 
A size-$s$ subset $S\subseteq [n]$ is sampled with the probability
$$\mathbb{P}[\tilde{S}= S] := \frac{\prod_{i \in S}\hat{x}_i \overset{s}{\det}(\sum_{i \in S} \bm{v}_i \bm{v}_i^{\top})}{\sum_{\bar{S}\in \binom{[n]}{s}} \prod_{i \in \bar{S}} \hat{x}_i \overset{s}{\det}(\sum_{i \in \bar{S}} \bm{v}_i \bm{v}_i^{\top})}.$$
Different from the sampling Algorithm \ref{alg_rand_round}, this probability formula, known as volume sampling, delivers the proportional volume spanned by the selected vectors. Algorithm \ref{alg:volsamp} describes an efficient implementation of this volume sampling algorithm, with running time complexity $O(n^5)$. 

Next, we analyze the approximation ratio of the volume sampling Algorithm \ref{alg:volsamp}. We start with the following observation.
\begin{algorithm}[ht]
	\caption{Efficient Implementation of Volume Sampling Procedure }
	\label{alg:volsamp}
	\begin{algorithmic}[1]
		\State \textbf{Input:} $n\times n$ matrix $\bm{C} \succeq 0$ of rank $d$ and integer $s \in [d]$
		\State Let $\bm{\hat x}$ is an optimal solution of A-PC 
		\State Initialize chosen set $\tilde{S}=\emptyset$ and unchosen set $T= \emptyset$
		\State Two factors: $A_1=\sum_{{S}\in \binom{[n]}{s}} \left(\prod_{i \in {S}} \hat{x}_i\right) {\det}\left(\bm{V}_{S}^{\top} \bm{V}_S\right),A_2=0$
		\For{$j=1,\cdots,n$}
		\State Let $A_2=\sum_{{S}\in \binom{[n]}{s}, \tilde{S} \subseteq S, T\cap S = \emptyset} \left(\prod_{i \in {S}} \hat{x}_i\right) {\det}\left(\bm{V}_{S}^{\top} \bm{V}_S\right)$
		\State Sample a $(0,1)$ uniform random variable $U$
		\If{$A_2/A_1 \geq U$}
		\State Add $j$ to set $\tilde{S}$
		\State $A_1=A_2$
		\Else
		\State Add $j$ to set $T$
		\State $A_1=A_1-A_2$
		\EndIf
		\EndFor
		\State Output $\tilde{S}$%
	\end{algorithmic}
\end{algorithm}

\begin{restatable}{lemma}{lemlowbnd} \label{lem:lowbnd}
	For any feasible solution $\bm{x} $ to A-PC \eqref{a_eq_pcont}, let $\bm{\lambda} \in \Re_{+}^d$ denote the vector of eigenvalues of matrix $\sum_{i \in [n]} x_i \bm{v}_i \bm{v}_i^{\top}$, then we have
	\begin{align}
	\Phi_s \bigg(\sum_{i \in [n]} x_i \bm{v}_i \bm{v}_i^{\top}\bigg) \ge \frac{E_{s-1}(\bm{\lambda})}{E_{s}(\bm{\lambda})}, \label{eq_relaxation_A_PC}
	\end{align}
	where function $E_{s}(\cdot)$ is introduced in \Cref{def:elem}.
\end{restatable}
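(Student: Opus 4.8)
The plan is to prove the equivalent eigenvalue inequality $\Phi_s(\bm X)\,E_s(\bm\lambda)\ge E_{s-1}(\bm\lambda)$, where $\bm X=\sum_{i\in[n]}x_i\bm v_i\bm v_i^\top$ and $\bm\lambda=(\lambda_1\ge\cdots\ge\lambda_d)$ are its eigenvalues; I may assume $\mathrm{rank}(\bm X)\ge s$, since otherwise $\Phi_s(\bm X)=+\infty$ and the claim is immediate. Writing $k$ for the integer of \Cref{lem:kappa}, $t:=\frac1{s-k}\sum_{i\in[k+1,d]}\lambda_i$ (so that $\lambda_k>t\ge\lambda_{k+1}$), $H:=[k]$ and $B:=[k+1,d]$, I expand $\Phi_s(\bm X)\,E_s(\bm\lambda)=\sum_{i\in H}\frac{E_s(\bm\lambda)}{\lambda_i}+\frac{(s-k)^2}{(s-k)t}E_s(\bm\lambda)$ and aim to dominate $E_{s-1}(\bm\lambda)$ term by term.

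For the first (head) block, splitting $E_s$ according to whether the index $i$ is used gives $E_s(\bm\lambda)/\lambda_i\ge E_{s-1}^{(i)}(\bm\lambda)$, where $E_{s-1}^{(i)}$ is the degree-$(s-1)$ elementary symmetric polynomial in all eigenvalues except $\lambda_i$. Summing over $i\in H$ and using the counting identity $\sum_{i\in H}E_{s-1}^{(i)}=\sum_{|T|=s-1}(k-|T\cap H|)\prod_{j\in T}\lambda_j$, then comparing with $E_{s-1}(\bm\lambda)=\sum_{|T|=s-1}\prod_{j\in T}\lambda_j$ and discarding every term whose coefficient is negative, I reduce the target to $\frac{s-k}{t}E_s(\bm\lambda)\ge \prod_{i\in H}\lambda_i\,e_{s-1-k}(B)$, where $e_\ell(B)$ is the elementary symmetric polynomial of the tail eigenvalues. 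Using $E_s(\bm\lambda)\ge\prod_{i\in H}\lambda_i\,e_{s-k}(B)$ (the summand that keeps the entire head) cancels $\prod_{i\in H}\lambda_i$ and leaves the purely tail statement $p\,e_p(B)\ge t\,e_{p-1}(B)$ with $p:=s-k$, whose only surviving data are $|B|=d-k$, $\sum_{i\in B}\lambda_i=pt$, and $\lambda_i\le t$ for all $i\in B$.

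The crux is this tail inequality. After the normalization $y_i:=\lambda_i/t\in[0,1]$ it becomes $p\,e_p(\bm y)\ge e_{p-1}(\bm y)$ subject to $\sum_i y_i=p$. I would prove it probabilistically: let $Z_i\sim\mathrm{Bernoulli}(y_i)$ be independent and $N:=\sum_i Z_i$, so that $\mathbb E[N]=p$ and $e_\ell(\bm y)=\mathbb E\big[\binom{N}{\ell}\big]$. Using the identity $p\binom{N}{p}=(N-p+1)\binom{N}{p-1}$ together with $\mathbb E[N]=p$, one obtains $p\,e_p(\bm y)-e_{p-1}(\bm y)=\mathbb E\big[(N-p)\binom{N}{p-1}\big]=\mathrm{Cov}\big(N,\binom{N}{p-1}\big)$, which is nonnegative because $N$ and $\binom{N}{p-1}$ are both nondecreasing functions of the single random variable $N$ (Chebyshev's sum / FKG inequality). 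This closes the argument.

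The main obstacle is organizing the first reduction so that the head contributions collapse into $E_{s-1}$ with a clean nonnegative remainder and so that dropping the negative-coefficient terms is lossless for the direction we need; both defining features of $k$ must be invoked, namely $\lambda_{k+1}\le t$ (equivalently $y_i\le1$ on the tail) and the averaging identity $\sum_{i\in B}\lambda_i=(s-k)t$, since the tail inequality fails without the box constraint $y_i\le1$. I expect the symmetric-polynomial bookkeeping (the identity $\sum_{i\in H}E_{s-1}^{(i)}=\sum_{T}(k-|T\cap H|)\prod_{j\in T}\lambda_j$ and the subsequent term-by-term comparison) to be the most delicate step, whereas the tail inequality itself becomes routine once recast through the Bernoulli variable $N$.
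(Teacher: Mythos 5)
Your proof is correct, and it takes a genuinely different route from the paper's. The paper's argument is very short: it defines the ``flattened'' spectrum $\beta_i=\lambda_i$ for $i\in[k]$, $\beta_i=\frac{1}{s-k}\sum_{j\in[k+1,d]}\lambda_j$ for $i\in[k+1,s]$, and $\beta_i=0$ otherwise, observes that $\Phi_s(\bm X)=E_{s-1}(\bm\beta)/E_s(\bm\beta)$ and that $\bm\lambda$ is majorized by $\bm\beta$ (this uses $\lambda_{k+1}\le t$, exactly as your box constraint does), and then invokes the Schur-convexity of $E_{s-1}(\cdot)/E_s(\cdot)$, citing theorem 3.1 of \cite{guruswami2012optimal}. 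You instead prove the inequality $\Phi_s(\bm X)E_s(\bm\lambda)\ge E_{s-1}(\bm\lambda)$ from scratch: the head-block splitting $E_s/\lambda_i\ge E_{s-1}^{(i)}$, the counting identity, and the reduction to the single-block inequality $p\,e_p(\bm y)\ge e_{p-1}(\bm y)$ under $\sum_i y_i=p$, $y_i\in[0,1]$ are all sound, and your Bernoulli/covariance argument via $p\binom{N}{p}=(N-p+1)\binom{N}{p-1}$ and $\mathrm{Cov}\bigl(N,\binom{N}{p-1}\bigr)\ge 0$ for comonotone functions of $N$ correctly closes it. What the paper's route buys is brevity, at the price of an external Schur-convexity citation; what yours buys is a self-contained, elementary proof that also makes transparent exactly where the two defining properties of $k$ enter (positivity of the head eigenvalues to allow division, and $\lambda_{k+1}\le t$ to make the tail inequality true --- as your two-variable counterexample with $y_1=2$, $y_2=0$ shows, it fails without the box constraint). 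The cost is the symmetric-polynomial bookkeeping, but you execute it correctly, including the observation that dropping the negative-coefficient terms only strengthens what must be shown.
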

\begin{proof}
	See Appendix \ref{proof_lem_lowbnd}. \qed
\end{proof}

Observe that the right-hand side of the inequality \eqref{eq_relaxation_A_PC} is equivalent to the relaxation bound of A-MESP proposed by \cite{nikolov2019proportional}. Hence, Lemma \ref{lem:lowbnd} also indicates that our proposed bound is stronger than the existing one. The following theorem shows that we further improve the approximation ratio of the volume sampling Algorithm \ref{alg:volsamp}.
\begin{restatable}{theorem}{themasampling} \label{them:asamp}
	Given an optimal solution $\bm{\hat x}$ to A-PC, the volume sampling Algorithm \ref{alg:volsamp}
	yields a $\min(s, n-s+1)$-approximation ratio of A-MESP, i.e.,
	$$\mathbb{E} \bigg[ \overset{s}{{\tr}} \bigg[ \bigg(\sum_{i \in \tilde{S}}  \bm{v}_i\bm{v}_i^\top \bigg)^{\dag} \bigg] \bigg] \le \min(s,n-s+1) z_{A}^*.$$
\end{restatable}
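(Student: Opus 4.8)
The plan is to evaluate the expected objective of the volume sampling Algorithm~\ref{alg:volsamp} in closed form and bound it using a cofactor expansion of the trace of the inverse together with the relaxation bound of Lemma~\ref{lem:lowbnd}. First I would note that every subset $S$ with $\mathbb{P}[\tilde S = S]>0$ must satisfy $\det(\bm{C}_{S,S})>0$, i.e.\ $\{\bm{v}_i\}_{i\in S}$ are linearly independent, so that $\sum_{i\in S}\bm{v}_i\bm{v}_i^\top$ has exactly $s$ nonzero eigenvalues and $\overset{s}{\tr}\big((\sum_{i\in S}\bm{v}_i\bm{v}_i^\top)^{\dag}\big)=\tr(\bm{C}_{S,S}^{-1})$. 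Writing $\bm{X}=\sum_{i\in[n]}\hat{x}_i\bm{v}_i\bm{v}_i^\top$ with eigenvalue vector $\bm{\lambda}$, a Cauchy--Binet (elementary-symmetric-polynomial) identity gives the normalizing constant $\sum_{\bar S\in\binom{[n]}{s}}\prod_{i\in\bar S}\hat{x}_i\det(\bm{C}_{\bar S,\bar S})=E_s(\bm{\lambda})$, which is positive since the optimal $\hat{\bm{x}}$ attains the finite value $z_A^{LD}\le z_A^*$. Hence
\[
\mathbb{E}\Big[\overset{s}{\tr}\big((\textstyle\sum_{i\in\tilde S}\bm{v}_i\bm{v}_i^\top)^{\dag}\big)\Big]=\frac{1}{E_s(\bm{\lambda})}\sum_{S\in\binom{[n]}{s}}\Big(\prod_{i\in S}\hat{x}_i\Big)\det(\bm{C}_{S,S})\,\tr(\bm{C}_{S,S}^{-1}).
\]

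Next I would apply the adjugate identity $\det(\bm{C}_{S,S})\tr(\bm{C}_{S,S}^{-1})=\sum_{j\in S}\det(\bm{C}_{S\setminus\{j\},S\setminus\{j\}})$, valid for the nonsingular $S$ contributing to the numerator. Since $\bm{C}\succeq 0$ makes every principal minor nonnegative and $\hat{x}_i\ge 0$, extending this cofactor sum to all size-$s$ subsets only adds nonnegative terms and thus yields a valid upper bound on the numerator. Re-indexing by $T=S\setminus\{j\}$ with $|T|=s-1$ and $j\notin T$, and using $\prod_{i\in S}\hat{x}_i=\hat{x}_j\prod_{i\in T}\hat{x}_i$, the extended numerator becomes $\sum_{T\in\binom{[n]}{s-1}}\prod_{i\in T}\hat{x}_i\det(\bm{C}_{T,T})\big(\sum_{j\in[n]\setminus T}\hat{x}_j\big)$. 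The crucial estimate is then $\sum_{j\in[n]\setminus T}\hat{x}_j\le\min(s,n-s+1)$: the bound $s$ follows from $\sum_{j\notin T}\hat{x}_j=s-\sum_{i\in T}\hat{x}_i\le s$, while the bound $n-s+1$ follows from $|[n]\setminus T|=n-s+1$ together with $\hat{x}_j\le 1$. This produces exactly the claimed factor, leaving $\sum_{T}\prod_{i\in T}\hat{x}_i\det(\bm{C}_{T,T})=E_{s-1}(\bm{\lambda})$.

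Combining these steps gives $\mathbb{E}[\cdots]\le\min(s,n-s+1)\,E_{s-1}(\bm{\lambda})/E_s(\bm{\lambda})$. Finally I would invoke Lemma~\ref{lem:lowbnd}, which states $\Phi_s(\bm{X})\ge E_{s-1}(\bm{\lambda})/E_s(\bm{\lambda})$, in the chain $E_{s-1}(\bm{\lambda})/E_s(\bm{\lambda})\le\Phi_s(\bm{X})=z_A^{LD}\le z_A^*$, where the equality uses optimality of $\hat{\bm{x}}$ for A-PC and the last inequality is the weak duality of Theorem~\ref{them:ald}. Multiplying through by $\min(s,n-s+1)$ yields $\mathbb{E}[\cdots]\le\min(s,n-s+1)\,z_A^*$, as desired.

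The main obstacle I anticipate is the bookkeeping around singular subsets: one must confirm both that they carry zero sampling probability and that replacing the probability-weighted numerator by the full cofactor sum over all $T$ is legitimate, which rests precisely on the nonnegativity of the principal minors of $\bm{C}\succeq 0$. The Cauchy--Binet identities for $E_s$ and $E_{s-1}$ are the other technical ingredient, but these are standard; the genuinely clever observation is that the single uniform bound $\sum_{j\notin T}\hat{x}_j\le\min(s,n-s+1)$ delivers the entire approximation factor at once.
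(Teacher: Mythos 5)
Your proposal is correct and follows essentially the same route as the paper's proof: the cofactor/adjugate expansion of $\det(\bm{C}_{S,S})\tr(\bm{C}_{S,S}^{-1})$, the interchange of summation to isolate the factor $\sum_{j\notin T}\hat{x}_j\le\min(s,n-s+1)$, Cauchy--Binet to collapse the sums into $E_{s-1}(\bm{\lambda})/E_s(\bm{\lambda})$, and finally Lemma~\ref{lem:lowbnd} with weak duality. Your explicit handling of the singular subsets (zero sampling probability, and the extension of the cofactor sum being an inequality by nonnegativity of principal minors) is a slightly more careful rendering of a step the paper treats implicitly, but the argument is the same.
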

\begin{proof}
	See Appendix \ref{proof_them_asampling}. \qed
\end{proof}

Note that this approximation ratio improves the one stated in theorem A.3 \citep{nikolov2019proportional}, in particular, if $s\geq \frac{n+1}{2}$, our approximation ratio is strictly better. Since we use the same volume sampling procedure, its deterministic implementation follows exactly from appendix B in \cite{nikolov2019proportional} and is thus omitted here.

\subsection{Local Search Algorithm for A-MESP}
This subsection analyzes the local search algorithm to solve A-MESP, which is presented in Algorithm~\ref{algo:aloc}. The efficient implementation straightforwardly follows from the local search Algorithm \ref{algo:effls} in Section \ref{sec:loc} and is thus omitted. Therefore, we mainly focus on deriving the approximation ratio of the local search Algorithm~\ref{algo:aloc}.
\begin{algorithm}[htbp]
	\caption{Local Search Algorithm} \label{algo:aloc}
	\begin{algorithmic}[1]
		\State \textbf{Input:} $n\times n$ matrix $\bm{C} \succeq 0$ {of rank $d$ and integer $s \in [d]$}
		\State Let $\bm{C}=\bm{V}^{\top}\bm{V}$ denote its Cholesky factorization where $\bm{V} \in \Re^{d\times n}$
		\State Let $\bm{v}_i \in \Re^d$ denote the $i$-th column vector of matrix $\bm{V}$ for each $i \in [n]$
		\State Initial subset $\hat{S} \subseteq [n]$ of size $s$ such that $\{\bm{v}_i\}_{i\in \hat{S}}$ are linearly independent
		\Do
		\For{each pair {$(i,j) \in \hat{S} \times ([n]\setminus \hat{S})$}}
		\If{$\overset{s}{\tr} \left(\sum_{i \in \hat{S} \cup \{j\} \setminus \{i\}} \bm{v}_i\bm{v}_i^{\top}\right) <  \overset{s}{\tr} \left(\sum_{i \in \hat{S}} \bm{v}_i\bm{v}_i^{\top}\right)$}
		\State Update $\hat{S} := \hat{S} \cup \{j\} \setminus \{i\}$
		\EndIf
		\EndFor
		\doWhile{there is still an improvement}
		\State \textbf{Output:} $\hat S$%
	\end{algorithmic}
\end{algorithm}

Let us begin with the following local optimality condition for the \Cref{algo:aloc}.
\begin{restatable}{lemma}{lemarankupd} \label{lem:arankupd}
	Suppose that $\hat{S}$ is the output of the local search Algorithm \ref{algo:aloc} and $\bm{X}=\sum_{i \in \hat{S}}\bm{v}_i\bm{v}_i^{\top}$, for each pair $(i,j)\in \hat{S} \times ([n]\setminus \hat{S})$, the following inequality always holds
	\begin{align*}
	\bm{v}_i^{\top} (\bm{X}^{\dag})^3 \bm{v}_i \bm{v}_j^{\top} (\bm{I}_n- \bm{X}^{\dag}\bm{X})\bm{v}_j 
	\le \bm{v}_i^{\top} (\bm{X}^{\dag})^2 \bm{v}_i + \bm{v}_i^{\top} (\bm{X}^{\dag})^2 \bm{v}_i\bm{v}_j^{\top} \bm{X}^{\dag} \bm{v}_j- 2\bm{v}_i^{\top} (\bm{X}^{\dag})^2 \bm{v}_j\bm{v}_i^{\top} \bm{X}^{\dag} \bm{v}_j.
	\end{align*}
\end{restatable}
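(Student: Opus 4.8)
The plan is to follow the same template used for the log-determinant version in \Cref{lem:rankupd2}: start from the local optimality condition of \Cref{algo:aloc}, subtract a common rank-$(s-1)$ term from both sides, express both differences in closed form via the rank-one update identities of \Cref{lem:rankupd}, and then reduce to the claimed inequality by elementary algebra. Since $\hat{S}$ is the output with $|\hat{S}|=s$ and $\{\bm{v}_i\}_{i\in\hat{S}}$ linearly independent, $\bm{X}=\sum_{i\in\hat{S}}\bm{v}_i\bm{v}_i^{\top}$ has rank exactly $s$ and $\bm{X}_{-i}=\bm{X}-\bm{v}_i\bm{v}_i^{\top}$ has rank $s-1$; hence $\overset{s}{\tr}(\bm{X}^{\dag})=\tr(\bm{X}^{\dag})$, $\overset{s-1}{\tr}(\bm{X}_{-i}^{\dag})=\tr(\bm{X}_{-i}^{\dag})$, and for $\bm{v}_j\notin\col(\bm{X}_{-i})$ also $\overset{s}{\tr}((\bm{X}_{-i}+\bm{v}_j\bm{v}_j^{\top})^{\dag})=\tr((\bm{X}_{-i}+\bm{v}_j\bm{v}_j^{\top})^{\dag})$. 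I would first rewrite the local optimality condition $\overset{s}{\tr}((\bm{X}_{-i}+\bm{v}_j\bm{v}_j^{\top})^{\dag})\geq\overset{s}{\tr}(\bm{X}^{\dag})$ by subtracting $\overset{s-1}{\tr}(\bm{X}_{-i}^{\dag})$ from both sides.

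The key new ingredient, absent from the MESP proof, is a trace analog of the determinant updates in \Cref{lem:rankupd}(i)--(ii): I would show
\begin{align*}
\tr(\bm{X}^{\dag})-\tr(\bm{X}_{-i}^{\dag})=\frac{1+\bm{v}_i^{\top}\bm{X}_{-i}^{\dag}\bm{v}_i}{\bm{v}_i^{\top}(\bm{I}_d-\bm{X}_{-i}^{\dag}\bm{X}_{-i})\bm{v}_i},
\end{align*}
and, with $\bm{v}_i$ replaced by $\bm{v}_j$, the identical formula for $\tr((\bm{X}_{-i}+\bm{v}_j\bm{v}_j^{\top})^{\dag})-\tr(\bm{X}_{-i}^{\dag})$. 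This follows by taking the trace of the rank-one pseudoinverse update in \Cref{lem:rankupd}(iii): writing $P=\bm{I}_d-\bm{X}_{-i}^{\dag}\bm{X}_{-i}$ for the orthogonal projection onto $\col(\bm{X}_{-i})^{\perp}$, the two asymmetric cross terms vanish because $P\bm{X}_{-i}^{\dag}=\bm{X}_{-i}^{\dag}P=\bm{0}$, while the last term collapses using $\tr(P\bm{v}_i\bm{v}_i^{\top}P)=\|P\bm{v}_i\|_2^2=\bm{v}_i^{\top}P\bm{v}_i$. Substituting these expressions into the rewritten optimality condition turns it into the single scalar inequality $\frac{1+\bm{v}_j^{\top}\bm{X}_{-i}^{\dag}\bm{v}_j}{\bm{v}_j^{\top}P\bm{v}_j}\geq\frac{1+\bm{v}_i^{\top}\bm{X}_{-i}^{\dag}\bm{v}_i}{\bm{v}_i^{\top}P\bm{v}_i}$.

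Next I would eliminate $\bm{X}_{-i}^{\dag}$ in favor of $\bm{X}^{\dag}$ using \Cref{lem:rankupd}(iv)--(viii). Parts (v) and (vii) give $\bm{v}_i^{\top}\bm{X}^{\dag}\bm{v}_i=1$ and $\bm{v}_i^{\top}P\bm{v}_i=1/\|\bm{X}^{\dag}\bm{v}_i\|_2^2$, while the $\bm{v}_i$-quadratic form of (iv) gives $1+\bm{v}_i^{\top}\bm{X}_{-i}^{\dag}\bm{v}_i=\bm{v}_i^{\top}(\bm{X}^{\dag})^3\bm{v}_i/\|\bm{X}^{\dag}\bm{v}_i\|_2^4$, so the right-hand side of the scalar inequality becomes exactly $\bm{v}_i^{\top}(\bm{X}^{\dag})^3\bm{v}_i/\bm{v}_i^{\top}(\bm{X}^{\dag})^2\bm{v}_i$. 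Expanding the left-hand side with the $\bm{v}_j$-form of (iv) together with (viii) and cross-multiplying (both denominators are positive), the terms proportional to $(\bm{v}_j^{\top}\bm{X}^{\dag}\bm{v}_i)^2\,\bm{v}_i^{\top}(\bm{X}^{\dag})^3\bm{v}_i$ cancel on the two sides, and what remains is precisely the claimed inequality
\begin{align*}
\bm{v}_i^{\top}(\bm{X}^{\dag})^3\bm{v}_i\,\bm{v}_j^{\top}(\bm{I}_d-\bm{X}^{\dag}\bm{X})\bm{v}_j\leq \bm{v}_i^{\top}(\bm{X}^{\dag})^2\bm{v}_i+\bm{v}_i^{\top}(\bm{X}^{\dag})^2\bm{v}_i\,\bm{v}_j^{\top}\bm{X}^{\dag}\bm{v}_j-2\bm{v}_i^{\top}(\bm{X}^{\dag})^2\bm{v}_j\,\bm{v}_i^{\top}\bm{X}^{\dag}\bm{v}_j.
\end{align*}

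Finally, I would dispose of the degenerate case $\bm{v}_j\in\col(\bm{X}_{-i})$, where the swapped matrix is rank-deficient so $\overset{s}{\tr}(\cdot)=\infty$ and the optimality condition holds vacuously. By \Cref{lem:rankupd}(viii) this case forces both $\bm{v}_j^{\top}(\bm{I}_d-\bm{X}^{\dag}\bm{X})\bm{v}_j=0$ and $\bm{v}_j^{\top}\bm{X}^{\dag}\bm{v}_i=0$, whereupon the claimed inequality reduces to $0\leq \bm{v}_i^{\top}(\bm{X}^{\dag})^2\bm{v}_i\,(1+\bm{v}_j^{\top}\bm{X}^{\dag}\bm{v}_j)$, which is immediate from $\bm{X}^{\dag}\succeq 0$. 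The main obstacle I anticipate is establishing the trace-update identity cleanly: unlike the determinant case, \Cref{lem:rankupd} provides no ready-made trace formula, so the crux is recognizing that the projection identities $P\bm{X}_{-i}^{\dag}=\bm{0}$ annihilate the two asymmetric correction terms and leave a single, easily simplified contribution. The subsequent substitution and cancellation are routine but bookkeeping-heavy, and I would organize them so that the common factor involving $(\bm{v}_j^{\top}\bm{X}^{\dag}\bm{v}_i)^2\,\bm{v}_i^{\top}(\bm{X}^{\dag})^3\bm{v}_i$ is seen to cancel before fully expanding.
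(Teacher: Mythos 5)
Your proposal is correct and follows essentially the same route as the paper's proof: both rewrite the local optimality condition $\overset{s}{\tr}((\bm{X}_{-i}+\bm{v}_j\bm{v}_j^{\top})^{\dag})\geq\overset{s}{\tr}(\bm{X}^{\dag})$ using the trace of the rank-one pseudoinverse updates in \Cref{lem:rankupd} (your explicit trace-update identity is exactly what the paper's first equality encodes via Part (iii), and your marginal-gain ratio $\bm{v}_i^{\top}(\bm{X}^{\dag})^3\bm{v}_i/\bm{v}_i^{\top}(\bm{X}^{\dag})^2\bm{v}_i$ is the paper's Part (iv) trace downdate), then substitute Parts (iv) and (viii), cancel the common $(\bm{v}_j^{\top}\bm{X}^{\dag}\bm{v}_i)^2\,\bm{v}_i^{\top}(\bm{X}^{\dag})^3\bm{v}_i$ term, and treat the degenerate case $\bm{v}_j\in\col(\bm{X}_{-i})$ separately. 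The only caveat is cosmetic: the two facts $\bm{v}_j^{\top}(\bm{I}_d-\bm{X}^{\dag}\bm{X})\bm{v}_j=0$ and $\bm{v}_i^{\top}\bm{X}^{\dag}\bm{v}_j=0$ in the degenerate case come from the argument in the proof of \Cref{lem:rankupd2} rather than directly from Part (viii), which only yields $\bm{v}_j^{\top}(\bm{I}_d-\bm{X}_{-i}^{\dag}\bm{X}_{-i})\bm{v}_j=0$ there.
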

\begin{proof}
	See Appendix \ref{proof_lem_arankupd}. \qed
\end{proof}
The local optimality condition inspires us a construction of a feasible solution to A-LD \eqref{a_eq_dual}, which allows the weak duality to bound the output value from the local search \Cref{algo:aloc}.
\begin{restatable}{theorem}{themalocs} \label{them:alocs}
	The local search Algorithm \ref{algo:aloc} yields a $s/2 + {\delta}^{-1}\min\left\{ {\lambda_{\max}(\bm{C})},  n{\delta}+{(n-s)\lambda_{\max}(\bm{C})} \right\}$-approximation ratio for A-MESP, i.e, 
	$$\overset{s}{\tr}\bigg(\sum_{i \in \hat{S}} \bm{v}_i \bm{v}_i^{\top} \bigg) \le \min \left\{\frac{s}{2} \left(1+\frac{\lambda_{\max}(\bm{C})}{\delta}\right), \frac{1}{2} \left(n+s+\frac{(n-s)\lambda_{\max}(\bm{C})}{\delta}\right) \right\} z^*_A, $$
	where $\hat{S}$ is the set produced by Algorithm \ref{algo:aloc}, and $\delta$ is defined in \Cref{lem_hessian}.
\end{restatable}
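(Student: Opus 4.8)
The plan is to certify the bound via weak duality for A-LD \eqref{a_eq_dual}. Since $z^{LD}_A \le z^*_A$, any feasible triple $(\bm{\Lambda},\nu,\bm{\mu})$ gives a lower bound on $z^*_A$, so it suffices to exhibit one whose objective is at least $\frac{1}{C}$ times the value returned by Algorithm~\ref{algo:aloc}. Writing $\bm{X}=\sum_{i\in\hat S}\bm{v}_i\bm{v}_i^\top$ for the rank-$s$ matrix of the output $\hat S$, that value is exactly $a:=\overset{s}{\tr}(\bm{X}^\dag)=\tr(\bm{X}^\dag)$, because $\{\bm{v}_i\}_{i\in\hat S}$ are linearly independent and $\bm{X}$ has precisely $s$ nonzero eigenvalues. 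The whole argument then mirrors the D-optimal analysis of Theorem~\ref{them1}: build a dual feasible $\bm{\Lambda}$ from the local search output and its optimality condition, and control its A-LD objective by $z^*_A$.

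Mirroring the construction in \eqref{feas_lambda}, I would take $\bm{\Lambda}=\tfrac{1}{t^2}\big[(\bm{X}^\dag)^2+c\,(\bm{I}_d-\bm{X}^\dag\bm{X})\big]$, where $t>0$ is a scaling factor and the kernel coefficient $c$ is chosen so that (taking square roots on $\col(\bm{X})$ and $\ker(\bm{X})$ separately) the $d-s$ null-space eigenvalues of $\bm{\Lambda}^{1/2}$ dominate its $s$ range eigenvalues $\{1/(t\lambda_i)\}_{i\in[s]}$, i.e.\ $\sqrt{c}\ge\lambda_{\max}(\bm{X}^\dag)$. With this choice $\underset{s}{\tr}(\bm{\Lambda}^{1/2})=\tfrac1t\tr(\bm{X}^\dag)=\tfrac{a}{t}$, consistent with the exact-minimization identity of Lemma~\ref{lem:aldmax}. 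Completing the triple by the best $(\nu,\bm{\mu})$, the penalty $s\nu+\sum_i\mu_i$ equals the sum of the $s$ largest values of $\bm{v}_i^\top\bm{\Lambda}\bm{v}_i$, which I write as $W/t^2$ with $W=\sum_{\text{top-}s}\bm{v}_i^\top[(\bm{X}^\dag)^2+c(\bm{I}_d-\bm{X}^\dag\bm{X})]\bm{v}_i$. Weak duality gives $z^*_A\ge \tfrac{2a}{t}-\tfrac{W}{t^2}$, and optimizing the scalar $t$ at $t=W/a$ yields the clean inequality $z^*_A\ge a^2/W$. Hence the theorem reduces to the two estimates $W\le C_1 a$ and $W\le C_2 a$, since then $z^*_A\ge a^2/W\ge a/\min\{C_1,C_2\}$.

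To bound $W$ I would use the local optimality condition of Lemma~\ref{lem:arankupd}. Fixing $j\in[n]\setminus\hat S$ and summing that inequality over all $i\in\hat S$, the identities $\sum_{i\in\hat S}\bm{v}_i\bm{v}_i^\top=\bm{X}$, $\bm{v}_i^\top\bm{X}^\dag\bm{v}_i=1$, and $\bm{v}_i^\top(\bm{I}_d-\bm{X}^\dag\bm{X})=\bm{0}$ from Lemma~\ref{lem:rankupd} collapse the cross terms into the single clean inequality $\tr((\bm{X}^\dag)^2)\,P_j+2R_j\le a\,(1+Q_j)$, where $P_j=\bm{v}_j^\top(\bm{I}_d-\bm{X}^\dag\bm{X})\bm{v}_j$, $Q_j=\bm{v}_j^\top\bm{X}^\dag\bm{v}_j$, and $R_j=\bm{v}_j^\top(\bm{X}^\dag)^2\bm{v}_j$. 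This controls each out-of-basis term $R_j+cP_j$ by a multiple of $\tfrac{a}{2}(1+Q_j)$, and since $Q_j\le\lambda_{\max}(\bm{X}^\dag)\|\bm{v}_j\|_2^2\le \lambda_{\max}(\bm{C})/\delta$ (using $\lambda_s(\bm{X})=\lambda_{\min}(\bm{C}_{\hat S,\hat S})\ge\delta$ and $\|\bm{v}_j\|_2^2=\bm{C}_{jj}\le\lambda_{\max}(\bm{C})$), each such term is at most $\tfrac{a}{2}(1+\lambda_{\max}(\bm{C})/\delta)$, while the in-basis terms satisfy $\sum_{i\in\hat S}\bm{v}_i^\top[(\bm{X}^\dag)^2]\bm{v}_i=\tr(\bm{X}^\dag)=a$. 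Bounding the top-$s$ sum $W$ by $s$ times its largest single term produces the $O(s)$ ratio $C_1=\tfrac{s}{2}(1+\lambda_{\max}(\bm{C})/\delta)$, whereas bounding it by the full sum over all $n$ indices (the $s$ in-basis terms plus the $n-s$ out-of-basis terms) produces the $O(n)$ ratio $C_2=\tfrac12(n+s+(n-s)\lambda_{\max}(\bm{C})/\delta)$; taking the better of the two gives the stated $\min\{C_1,C_2\}$.

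The main obstacle is the treatment of $\ker(\bm{X})$. Unlike the nonsingular A-optimal design, an out-of-basis $\bm{v}_j$ carries a nonzero component $P_j$ in the null space, which simultaneously helps the dual objective (through the null-space eigenvalues of $\bm{\Lambda}^{1/2}$, forcing $\sqrt c$ large) and hurts the penalty (through the coefficient $c$ multiplying $P_j$). Calibrating $c$ so that $\underset{s}{\tr}(\bm{\Lambda}^{1/2})$ keeps the clean value $a/t$ while $R_j+cP_j$ still stays inside the factor $\tfrac12 a(1+Q_j)$ dictated by Lemma~\ref{lem:arankupd} is exactly where the constants of $C_1$ and $C_2$ are pinned down, and a perturbation of the zero eigenvalues of $\bm{X}$ may be needed to keep the square-root and pseudoinverse manipulations rigorous. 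By contrast, the telescoping of the local optimality condition and the scalar optimization over $t$ are routine once Lemmas~\ref{lem:rankupd}, \ref{lem:aldmax}, and \ref{lem:arankupd} are in hand.
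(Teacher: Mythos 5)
Your proposal is correct and takes essentially the same route as the paper's proof: a dual certificate of the form $\bm{\Lambda}\propto(\bm{X}^\dag)^2+\lambda_s^{-2}(\bm{I}_d-\bm{X}^\dag\bm{X})$ for A-LD, the local-optimality inequality of \Cref{lem:arankupd} summed over $i\in\hat S$ to control the out-of-basis constraint values, and optimization of the scaling $t$ (your packaging $z^*_A\ge a^2/W$ with $W$ the top-$s$ sum of the constraint values is just a cleaner form of the paper's two explicit $(\nu,\bm{\mu})$ candidates). The one step to tighten is that with $c=\lambda_s^{-2}$ the pointwise bound $R_j+cP_j\le\frac{a}{2}(1+Q_j)$ need not hold (it would require $\lambda_s^{-2}\le\frac{1}{2}\tr((\bm{X}^\dag)^2)$, which can fail); instead one keeps the leftover $cP_j$ and absorbs it together with $aQ_j$ into $a\lambda_s^{-1}\|\bm{v}_j\|_2^2\le a\,\lambda_{\max}(\bm{C})/\delta$, exactly the calibration you flag as the delicate point, after which your constants $C_1$ and $C_2$ come out as stated.
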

\begin{proof}
	See Appendix \ref{proof_them_alocs}. \qed
\end{proof}

We remark that the result in \Cref{them:alocs} is the first-known approximation ratio of the local search Algorithm \ref{algo:aloc} for A-MESP. 
Finally, \Cref{table:amespbounds} summarizes the existing and our developed approximation ratios for A-MESP.

\begin{table}[htbp]
	\centering
	\caption{Summary of Approximation Algorithms for A-MESP} 
	\label{table:amespbounds}
		\setlength{\tabcolsep}{1.5pt}\renewcommand{\arraystretch}{1.6}
		\begin{tabular}{| c | c |c |}
			\hline
			&\multicolumn{1}{c|}{Algorithm} & \multicolumn{1}{c|}{Approximation Ratio}      \\  
			\hline
			{\textbf{Literature}}&Volume Sampling \citep{nikolov2019proportional}& $s$\\
			\hline
			\multirow{2}{5.5em}{\textbf{This paper}}	& Volume Sampling \Cref{alg:volsamp} & $\min\{s, n-s+1\} $  \\
			\cline{2-3}
			&Local Search \Cref{algo:aloc}& $ s/2 + {\delta}^{-1}\min\left\{ {\lambda_{\max}(\bm{C})},  n{\delta}+{(n-s)\lambda_{\max}(\bm{C})} \right\}$  \\
			\hline
		\end{tabular}%
\end{table}

\section{Conclusion} \label{sec:con}
This paper studies the maximum entropy sampling problem (MESP) and develops  two approximation algorithms with provable performance guarantees. Observing that the objective function of MESP is neither convex nor concave, we derive a new convex integer program for MESP through the Lagrangian dual relaxation and its primal characterization. 
Using the optimal solution of the primal characterization, we develop an efficient sampling algorithm and prove its approximation bound, which improves the best-known bound in literature. %
By developing new mathematical tools for the singular matrices and analyzing the Lagrangian dual of the proposed convex integer program, we further analyze the local search algorithm and prove its first-known approximation bound for MESP. The proof techniques that we developed inspire us an efficient implementation of the local search algorithm. Our numerical study shows that both algorithms work very well, and the local search algorithm performs the best and consistently yields near-optimal solutions. Finally, we extend all analyses to the A-Optimal MESP (A-MESP), develop a new convex integer program and study the volume sampling and local search algorithms with their approximation ratios. Our proposed algorithms are coded and released as open-source software. One possible future direction is to study MESP with general distributions.

\section*{Acknowledgment}
This research has been supported in part by the National Science Foundation grants 2246414 and 2246417. Valuable
comments from Prof. Jon Lee from the University of Michigan, the editors, and anonymous reviewers are gratefully appreciated. The authors also thank Prof. Kurt Anstreicher from the University of Iowa for providing the numerical instances.

\bibliography{reference.bib}
\newpage
\titleformat{\section}{\large\bfseries}{\appendixname~\thesection .}{0.5em}{}
\begin{appendices}

\section{Proofs}\label{proofs}
\subsection{Proof of  \Cref{pro:obj}} \label{proof_pro_obj}
\proobj*
	\begin{proof}\textbf{Part (i).} The discrete-submodularity has been proved by \cite{kelmans1983multiplicative}.
	
	We show the other three properties using the following example.
	
	\begin{example} \label{eg:prop}
		For MESP \eqref{eq_obj}, let $n=d=2$, $\bm{v}_1 = (\sqrt{a}, 0)^{\top}$ and $\bm{v}_2 = (0, \sqrt{b})^{\top}$. 
	\end{example}
	\noindent	\textbf{Part (ii) \& Part (iv).} 	In \Cref{eg:prop}, when $a=2$ and $b= 1/4$, we have
	\begin{align*}
	\log \det^1 \left (\bm{v}_1\bm{v}_1^\top \right ) = \log 2 \ge \log \det^2 \left (\bm{v}_1\bm{v}_1^\top + \bm{v}_2\bm{v}_2^\top\right ) = \log \frac{1}{2} < 0,
	\end{align*}
	which proves that the objective function of MESP is not monotonic and is not always nonnegative.
	
	\noindent	\textbf{Part (iii).} 			 In \Cref{eg:prop}, let us consider two feasible solutions $\bm{x}^1=(1, 0)^{\top}$ and $\bm{x}^2=(0, 1)^{\top}$ with $s=1$. If $a=1$ and $b=1$, then we have
	\begin{align*}
	\frac{1}{2}\log \det^1 \left ( \bm{v}_1\bm{v}_1^\top \right )+\frac{1}{2}\log \det^1 \left ( \bm{v}_2\bm{v}_2^\top \right ) = 0 \ge \log \det^1 \bigg (\sum_{i\in[n]} \frac{x_i^1+x_i^2}{2} \bm{v}_i\bm{v}_i^\top \bigg ) =  \log \frac{1}{2}  ,
	\end{align*}
	which disproves the concavity.		
	
	If $a=16$ and $b=1$, then we have
	\begin{align*}
	\frac{1}{2}\log \det^1 \left ( \bm{v}_1\bm{v}_1^\top \right )+\frac{1}{2}\log \det^1 \left ( \bm{v}_2\bm{v}_2^\top \right ) = \log 4 \le \log \det^1 \bigg (\sum_{i\in[n]} \frac{x_i^1+x_i^2}{2} \bm{v}_i\bm{v}_i^\top \bigg ) =  \log 8  ,
	\end{align*}
	which disproves the convexity. \qed
\end{proof}

\subsection{Proof of \Cref{lem:ldmax}} \label{proof_lem_ldmax}
Before proving \Cref{lem:ldmax}, we  first show the following technical lemma.
\begin{restatable}{lemma}{lemmin} \label{lem:min} 
	Given $\lambda_1 \ge \cdots \ge \lambda_d\ge 0$ and $0\le \beta_1 \le \cdots \le \beta_d$, we have
	\begin{enumerate}[(i)]
		\item 		\begin{align}
		\bm{\lambda} :=	\argmin_{\begin{subarray}{c}\bm{\theta}\in \Re_{+}^d,\\
			\theta_1 \ge \cdots \ge \theta_d
			\end{subarray}} \Bigg\{ \sum_{i \in [d]} \theta_{i} \beta_{i} :\sum_{i\in [t]} \theta_{i} \le \sum_{i\in [t]} \lambda_{i} , \forall t \in [d-1], \sum_{i\in [d]} \theta_{i} = \sum_{i\in [d]} \lambda_{i}\Bigg\} 
		, \label{min_lambda}
		\end{align}
		\item 
		\begin{align}
		\bm{\beta} :=	\argmin_{\begin{subarray}{c}\bm{\theta}\in \Re_{+}^d,\\
			\theta_1 \le \cdots \le \theta_d
			\end{subarray}} \Bigg\{ \sum_{i \in [d]} \theta_{i} \lambda_{i} :\sum_{i\in [t+1,d]} \theta_{i} \le \sum_{i\in [t+1,d]} \beta_{i} , \forall t \in [d-1], \sum_{i\in [d]} \theta_{i} = \sum_{i\in [d]} \beta_{i}\Bigg\} . \label{min_beta}
		\end{align}
	\end{enumerate}
\end{restatable}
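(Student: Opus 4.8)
The plan is to prove both identities by summation by parts (Abel summation), matching the monotonicity of the fixed multiplier sequence against the direction in which the partial-sum constraints act, so that each summand is bounded below by the corresponding summand produced by the candidate point. Since both objectives are linear in $\bm\theta$ and the feasible regions are polytopes, it suffices to check that the asserted point is feasible and attains an objective value no larger than that of any feasible $\bm\theta$.

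For Part (i), I would first verify that $\bm\theta=\bm\lambda$ is feasible: it is nonnegative and nonincreasing by hypothesis, its prefix-sum constraints $\sum_{i\in[t]}\lambda_i\le\sum_{i\in[t]}\lambda_i$ hold trivially, and the total-sum equality is immediate. Next, writing $\Theta_t:=\sum_{i\in[t]}\theta_i$ (with $\Theta_0=0$) and $\Lambda_t:=\sum_{i\in[t]}\lambda_i$, summation by parts gives
\begin{align*}
\sum_{i\in[d]}\theta_i\beta_i=\Theta_d\beta_d+\sum_{i\in[d-1]}\Theta_i(\beta_i-\beta_{i+1}).
\end{align*}
Because $\bm\beta$ is nondecreasing, each difference $\beta_i-\beta_{i+1}\le 0$; combined with the feasibility constraint $\Theta_i\le\Lambda_i$ for $i\in[d-1]$, this flips to $\Theta_i(\beta_i-\beta_{i+1})\ge\Lambda_i(\beta_i-\beta_{i+1})$. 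Using $\Theta_d=\Lambda_d$ and reversing the summation by parts then yields $\sum_{i\in[d]}\theta_i\beta_i\ge\sum_{i\in[d]}\lambda_i\beta_i$, with equality at $\bm\theta=\bm\lambda$, which establishes optimality.

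Part (ii) is the mirror image, with suffix sums replacing prefix sums. Feasibility of $\bm\theta=\bm\beta$ is immediate. Setting $\bar\Theta_t:=\sum_{i\in[t,d]}\theta_i$ (with $\bar\Theta_{d+1}=0$) and the analogous $\bar B_t$, summation by parts gives
\begin{align*}
\sum_{i\in[d]}\theta_i\lambda_i=\bar\Theta_1\lambda_1+\sum_{i\in[2,d]}\bar\Theta_i(\lambda_i-\lambda_{i-1}).
\end{align*}
Here $\bm\lambda$ is nonincreasing, so $\lambda_i-\lambda_{i-1}\le 0$, and the constraints now read $\bar\Theta_i\le\bar B_i$ for $i\in[2,d]$ together with $\bar\Theta_1=\bar B_1$; this again flips the product inequality in the right direction and delivers $\sum_{i\in[d]}\theta_i\lambda_i\ge\sum_{i\in[d]}\beta_i\lambda_i$, attained at $\bm\theta=\bm\beta$.

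The routine part is the bookkeeping of the boundary terms in the summation by parts; the one genuine point requiring care, and the main obstacle to getting every sign right, is ensuring that the form of the partial-sum constraints (prefix sums in Part (i), suffix sums in Part (ii)) is correctly paired with the monotonicity of the appropriate multiplier sequence, since a mismatch would produce a bound in the wrong direction. I would verify this pairing explicitly, and, should uniqueness of the minimizer be needed, note that strict inequalities among the $\beta_i$ (respectively the $\lambda_i$) force the inequality to be strict off the candidate point.
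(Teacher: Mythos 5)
Your proof is correct, and it takes a genuinely different route from the paper's. The paper argues by induction on $d$: since the feasible region is a polytope with no rays, the minimum is attained at an extreme point, and a case analysis on whether some prefix-sum constraint is binding either splits the problem into two smaller instances (handled by the inductive hypothesis) or forces the extreme point to be the constant vector, which is then dominated by $\bm\lambda$ via a Chebyshev-sum-type inequality. Your Abel-summation argument is more direct: it converts the objective into boundary plus telescoped partial-sum terms, and the sign pairing (nondecreasing $\bm\beta$ against prefix-sum upper bounds in Part (i), nonincreasing $\bm\lambda$ against suffix-sum upper bounds in Part (ii)) immediately yields the lower bound $\sum_i\lambda_i\beta_i$, attained at the candidate point. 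A further small advantage of your argument is that it never uses the monotonicity or nonnegativity constraints on $\bm\theta$ — the bound already holds on the larger polytope cut out by the partial-sum constraints alone — whereas the paper's extreme-point analysis is tied to the full constraint set. What the paper's route buys in exchange is an explicit description of the extreme points, but for the purpose of this lemma your argument is shorter and equally rigorous; your closing remark on how strict inequalities among the multipliers would force uniqueness is a correct (and optional) addendum, since the statement only requires $\bm\lambda$ (resp.\ $\bm\beta$) to be a minimizer.
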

\begin{proof}
	To prove Part(i), it needs to show that the vector $\bm{\lambda}\in\Re_{+}^d$ is an optimal solution to the minimization problem in the right-hand size of \eqref{min_lambda}.	We use the induction to prove this result.
	\begin{enumerate}[(a)]
		\item When $d=1$, clearly, there is only one optimal solution, which is $\theta_{1}^* = \lambda_1$.
		\item Suppose that the result holds for any $d < \hat{d}$ where $\hat{d}\geq 1$. Now let us consider the case that $d=\hat{d}$. Since the feasible region of the minimization problem in the right-hand size of \eqref{min_lambda} does not contain a ray, one of its optimal solutions must be an extreme point, which is denoted by $\hat{\theta }$. Then $\hat{\theta }$, as an extreme point, must satisfy at least $d$ binding constraints. There are two cases to be discussed:
		\begin{itemize}
			\item If there exists an integer $\hat t\in [d-1]$ such that $\sum_{i\in [\hat t]} \hat{\theta}_{i} = \sum_{i\in [\hat t]} \lambda_{i} $, then problem \eqref{min_lambda}  can be lower bounded by the sum of the following two minimization problems:
			\begin{align*}
			&\min_{\bm{ \theta}} \left\{ \sum_{i \in [\hat t]}  \theta_{i} \beta_{i} :\sum_{i\in [t]}  \theta_{i} \le \sum_{i\in [t]} \lambda_{i} , \forall t \in [\hat t-1], \sum_{i\in [\hat t]}  \theta_{i} = \sum_{i\in [\hat t]} \lambda_{i}, \theta_{1} \ge \cdots \ge  \theta_{\hat t}\right\},\\
			&\min_{\bm{\theta}} \left\{ \sum_{i \in [\hat t+1,d]}  \theta_{i} \beta_{i} :\sum_{i\in [\hat t+1,t]}  \theta_{i} \le \sum_{i\in [\hat t+1,t]} \lambda_{i} ,  \forall t \in [\hat t+1,d], \sum_{i\in [\hat t+1,d]}  \theta_{i} = \sum_{i\in [\hat t+1, d]} \lambda_{i}, \theta_{\hat t+1} \ge \cdots \ge  \theta_{d}\right\}.
			\end{align*}
			According to the induction, there exists an optimal solution of each minimization problem such that $ \theta_i^*=\lambda_i$ for any $i \in [d]$, which is feasible to the original problem \eqref{min_lambda} and thus is optimal.
			
			\item If there does not exist an integer $\hat t\in [d-1]$ such that $\sum_{i\in [\hat t]} \hat{\theta}_{i} = \sum_{i\in [\hat t]} \lambda_{i} $, then the extreme point $\hat{\bm  \theta}$ must satisfy $\hat{\theta}_{1} = \cdots = \hat{\theta}_{d}=\frac{\sum_{i \in [d]} \lambda_i }{d}$. Given $0\leq\beta_1 \le \cdots \le \beta_d$, we have
			\[\sum_{i \in [d]} \lambda_{i} \beta_{i} \le \frac{\sum_{i \in [d]} \lambda_i }{d} \sum_{i \in [d]}\beta_{i} .\]
		\end{itemize} 
		Therefore, when $d=\hat{d}$, $\bm{\theta}^*=\bm{\lambda}$ is also an optimal solution. 
	\end{enumerate}
	The proof of Part (ii) directly follows from the above if we consider $\bm{\beta} = (\lambda_d, \lambda_{d-1}, \cdots, \lambda_1)^{\top}$, $\bm{\lambda} = (\beta_d, \beta_{d-1}, \cdots, \beta_1 )^{\top}$ and $\bm{\theta} = (\theta_d, \theta_{d-1}, \cdots, \theta_1 )^{\top}$  in Part (i). \qed
\end{proof}

{Now let us  prove \Cref{lem:ldmax}.}

\lemldmax*
\begin{subequations}
	\begin{proof}
	{For any $d \times d $ matrix $\bm{X} \succeq 0$}, suppose that $\bm{\lambda}$ is the vector of its eigenvalues satisfying $\lambda_1 \ge \cdots \ge \lambda_d \ge 0$, and according to the eigendecomposition \citep{abdi2007eigen}, there exists an orthonormal matrix $\bm{Q} $ such that ${\bm{X}} = \bm{Q} \Diag(\bm{\lambda}) \bm{Q}^{\top}$. Then the objective function in the left-hand side of \eqref{ldmax} is equivalent to
	\begin{align*} 
	\log \det^s (\bm{X}) -\tr (\bm{X} \bm{\Lambda}) =  \log \bigg ( \prod_{i\in [s]} \lambda_{i} \bigg )- \tr(\Diag(\bm{\lambda}) \bm{Q}^{\top} \bm{\Lambda} \bm{Q} ) = \log \bigg( \prod_{i\in [s]} \lambda_{i} \bigg ) - \sum_{i \in [d] } \theta_{i} \lambda_{i},
	\end{align*}
	where let ${\bm{\theta}} = \diag(\bm{Q}^{\top} \bm{\Lambda} \bm{Q})$. Thus, the left-hand side of \eqref{ldmax} becomes
	\begin{align*}
\max_{\begin{subarray}{c}
		\bm{\lambda} \in \Re_{+}^d,\\
		\lambda_1 \ge \cdots \ge \lambda_d \ge 0
		\end{subarray}} \Bigg \{ \log \bigg( \prod_{i\in [s]} \lambda_{i} \bigg ) - 	\min_{\bm{Q}, \bm{\theta} \in \Re_{+}^d}  \bigg \{ \sum_{i \in [d] } \theta_{i} \lambda_i :  {\bm{\theta}} = \diag(\bm{Q}^{\top} \bm{\Lambda} \bm{Q}), \bm{Q} \textrm{ is orthonormal}\bigg \} \Bigg \}.
	\end{align*}
	
Since any permutation matrix is orthonormal,  for any fixed $\lambda_1 \ge \cdots \ge \lambda_d$, to maximize $-\sum_{i \in [d] } \theta_{i} \lambda_{i}$, we must have $\theta_1\le \cdots \le \theta_d$ based on the rearrangement inequality \citep{hardy1952inequalities}. Thus, the left-hand side of \eqref{ldmax} is further reduced to
	\begin{align}
\max_{\begin{subarray}{c}
	\bm{\lambda} \in \Re_{+}^d,\\
	\lambda_1 \ge \cdots \ge \lambda_d \ge 0
	\end{subarray}} \Bigg \{ \log \bigg( \prod_{i\in [s]} \lambda_{i} \bigg ) - 		\min_{\begin{subarray}{c}\bm{Q}, \bm{\theta} \in \Re_{+}^d\\
	\theta_1\le \cdots \le \theta_d
	\end{subarray}}  \bigg \{ \sum_{i \in [d] } \theta_{i} \lambda_i :  {\bm{\theta}} = \diag(\bm{Q}^{\top} \bm{\Lambda} \bm{Q}), \bm{Q} \textrm{ is orthonormal}\bigg \} \Bigg \}. 
\label{eq_lhs}
\end{align}
	
{Let $\bm\beta$ denote the vector of eigenvalues of $\bm{\Lambda}$ such that $\beta_1 \le \cdots \le \beta_d$ and let ${\bm{\Lambda}} = \bm{P} \Diag(\bm{\beta}) \bm{P}^{\top}$ with an orthonormal matrix $\bm{P}$. Since $\bm{Q}$ is orthonormal, the eigenvalues of $\bm{Q}^{\top} \bm{\Lambda} \bm{Q}$ are also equal to $\bm{\beta}$. According to the well-known majorization inequalities between eigenvalues $\bm{\beta}$ and diagonal entries  $\bm{\theta}$ (see, e.g., \citealt{horn1954doubly,thompson1977singular}), the inner minimization problem in \eqref{eq_lhs} can be lower bounded by}
\begin{align*}
	\min_{\begin{subarray}{c}\bm{\theta}\in \Re_{+}^d,\\
	\theta_1 \le \cdots \le \theta_d
	\end{subarray}} \Bigg\{ \sum_{i \in [d]} \theta_{i} \lambda_{i} :\sum_{i\in [t+1,d]} \theta_{i} \le \sum_{i\in [t+1,d]} \beta_{i} , \forall t \in [d-1], \sum_{i\in [d]} \theta_{i} = \sum_{i\in [d]} \beta_{i}\Bigg\}
\end{align*}
Applying Part (i) in \Cref{lem:min}, an optimal solution to the minimization problem is $\bm{\theta}^*=\bm{\beta}$. Thus, the optimal value of the relaxed minimization problem is $\sum_{i \in [d]} \lambda_{i} \beta_{i}$, which is achieved by letting $\bm{Q}^*=\bm{P}$ and $\bm\theta^*=\bm\beta$ for the inner optimization problem in \eqref{eq_lhs} and is thus optimal. 

Plugging this optimal solution into the inner maximization problem in \eqref{eq_lhs}, we can obtain
	\begin{align}
\max_{\begin{subarray}{c}
	\bm{\lambda} \in \Re_{+}^d,\\
	\lambda_1 \ge \cdots \ge \lambda_d \ge 0
	\end{subarray}} \bigg \{ \log \bigg( \prod_{i\in [s]} \lambda_{i} \bigg ) - 	\sum_{i \in [d]}\beta_{i}\lambda_i \bigg \},
\end{align}
{which can be solved by $\lambda^*_{i}= \frac{1}{\beta_{i}}$ for all $i\in [s]$ and 0 otherwise. Therefore, we have}
	\[\max_{\bm{X}\succeq 0} \left \{	\log \det^s (\bm{X}) -\tr (\bm{X} \bm{\Lambda}) \right\}  =-\log \det_s (\bm{\Lambda}) -s.\]
	This completes the proof. \qed
\end{proof}
\end{subequations}

\subsection{Proof of \Cref{lem:ldmax2}} \label{proof_lem_ldmax2}
\lemldmaxtwo*
\begin{subequations}
	\begin{proof}\noindent\textbf{Part (i).} 		Suppose $\bm{\Lambda}$ has eigenvalues $0<\beta_1 \le \cdots \le \beta_d$ and ${\bm{\Lambda}} = \bm{P} \Diag(\bm{\beta}) \bm{P}^{\top}$ with an orthonormal matrix $\bm{P}$. Then the objective function in the left-hand side of \eqref{ldmin} is equal to
	\begin{align*}
	-\log \det_s (\bm{\Lambda}) +\tr (\bm{X} \bm{\Lambda})= -\log \bigg ( \prod_{i\in [s]} \beta_{i} \bigg) +\tr \left(\bm{P}^{\top} \bm{X} \bm{P} \Diag({\bm \beta}) \right) =  -\log  \bigg ( \prod_{i\in [s]} \beta_{i}  \bigg) +\sum_{i \in [d]} \theta_{i} \beta_{i} ,
	\end{align*}
	where ${\bm{\theta}} = \diag(\bm{P}^{\top} \bm{X} \bm{P})$.

	For any fixed $\beta_1 \le \cdots \le \beta_d$, according to the rearrangement inequality \citep{hardy1952inequalities}, to minimize $\sum_{i \in [d] } \theta_{i} \beta_{i}$, we must have $\theta_1\ge \cdots \ge \theta_d$. Thus, the left-hand side of \eqref{ldmin} becomes
	\begin{align}
	\min_{\begin{subarray}{c}\bm{\beta}\in \Re_{+}^d,\\
		0<\beta_1 \le \cdots \le \beta_d
		\end{subarray}}  \Bigg \{ -\log  \bigg( \prod_{i\in [s]} \beta_{i}  \bigg) + \min_{\begin{subarray}{c}
		\bm{P}, \bm{\theta} \in \Re_{+}^d \\
		\theta_1\ge \cdots \ge \theta_d
		\end{subarray}}  \bigg \{ \sum_{i \in [d] } \theta_{i} \beta_i :  {\bm{\theta}} = \diag(\bm{P}^{\top} \bm{X} \bm{P}), \bm{P} \textrm{ is orthonormal}  \bigg \}  \Bigg\}.\label{eq_ldmin1}
	\end{align}
	As $\bm{P}$ is orthonormal, thus the eigenvalues of $\bm{P}^{\top} \bm{X} \bm{P}$ are also equal to $\bm{\lambda}$. Then the inner minimization problem in \eqref{eq_ldmin1} can be lower bounded by
	\begin{align*}
	\min_{\bm{\theta}} \Bigg\{ \sum_{i \in [d]} \theta_{i} \beta_{i} :\sum_{i\in [t]} \theta_{i} \le \sum_{i\in [t]} \lambda_{i} , \forall t \in [d-1], \sum_{i\in [d]} \theta_{i} = \sum_{i\in [d]} \lambda_{i},\theta_{1} \ge \cdots \ge \theta_{d}\Bigg\} .  
	\end{align*}
According to Part (ii) in \Cref{lem:min}, the optimal value of the inner minimization problem in \eqref{eq_ldmin1} is $\sum_{i \in [d]} \lambda_{i} \beta_{i}$, which is achieved by letting $\bm{P}^*=\bm{Q}$ and $\bm\theta^*=\bm\lambda$. This proves the identity \eqref{ldmin}.

	\noindent\textbf{Part (ii).} Let us introduce an additional variable $\tau$ to differentiate the first $s$ smallest $\bm\beta$ elements and simplify the order constraint in the left-hand problem \eqref{ldmin1} as
	\begin{align}
\min_{\bm{\beta} \in \Re_{+}^d, \tau} \left\{-\sum_{i\in [s]}\log (\beta_i) + \sum_{i \in [d]} \lambda_{i} \beta_i :
	\beta_i \le \tau,  \forall i\in [s],\beta_i \ge \tau, \forall i \in [s+1, d]\right\} . \label{ldmineq2}
	\end{align}
	Let $\bm{\mu} \in \Re^d$ denote the Lagrangian multipliers and the Lagrangian function is
	$$L(\bm{\mu},\bm{\beta},\tau )=-\sum_{i\in [s]}\log (\beta_i) + \sum_{i \in [d]} \lambda_{i} \beta_i +\sum_{i\in [s]}\mu_i(\beta_i - \tau)+\sum_{i\in [s+1,d]}\mu_i(\tau-\beta_i ).$$
	Clearly, as the constraints in the convex program \eqref{ldmineq2} are linear, the relaxed Slater condition holds. Let $(\bm{\mu}^*,\bm{\beta}^*, \tau^* )$ denote the pair of optimal primal and dual solutions. Then the KKT conditions of the convex program \eqref{ldmineq2}  are
	\begin{align*}
	&\frac{\partial L}{\partial \beta_i}(\bm{\mu}^*,\bm{\beta}^*, \tau^* ) = -\frac{1}{\beta_i^*} + \lambda_{i} + \mu_i ^*=0 , \forall  i \in [s],\frac{\partial L}{\partial \beta_i} (\bm{\mu}^*,\bm{\beta}^*, \tau^* )= \lambda_{i}- \mu_i^* =0 , \forall i \in [s+1, d],\\
	&\frac{\partial L}{\partial \tau} (\bm{\mu}^*,\bm{\beta}^*, \tau^* )= \sum_{i\in [s]} \mu_i^* -\sum_{i \in [s+1, d]} \mu_i ^*=0,  \mu_i^*(\beta_i ^*- \tau^* )=0,  \forall  i \in [s],\mu_i^*(\tau^* -\beta_i^*)=0, \forall  i \in [s+1, d],\\
	&\beta_i^* \le \tau^*,  \forall  i \in [s], \beta_i ^*\ge \tau^*, \forall   i \in [s+1, d],\mu_i^* \ge 0 , \forall   i \in [d],
	\end{align*}
	which are necessary and sufficient optimality conditions (see theorem 3.2.4 in \citealt{ben2012optimization}). Recall that matrix $\bm{X}$ has rank $r$ and its eigenvalues are sorted such that $\lambda_1 \ge \cdots \ge \lambda_s \ge \cdots \ge \lambda_r > \lambda_{r+1}=\cdots = \lambda_d =0$. Additionally, according to the KKT conditions, {the optimal solution $\{\beta_{i}\}_{i\in [s]}$ must be sorted in an ascending order,} i.e., $\beta_1 \le \cdots \le \beta_s$. {Thus, let integer $k\in [0,s]$ denote the largest index such that $\beta_i^*<\tau^*$ (by convention, we let $\beta_0^*=0,\lambda_0=\infty$).} Then the above KKT conditions can be simplified as
	\begin{align*}
	&\beta_i^*=\frac{1}{\lambda_i},\mu_i^*=0,\forall i\in [k]; \beta_i^*=\tau^*,\mu_i^*=\frac{1}{\tau^*}-\lambda_i\geq 0, \forall  i \in [k+1,s]; \\
	&\mu_i^*=\lambda_{i}>0 , \beta_i^*=\tau^*,\forall i \in [s+1, r];\mu_i^*=\lambda_{i}=0 , \beta_i^*\geq \tau^*,\forall i \in [r+1, d];\\
	& \sum_{i\in [s]} \mu_i^* -\sum_{i \in [s+1, d]} \mu_i ^*=0.
	\end{align*}
	This implies that all pairs of the optimal primal and dual solutions are characterized by the following set
	\begin{align*}
	\Omega = &\Bigg\{ (\bm{\mu}, \bm{\beta}, \tau):  \tau = \frac{s-k}{\sum_{i\in[k+1,d]} \lambda_{i} }, \beta_i = \frac{1}{\lambda_{i}}, \forall  i \in [k],  \beta_i = \tau, \forall  i=[k+1, r],  \beta_i \ge \beta_r, \forall  i\in [r+1,d], \\
	& \mu_i = 0,  \forall  i =[k], \mu_i= \frac{1}{\tau}- \lambda_{i} ,\forall  i = [k+1, r], \mu_i = 0 , \forall   i = [r+1, d] \Bigg \}.
	\end{align*}
	
	Consequently, any optimal solution for problem \eqref{ldmineq2} satisfies
	\begin{align*} 
	\beta_{i}^* = \frac{1}{\lambda_{i}}, \forall  i \in [k],  \beta_{i}^* = \frac{s-k}{\sum_{i\in [k+1,d]} \lambda_{i}}, \forall  i\in [k+1,r],  \beta_{i}^* \ge \frac{s-k}{\sum_{i\in [k+1,d]} \lambda_{i}}, \forall  i \in [r+1, d], 
	\end{align*}
	which is feasible to the minimization problem in \eqref{ldmin1} and thus is optimal.
	
	Then the optimal value of the minimization problem in \eqref{ldmin1} is equal to
	\begin{align*}
	-\sum_{i\in [s]}\log (\beta_{i}^*) + \sum_{i\in [d]} \lambda_{i} \beta_{i}^* =\sum_{i\in [k]} \log (\lambda_{i}) + (s-k) \log \bigg(\frac{\sum_{i \in [k+1, d]} \lambda_{i} }{s-k} \bigg)+s
	=\Gamma_s (\bm{X})+s,
	\end{align*}
	where the second equality is due to \Cref{def:objPC} of $\Gamma_s (\bm{X})$.
	This completes the proof.\qed
\end{proof}
\end{subequations}

\subsection{Proof of \Cref{primal}} \label{proof_primal}
\primal*
	\begin{proof}
	It is sufficient to prove that for any feasible solution $\bm{x}$ to MESP \eqref{eq_p}, we must have
	$$\log \det^s \bigg (\sum_{i\in[n]} x_i \bm{v}_i\bm{v}_i^\top \bigg )=\Gamma_s\bigg(\sum_{i\in [n]}x_i \bm{v}_i\bm{v}_i^{\top}\bigg).$$
	
	Given a solution $\bm{x}$, we let $\bm{X}= \sum_{i\in [n]}x_i \bm{v}_i\bm{v}_i^{\top}$ with rank $r$ and let $\bm{\lambda}$ denote its eigenvalues such that $\lambda_1 \ge \cdots \ge \lambda_d \ge 0$. Since the rank of matrix $\bm{X}$ satisfies $r \le s$, there are two cases to be discussed regarding whether $r=s$ holds or not.
	\begin{enumerate}[(i)]
		\item If $r< s$, then clearly, we have $\log \overset{s}{\det} \left (\bm{X} \right )=-\infty$. On the other hand, by the choice of $k$ in Lemma \ref{lem:kappa}, it is evident that $k=r$ such that $\frac{1}{s-k} \sum_{i \in [k+1,d]} \lambda_{i} =0$. It follows that $\Gamma_s\left(\bm{X}\right)=-\infty=\log \overset{s}{\det} \left (\bm{X} \right )$.
		\item {If $r=s$, there must exist an integer $\ell$ such that $\lambda_{1} \ge \cdots \ge \lambda_{\ell} > \lambda_{\ell+1}=\cdots=  \lambda_{s}> \lambda_{s+1}=\cdots=\lambda_d=0$.} By the uniqueness of $k$, we must have {$k=\ell$}. Thus, from \Cref{def:objPC}, the objective value is equal to
		\begin{align*}
		\Gamma_s\left(\bm{X}\right)=\log \bigg(\prod_{i\in [k]} \lambda_{i}\bigg ) + (s-k) \log \bigg(\frac{1}{s-k} \sum_{i\in [k+1,d]} \lambda_{i} \bigg ) =\log \bigg(\prod_{i\in [s]} \lambda_{i} \bigg ) =\log \det^s \left ( \bm{X} \right ). 
		\end{align*} 
		\qed
	\end{enumerate}
\end{proof}

\subsection{Proof of \Cref{prop:mesp}}\label{proof_prop_mesp}
\propmesp*

\proof
We  show the three special cases separately.
\begin{enumerate}[(i)] \setlength{\itemsep}{0pt}
	\item Suppose that $\bm{C}$ is diagonal. Without loss of generality, assume that $\bm{C}=\Diag(\bm\lambda)$ with a nonnegative vector $\bm\lambda$ such that $\lambda_1\geq \cdots\geq\lambda_d>\lambda_{d+1}=\cdots=\lambda_n=0$, then we have $\bm{v}_i = \sqrt{\lambda_{i}} \bm{e}_i$ for each $i \in [n]$ and $\bm{C}=\bm{V}^\top\bm{V} $. Clearly, the optimal solution of MESP \eqref{eq_obj} is $x_i^* = 1$ for each $i\in [s]$ and 0 otherwise. Thus, $z^*= \log \overset{s}{\det} \left(\prod_{i \in [n]} x_i^* \bm{v}_i \bm{v}_i^{\top}\right) = \log \left( \prod_{i \in [s]} \lambda_i \right)$. 
	
	Let $\bm{X} = \sum_{i\in[n]} {x}_i^* \bm{v}_i\bm{v}_i^\top$, then we construct the  feasible solution to LD \eqref{eq_dual} as
	\begin{align*}
	\bm{\Lambda}^* = \frac{1}{\lambda_s} (\bm{I}_d -\bm{X}^{\dag} \bm{X})+\bm{X}^{\dag},  \nu^*=1, \mu_i^*=0, \forall i \in [n] . 
	\end{align*}
	It is easy to see that $(\bm{\Lambda}^*, \nu^*, \bm{\mu}^*)$ is feasible to LD \eqref{eq_dual} with the objective value
	\begin{align*}
	z^{LD} \leq -\log \det\limits_s (\bm{\Lambda^*}) + s v^*+\sum_{i \in [n]} {\mu_i^*}-s= \sum_{i \in [s]}\log(\lambda_i) =z^*  \le z^{LD} ,
	\end{align*}
	where the first inequality is by feasibility of $(\bm{\Lambda}^*, \nu^*, \bm{\mu}^*)$ and the second one is from the weak duality.

	\item Suppose that $s=1$. Given any feasible solution $\bm{x}$ to PC \eqref{eq_pcont}, assume that matrix $\bm{X}=\sum_{i \in [n]}x_i \bm{v}_i\bm{v}_i^{\top}$ has the eigenvalue vector $\bm{\lambda}$ such that $\lambda_1\geq\cdots\geq \lambda_d$. By Lemma \ref{lem:kappa}, as $k < s$, we must have $k=0$. Thus, the objective value of PC \eqref{eq_pcont} becomes
	\begin{align*}
	\Gamma_s ( \bm{X} ) = (s-k) \log \bigg(\frac{1}{s-k} \sum_{i\in [k+1,d]} \lambda_i \bigg ) = \log \bigg( \sum_{i\in [d]} \lambda_i\bigg) = \log \bigg( \sum_{i \in [n]} x_i  \bm{v}_i^{\top} \bm{v}_i \bigg ).
	\end{align*}
	Therefore, in this case, we have
	\begin{align*}
	z^{LD}&=\max_{\bm{x}}\bigg\{\log \bigg( \sum_{i \in [n]} x_i \bm{v}_i^{\top} \bm{v}_i \bigg ):\sum_{i \in [n]} x_i =1,\bm{x}\in [0,1]^n\bigg\}=\max_{i\in [n]}\left\{\log (\bm{v}_i^{\top} \bm{v}_i) \right\}=z^*.
	\end{align*}
	
	\item Suppose that $s=n$. In this case,  the only feasible solution of PC \eqref{eq_pcont} or MESP \eqref{eq_p} is $x_{i}=1$ for each $ i \in [n]$ and clearly, PC \eqref{eq_pcont} and MESP \eqref{eq_p} are equivalent. \qed
\end{enumerate}
\endproof

\subsection{Proof of \Cref{lem_hessian}}\label{proof_lem_hessian}
\lemhessian*
\begin{subequations}
\begin{proof}
	\noindent%
	We split the proof into four steps.
	
	\noindent\textbf{Step (i)- An Equivalent Statement.}
	For any $\bm{x},\bm{y} \in \textrm{relint}(\mathbb{D})$, let $\bm{X} = \sum_{i \in [n]} {x}_i \bm{v}_i \bm{v}_i^{\top}$ and $\bm{Y} = \sum_{i \in [n]} {y}_i \bm{v}_i \bm{v}_i^{\top}$, clearly, matrices $\bm{X}$ and $\bm{Y}$ are positive-definite and non-singular. Let us define a function $h(t) = \Gamma_s(\bm{X} + t (\bm{Y} - \bm{X}))$ with $t \in [0,\epsilon]$ for some sufficiently small positive number $\epsilon$. Let $\bm\lambda \in \Re_{++}^d$ denote the vector of eigenvalues of $\bm{X}$ and $\lambda_1 \ge \cdots\lambda_d >0$. Since $$\Gamma_s(\bm{X}) =F(\bm\lambda):= \log \bigg (\prod_{i\in [k]} \lambda_{i}\bigg ) + (s-k) \log \bigg(\frac{1}{s-k} \sum_{i\in [k+1,d]} \lambda_{i} \bigg),$$ and 
	$F(\bm\lambda)$ is symmetric and analytic at $\Re_{++}^d$, thus according to theorem 2.1 in \cite{tsing1994analyticity}, $\Gamma_s(\bm{X})$ is analytic and is thus continuous differentiable.
	Since the positive-definite matrices with distinct eigenvalues are dense in the space of all the positive-definite matrices, without loss of generality, we can assume that $\bm{X}$ has eigenvalues $\lambda_1 > \cdots > \lambda_d>0$  and their corresponding eigenvectors are $\bm{q}_1,\cdots, \bm{q}_d$. Suppose that the eigenvalues and their corresponding eigenvectors of $\bm{X} + t (\bm{Y} - \bm{X})$ are $\lambda_1(t),\cdots, \lambda_d(t)$ and $\bm{q}_1(t),\cdots, \bm{q}_d(t)$. As $\epsilon$ is sufficiently small, thus, we still have $\lambda_1(t)>\cdots>\lambda_d(t)$ and according to \Cref{lem:kappa}, $\bm{\lambda}$ and $\bm{\lambda}(t)$ share the same integer $k$ for all $t\in [0,\epsilon]$. Since all the eigenvalues are distinct, the eigenvalues $\{\lambda_i(t)\}_{i\in [d]}$ and eigenvectors $\{\bm{q}_i(t)\}_{i\in [d]}$  are continuous in the range of $[0,\epsilon]$ (see, e.g.,  \citealt{magnus1985differentiating,overton1995second}).

	As stated in Proposition \ref{supdiff}, function $\Gamma_s(\hat{\bm{X}})$ is differentiable if matrix $\hat{\bm{X}}$ is positive-definite. Thus, for any $t\in (0,\epsilon)$, we have
	$$h'(t) = \frac{\mathrm{d}}{\mathrm{d} t} h(t) =  \left<\nabla \Gamma_s(\bm{X}+ t (\bm{Y} - \bm{X})), \bm{Y}-\bm{X} \right>,$$
	which implies that
	\begin{align*}
	h''(0)=\frac{\mathrm{d}^2}{\mathrm{d} t^2} h(t) \Big{|}_{t=0} = \bigg< \frac{\mathrm{d}}{\mathrm{d} t} \nabla \Gamma_s(\bm{X}+ t (\bm{Y} - \bm{X})) \Big{|}_{t=0} , \bm{Y}-\bm{X}\bigg>. 
	\end{align*}
	Therefore, to prove the inequality \eqref{lips}, it is sufficient to show that
	\begin{align}
	h''(0)\geq - \frac{ \lambda_{\max}^2(\bm{C})}{\delta^2} \|\bm{x}-\bm{y}\|_2^2.\label{deriv0}
	\end{align}
	
	\noindent\textbf{Step (ii)- A Representation of $h''(0)$.}
	
	By Proposition \ref{supdiff}, we have
	$$\nabla \Gamma_s(\bm{X}+ t (\bm{Y} - \bm{X}))= \sum_{i \in [k]} \frac{1}{\lambda_i(t)} \bm{q}_i (t)\bm{q}_i(t)^{\top}+ \sum_{i \in [k+1,d]} \frac{s-k}{\sum_{j \in [k+1,d]}\lambda_j(t)} \bm{q}_i (t)\bm{q}_i(t)^{\top}.$$
	
	For the notational convenience, let us define a vector $\bm{
		\beta}\in \Re_+^d$ such that 
$$\beta_i = \lambda_i, \forall i \in [k], \beta_{i} = \frac{1}{s-k}\sum_{j\in [k+1,d]}\lambda_j, \forall i \in [k+1,d].$$
  Taking the derivative of eigenvalues and eigenvectors over $t$ separately, we obtain
	\begin{align*}
	\frac{\mathrm{d}}{\mathrm{d} t} \nabla \Gamma_s(\bm{X}+ t (\bm{Y} - \bm{X})) \Big{|}_{t=0} =& \underbrace{-\sum_{i \in [k]} \frac{1}{\beta_i^2}  \frac{d \lambda_i(t)}{d t}\Big{|}_{t=0} \bm{q}_i \bm{q}_i^{\top}-\sum_{i \in [k+1,d]} \frac{1}{(s-k)\beta_{i}^2} \frac{d \lambda_i(t)}{d t}\Big{|}_{t=0} \bm{q}_i \bm{q}_i^{\top}}_{:=A}\nonumber \\
	&+ \underbrace{\sum_{i \in [d]}\frac{1}{\beta_i} \frac{d \bm{q}_i(t)}{d t} \Big{|}_{t=0}\bm{q}_i^{\top}+\sum_{i \in [d]}\frac{1}{\beta_i} \bm{q}_i \Big{(}\frac{d \bm{q}_i(t)}{d t} \Big{|}_{t=0}\Big{)}^{\top}}_{:=B}.
	\end{align*}
	
	It follows that
	\begin{align}
	h''(0)= \big< A , \bm{Y}-\bm{X}\big>+\big< B , \bm{Y}-\bm{X}\big>.	\label{deriv}
	\end{align}
	
	Thus, to prove \eqref{deriv0}, we need to find lower bounds of $\big< A , \bm{Y}-\bm{X}\big>$ and $\big< B , \bm{Y}-\bm{X}\big>$ separately.
	
	\noindent\textbf{Step (iii)- Lower Bounds of $\big< A , \bm{Y}-\bm{X}\big>$ and $\big< B , \bm{Y}-\bm{X}\big>$.}

	Before we proceed, let us first prove the following claim.
	\begin{claim}\label{claim1}
For any $\ell \in [s-1]$, we have
		$$ \min_{\bm{u} \in \mathbb{D}} \bigg\{ \frac{1}{s-\ell} \sum_{i \in [\ell+1, d]}\lambda_i(\bm{U}): \bm{U}= \sum_{i \in [n]} {u}_i \bm{v}_i \bm{v}_i^{\top}\bigg\} \geq \min_{S\in [n], |S|=s} \lambda_s\Big(\sum_{j \in S} \bm{v}_j \bm{v}_j^{\top}\Big):=\delta,$$
		where for a symmetric matrix $\bm{X}$, we let $\lambda_i(\bm{X})$ denotes its $i$-th largest eigenvalue.
	\end{claim}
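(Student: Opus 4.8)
The plan is to prove the stronger pointwise statement that for \emph{every} $\bm{u}\in\mathbb{D}$ and every $\ell\in[s-1]$ one has $\frac{1}{s-\ell}\sum_{i\in[\ell+1,d]}\lambda_i(\bm{U})\ge\delta$, so that taking the minimum over $\bm{u}\in\mathbb{D}$ immediately yields the claim. Two facts drive the argument: (a) the feasible set $\mathbb{D}=\{\bm{x}\in[0,1]^n:\sum_{i\in[n]}x_i=s\}$ is an integral polytope whose extreme points are exactly the indicator vectors $\mathbf{1}_S$ of the size-$s$ subsets $S\subseteq[n]$; and (b) the map $\bm{X}\mapsto\sum_{i\in[\ell+1,d]}\lambda_i(\bm{X})$, i.e.\ the sum of the $d-\ell$ smallest eigenvalues of a $d\times d$ symmetric matrix, is concave. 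Fact (b) follows from the identity $\sum_{i\in[\ell+1,d]}\lambda_i(\bm{X})=\tr(\bm{X})-\sum_{i\in[\ell]}\lambda_i(\bm{X})$ combined with the convexity of the sum of the $\ell$ largest eigenvalues (Ky Fan / Courant--Fischer), since $\tr(\cdot)$ is linear.

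Given these, I would first use (a) to write $\bm{u}=\sum_{S}\gamma_S\mathbf{1}_S$ as a convex combination of size-$s$ indicators, so that $\bm{U}=\sum_{i\in[n]}u_i\bm{v}_i\bm{v}_i^\top=\sum_S\gamma_S\bm{M}_S$ with $\bm{M}_S:=\sum_{j\in S}\bm{v}_j\bm{v}_j^\top$. Applying the concavity in (b) gives
$$\sum_{i\in[\ell+1,d]}\lambda_i(\bm{U})\ \ge\ \sum_S\gamma_S\sum_{i\in[\ell+1,d]}\lambda_i(\bm{M}_S),$$
so it suffices to bound each term $\sum_{i\in[\ell+1,d]}\lambda_i(\bm{M}_S)$ from below by $(s-\ell)\delta$.

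For this last step I would invoke the standing linear-independence hypothesis of \Cref{lem_hessian}: for a size-$s$ set $S$ the columns $\{\bm{v}_i\}_{i\in S}$ are independent, so $\bm{M}_S=\bm{V}_S\bm{V}_S^\top$ has rank exactly $s$, with $\lambda_1(\bm{M}_S)\ge\cdots\ge\lambda_s(\bm{M}_S)>0=\lambda_{s+1}(\bm{M}_S)=\cdots=\lambda_d(\bm{M}_S)$ and $\lambda_s(\bm{M}_S)=\lambda_{\min}(\bm{C}_{S,S})$. The zero eigenvalues beyond index $s$ contribute nothing, leaving exactly $s-\ell$ surviving terms $\lambda_{\ell+1}(\bm{M}_S),\dots,\lambda_s(\bm{M}_S)$, each at least $\lambda_s(\bm{M}_S)\ge\delta$; hence $\sum_{i\in[\ell+1,d]}\lambda_i(\bm{M}_S)\ge(s-\ell)\delta$. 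Combining this with the displayed concavity inequality and $\sum_S\gamma_S=1$ yields $\sum_{i\in[\ell+1,d]}\lambda_i(\bm{U})\ge(s-\ell)\delta$ for every $\bm{u}\in\mathbb{D}$, which is the claim. The only genuine subtlety---and the step I would be most careful to justify cleanly---is the concavity of the sum of the smallest eigenvalues in fact (b); once that is established, the remainder is bookkeeping with the rank of $\bm{M}_S$ and the definition of $\delta$.
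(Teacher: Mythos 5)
Your proposal is correct and follows essentially the same route as the paper: both arguments rest on the concavity of the sum of the $d-\ell$ smallest eigenvalues (Ky Fan) together with the extreme-point structure of $\mathbb{D}$ to reduce to size-$s$ indicator vectors, and then use the rank-$s$ property of $\bm{M}_S=\sum_{j\in S}\bm{v}_j\bm{v}_j^{\top}$ to bound each of the $s-\ell$ surviving eigenvalues below by $\lambda_s(\bm{M}_S)\ge\delta$. The only cosmetic difference is that you apply Jensen's inequality to an explicit convex-combination decomposition, whereas the paper invokes the fact that a concave function minimized over a polytope attains its minimum at an extreme point; these are interchangeable.
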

	\begin{proof}For a $d \times d$ positive-semidefinite matrix $\bm{U}$, the function $\sum_{i \in [\ell+1, d]}\lambda_i(\bm{U})$ is concave \citep{fan1949theorem}. On the other hand, it is known that for the concave minimization problem, the optimum can be achieved by one of the extreme points of the feasible region. Thus, 
\begin{align*}
		\inf_{\bm{u} \in \mathbb{D}} \bigg\{ \frac{1}{s-\ell}\sum_{i \in [\ell+1, d]}\lambda_i(\bm{U}): \bm{U}= \sum_{i \in [n]} {u}_i \bm{v}_i \bm{v}_i^{\top}\bigg\} &=
		\frac{1}{s-\ell} \min_{S\in [n], |S|=s} \sum_{i \in [\ell+1, d]}\lambda_i\Big(\sum_{j \in S} \bm{v}_j \bm{v}_j^{\top}\Big)\\
		&= \frac{1}{s-\ell} \min_{S\in [n], |S|=s} \sum_{i \in [\ell+1, s]}\lambda_i\Big(\sum_{j \in S} \bm{v}_j \bm{v}_j^{\top}\Big) \\
		&\geq  \min_{S\in [n], |S|=s} \lambda_s\Big(\sum_{j \in S} \bm{v}_j \bm{v}_j^{\top}\Big) ,
		\end{align*}
		where the second equation is due to the fact that rank of $\sum_{j \in S} \bm{v}_j \bm{v}_j^{\top}$ is equal to $s$, and the first inequality is because $\lambda_s\Big(\sum_{j \in S} \bm{v}_j \bm{v}_j^{\top}\Big)$ is the smallest positive eigenvalues of matrix $\sum_{j \in S} \bm{v}_j \bm{v}_j^{\top}$.
		\qedA
	\end{proof}
	
	Now we are ready to show the lower bounds of $\big< A , \bm{Y}-\bm{X}\big>$ and $\big< B , \bm{Y}-\bm{X}\big>$.
	\begin{enumerate}[(a)]
		\item According to \cite{overton1995second}, we have
		$$\frac{d \lambda_i(t)}{d t} \Big{|}_{t=0} = \bm{q}_i^{\top}\frac{d (\bm{X}+ t (\bm{Y} - \bm{X}))}{d t} \Big{|}_{t=0} \bm{q}_i = \bm{q}_i^{\top} (\bm{Y}-\bm{X})\bm{q}_i.$$ 
		Therefore, $\big< A , \bm{Y}-\bm{X}\big>$ is equivalent to
		\begin{align}
		\big< A , \bm{Y}-\bm{X}\big>&=
		-\sum_{i \in [k]} \frac{1}{\beta_i^2} \left( \bm{q}_i^{\top} (\bm{Y}-\bm{X})\bm{q}_i \right)^2- \frac{1}{\left(s-k\right)\beta_i^2} \sum_{i \in [k+1,d]} \left( \bm{q}_i^{\top} (\bm{Y}-\bm{X})\bm{q}_i \right)^2 \notag\\
		&\geq 
		-\frac{(s-k)^2}{(\sum_{j \in [k+1,d]}\lambda_j)^2}\sum_{i \in [k]} \left( \bm{q}_i^{\top} (\bm{Y}-\bm{X})\bm{q}_i \right)^2- \frac{s-k}{(\sum_{j \in [k+1,d]}\lambda_j)^2} \sum_{i \in [k+1,d]} \left( \bm{q}_i^{\top} (\bm{Y}-\bm{X})\bm{q}_i \right)^2 \notag\\
		&\geq 
		- \frac{1}{\delta^2} \sum_{i \in [d]}\left( \bm{q}_i^{\top} (\bm{Y}-\bm{X})\bm{q}_i \right)^2,  \label{eq_hess1}
		\end{align}
		where the first inequality is due to the fact that $\lambda_1 \ge \cdots \lambda_k > \frac{\sum_{j \in [k+1, d]}\lambda_j}{ s-k}$, the second inequality is because of Claim \ref{claim1}, and $s-k\geq 1$.
		\item According to the result from \cite{magnus1985differentiating} that 
		$\frac{d \bm{q}_i(t)}{d t} \Big{|}_{t=0} = (\lambda_i \bm{I}_d-\bm{X})^{\dag} \frac{d (\bm{X}+ t (\bm{Y} - \bm{X}))}{d t} \Big{|}_{t=0} \bm{q}_i = (\lambda_i \bm{I}_d-\bm{X})^{\dag} (\bm{Y}-\bm{X}) \bm{q}_i$, where
		$$(\lambda_i \bm{I}_d-\bm{X})^{\dag}=\sum_{j\in [d],j\neq i} \frac{1}{\lambda_i-\lambda_j} \bm{q}_j \bm{q}_j^{\top}.$$
		
		Thus, $\big< B , \bm{Y}-\bm{X}\big>$ is equivalent to
		\begin{align}
		\big< B , \bm{Y}-\bm{X}\big>=&\sum_{i \in [d]}\frac{1}{\beta_i} \sum_{j\in [d],j\neq i} \frac{1}{\lambda_i-\lambda_j}  \Big(\bm{q}_j^{\top}(\bm{Y}-\bm{X}) \bm{q}_i \Big)^2+\sum_{j \in [d]}\frac{1}{\beta_j}  \sum_{i\in [d],i\neq j} \frac{1}{\lambda_j-\lambda_i}  \Big(\bm{q}_j^{\top}(\bm{Y}-\bm{X}) \bm{q}_i \Big)^2\nonumber\\
		=& \sum_{i \in [d]} \sum_{j \in [d],j\neq i} \Big(\frac{1}{\beta_i} \frac{1}{\lambda_i-\lambda_j}+\frac{1}{\beta_j} \frac{1}{\lambda_j-\lambda_i}\Big)  \Big(\bm{q}_j^{\top}(\bm{Y}-\bm{X}) \bm{q}_i \Big)^2. \label{eq_cross}
		\end{align}
	Above, we can split the summations in the right-hand side of \eqref{eq_cross} into four cases and also by plugging the values of $\bm{\beta}$, we can rewrite $\big< B , \bm{Y}-\bm{X}\big>$ as
		\begin{align}
		\big< B , \bm{Y}-\bm{X}\big>=& \sum_{i \in [k]} \sum_{j \in [k],j\neq i} \frac{1}{\lambda_i\lambda_j}   \Big(\bm{q}_j^{\top}(\bm{Y}-\bm{X}) \bm{q}_i \Big)^2+\sum_{i \in [k+1,d]} \sum_{j \in [k+1,d],j\neq i} 0\notag\\
		&+ \sum_{i \in [k]} \sum_{j \in [k+1,d],j\neq i}\Big( \frac{1}{\lambda_i} \frac{1}{\lambda_i-\lambda_j} + \frac{s-k}{\sum_{\ell \in[k+1,d]} \lambda_{\ell}} \frac{1}{\lambda_j-\lambda_i}\Big)  \Big(\bm{q}_j^{\top}(\bm{Y}-\bm{X}) \bm{q}_i \Big)^2\notag\\
		&+\sum_{i \in [k+1,d]} \sum_{j \in [k],j\neq i}\Big(  \frac{s-k}{\sum_{\ell \in[k+1,d]}  \lambda_{\ell}}  \frac{1}{\lambda_i-\lambda_j} + \frac{1}{\lambda_j} \frac{1}{\lambda_j-\lambda_i}\Big) \Big(\bm{q}_j^{\top}(\bm{Y}-\bm{X}) \bm{q}_i \Big)^2 \notag\\
		&\geq -\frac{1}{\delta^2}\sum_{i \in [d]} \sum_{j\in [d],j\neq i} \Big(\bm{q}_j^{\top}(\bm{Y}-\bm{X}) \bm{q}_i \Big)^2,\label{eq_cross1}
		\end{align}
		where the inequality is because $\lambda_i > \frac{\sum_{\ell \in[k+1,d]} \lambda_{\ell}}{s-k} \ge \lambda_j$ for each pair $(i,j) \in [k]\times [k+1,d]$, and $\frac{\sum_{\ell \in[k+1,d]} \lambda_{\ell}}{s-k} \geq \delta$ by Claim~\ref{claim1}.

	\end{enumerate}	
	
	\noindent\textbf{Step (iv)- Combining All the Pieces Together.}
	According to the results \eqref{deriv}, \eqref{eq_hess1}, and \eqref{eq_cross1}, we can derive that
	\begin{align*}
	h''(0)& \ge - \frac{1}{\delta^2}\sum_{i \in [d]} \sum_{j\in [d]} \Big(\bm{q}_j^{\top}(\bm{Y}-\bm{X}) \bm{q}_i \Big)^2= - \frac{1}{\delta^2}\tr(((\bm{Y}-\bm{X})\bm{Q})^2) \\
	&\ge- \frac{1}{\delta^2} \|\bm{Y}-\bm{X}\|^2_2\\
	&\ge- \frac{1}{\delta^2} \lambda_{\max}^2(\bm C) \|\bm{y}-\bm{x}\|_2^2,
	\end{align*}
	where the second inequality is due to Cauchy-Schwartz inequality and that matrix $\bm Q$ is orthonormal, and the third inequality stems from the fact that $\|\bm{Y}-\bm{X}\|^2_2 =\Vert \bm{V} \Diag(\bm{y}-\bm{x})\bm{V}^{\top} \Vert_2^2  \le  \lambda_{\max}^2(\bm C) \|\bm{y}-\bm{x}\|^2_2$. 
	\qed
\end{proof}
\end{subequations}

\subsection{Proof of \Cref{lem:rankupd}}\label{proof_lem_rankupd}
\lemrankupd*
\begin{subequations}
 \begin{proof} 	\noindent\textbf{Part (i).} Let $\bm{X}_{-i}= \bm{Q} \Diag(\bm{\lambda}) \bm{Q}^{\top}$ denote its eigendecomposition. Since the rank of $\bm{X}_{-i}$ is $\tau-1$, without loss of generality, we assume that its eigenvalues satisfy $\lambda_1 \ge \cdots \lambda_{\tau-1} > \lambda_{\tau}=\cdots = \lambda_d=0$.
 	
 	For any $\epsilon > 0$, %
 	we have
 	\begin{align*}
 	\det \left( \bm{X}  + \epsilon \bm{I}_d \right) &=  \det (\bm{X}_{-i} + \epsilon \bm{I}_d) \left(1+ \bm{v}_i^{\top} (\bm{X}_{-i} + \epsilon \bm{I}_d)^{-1} \bm{v}_i \right)  \\
 	& =  \epsilon^{n-\tau+1} \prod_{i \in [\tau-1]} (\lambda_{i} + \epsilon)  \left (1+ \bm{v}_i^{\top} (\bm{X}_{-i} + \epsilon \bm{I}_d)^{-1} \bm{v}_i \right)\\
 	& =   \epsilon^{n-\tau} \prod_{i \in [\tau-1]} (\lambda_{i} + \epsilon)  \left( \epsilon + \bm{v}_i^{\top} \bm{Q} \Diag(\bm{\beta}(\epsilon)) \bm{Q}^{\top} \bm{v}_i \right), 
 	\end{align*}
 	where the first equality is from the Matrix Determinant lemma \citep{harville1998matrix} and in the third equality, we let $\bm{\beta}(\epsilon) = (\frac{\epsilon}{{\lambda_1}+\epsilon}, \cdots, \frac{\epsilon}{{\lambda_{\tau-1}}+\epsilon},1, \cdots, 1)^{\top}$ denote the eigenvalues of $\epsilon(\bm{X}_{-i} + \epsilon \bm{I}_d)^{-1}$.
 	As $\overset{\tau}{\det} (\bm{X}) = \lim_{\epsilon \to 0} \epsilon^{-(n-\tau)} \det \left(\bm{X} + \epsilon \bm{I}_d \right)$, thus
 	\begin{align*}
 	\det^{\tau} (\bm{X}) &= \lim_{\epsilon \to 0} \frac{ \det \left( \bm{X} + \epsilon \bm{I}_d\right) }{\epsilon^{n-\tau}} = \lim_{\epsilon \to 0}  \prod_{i \in [\tau-1]} (\lambda_{i} + \epsilon)  \left( \epsilon + \bm{v}_i^{\top} \bm{Q} \Diag(\bm{\beta}(\epsilon)) \bm{Q}^{\top} \bm{v}_i \right) \\
 	&=\lim_{\epsilon \to 0}  \prod_{i \in [\tau-1]} (\lambda_{i} + \epsilon)\lim_{\epsilon \to 0}  \left( \epsilon + \bm{v}_i^{\top} \bm{Q} \Diag(\bm{\beta}(\epsilon)) \bm{Q}^{\top} \bm{v}_i \right)\\
 	& = \det^{\tau-1} (\bm{X}_{-i}) \left( \bm{v}_i^{\top} \bm{Q} \Diag\left ( \bm{\beta}(0) \right) \bm{Q}^{\top} \bm{v}_i \right) = \det^{\tau-1} (\bm{X}_{-i}) \bm{v}_i^{\top}(\bm{I}_d-
 	\bm{X}_{-i}^{\dag}\bm{X}_{-i})\bm{v}_i,
 	\end{align*}
 	where the third equality is because both limits exist and the last equality is from the fact that the vector of eigenvalues of $(\bm{I}_d-
 	\bm{X}_{-i}^{\dag}\bm{X}_{-i})$ is equal to $\bm{\beta}(0)$ and the corresponding matrix consisting of the eigenvectors is $\bm{Q}$.
 	
 	\noindent The proof of \textbf{Part (ii)} is similar to  \textbf{Part (i)} and is thus omitted here.
 	
 	\noindent \textbf{Part (iii)} and \textbf{Part (iv)} follow directly from theorem 1 and theorem 6 in \cite{meyer1973generalized}.
 	
 	\noindent\textbf{Part (v).} By \textbf{Part (iii)} and the fact that $(\bm{I}_d-\bm{X}_{-i}^{\dag}\bm{X}_{-i})$ is a projection matrix, we have
 	\begin{align*}
 	\bm{v}_i^{\top} \bm{X}^{\dag} \bm{v}_i
 	=&\bm{v}_i^{\top} \bm{X}_{-i}^{\dag} \bm{v}_i - 
 	\frac{\bm{v}_i^{\top} \bm{X}_{-i} \bm{v}_i \bm{v}_i^{\top}(\bm{I}_d-\bm{X}_{-i}^{\dag}\bm{X}_{-i}) \bm{v}_i}{\|(\bm{I}_d-\bm{X}_{-i}^{\dag}\bm{X}_{-i}) \bm{v}_i \|_{2}^{2}} -
 	\frac{\bm{v}_i^{\top}( \bm{I}_d-\bm{X}_{-i}^{\dag}\bm{X}_{-i}) \bm{v}_i \bm{v}_i^{\top} \bm{X}_{-i} \bm{v}_i}{\|(\bm{I}_d-\bm{X}_{-i}^{\dag}\bm{X}_{-i}) \bm{v}_i \|_{2}^{2}} \\
 	& + \frac{(1+\bm{v}_i^{\top} \bm{X}_{-i} \bm{v}_i)\bm{v}_i^{\top}( \bm{I}_d-\bm{X}_{-i}^{\dag}\bm{X}_{-i}) \bm{v}_i \bm{v}_i^{\top} (\bm{I}_d-\bm{X}_{-i}^{\dag}\bm{X}_{-i}) \bm{v}_i }{\|(\bm{I}_d-\bm{X}_{-i}^{\dag}\bm{X}_{-i}) \bm{v}_i \|_{2}^{4}}\\
 	=& \bm{v}_i^{\top} \bm{X}_{-i}^{\dag} \bm{v}_i - \bm{v}_i^{\top} \bm{X}_{-i}^{\dag} \bm{v}_i - \bm{v}_i^{\top} \bm{X}_{-i}^{\dag} \bm{v}_i +1 + \bm{v}_i^{\top} \bm{X}_{-i}^{\dag} \bm{v}_i = 1.
 	\end{align*}
 	
 	\noindent\textbf{Part (vi).} Since $\bm{X}= \bm{X}_{-i}+ \bm{v}_i \bm{v}_i^{\top}$, then we have 
 	\begin{align*}
 	\bm{v}_i^{\top} (\bm{I}_d- \bm{X}^{\dag}\bm{X}) = \bm{v}_i^{\top} - \bm{v}_i^{\top} \bm{X}^{\dag}\bm{X}_{-i}  - \bm{v}_i^{\top} \bm{X}^{\dag} \bm{v}_i  \bm{v}_i^{\top} = - \bm{v}_i^{\top} \bm{X}^{\dag}\bm{X}_{-i},
 	\end{align*}
 	where the second equality is from the fact that $\bm{v}_i^{\top} \bm{X}^{\dag} \bm{v}_i = 1$ in \textbf{Part (v)}.
 	
 	To compute $\bm{v}_i^{\top} \bm{X}^{\dag}  \bm{X}_{-i}$, using the result in \textbf{Part (iii)} and the facts that $(\bm{I}_d-\bm{X}_{-i}^{\dag}\bm{X}_{-i})\bm{X}_{-i}^{\dag}=0$ and $(\bm{I}_d-\bm{X}_{-i}^{\dag}\bm{X}_{-i})$ is a projection matrix, we then obtain %
 	\begin{align*}
 	\bm{v}_i^{\top} \bm{X}^{\dag}  \bm{X}_{-i} 
 	=&\bm{v}_i^{\top} \bm{X}_{-i}^{\dag} \bm{X}_{-i}^{\dag}  - 
 	\frac{\bm{v}_i^{\top} \bm{X}_{-i}^{\dag} \bm{v}_i \bm{v}_i^{\top}(\bm{I}_d-\bm{X}_{-i}^{\dag}\bm{X}_{-i})\bm{X}_{-i}^{\dag} }{\|(\bm{I}_d-\bm{X}_{-i}^{\dag}\bm{X}_{-i}) \bm{v}_i \|_{2}^{2}} 
 	-\frac{\bm{v}_i^{\top} (\bm{I}_d-\bm{X}_{-i}^{\dag}\bm{X}_{-i}) \bm{v}_i \bm{v}_i^{\top} \bm{X}_{-i}^{\dag} \bm{X}_{-i}^{\dag} }{\|(\bm{I}_d-\bm{X}_{-i}^{\dag}\bm{X}_{-i}) \bm{v}_i \|_{2}^{2}} \\
 	& + \frac{(1+\bm{v}_i^{\top} \bm{X}_{-i}^{\dag} \bm{v}_i)\bm{v}_i^{\top}(\bm{I}_d-\bm{X}_{-i}^{\dag}\bm{X}_{-i}) \bm{v}_i \bm{v}_i^{\top} (\bm{I}_d-\bm{X}_{-i}^{\dag}\bm{X}_{-i})\bm{X}_{-i}^{\dag} }{\|(\bm{I}_d-\bm{X}_{-i}^{\dag}\bm{X}_{-i}) \bm{v}_i \|_{2}^{4}} \\
 	=& \bm{v}_i^{\top} \bm{X}_{-i}^{\dag} \bm{X}_{-i}^{\dag}  -\bm{v}_i^{\top} \bm{X}_{-i}^{\dag} \bm{X}_{-i}^{\dag} =  \bm{0}.
 	\end{align*}
 	Hence, $\bm{v}_i^{\top} (\bm{I}_d- \bm{X}^{\dag}\bm{X}) = - \bm{v}_i^{\top} \bm{X}^{\dag}\bm{X}_{-i}    = \bm{0}$.
 	
 	\noindent\textbf{Part (vii).} According to \textbf{Part (iv)}, we have
 	\begin{align}
 	\bm{X}_{-i}^{\dag} \bm{X} 
 	=& \bm{X}^{\dag}\bm{X} - \frac{\bm{X}^{\dag}\bm{v}_i \bm{v}_i^{\top} \bm{X}^{\dag}}{\|\bm{X}^{\dag} \bm{v}_i\|_2^2}- \frac{\bm{X}^{\dag}\bm{X}^{\dag}\bm{v}_i \bm{v}_i^{\top} \bm{X}^{\dag} \bm{X}}{\|\bm{X}^{\dag} \bm{v}_i\|_2^2} +
 	\frac{\bm{v}_i^{\top} (\bm{X}^{\dag})^3  \bm{v}_i\bm{X}^{\dag}\bm{v}_i \bm{v}_i^{\top} \bm{X}^{\dag} \bm{X}}{\|\bm{X}^{\dag} \bm{v}_i\|_2^4} \label{loceq1},\\
 	\bm{X}_{-i}^{\dag} \bm{v}_i \bm{v}_i^{\top} =&  \bm{X}^{\dag}\bm{v}_i \bm{v}_i^{\top} - \frac{\bm{X}^{\dag}\bm{v}_i \bm{v}_i^{\top} \bm{X}^{\dag}\bm{X}^{\dag} \bm{v}_i \bm{v}_i^{\top}}{\|\bm{X}^{\dag} \bm{v}_i\|_2^2}- \frac{\bm{X}^{\dag}\bm{X}^{\dag}\bm{v}_i \bm{v}_i^{\top} \bm{X}^{\dag} \bm{v}_i \bm{v}_i^{\top}}{\|\bm{X}^{\dag} \bm{v}_i\|_2^2} \nonumber +
 	\frac{\bm{v}_i^{\top} (\bm{X}^{\dag})^3  \bm{v}_i\bm{X}^{\dag}\bm{v}_i \bm{v}_i^{\top} \bm{X}^{\dag} \bm{v}_i \bm{v}_i^{\top}}{\|\bm{X}^{\dag} \bm{v}_i\|_2^4} \nonumber \\
 	=&- \frac{\bm{X}^{\dag}\bm{X}^{\dag}\bm{v}_i  \bm{v}_i^{\top}}{\|\bm{X}^{\dag} \bm{v}_i\|_2^2}+
 	\frac{\bm{v}_i^{\top} (\bm{X}^{\dag})^3  \bm{v}_i\bm{X}^{\dag}\bm{v}_i  \bm{v}_i^{\top}}{\|\bm{X}^{\dag} \bm{v}_i\|_2^4}. \label{loceq2}
 	\end{align}
 	where the third equality is due to $\bm{v}_i^{\top} \bm{X}^{\dag} \bm{v}_i=1$ from \textbf{Part (v)}. 
 	
 	Since $\bm{X}= \bm{X}_{-i}+\bm{v}_i \bm{v}_i^{\top}$, we can obtain
 	\begin{align*}
 	\bm{v}_i^{\top}(\bm{I}_d-\bm{X}_{-i}^{\dag}\bm{X}_{-i})\bm{v}_i =
 	\bm{v}_i^{\top}\bm{v}_i -\bm{v}_i^{\top}\bm{X}_{-i}^{\dag}(\bm{X}-\bm{v}_i \bm{v}_i^{\top})\bm{v}_i = \bm{v}_i^{\top}\bm{v}_i -\bm{v}_i^{\top}\bm{X}_{-i}^{\dag}\bm{X}\bm{v}_i +\bm{v}_i^{\top}\bm{X}_{-i}^{\dag}\bm{v}_i \bm{v}_i^{\top}\bm{v}_i.
 	\end{align*}
 	Applying the identities in \eqref{loceq1} and \eqref{loceq2}, we further have
 	\begin{align*}
 	\bm{v}_i^{\top}(\bm{I}_d-\bm{X}_{-i}^{\dag}\bm{X}_{-i})\bm{v}_i 
 	= & \frac{1}{\|\bm{X}^{\dag} \bm{v}_i\|_2^2}+ \frac{\bm{v}_i^{\top} (\bm{X}^{\dag})^3  \bm{v}_i \bm{v}_i^{\top} (\bm{I}_d- \bm{X}^{\dag}\bm{X}) \bm{v}_i}{\|\bm{X}^{\dag} \bm{v}_i\|_2^4}
 	= \frac{1}{\|\bm{X}^{\dag} \bm{v}_i\|_2^2}.
 	\end{align*}
 	where the last equality is due to the fact that $\bm{v}_i^{\top} (\bm{I}_d- \bm{X}^{\dag}\bm{X}) = \bm{0}$ from \textbf{Part (vi)}.\\
 	
 	\noindent\textbf{Part (viii).} There are two cases: whether $\bm{v}_j$ is in the column space of $\bm{X}_{-i}$ or not. 
 	\begin{enumerate}[(a)]
 		\item If $\bm{v}_j \notin \col(\bm{X}_{-i})$, we follow the proof of \textbf{Part (vii)}. Since $\bm{X}= \bm{X}_{-i}+\bm{v}_i \bm{v}_i^{\top}$, we can obtain 
 		\begin{align*}
 		\bm{v}_j^{\top}(\bm{I}_d-\bm{X}_{-i}^{\dag}\bm{X}_{-i})\bm{v}_j 
 		=& \bm{v}_j^{\top}\bm{v}_j -\bm{v}_j^{\top}\bm{X}_{-i}^{\dag}\bm{X}\bm{v}_j +\bm{v}_j^{\top}\bm{X}_{-i}^{\dag}\bm{v}_i \bm{v}_i^{\top}\bm{v}_j = \bm{v}_j^{\top} (\bm{I}_d-\bm{ X}^{\dag} \bm{X}) \bm{v}_j + \frac{(\bm{v}_j^{\top} \bm{ X}^{\dag} \bm{v}_i)^2}{\|\bm{ X}^{\dag}\bm{v}_i\|^2_2} \\
 		&- \bm{v}_i^{\top}(\bm{I}_d - \bm{X}^{\dag}\bm{X})\bm{v}_j \frac{\bm{v}_j^{\top}\bm{X}^{\dag}\bm{X}^{\dag}\bm{v}_i }{\|\bm{X}^{\dag} \bm{v}_i\|_2^2} + \bm{v}_i^{\top}(\bm{I}_d - \bm{X}^{\dag}\bm{X})\bm{v}_j \frac{\bm{v}_i^{\top}(\bm{X}^{\dag})^3\bm{v}_i \bm{v}_j^{\top} \bm{X}^{\dag} \bm{v}_i }{\|\bm{X}^{\dag} \bm{v}_i\|_2^4} \\
 		=& \bm{v}_j^{\top} (\bm{I}_d-\bm{ X}^{\dag} \bm{X}) \bm{v}_j + \frac{(\bm{v}_j^{\top} \bm{ X}^{\dag} \bm{v}_i)^2}{\|\bm{ X}^{\dag}\bm{v}_i\|^2_2}, 
 		\end{align*}
 		where the second equality is due to the identites in \eqref{loceq1} and \eqref{loceq2}, and the last equality is because $\bm{v}_i^{\top} (\bm{I}_d- \bm{X}^{\dag}\bm{X}) = \bm{0}$ from \textbf{Part (vi)}.
 		\item 	Second, if $\bm{v}_j \in \col(\bm{X}_{-i})$, then we rewrite $\bm{v}_{j} = \sum_{\ell \in \hat{S}\setminus \{i\}} a_{\ell} \bm{v}_l$, which stems from the fact that the vectors $\{\bm{v}_{\ell}, \ell \in \hat{S}\setminus \{i\}\}$ span the column space of $\bm{X}_{-i}$. Then it follows that 
 		\begin{align*}
 		\bm{v}_j^{\top}(\bm{I}_d-\bm{X}_{-i}^{\dag}\bm{X}_{-i})\bm{v}_j = \sum_{\ell \in S\setminus \{i\}} a_{\ell} \bm{v}_{\ell}^{\top}  (\bm{I}_d-\bm{X}_{-i}^{\dag}\bm{X}_{-i})\bm{v}_j  = 0,
 		\end{align*}
 		where the second equality is because $\bm{v}_{\ell}^{\top} (\bm{I}_d- \bm{X}_{-i}^{\dag}\bm{X}_{-i}) = \bm{0}$ for all $\ell \in \hat{S}\setminus \{i\}$ from \textbf{Part (vi)}.
 	\end{enumerate}
 	\qed
 \end{proof}
\end{subequations}

\subsection{Proof of \Cref{lem:rankupd2}} \label{proof_lem_rankupdtwo}
\lemrankupdtwo*
\begin{subequations}
\begin{proof}
	For each pair $(i,j)\in \hat{S} \times ([n]\setminus \hat{S})$, the stopping criterion of Algorithm \ref{algo} implies that 
	\begin{align} \label{loc_opt}
	\det \limits^{s} ({\bm{X}}_{-i}+\bm{v}_i\bm{v}_i^{\top}) \ge \det \limits^{s} ({\bm{X}}_{-i} +\bm{v}_j\bm{v}_j^{\top}), 
	\end{align}
	and $\{\bm{v}_{{\ell}}\}_{{\ell}\in \hat{S}}$ are linearly independent. There are two cases to be considered: whether $\bm{v}_j$ is in the column space of $\bm{X}_{-i}$ or not.
	\begin{enumerate} [(i)]\setlength{\itemsep}{0pt}
		\item  If $\bm{v}_j \notin \col (\bm{X}_{-i})$, then by Parts (i) and (ii) in Lemma \ref{lem:rankupd} and the fact that $\det \limits^{s-1} ({\bm{X}}_{-i})>0$, the local optimality condition \eqref{loc_opt} is equivalent to
		\begin{align}
		\bm{v}_i^{\top}(\bm{I}_d-{\bm{ X}}_{-i}^{\dag}{\bm{X}}_{-i})\bm{v}_i \ge \bm{v}_j^{\top} (\bm{I}_d-{\bm{X}}_{-i}^{\dag}{\bm{X}}_{-i})\bm{v}_j . \label{eq_s}
		\end{align}
		Plugging the results of Parts (vii) and (viii) in Lemma \ref{lem:rankupd},  the above inequality is further reduced to
		\begin{align}
		1 \ge \left (\bm{v}_i^{\top}\bm{X}^{\dag} \bm{X}^{\dag}\bm{v}_i\right) \bm{v}_j^{\top}  (\bm{I}_d - \bm{X}^{\dag} \bm{X}) \bm{v}_j + \bm{v}_j^{\top} \bm{X}^{\dag} \bm{v}_i\bm{v}_i^{\top} \bm{X}^{\dag} \bm{v}_j.\label{eq_s2}
		\end{align}
		
		\item If $\bm{v}_j\in \col (\bm{X}_{-i})$, then we must have $\bm{v}_j\in \col (\bm{X})$. According to Part (vi) in Lemma \ref{lem:rankupd}, we have
		$$ \left (\bm{v}_i^{\top}\bm{X}^{\dag} \bm{X}^{\dag}\bm{v}_i\right) \bm{v}_j^{\top}  (\bm{I}_d - \bm{X}^{\dag} \bm{X}) \bm{v}_j + \bm{v}_j^{\top} \bm{X}^{\dag} \bm{v}_i\bm{v}_i^{\top} \bm{X}^{\dag} \bm{v}_j = (\bm{v}_i^{\top} \bm{X}^{\dag} \bm{v}_j)^2.$$
		Using Part (iii) in Lemma \ref{lem:rankupd}, we have
		\begin{align*}
		\bm{v}_i^{\top} \bm{X}^{\dag} \bm{v}_j=&\bm{v
		}_i^{\top} \bm{X}_{-i}^{\dag} \bm{v}_j - 
		\frac{\bm{v}_i^{\top} \bm{X}_{-i} \bm{v}_i \bm{v}_i^{\top}(\bm{I}_d-\bm{X}_{-i}^{\dag}\bm{X}_{-i}) \bm{v}_j}{\|(\bm{I}_d-\bm{X}_{-i}^{\dag}\bm{X}_{-i}) \bm{v}_i \|_{2}^{2}} -
		\frac{\bm{v}_i^{\top}( \bm{I}_d-\bm{X}_{-i}^{\dag}\bm{X}_{-i}) \bm{v}_i \bm{v}_i^{\top} \bm{X}_{-i} \bm{v}_j}{\|(\bm{I}_d-\bm{X}_{-i}^{\dag}\bm{X}_{-i}) \bm{v}_i \|_{2}^{2}} \\
		& + \frac{(1+\bm{v}_i^{\top} \bm{X}_{-i} \bm{v}_i)\bm{v}_i^{\top}( \bm{I}_d-\bm{X}_{-i}^{\dag}\bm{X}_{-i}) \bm{v}_i \bm{v}_i^{\top} (\bm{I}_d-\bm{X}_{-i}^{\dag}\bm{X}_{-i}) \bm{v}_j }{\|(\bm{I}_d-\bm{X}_{-i}^{\dag}\bm{X}_{-i}) \bm{v}_i \|_{2}^{4}}\\
		=&\bm{v}_i^{\top} \bm{X}_{-i}^{\dag} \bm{v}_j -\frac{\bm{v}_i^{\top}( \bm{I}_d-\bm{X}_{-i}^{\dag}\bm{X}_{-i}) \bm{v}_i \bm{v}_i^{\top} \bm{X}_{-i} \bm{v}_j}{\|(\bm{I}_d-\bm{X}_{-i}^{\dag}\bm{X}_{-i}) \bm{v}_i \|_{2}^{2}}=0,
		\end{align*}
		where the first equality is due to Part (iii) in Lemma \ref{lem:rankupd}, the second equality is due to Part (vi) in Lemma \ref{lem:rankupd} and $\bm{v}_j$ is a linear combination of $\{\bm{v}_\ell\}_{ \ell \in \hat{S}\setminus \{i\} }$, and the last equality is because $(\bm{I}_d-\bm{X}_{-i}^{\dag}\bm{X}_{-i})$ is a projection matrix.
		
		Thus, clearly, we arrive at
		$$ \left (\bm{v}_i^{\top}\bm{X}^{\dag} \bm{X}^{\dag}\bm{v}_i\right) \bm{v}_j^{\top}  (\bm{I}_d - \bm{X}^{\dag} \bm{X}) \bm{v}_j + \bm{v}_j^{\top} \bm{X}^{\dag} \bm{v}_i\bm{v}_i^{\top} \bm{X}^{\dag} \bm{v}_j = (\bm{v}_i^{\top} \bm{X}^{\dag} \bm{v}_j)^2=0\leq 1.$$\qed
	\end{enumerate}

\end{proof}
\end{subequations}

\subsection{Proof of \Cref{them1}}\label{proof_them1}
\themlocal*
\begin{subequations}
\begin{proof}
	We split the proof into three steps.
	
	\noindent\textbf{Step 1. Constructing Solution of Dual Variable $\bm{ \Lambda}$.}\par
	Given the output $\hat{S}$ of the local search Algorithm \ref{algo}, %
	let us denote $\bm{X} = \sum_{i \in \hat{S}} \bm{v}_i\bm{v}_i^{\top}$ and let ${\bm{X}}_{-i} = \bm{X} - \bm{v}_i \bm{v}_i^{\top}$ for each $ i \in \hat{S}$.

	We first construct $\bm{ \Lambda}$ of LD \eqref{eq_dual} as below
	\begin{align}
	\bm{ \Lambda} = \frac{1}{t}\left[\tr(\bm{X}^{\dag})(\bm{I}_d - \bm{ X}^{\dag} \bm{ X}) + \bm{X}^{\dag}\right],%
	\label{eq-construc_lambda}
	\end{align}
	where $t>0$ is a scaling factor and will be specified later. Accordingly, the identity \eqref{eq-construc_lambda} leads to that $\log \underset{s}{\det}  (\bm{\Lambda}) = \log \overset{s}{\det} (\bm{X})+s\log t$.
	
	\noindent\textbf{Step 2. Constructing Solution of the Other Dual Variables $({\nu}, {\bm{\mu}})$ with $\bm{ \Lambda}$ in \eqref{eq-construc_lambda}.}\par	
	Next, to construct the solution of the other two dual variables $({\nu}, {\bm{\mu}})$, we need to check the feasibility of constraints in LD \eqref{eq_dual}, i.e., 
	\begin{align}
	\bm{v}_i^{\top} \bm{ \Lambda} \bm{v}_i\le \nu+\mu_i,\forall i \in [n].\label{eq_possible_nu_mu}
	\end{align}
	We consider the following two cases: (i) for each $ i \in \hat{S}$ and (ii) for each  $j \in [n]\setminus \hat{S}$. 
	
	\begin{enumerate}[(i)]  \setlength{\itemsep}{0pt}
		\setlength{\parskip}{0pt}
		\item For each $ i \in \hat{S}$, we have
		\begin{align} \label{loc_constr}
		\bm{v}_i^{\top} \bm{\Lambda} \bm{v}_i = \frac{1}{t}\left[\tr(\bm{ X}^{\dag}) \bm{v}_i^{\top} (\bm{I}_d - \bm{ X}^{\dag} \bm{ X}) \bm{v}_i+ \bm{v}_i^{\top} \bm{ X}^{\dag} \bm{v}_i\right]  = \frac{1}{t},
		\end{align}
		where the second equality results from Parts (v) and (vi) in Lemma \ref{lem:rankupd} with $\tau=s$.
		\item For each  $j \in [n]\setminus \hat{S}$, according to Lemma \ref{lem:rankupd2}, we have%
		\begin{align*}
		1 \ge \left (\bm{v}_i^{\top}\bm{X}^{\dag} \bm{X}^{\dag}\bm{v}_i\right) \bm{v}_j^{\top}  (\bm{I}_d - \bm{X}^{\dag} \bm{X}) \bm{v}_j + \bm{v}_j^{\top} \bm{X}^{\dag} \bm{v}_i\bm{v}_i^{\top} \bm{X}^{\dag} \bm{v}_j, \forall i \in \hat{S}.
		\end{align*}
		Summing the above inequality over $i \in \hat{S}$ and using the fact that $\bm{X} = \sum_{i \in \hat{S}} \bm{v}_i \bm{v}_i^{\top}$, we have
		\begin{align}\label{loc_constrjone}
		s \ge \tr(\bm{X}^{\dag}) \bm{v}_j^{\top} (\bm{I}_d - \bm{X}^{\dag} \bm{X}) \bm{v}_j+ \bm{v}_j^{\top} \bm{X}^{\dag} \bm{v}_j = t\bm{v}_j^{\top} \bm{\Lambda} \bm{v}_j.
		\end{align} 
	\end{enumerate}

	By inequalities \eqref{loc_constr} and \eqref{loc_constrjone}, to find the best $(\nu, \bm{\mu})$, it suffices to solve the optimization problem below:
	\begin{align*}
	z^{LD}\leq \min_{t>0}\min_{ \nu, \bm{\mu} \in \Re_{+}^n} \bigg\{ \log \overset{s}{\det} (\bm{X})+s\log (t) +s\nu+\sum_{i \in [n]} \mu_i-s :  \nu+\mu_i \ge \frac{1}{t} , \forall i \in \hat{S} ,\nu+\mu_i \ge \frac{s}{t} , \forall i \in [n]\setminus\hat{S} \bigg\}.
	\end{align*}
	Above, by checking the primal and dual of inner minimization problems, there are following two candidate optimal solutions
	\begin{align*}
	& \nu^a = \frac{s}{t},  \mu_i^a= 0, \forall i \in [n], \\
	& \nu^b = \frac{1}{t}, \mu_i^b = 0 ,\forall i \in \hat{S}, \mu_i^b  = \frac{s-1}{t} ,\forall i \in [n]\setminus \hat{S}.
	\end{align*}

	\noindent\textbf{Step 3. Finding the Best Scaler $t$ and Proving the Approximation Bound.}\par	
	
	Plugging in these two candidate solutions of $(\nu,\bm\mu)$, the right-hand side of the above minimization problem becomes
	\begin{align*}
	z^{LD}\leq\log \overset{s}{\det} (\bm{X})  +\min_{t>0} \min\left\{ s\log (t) + s\left(\frac{s}{t}-1\right), s\log (t) + (n-s)\frac{s-1}{t}+\frac{s}{t}-s \right\}.
	\end{align*}
	By swapping the two minimum operators and optimizing over $t$, the right-hand side of above inequality is further equivalent to
	\begin{align*}
	z^{LD}\leq\log \overset{s}{\det} (\bm{X})  + s \min\left\{ \log (s), \log \left(n-s-\frac{n}{s}+2\right)  \right\}.
	\end{align*}

	According to the weak duality between MESP \eqref{eq_obje} and LD \eqref{eq_dual} and the fact that $\hat{S}$ is feasible to MESP \eqref{mesp}, we have
	\begin{align*}
	\log \det^s \bigg(\sum_{i \in \hat{S}} \bm{v}_i\bm{v}_i^{\top}\bigg) =	\log \det^s (\bm{X}) \le z^* \le z^{LD} \le  \log \overset{s}{\det} (\bm{X}) + s\min\left\{ \log (s), \log \left(n-s-\frac{n}{s}+2\right) \right\},
	\end{align*}
	which completes the proof. \qed
\end{proof}
\end{subequations}

\subsection{Proof of \Cref{prop:degenerate}} \label{proof_prop_deg}
\propdeg*
\begin{proof}
	We construct the following instance.
	{\begin{example}\label{examp2} Given $s\leq d\leq n$, suppose that for each $i\in [n]$, 
			\begin{align*}
			\bm{v}_i =\begin{cases}
			\bm{e}_i, & \textrm{ if } i \in [s],\\
			\sum_{j \in [s]}\bm{e}_j, & \textrm{ otherwise}.
			\end{cases}
			\end{align*}
	\end{example}}
	In the above example, one optimal solution to MESP \eqref{eq_obj} is $S^*=[s]$.
	Suppose in the local search Algorithm \ref{algo}, we start with $\hat{S}=S^*$, then it terminates immediately. We follow \eqref{feas_lambda} to construct a feasible $\bm{\Lambda}$ to LD, which is identical to the one \eqref{eq-construc_lambda} used in Theorem \ref{them1}. According to the proof of Theorem \ref{them1}, we only need to check if the inequalities \eqref{loc_constrjone} are tight, i.e., 
	$$s = \tr(\bm{X}^{\dag}) \bm{v}_j^{\top} (\bm{I}_d - \bm{X}^{\dag} \bm{X}) \bm{v}_j+ \bm{v}_j^{\top} \bm{X}^{\dag} \bm{v}_j = t\bm{v}_j^{\top} \bm{\Lambda} \bm{v}_j,\forall j\in [s+1,n].$$
	In fact,
	\begin{align*}
	\tr(\bm{X}^{\dag}) \bm{v}_j^{\top} (\bm{I}_d - \bm{X}^{\dag} \bm{X}) \bm{v}_j+ \bm{v}_j^{\top} \bm{X}^{\dag} \bm{v}_j &= \tr(\bm{X}^{\dag}) \bigg(\sum_{i \in [s]} \bm{e}_i \bigg)^{\top} (\bm{I}_d - \bm{X}^{\dag} \bm{X})  \bigg(\sum_{i \in [s]} \bm{e}_i \bigg) + \bm{v}_j^{\top} \bm{X}^{\dag} \bm{v}_j \\
	&= \sum_{i \in [s]} \bm{e}_i^{\top} \bm{X}^{\dag} \bm{e}_i  = s, \forall j \in [s+1,n],
	\end{align*} 
	where the second equality is due to Part (vi) in Lemma~\ref{lem:rankupd} with $\tau=s$ and the third one is due to $\bm{X}=\sum_{i \in [s]} \bm{e}_i \bm{e}_i^{\top}$ and $\bm{e}_i^{\top} \bm{X}^{\dag} \bm{e}_{\ell}=0$ for all $i,{\ell}\in [s]$ and $i\neq \ell$.
	\qed
\end{proof}

\subsection{Proof of \Cref{prop_ls}} \label{proof_prop_ls}
\propls*
\begin{proof}	The proof follows directly from Theorem \ref{them1}. Thus, we only sketch the proof for the sake of page limit.\par 
	
	\noindent{\textbf{Step 0.}} Given the output $\hat{S}$ of the local search Algorithm \ref{algo}, 
	let us denote $\bm{X} = \sum_{i \in \hat{S}} \bm{v}_i\bm{v}_i^{\top}$. Let  $\lambda_1  \ge \cdots \ge \lambda_s > \lambda_{s+1} = \cdots = \lambda_d =0$ denote the eigenvalues of $\bm{X}$. Clearly, according to the definition of $\delta$ and Cauchy's Interlacing theorem \citep{bellman1997introduction}, we have $\lambda_{\max}(\bm C)\geq \lambda_1$ and $\lambda_s \geq \delta$.
	
	\noindent{\textbf{Step 1.}} Construct $\bm{ \Lambda} =(\lambda_s t)^{-1}(\bm{I}_d - \bm{ X}^{\dag} \bm{ X}) +t^{-1} \bm{X}^{\dag}$ such that $\log \overset{s}{\det}  (\bm{X}) = -\log \underset{s}{\det} (\bm{\Lambda})+s\log t$.

	\noindent{\textbf{Step 2.}} We can show that $\bm{v}_i^{\top} \bm{\Lambda} \bm{v}_i = 1/t$ for all $i \in \hat{S}$. 
	
	Since the vectors $\{\bm{v}_i\}_{i \in \hat{S}}$ span the column space of $\bm{X}$, the assumption that $\bm{v}_i^{\top} \bm{v}_j = 0$ for each pair $ (i,j) \in \hat{S}\times ([n] \setminus \hat{S})$ implies that $\bm{v}_j$ is orthogonal to the column space of $\bm{X}$. Thus, we have
	\begin{align*}
	\bm{v}_j^{\top} \bm{X} = \bm{0} , \bm{v}_j^{\top} \bm{X}^{\dag} = \bm{0} , \bm{v}_j^{\top} \bm{\Lambda} \bm{v}_j = (\lambda_st)^{-1}\bm{v}_j^{\top} \bm{v}_j, \forall j\in [n]\setminus \hat{S}.
	\end{align*}
	To obtain the upper bound of $\bm{v}_j^{\top} \bm{\Lambda} \bm{v}_j$, according to Lemma~\ref{lem:rankupd2}, we have
	\begin{align*}
	1 \ge \left (\bm{v}_i^{\top}\bm{X}^{\dag} \bm{X}^{\dag}\bm{v}_i\right) \bm{v}_j^{\top}  (\bm{I}_d - \bm{X}^{\dag} \bm{X}) \bm{v}_j + \bm{v}_j^{\top} \bm{X}^{\dag} \bm{v}_i\bm{v}_i^{\top} \bm{X}^{\dag} \bm{v}_j =  \left (\bm{v}_i^{\top}\bm{X}^{\dag} \bm{X}^{\dag}\bm{v}_i\right) \bm{v}_j^{\top} \bm{v}_j , \forall i \in \hat{S},
	\end{align*}
	where the equality is due to $  \bm{v}_j^\top\bm{X}^{\dag}=\bm0$. Summing the above inequalities over $i \in \hat{S}$, then for each $j \in [n]\setminus \hat{S}$, we have
	\begin{align*}
	\bm{v}_j^{\top} \bm{\Lambda} \bm{v}_j = \frac{1}{\lambda_st}\bm{v}_j^{\top} \bm{v}_j \le \frac{1}{\lambda_st} \frac{s}{\tr( \bm{X}^{\dag})} \le \frac{\lambda_1}{\lambda_st}\leq \frac{\lambda_{\max}(\bm C)}{\delta t},
	\end{align*}
	where the second inequality is due to $\lambda_1 \tr( \bm{X}^{\dag})\geq {s}$, and the third inequality is from $\lambda_{\max}(\bm C)\geq \lambda_1$ and $\delta\leq \lambda_s$.

	\noindent{\textbf{Step 3.}} To choose $(\nu,\bm\mu)$ such that $(\bm{\Lambda},\nu,\bm\mu)$ is feasible to LD \eqref{eq_dual},  let us consider the optimization problem below
	\begin{align*}
 &\min_{t>0}\min_{\nu, \bm{\mu} \in \Re_{+}^n}\bigg \{ \log \overset{s}{\det} (\bm{X})+s\log t +s\nu+\sum_{i \in [n]} \mu_i-s :  \nu+\mu_i \ge \frac{1}{t} , \forall i \in \hat{S} ,\nu+\mu_i \ge \frac{\lambda_{\max}(\bm{C})}{\delta t}, \forall i \in [n]\setminus\hat{S} \bigg\},
	\end{align*}
	which provides an upper bound to $z^{LD}$.
	By optimizing the right-hand side, we obtain
	\begin{align*}
	z^{LD}\leq & \log \overset{s}{\det} (\bm{X})+ \min\left\{ s\log\left(\frac{\lambda_{\max}(\bm{C})}{\delta }\right), s\log \left( \frac{\lambda_{\max}(\bm{C})}{s\delta } (n-s) +\frac{2s-n}{s}\right) \right\}.
	\end{align*}
	Invoking the weak duality between MESP \eqref{mesp} and LD \eqref{eq_dual} and the fact that $\hat{S}$ is feasible to MESP \eqref{mesp}, we conclude that
	\begin{align*}
	\log \det^s (\bm{X}) \le z^* \le z^{LD} \le  \log \overset{s}{\det} (\bm{X})+ s\min\left\{ \log\left(\frac{\lambda_{\max}(\bm{C})}{\delta }\right), \log \left( \frac{\lambda_{\max}(\bm{C})}{s\delta } (n-s) -n / s + 2\right) \right\}.
	\end{align*} \qed
\end{proof}

\subsection{Proof of \Cref{prop:aobj}} \label{proof_prop_aobj}
\propaobj*
\begin{proof}
	\noindent \textbf{Part (i).} For any size-$s$ subset $S \subseteq [n]$ with $s\geq 1$, let $\bm{X}=\sum_{\ell \in S}\bm{v}_{\ell} \bm{v}_{\ell}^{\top}$, then for any $i \in [n]\setminus S$, we have
	$$\overset{s+1}{\tr}\left[(\bm{X}+\bm{v}_i\bm{v}_i^{\top})^{\dag}\right] =\overset{s}{\tr} (\bm{X}^{\dag})+ \frac{1+\bm{v}_i^{\top} \bm{X}^{\dag} \bm{v}_i }{\bm{v}_i^{\top} (\bm{I}_n -\bm{X}\bm{X}^{\dag}) \bm{v}_i}\geq \overset{s}{\tr} (\bm{X}^{\dag}),$$
	where the equality is due to Part (iii) in Lemma \ref{lem:rankupd} , and thus proves the monotonicity.
	
	\noindent \textbf{Part (ii).} Consider an instance of $n=3$, $\bm{v}_1=2\bm{e}_1+\bm{e}_2$, $\bm{v}_2=2\bm{e}_1-\bm{e}_2$ and $\bm{v}_3\in \Re^3$. Then we let $S_1=\{1\}, S_2=\{1,2\}$ and $\bm{X}_1=\sum_{i \in S_1}\bm{v}_i\bm{v}_i^{\top}$, $\bm{X}_2=\sum_{i \in S_2}\bm{v}_i\bm{v}_i^{\top}$. In this way, we have
	\begin{align*}
	\bm{X}_1=\left(                
	\begin{array}{ccc}   
	4 & 2 & 0\\  
	2 & 1& 0\\  
	0 & 0 & 0\\
	\end{array}
	\right), \ \  \bm{X}_1^{\dag}=\left(                
	\begin{array}{ccc}   
	0.16 & 0.08 & 0\\  
	0.08 & 0.04& 0\\  
	0 & 0 & 0\\
	\end{array}
	\right), \ \  
	\bm{X}_2=\left(                
	\begin{array}{ccc}   
	8 & 0 & 0\\  
	0 & 2& 0\\  
	0 & 0 & 0\\
	\end{array}
	\right), 
	\ \ 
	\bm{X}_2^{\dag}=\left(                
	\begin{array}{ccc}   
	0.125 & 0 & 0\\  
	0 & 0.5& 0\\  
	0 & 0 & 0\\
	\end{array}
	\right) .
	\end{align*}
	
	If $\bm{v}_3=(40 \  \ 10 \ \ 20)^{\top}$, then
	\begin{align*}
	\overset{2}{\tr}\left[(\bm{X}_1+\bm{v}_3\bm{v}_3^{\top})^{\dag}\right] -\overset{1}{\tr} (\bm{X}_1^{\dag})= \frac{1+324}{480} \ge \frac{1+250}{400} =\overset{3}{\tr}\left[(\bm{X}_2+\bm{v}_3\bm{v}_3^{\top})^{\dag}\right] -\overset{2}{\tr} (\bm{X}_2^{\dag}),
	\end{align*}
	which disproves the discrete-supermodularity.
	
	If $\bm{v}_3=(10 \  \ 10 \ \ 20)^{\top}$, then
	\begin{align*}
	\overset{2}{\tr}\left[(\bm{X}_1+\bm{v}_3\bm{v}_3^{\top})^{\dag}\right] -\overset{1}{\tr} (\bm{X}_1^{\dag})= \frac{1+52}{420} \le \frac{1+62.5}{400} =\overset{3}{\tr}\left[(\bm{X}_2+\bm{v}_3\bm{v}_3^{\top})^{\dag}\right] -\overset{2}{\tr} (\bm{X}_2^{\dag}),
	\end{align*}
	which disproves the discrete-submodularity.
	
	\noindent \textbf{Part (iii).} Let us consider Example \ref{eg:prop} in Proposition \ref{pro:obj}. In this example, we consider two feasible solutions $\bm{x}^1=(1, 0)^{\top}$ and $\bm{x}^2=(0, 1)^{\top}$ of A-MESP \eqref{a_eq_obj} with $s=1$. The following two cases disprove the convexity and concavity: 
	\begin{enumerate}[{Case} 1.]
		\item 
		If $a=1$ and $b=1$, we have
		\begin{align*}
		\frac{1}{2} \overset{1}{\tr} \left[\left ( \bm{v}_1\bm{v}_1^\top \right )^{\dag} \right]+\frac{1}{2} \overset{1}{\tr} \left[ \left ( \bm{v}_2\bm{v}_2^\top \right )^{\dag}\right] = 1 \le \overset{1}{\tr} \bigg[ \bigg (\sum_{i\in[n]} \frac{x_i^1+x_i^2}{2} \bm{v}_i\bm{v}_i^\top \bigg )^{\dag}\bigg] = 2 ,
		\end{align*}
		which disproves the convexity.		
		
		\item	If $a=4$ and $b=1$, then we have
		\begin{align*}
		\frac{1}{2} \overset{1}{\tr} \left[\left ( \bm{v}_1\bm{v}_1^\top \right )^{\dag} \right]+\frac{1}{2} \overset{1}{\tr} \left[ \left ( \bm{v}_2\bm{v}_2^\top \right )^{\dag}\right] = \frac{1}{8}+\frac{1}{2} \ge \overset{1}{\tr} \bigg[ \bigg (\sum_{i\in[n]} \frac{x_i^1+x_i^2}{2} \bm{v}_i\bm{v}_i^\top \bigg )^{\dag}\bigg] = \frac{1}{2} ,
		\end{align*}
		which disproves the concavity. \qed
	\end{enumerate}
\end{proof}

\subsection{Proof of \Cref{lem:aldmax}} \label{proof_lem_aldmax}
\lemaldmax*
\begin{proof}
Following the proof of Lemma \ref{lem:ldmax}, the left-hand side of \eqref{eq_relaxed_A1} can be equivalently written as
	$$\min_{\begin{subarray}{c} \bm{\lambda} \in \Re_{+}^d\\
		\lambda_1\ge \cdots \ge \lambda_d \ge 0
		\end{subarray}} \Bigg\{  \sum_{i \in [s]}\frac{1}{\lambda_i} + \min_{ \begin{subarray}{c} \bm{Q}, \bm{\theta} \in \Re_{+}^d\\ \theta_{1} \le \cdots \le \theta_{d}
		\end{subarray}   }  \bigg\{ \sum_{i \in [d]} \theta_i \lambda_i : \bm{\theta} =\diag(\bm{Q}^{\top} \bm{\Lambda} \bm{Q} ), \bm{Q} \textrm{ is orthonormal} \bigg\} \Bigg \},$$
which can be further reduced to
	$$\min_{\begin{subarray}{c} \bm{\lambda} \in \Re_{+}^d\\
	\lambda_1\ge \cdots \ge \lambda_d \ge 0
	\end{subarray}} \bigg\{  \sum_{i \in [s]}\frac{1}{\lambda_i} +  \sum_{i \in [d]}\beta_i \lambda_i \bigg \}.$$
	Minimizing the inner problem over $\bm{\lambda}$ yields $\lambda_i = \frac{1}{\sqrt{\beta_i}}$ for any $i \in [s]$ and $\lambda_i = 0$ otherwise.
Thus,
	$$\min_{\bm{X} \succeq 0} \left\{\overset{s}{\tr}(\bm{X}^{\dag})+\tr(\bm{X}\bm{\Lambda}) \right \} = 2\sum_{i \in [s]} \sqrt{\beta_i} = 2 \ \underset{s}{\tr}\left(\bm{\Lambda}^{\frac{1}{2}}\right). $$
	\qed
\end{proof}

\subsection{Proof of \Cref{thm:apcont}} \label{proof_them_apcont}
\thmapcont*
\begin{proof}
	For A-LD \eqref{a_eq_dual}, let $\bm{x}\in \Re_{+}^n$ denote the Lagrangian multipliers associated with $\nu + \mu_i \ge \bm{v}_i^\top \bm{\Lambda} \bm{v}_i$ for each $i \in [n]$ and thus its dual is equal to
	\begin{align*}
	z^{LD}_A := \min_{\bm{x}\in \Re_{+}^n}  \max_{\bm{\Lambda} \succeq 0, \nu, \bm{\mu} \in \Re_{+}^n } \bigg\{
	2 \underset{s}{\tr}\left(\bm{\Lambda}^{\frac{1}{2}}\right) - s \nu - \sum_{i \in [n]} {\mu_i} + \sum_{i \in [n]} x_i( \nu + \mu_i - \bm{v}_i^\top \bm{\Lambda} \bm{v}_i) \bigg\},
	\end{align*}
	where according to theorem 3.2.2 in \cite{ben2012optimization}, the strong duality holds since the constraint system satisfies the relaxed Slater condition.
	
	Clearly, the inner maximization can be separated into two parts: maximization over $\bm{\Lambda} \succeq 0$ and maximization over $\nu, \bm{\mu} \in \Re_{+}^n $. 
	\begin{enumerate}[(i)]
		\item Let $\bm{X}=\sum_{i \in [n]}x_i \bm{v}_i\bm{v}_i^{\top}$ and then the inner maximization problem over $\bm{\Lambda} \succeq 0$ becomes
		$$\max_{\bm{\Lambda} \succeq 0}   \bigg\{
		2 \underset{s}{\tr}\left(\bm{\Lambda}^{\frac{1}{2}}\right) - \tr( \bm{\Lambda} \bm{X} ) \bigg\}.$$ 
		Suppose $\bm{\Lambda}$ has eigenvalues $0 \leq\beta_1 \le  \cdots \le \beta_d$ and $\bm{\Lambda} = \bm{P} \Diag(\bm{\beta})\bm{P}^{\top}$ with an orthonormal matrix $\bm{P}$. Let us denote $\bm{\theta} = \diag(\bm{P}^{\top} \bm{X} \bm{P})$ and for $\bm{X}$ with rank $r$, let $\bm{X}=\bm{Q} \Diag(\bm{\lambda}) \bm{Q}^{\top}$ denote its eigendecomposition, where $\lambda_1 \ge \cdots \ge \lambda_r > \lambda_{r+1}= \cdots =\lambda_d= 0$ and $\bm{Q}$ is orthonormal.  Following the similar proof of Lemma \ref{lem:ldmax2}, we can reformulate the above maximization problem as
		\begin{align*}
		\max_{\begin{subarray}{c} \bm{P}, \bm{\theta} \in \Re_{+}^d, \bm{\beta}\in \Re_{+}^d,\\
			0\le \beta_1 \le \cdots \le \beta_d,\\
			\theta_{1}\ge \cdots \ge \theta_{d}\ge 0
			\end{subarray}}  \Bigg \{ 2\sum_{i \in [s]}\sqrt{\beta_i}  - \sum_{i \in [d] } \theta_{i} \beta_i :  {\bm{\theta}} = \diag(\bm{P}^{\top} \bm{X} \bm{P}), \bm{P} \textrm{ is orthonormal}   \Bigg\}= \Phi_s(\bm{X}),
		\end{align*}
		with an optimal solution
		\begin{align*}
		\bm{P}^* = \bm{Q}, \bm{\theta}^* = \bm{\lambda},
		\beta_{i}^* = \frac{1}{\lambda_{i}^2}, \forall i \in [k], \beta_{i}^* = \frac{(s-k)^2}{(\sum_{i \in [k+1,d]} \lambda_i)^2}, \forall i \in [k+1,r], \beta_{i}^* \ge \beta_r^*, \forall i \in [r+1,d].
		\end{align*}
		\item For the maximization with respect to $\nu, \bm{\mu} \in \Re_{+}^n$, we have
		\begin{align*}
		\max_{\nu, \bm{\mu} \in \Re_{+}^n } \bigg\{
		- s \nu - \sum_{i \in [n]} {\mu_i} + \sum_{i \in [n]} x_i( \nu + \mu_i )\bigg\}=\begin{cases}
		0,&\textrm{ if }\sum_{i \in [n]}x_i=s, x_i \leq 1,\\
		\infty, &\textrm{ otherwise}.
		\end{cases}
		\end{align*}
	\end{enumerate}
	Combining Parts (i) and (ii), we arrive at \eqref{a_eq_pcont}. \qed
\end{proof}

\subsection{Proof of \Cref{lem:lowbnd}} \label{proof_lem_lowbnd}
\lemlowbnd*

\begin{proof}
	Without loss of generality, suppose that the eigenvalues of matrix $\sum_{i \in [n]} x_i \bm{v}_i \bm{v}_i^{\top}$ are sorted in a descending order, i.e., $\lambda_1\ge \cdots \ge \lambda_d\ge 0$.  Let us construct a new vector $\bm{\beta}$ as
	$$\beta_i = \lambda_i, \forall i \in [k],  \beta_i = \frac{\sum_{i\in [k+1,d]} \lambda_i}{s-k}, \forall i \in [k+1, s], \beta_i = 0, \forall i \in [s+1,d].$$
	
For any two vectors $\bm{x}, \bm{y} \in \Re^d$, we say that $\bm{x}$ is majorized by $\bm{y}$ if
	\begin{align*}
	\sum_{i \in [t]} x_{i} \le \sum_{i \in [t]} y_{i}, \forall t \in [d-1],
	\sum_{i \in [d]} x_{i} = \sum_{i \in [d]} y_{i}.
	\end{align*}	
	Further, a function $f$ is Schur-convex if $f(\bm{x}) \le f(\bm{y})$ holds for any $\bm{x}, \bm{y} \in dom(f)$ that $\bm{x}$ is majorized by $\bm{y}$ (see, e.g., \citealt{hwang1993majorization}). 
	
	Clearly,  $\bm{\lambda}$ is majorized by $\bm{\beta}$ and thus obtain
	\begin{align*}
	\Phi_s\bigg(\sum_{i \in [n]} x_i \bm{v}_i \bm{v}_i^{\top}\bigg) = \frac{E_{s-1}(\bm{\beta})}{E_{s}(\bm{\beta})} \ge  \frac{E_{s-1}(\bm{\lambda})}{E_{s}(\bm{\lambda})} ,
	\end{align*}
	{where the inequality follows from the Schur-convexity of function $\frac{E_{s-1}(\cdot)}{E_{s}(\cdot)} $ (see theorem 3.1 in \citealt{guruswami2012optimal} and the fact $1/(f(\bm x))$ is Schur-convex if $f(\bm x)$ is Schur-concave)}. \qed
\end{proof}

\subsection{Proof of \Cref{them:asamp}} \label{proof_them_asampling}
\themasampling*
\begin{proof}
	For any positive semidefinite matrix $\bm{X} \succeq 0$, let $\bm{\lambda}(\bm{X})$ denote the vector of its eigenvalues. 
	
	The expected objective value output from Algorithm \ref{alg:volsamp} can be upper bounded by
	\begin{align*}
	\mathbb{E} \bigg[ \overset{s}{\tr} \bigg[ \bigg(\sum_{i \in \tilde{S}}\bm{v}_i \bm{v}_i^{\top} \bigg)^{\dag}\bigg] \bigg] &=\sum_{{S}\in \binom{[n]}{s}} \mathbb{P}[\tilde{S}= S]  \overset{s}{\tr} \left[\left(\bm{V}_{S} \bm{V}_{S}^{\top}\right)^{\dag}\right] \\
	&=\sum_{{S}\in \binom{[n]}{s}} \frac{\prod_{i \in S}\hat{x}_i \overset{s}{\det}(\bm{V}_{S} \bm{V}_{S}^{\top})}{\sum_{\bar{S}\in \binom{[n]}{s}} \prod_{i \in \bar{S}} \hat{x}_i \overset{s}{\det}(\bm{V}_{\bar{S}} \bm{V}_{\bar{S}}^{\top})} \frac{E_{s-1}(\bm{\lambda}(\bm{V}_{S} \bm{V}_{S}^{\top}))}{\overset{s}{\det}(\bm{V}_{S} \bm{V}_{S}^{\top})} \\
	&= \frac{\sum_{S\in \binom{[n]}{s}} \prod_{i \in S}\hat{x}_i \sum_{T\in {S\choose s-1}}E_{s-1}(\bm{\lambda}(\bm{V}_{T} \bm{V}_{T}^{\top}))}{\sum_{\bar{S}\in \binom{[n]}{s}} \prod_{i \in \bar{S}} \hat{x}_i \overset{s}{\det}(\bm{V}_{\bar{S}} \bm{V}_{\bar{S}}^{\top})} \\
	&= \frac{\sum_{T\in \binom{[n]}{s-1}}\sum_{S\in \binom{[n]}{s}, T\subseteq S} \prod_{i \in S}\hat{x}_i  E_{s-1}(\bm{\lambda}(\bm{V}_{T} \bm{V}_{T}^{\top}))}{\sum_{\bar{S}\in \binom{[n]}{s}} \prod_{i \in \bar{S}} \hat{x}_i \overset{s}{\det}(\bm{V}_{\bar{S}} \bm{V}_{\bar{S}}^{\top})} \\
	&= \frac{\sum_{T\in \binom{[n]}{s-1}}(\sum_{i\in [n]\setminus T}\hat{x}_i) \prod_{i \in T}\hat{x}_i  E_{s-1}(\bm{\lambda}(\bm{V}_{T} \bm{V}_{T}^{\top}))}{\sum_{\bar{S}\in \binom{[n]}{s}} \prod_{i \in \bar{S}} \hat{x}_i \overset{s}{\det}(\bm{V}_{\bar{S}} \bm{V}_{\bar{S}}^{\top})} \\
	&\le \min(s,n-s+1) \frac{\sum_{T\in \binom{[n]}{s-1}} \prod_{i \in T}\hat{x}_i  E_{s-1}(\bm{\lambda}(\bm{V}_{T} \bm{V}_{T}^{\top}))}{\sum_{\bar{S}\in \binom{[n]}{s}} \prod_{i \in \bar{S}} \hat{x}_i E_s(\bm{\lambda}(\bm{V}_{\bar{S}} \bm{V}_{\bar{S}}^{\top}))}\\
	&= \min(s,n-s+1) \frac{E_{s-1}(\bm{\lambda}(\sum_{i\in [n]}\hat{x}_i\bm{v}_i\bm{v}_i^\top))}{E_{s}(\bm{\lambda}(\sum_{i\in [n]}\hat{x}_i\bm{v}_i\bm{v}_i^\top))}\\
	& \le  \min(s,n-s+1) \Phi_s\bigg(\sum_{i\in [n]}\hat{x}_i\bm{v}_i\bm{v}_i^\top\bigg) \le  \min(s,n-s+1) z_A^*
	\end{align*}
	where the third equality is due to Cauchy-Binet formula \citep{broida1989comprehensive}, the fourth and fifth equalities are due to interchange of summations and collecting terms, the first inequality stems from the fact that $\sum_{i\in [n]\setminus T}\hat{x}_i \leq \min(s,n-s+1) $ for any size-$(s-1)$ subset $T$, the sixth equality is due to Cauchy-Binet formula \citep{broida1989comprehensive}, the second inequality is from Lemma \ref{lem:lowbnd} and the last inequality results from the weak duality. \qed
\end{proof}

\subsection{Proof of \Cref{lem:arankupd}} \label{proof_lem_arankupd}
\lemarankupd*
\begin{proof}
	Similar to the analysis of Lemma \ref{lem:rankupd2}, for each pair $(i,j)$, there are two cases to be considered, conditional on whether $\bm{v}_j \in \col(\bm{X}_{-i})$ or not. If the rank of $\bm{X}$ is $s$, then $\overset{s}{\tr}(\bm{X}^{\dag}) = \tr (\bm{X}^{\dag})$, thus for notational convenience, we use $\tr(\cdot)$ instead.
	\begin{enumerate}[(i)]
		\item If $\bm{v}_j \notin \col(\bm{X}_{-i})$, according to the local optimality condition,  we have
		\begin{align}
		{\tr} (\bm{X}^{\dag}) & \le {\tr}[(\bm{X}_{-i}
		+\bm{v}_j\bm{v}_j^{\top})^{\dag}]=  {\tr} (\bm{X}_{-i}^{\dag})+ \frac{1+\bm{v}_j^{\top} \bm{X}_{-i}^{\dag} \bm{v}_j }{\bm{v}_j^{\top} (\bm{I}_n -\bm{X}_{-i}\bm{X}_{-i}^{\dag}) \bm{v}_j}\notag\\
		&= {\tr} (\bm{X}^{\dag})-\frac{\bm{v}_i^{\top} (\bm{X}^{\dag})^3 \bm{v}_i}{\bm{v}_i^{\top} (\bm{X}^{\dag})^2 \bm{v}_i}+ \frac{1+\bm{v}_j^{\top} \bm{X}_{-i}^{\dag} \bm{v}_j }{\bm{v}_j^{\top} (\bm{I}_n -\bm{X}_{-i}\bm{X}_{-i}^{\dag}) \bm{v}_j}\notag\\
		&= {\tr} (\bm{X}^{\dag})-\frac{\bm{v}_i^{\top} (\bm{X}^{\dag})^3 \bm{v}_i}{\bm{v}_i^{\top} (\bm{X}^{\dag})^2 \bm{v}_i}+ \frac{1+\bm{v}_j^{\top} \bm{X}_{-i}^{\dag} \bm{v}_j }{\bm{v}_j^{\top} (\bm{I}_n -\bm{X}\bm{X}^{\dag}) \bm{v}_j + (\bm{v}_j^{\top} \bm{X}^{\dag} \bm{v}_i)^2/ \bm{v}_i^{\top} (\bm{X}^{\dag})^2 \bm{v}_i},\label{eq_local_opt_A_MESP}
		\end{align}
		where the equalities follow from Part (iii), Part (iv) and Part (viii) in Lemma \ref{lem:rankupd}, respectively.
		
		Then by Part (iv) in Lemma \ref{lem:rankupd}, we further have
		$$\bm{v}_j^{\top} \bm{X}_{-i}^{\dag} \bm{v}_j = \bm{v}_j^{\top} \bm{X}^{\dag} \bm{v}_j - 2 \frac{\bm{v}_j^{\top} \bm{X}^{\dag} \bm{v}_i \bm{v}_j^{\top} (\bm{X}^{\dag})^2 \bm{v}_i}{\bm{v}_i^{\top} (\bm{X}^{\dag})^2 \bm{v}_i} + \frac{\bm{v}_i^{\top} (\bm{X}^{\dag})^3 \bm{v}_i (\bm{v}_j^{\top} \bm{X}^{\dag} \bm{v}_i)^2 }{(\bm{v}_i^{\top} (\bm{X}^{\dag})^2 \bm{v}_i)^2}.$$
		Plugging the equation above into the local optimality condition \eqref{eq_local_opt_A_MESP}, we can simplify it as
		\begin{align*}
		\bm{v}_i^{\top} (\bm{X}^{\dag})^3 \bm{v}_i \bm{v}_j^{\top} (\bm{I}_n- \bm{X}^{\dag}\bm{X})\bm{v}_j 
		\le \bm{v}_i^{\top} (\bm{X}^{\dag})^2 \bm{v}_i + \bm{v}_i^{\top} (\bm{X}^{\dag})^2 \bm{v}_i\bm{v}_j^{\top} \bm{X}^{\dag} \bm{v}_j- 2\bm{v}_i^{\top} (\bm{X}^{\dag})^2 \bm{v}_j\bm{v}_i^{\top} \bm{X}^{\dag} \bm{v}_j.
		\end{align*}
		\item If $\bm{v}_j \in \col(\bm{X}_{-i})$, we show that $\bm{v}_j^{\top} (\bm{I}_n- \bm{X}^{\dag}\bm{X})\bm{v}_j  = 0$ and $\bm{v}_i^{\top} \bm{X}^{\dag} \bm{v}_j  = 0$ for each $i \in \hat{ S}$ in the proof of Lemma \ref{lem:rankupd2}. Thus, it is clear that
		\begin{align*}
		0=\bm{v}_i^{\top} (\bm{X}^{\dag})^3 \bm{v}_i \bm{v}_j^{\top} (\bm{I}_n- \bm{X}^{\dag}\bm{X})\bm{v}_j 
		&\le \bm{v}_i^{\top} (\bm{X}^{\dag})^2 \bm{v}_i + \bm{v}_i^{\top} (\bm{X}^{\dag})^2 \bm{v}_i\bm{v}_j^{\top} \bm{X}^{\dag} \bm{v}_j\\
		&=\bm{v}_i^{\top} (\bm{X}^{\dag})^2 \bm{v}_i + \bm{v}_i^{\top} (\bm{X}^{\dag})^2 \bm{v}_i\bm{v}_j^{\top} \bm{X}^{\dag} \bm{v}_j- 2\bm{v}_i^{\top} (\bm{X}^{\dag})^2 \bm{v}_j\bm{v}_i^{\top} \bm{X}^{\dag} \bm{v}_j.
		\end{align*} \qed
	\end{enumerate}
\end{proof}

\subsection{Proof of \Cref{them:alocs}}\label{proof_them_alocs}
\themalocs*
\begin{subequations}
\begin{proof}
	Let us denote $\bm{X}= \sum_{i \in \hat{S}}\bm{v}_i\bm{v}_i^{\top}$. Clearly, the rank of $\bm{X}$ is $s$ and suppose that its eigenvalues satisfy $\lambda_1 \ge \cdots \ge \lambda_s > \lambda_{s+1}=\cdots = \lambda_d=0 $. Thus, $\overset{s}{\tr}(\bm{X}^{\dag})=\sum_{i \in[s]} \frac{1}{\lambda_i}= \tr(\bm{X}^{\dag})$.  If the rank of an $n\times n$ positive semi-definite matrix $\bm{Y}$ is $s$, since $\overset{s}{\tr}(\bm{Y}) = \tr (\bm{Y})$, thus for notational convenience, we will use $\tr(\cdot)$ instead.
	
	Similar to the proof in Theorem \ref{them1}, our proof relies on the weak duality of A-LD \eqref{a_eq_dual}. Consider a feasible variable $\bm{ \Lambda}$ of A-LD \eqref{a_eq_dual} as 
	\begin{align*}
	\bm{ \Lambda} = 2t^2  (\bm{X}^{\dag})^{2}+ 2t^2\lambda_s^{-2} (\bm{I}_n -\bm{X}^{\dag}\bm{X}),
	\end{align*}
	where $t>0$ is a scaling factor and will be specified later. 	
	Next, to construct the solution of the other two dual variables $({\nu}, {\bm{\mu}})$, we need to check the feasibility of constraints in A-LD \eqref{a_eq_dual}, i.e., 
	\begin{align*}
	\bm{v}_i^{\top} \bm{ \Lambda} \bm{v}_i\le \nu+\mu_i,\forall i \in [n].%
	\end{align*}
	There are two cases to be considered: (i) for each $ i \in \hat{S}$ and (ii) for each  $j \in [n]\setminus \hat{S}$. 
	
	\begin{enumerate}[(i)]  \setlength{\itemsep}{0pt}
		\setlength{\parskip}{0pt}
		\item For each $ i \in \hat{S}$, we have
		\begin{align} \label{a_ineqi}
		\bm{v}_i^{\top} \bm{\Lambda}\bm{v}_i = 2t^2\bm{v}_i^{\top} (\bm{X}^{\dag})^2\bm{v}_i \le 2t^2\tr(\bm{X}^{\dag}),
		\end{align}
		where the equation is due to Part (vi) in Lemma \ref{lem:rankupd} and the inequality is from $\sum_{i \in \hat{S}} \bm{v}_i^{\top} (\bm{X}^{\dag})^2\bm{v}_i = \tr(\bm{X}^{\dag})$.
		\item 
		For each  $j \in [n]\setminus \hat{S}$, according to Lemma \ref{lem:arankupd}, for each $i\in \hat{S}$, we have
		\begin{align*}
		\bm{v}_i^{\top} (\bm{X}^{\dag})^3 \bm{v}_i \bm{v}_j^{\top} (\bm{I}_n- \bm{X}^{\dag}\bm{X})\bm{v}_j 
		\le \bm{v}_i^{\top} (\bm{X}^{\dag})^2 \bm{v}_i + \bm{v}_i^{\top} (\bm{X}^{\dag})^2 \bm{v}_i\bm{v}_j^{\top} \bm{X}^{\dag} \bm{v}_j- 2\bm{v}_i^{\top} (\bm{X}^{\dag})^2 \bm{v}_j\bm{v}_i^{\top} \bm{X}^{\dag} \bm{v}_j .
		\end{align*}
		Summing up the above inequality over all $i\in \hat{S}$, we can obtain
		\begin{align*}
		\frac{1}{t^2}\bm{v}_j^{\top} \bm{\Lambda}\bm{v}_j&\leq \lambda_s^{-2} \bm{v}_j^{\top} (\bm{I}_d+ \bm{X}^{\dag}\bm{X})\bm{v}_j+{\tr}[(\bm{X}^{\dag})^2] \bm{v}_j^{\top} (\bm{I}_d- \bm{X}^{\dag}\bm{X})\bm{v}_j  + 2\bm{v}_j^{\top} (\bm{X}^{\dag})^2 \bm{v}_j
		\\
		&\le  {\tr}(\bm{X}^{\dag}) + {\tr}(\bm{X}^{\dag}) \bm{v}_j^{\top} \bm{X}^{\dag} \bm{v}_j+\lambda_s^{-2} \bm{v}_j^{\top} (\bm{I}_d- \bm{X}^{\dag}\bm{X})\bm{v}_j,
		\end{align*}
		where the first inequality is due to ${\tr}[(\bm{X}^{\dag})^2]\geq \lambda_s^{-2}$. Above, we can further bound the right-hand side as below
		\begin{align*}
		\frac{1}{t^2}\bm{v}_j^{\top} \bm{\Lambda}\bm{v}_j&\le {\tr}(\bm{X}^{\dag}) + {\tr}(\bm{X}^{\dag}) \bm{v}_j^{\top} \bm{X}^{\dag} \bm{v}_j+\lambda_s^{-2} \bm{v}_j^{\top} (\bm{I}_d- \bm{X}^{\dag}\bm{X})\bm{v}_j\\
		&\le {\tr}(\bm{X}^{\dag}) + \lambda_s^{-1}{\tr}(\bm{X}^{\dag}) \bm{v}_j^{\top} \bm{X}^{\dag}\bm{X} \bm{v}_j+\lambda_s^{-1}{\tr}(\bm{X}^{\dag}) \bm{v}_j^{\top} (\bm{I}_d- \bm{X}^{\dag}\bm{X})\bm{v}_j\\
		&={\tr}(\bm{X}^{\dag}) (1+\lambda_s^{-1}\bm{v}_j^{\top}\bm{v}_j) \le {\tr}(\bm{X}^{\dag}) \left(1+ \frac{\lambda_{\max}(\bm{C})}{\delta }\right),
		\end{align*}
		where the second inequality is because ${\tr}[(\bm{X}^{\dag})]\geq \lambda_s^{-1}$ and $\bm{X}^{\dag}\succeq\lambda_s^{-1}\bm{X}^{\dag}\bm{X}$, and the third inequality is due to the facts that $\bm{v}_{\ell}^{\top}\bm{v}_{\ell} \le \lambda_{\max}(\bm{C})$ for any $\ell \in [n]$ and $\lambda_s \ge \delta$.
		
		Thus, for each  $j \in [n]\setminus \hat{S}$, we must have
		\begin{align}
		\bm{v}_j^{\top} \bm{\Lambda}\bm{v}_j \leq t^2{\tr}(\bm{X}^{\dag}) \left(1+ \frac{\lambda_{\max}(\bm{C})}{\delta }\right).  \label{a_ineqj}
		\end{align}
	\end{enumerate}

	Using inequalities \eqref{a_ineqi} and \eqref{a_ineqj} to construct $(\nu, \bm{\mu})$, it suffices to solve the optimization problem
	\begin{align*}
	z_A^{LD}\geq \max_{t>0}\max_{ \nu, \bm{\mu} \in \Re_{+}^n} &\bigg\{ t2\sqrt{2}\overset{s}{\tr}(\bm{X}^{\dag})-s\nu-\sum_{i \in [n]} \mu_i :  \nu+\mu_i \ge 2t^2{\tr}(\bm{X}^{\dag}) , \forall i \in \hat{S} ,\\
	&\nu+\mu_i \ge t^2{\tr}(\bm{X}^{\dag}) \left(1+ \frac{\lambda_{\max}(\bm{C})}{\delta }\right) , \forall i \in [n]\setminus\hat{S} \bigg\}.
	\end{align*}
	Above, by checking the primal and dual of inner maximization problems, there are following two candidate optimal solutions:	
	\begin{align*}
	& \nu^a =t^2{\tr}(\bm{X}^{\dag}) \left(1+ \frac{\lambda_{\max}(\bm{C})}{\delta }\right),  \mu_i^a= 0, \forall i \in [n], \\
	& \nu^b = 2t^2{\tr}(\bm{X}^{\dag}), \mu_i^b = 0 ,\forall i \in \hat{S}, \mu_i^b  = t^2{\tr}(\bm{X}^{\dag})  \left(\frac{\lambda_{\max}(\bm{C})}{ \delta }-1\right) ,\forall i \in [n]\setminus \hat{S}.
	\end{align*}
	Plugging in these two solutions, the above maximization problem becomes
	\begin{align*}
	z_A^{LD}\geq{\tr}(\bm{X}^{\dag}) \max_{t>0} \max\left\{2\sqrt{2}t-s (1+ \frac{\lambda_{\max}(\bm{C})}{\delta })t^2, 2\sqrt{2}t-\left(2s+(n-s)\left(\frac{\lambda_{\max}(\bm{C})}{ \delta }-1\right) \right)t^2  \right\}.
	\end{align*}
	By swapping the two maximization operators and optimizing over $t$, the right-hand side of above inequality is further equivalent to
	\begin{align*}
	z_A^{LD}\geq{\tr}(\bm{X}^{\dag}) \max\left\{ \frac{2}{s(1+\lambda_{\max}(\bm{C})/\delta )}, \frac{2}{n+s+(n-s)\lambda_{\max}(\bm{C})/\delta } \right\}.
	\end{align*}
	Using the fact that $z_A^{LD}\leq z_A^*$, we obtain the desired approximation ratio. \qed
\end{proof}
\end{subequations}

\section{MESP  \eqref{mesp} using Sample Covariance Matrix}\label{app:estimated}
\begin{subequations}
When the true covariance matrix $\bm C$ is not available, we estimate it using $N$ i.i.d. samples, denoted by $\hat{\bm C}_N$. %
Suppose that the random observations are multi-variate sub-Gaussian (see the formal definition in \citealt{vershynin2018high}).
According to theorem 4.7.1 in \cite{vershynin2018high}, we have the following generalization bound between  the sample covariance matrix and the true one
\begin{align} \label{estimate}
\mathbb{E}\left[\| \bm C - \hat{\bm C}_N\|_2 \right] \le c\left(\sqrt{\frac{n}{N}}+\frac{n}{N}\right),
\end{align}
where $c>0$ is a positive constant depending on the data and for a symmetric matrix $\bm X$, we let $\|\bm X\|_2$ denote its largest eigenvalue.
Then, let $\hat{z}_N$ denote the optimal value of MESP \eqref{mesp} using the sample covariance matrix $\hat{\bm C}_N$, i.e.,
\begin{align}
	\hat{z}_N :=  \max_{S} \left\{ \log \det ((\hat{\bm C}_N)_{S,S}):  S \subseteq [n] , |S|=s \right\}. \label{mesp_approx}
\end{align}
Next, we show that with high probability, one has $|z^*-\hat{z}_N| = O(1/\sqrt{N})$.

\begin{proposition}
Suppose that the random observations are multi-variate sub-Gaussian and the positive constants $\zeta,\bar{\zeta}>0$ denote the minimum of all the smallest eigenvalues among all the $s\times s$ positive definite principal submatrices of $\bm C$ and the maximum of all the largest eigenvalues among all the $s\times s$ principal submatrices of $\bm C$, respectively. If the sample size satisfies $N\ge \max\{n, 16c^2n/(\eta^2\zeta^2) (2\bar{\zeta} /\zeta+1)^{2s-2} \}$ with the constants $c>0$ and $\eta \in (0,1)$, then with  probability  at least $1-\eta$, we have
\begin{align*}
|z^*-\hat{z}_N|  \le s\log\left(1+ \frac{4c}{\eta\zeta}\sqrt{\frac{n}{N}}\right) \leq  \frac{4cs}{\eta\zeta}\sqrt{\frac{n}{N}}.
\end{align*} 
\end{proposition}

\begin{proof}
We split the proof into two steps.\par
\noindent\textbf{Step I.} 
Since $N\geq n$, the inequality \eqref{estimate} implies that
\begin{align} \label{estimate2}
	\mathbb{E}\left[\| \bm C - \hat{\bm C}_N\|_2 \right] \le 2c\sqrt{\frac{n}{N}}.
\end{align}
Together with Markov inequality, we have
\begin{align*}
\mathbb{P}\left[\| \bm C - \hat{\bm C}_N\|_2 \le \frac{2c}{\eta}\sqrt{\frac{n}{N}} \right]=1-	\mathbb{P}\left[\| \bm C - \hat{\bm C}_N\|_2 > \frac{2c}{\eta}\sqrt{\frac{n}{N}}\right]\ge 1-  \frac{\eta}{2c}\sqrt{\frac{N}{n}} {\mathbb{E}\left[\| \bm C - \hat{\bm C}_N\|_2 \right]} \ge 1-\eta.
\end{align*}

Thus with probability at least $1-\eta$, the estimated covariance matrix $\hat{\bm C}_N$ satisfies
\begin{align}\label{eq_dist}
- \frac{2c}{\eta}\sqrt{\frac{n}{N}}\bm I_n \preceq \bm C - \hat{\bm C}_N \preceq \frac{2c}{\eta}\sqrt{\frac{n}{N}}\bm I_n.
\end{align}

\noindent\textbf{Step II.} 
Let $S^*$ and $\hat{S}_N$  denote the optimal solutions of MESP \eqref{mesp} and formulation \eqref{mesp_approx}, respectively. 
Since the optimal principal submatrix $\bm C_{S^*,S^*}$ is positive definite and $N\ge 16c^2n/(\eta^2\zeta^2) (2\bar{\zeta} /\zeta+1)^{2s-2}$, using the bounds of $\hat{\bm C}_N$ in \eqref{eq_dist},  $(\hat{\bm C}_N)_{S^*,S^*}$ is also positive definite with the smallest eigenvalue at least $\zeta/2$ (see the inequalities \eqref{min_eign} below).
Thus, both the optimal principal submatrices $\bm C_{S^*,S^*}$ and $(\hat{\bm C}_N)_{\hat{S}_N,\hat{S}_N}$ must be positive definite and their corresponding optimal values $z^*$ and  $\hat{z}_N$ satisfy
\begin{align*}
z^*- \hat{z}_N &= \log\det(\bm C_{S^*,S^*}) -  \log\det((\hat{\bm C}_N)_{\hat{S}_N,\hat{S}_N}) \\
&=  \log\det(\bm C_{S^*,S^*}) - \log\det((\hat{\bm C}_N)_{S^*,S^*})+ \log\det((\hat{\bm C}_N)_{S^*,S^*})-  \log\det((\hat{\bm C}_N)_{\hat{S}_N,\hat{S}_N}) \\
& \le \log\det(\bm C_{S^*,S^*}) - \log\det((\hat{\bm C}_N)_{S^*,S^*})\\
& \le \log\det\left((\hat{\bm C}_N)_{S^*,S^*} +\frac{2c}{\eta}\sqrt{\frac{n}{N}}\bm I_s\right)- \log\det((\hat{\bm C}_N)_{S^*,S^*}) = \log\det\left(\bm I_s +\frac{2c}{\eta}\sqrt{\frac{n}{N}}(\hat{\bm C}_N)^{-1}_{S^*,S^*}\right)\\
& \le s \log \left(1+ \frac{2c}{\eta}\sqrt{\frac{n}{N}} \frac{1 }{ \lambda_{\min}((\hat{\bm C}_N)_{S^*,S^*})} \right) \le  s \log \left( 1+\frac{4c}{\eta\zeta}\sqrt{\frac{n}{N}}\right)  ,
\end{align*} 
where the first inequality is because of the optimality of $(\hat{\bm C}_N)_{\hat{S}_N,\hat{S}_N}$, the second one is due to inequalities \eqref{eq_dist}, the third one is from the monotonicity of function $\log(\cdot)$, and the last one is due to the fact that if  $N\ge 16c^2n/(\eta^2\zeta^2) (2\bar{\zeta} /\zeta+1)^{2s-2} \geq 16c^2n/(\eta^2\zeta^2)$, we must have
\begin{align} \label{min_eign}
\lambda_{\min}((\hat{\bm C}_N)_{S^*,S^*}) \ge \lambda_{\min}(\bm C_{S^*, S^*}) - \frac{2c}{\eta}\sqrt{\frac{n}{N}}\ge \zeta - \frac{2c}{\eta}\sqrt{\frac{n}{N}} \ge \frac{\zeta}{2}.
\end{align}

Similarly, since $N\ge 16c^2n/(\eta^2\zeta^2) (2\bar{\zeta} /\zeta+1)^{2s-2}$, the matrix $\bm C_{\hat{S}_N,\hat{S}_N}$ must also be positive definite and thus, its smallest eigenvalue must be at least $\zeta$.
Therefore, we can also show that
\begin{align*}
z^* - \hat{z}_N  &= \log\det(\bm C_{S^*,S^*}) -  \log\det((\hat{\bm C}_N)_{\hat{S}_N,\hat{S}_N}) \\
&=  \log\det(\bm C_{S^*,S^*}) - \log\det(\bm C_{\hat{S}_N,\hat{S}_N})  + \log\det(\bm C_{\hat{S}_N,\hat{S}_N})  -  \log\det((\hat{\bm C}_N)_{\hat{S}_N,\hat{S}_N}) \\
& \ge\log\det(\bm C_{\hat{S}_N,\hat{S}_N})  -  \log\det((\hat{\bm C}_N)_{\hat{S}_N,\hat{S}_N})\\
& \ge \log\det(\bm C_{\hat{S}_N,\hat{S}_N})  -  \log\det\left(\bm C_{\hat{S}_N,\hat{S}_N} + \frac{2c}{\eta}\sqrt{\frac{n}{N}}\bm I_s\right)
= - \log\det\left(\bm I_s +\frac{2c}{\eta}\sqrt{\frac{n}{N}}\bm C^{-1}_{\hat{S}_N,\hat{S}_N}\right)\\
& \ge -s \log \left( 1+ \frac{2c}{\eta}\sqrt{\frac{n}{N}}\frac{1}{\lambda_{\min}(\bm C_{\hat{S}_N,\hat{S}_N})} \right) \ge- s\log\left( 1+\frac{4c}{\eta\zeta}\sqrt{\frac{n}{N}}\right). 
\end{align*} 
All the results in Step II hold almost surely conditioning on that $\| \bm C - \hat{\bm C}_N\|_2 \le 2c/\eta\sqrt{n/N}$, where the latter occurs with probability at least $1-\eta$. This completes the proof.
\qed
\end{proof}
\end{subequations}

\section{MISOCP Formulation of MESP} \label{misocp}
\begin{subequations}
In this section, we develop a mixed integer second-order conic programming (MISOCP) formulation for MESP, which is equivalent to the nonlinear convex integer program studied by \cite{anstreicher2020efficient}. The formulation from \cite{anstreicher2020efficient} has the following form
\begin{align}
		\textrm{(MESP)} \  z^{*} := \max_{\bm{x}} \Bigg\{ \frac{1}{2} \log \det \left(\gamma \bm{C} \Diag(\bm{x}) \bm{C} + \bm{I}_n-\Diag(\bm{x}) \right)-\frac{1}{2} s\log(\gamma):  \sum_{i\in [n]} x_i =s,
\bm{x} \in [0,1]^n \Bigg \},\label{eq_linx_bound}
\end{align}
where $\gamma$ is a positive scalar and can be arbitrary. In fact, a good choice of $\gamma$ can improve the continuous relaxation of formulation \eqref{eq_linx_bound}. According to table 1 in \cite{sagnol2015computing}, we can show that the above formulation \eqref{eq_linx_bound} is equivalent to
\begin{equation}\label{eq_linx_bound2}
\begin{aligned}
\textrm{(MESP)} \quad z^*:= \max_{\bm{x} ,\bm{Z}_1,\cdots, \bm{Z}_{2n} \bm{t}, \bm{J}} \ \ & \frac{n}{2}\log \bigg(\prod_{j=1}^{n} (\bm{J}_{j,j})^{1/n} \bigg)  - \frac{1}{2}s \log(\gamma)\\
\text{s.t.  } &\sum_{i\in [n]} \sqrt{\gamma} \bm{C}_i \bm{Z}_i + \sum_{k\in [n+1, 2n]} \bm{e}_{k-n} \bm{Z}_k = \bm{J},  \bm{J} \text{ is lower triangular,}\\
&\|\bm{Z}_i \bm{e}_j \|^{2} \le \bm{t}_{ij}x_i, \forall i\in [2n], \forall j\in [n],\\
&\sum_{i\in [2n]} \bm{t}_{ij} \le \bm{J}_{j,j}, \forall j\in [n],\\
&\bm{t}_{i,j} \ge 0, \forall i\in [2n], \forall j\in[n],\\
&1-x_i=x_{n+i}, \forall i\in [n],\\
&\sum_{i\in [n]} x_i=s,\\
&\bm{x} \in \{0,1\}^{n},
\end{aligned}
\end{equation}
where $\bm{C}_i$ denotes the $i$-th column vector of matrix $\bm{C}$. Note that according to chapter 2.3 in \cite{ben2001lectures}, we can equivalently represent the objective function in the formulation \eqref{eq_linx_bound2} as a second order conic program. 
\end{subequations}
\end{appendices}

\end{document}